\newcommand{\edgeweight}   {{edge weight continuous model}\xspace}
\newcommand{\edgeprob}  {{edge probability discrete model}\xspace}
\newcommand{\cIGN}  {{cIGN}\xspace}
\newcommand{\textNewnorm}  {{Partition-norm}\xspace}
\newcommand{\textnewnorm}  {{partition-norm}\xspace}
\newcommand{\parspace}  {{\Gamma}}
\newcommand\numberthis{\addtocounter{equation}{1}\tag{\theequation}} %
\newcommand{\mv}{{\bm{V}}}
\newcommand{\tabtopvspace}{{\vspace{-0pt}}} 
\newcommand{\tabbottomvspace}{{\vspace{-0pt}}}
\newcommand{\gat}{\text{GATv2}}
\newcommand{\method}{GPS\xspace}
\newcommand\tmp{\textsf{tmp}}
\newcommand\ps{\textsf{partialsum}}
\newcommand{\ip}[1]{\langle{#1}\rangle}
\newcommand{\gnvn}{\text{gn-vn}}
\newcommand{\vngn}{\text{vn-gn}}
\newcommand{\vn}{\text{vn}}
\newcommand{\gn}{\text{gn}}
\newcommand{\DS}{\text{DeepSets}}
\newcommand{\ds}{\text{ds}}
\newcommand{\pool}{\tau}
\newcommand{\gngn}{\text{gn-gn}}
\newcommand{\tooned}[1]{\texttt{ReshapeTo1D}{(#1)}}
\newcommand{\gatii}{\texttt{GATv2}\xspace}
\newcommand{\pepfunc}{\texttt{Peptides-func}\xspace}
\newcommand{\pepstruct}{\texttt{Peptides-struct}\xspace}
\newcommand{\first}[1]{\textbf{#1}}
\definecolor{darkblue}{rgb}{0.0, 0.0, 0.8}
\definecolor{darkred}{rgb}{0.8, 0.0, 0.0}
\definecolor{darkgreen}{rgb}{0.0, 0.8, 0.0}
\newcommand{\todo}[1]{\blue{TODO: #1}}
\long\def\remove#1{}
\newtheorem{theorem}{Theorem}[section] %
\newtheorem{proposition}[theorem]{Proposition}
\newtheorem{lemma}[theorem]{Lemma}
\newtheorem{definition}{Definition}
\definecolor{darkred}{rgb}{1, 0.1, 0.3}
\definecolor{darkgreen}{rgb}{0.5, 0.8, 0.1}
\definecolor{darkpurple}{rgb}{1.0, 0, 1.0}
\definecolor{darkblue}{rgb}{0, 0, 1.0}
\newcommand {\mm}[1] {\ifmmode{#1}\else{\mbox{\(#1\)}}\fi}
\def\eqref#1{equation~\ref{#1}}
\def\1{\bm{1}}
\def\va{{\bm{a}}}
\def\vb{{\bm{b}}}
\def\vc{{\bm{c}}}
\def\ve{{\bm{e}}}
\def\vk{{\bm{k}}}
\def\vn{{\bm{n}}}
\def\vq{{\bm{q}}}
\def\vu{{\bm{u}}}
\def\vv{{\bm{v}}}
\def\vw{{\bm{w}}}
\def\vx{{\bm{x}}}
\def\vy{{\bm{y}}}
\def\vz{{\bm{z}}}
\def\mA{{\bm{A}}}
\def\mB{{\bm{B}}}
\def\mF{{\bm{F}}}
\def\mH{{\bm{H}}}
\def\mL{{\bm{L}}}
\def\mV{{\bm{V}}}
\def\mW{{\bm{W}}}
\def\mX{{\bm{X}}}
\DeclareMathAlphabet{\mathsfit}{\encodingdefault}{\sfdefault}{m}{sl}
\SetMathAlphabet{\mathsfit}{bold}{\encodingdefault}{\sfdefault}{bx}{n}
\newcommand\R[1]{\mathbb{R}^{#1} }
\newcommand{\reals} {{\mathbb{R}}}
\newcommand{\hL}    {{\widehat{L}}}
\newcommand{\MLP}   {{\mathrm{MLP}}}
\newcommand{\Vhat}      {{\widehat{V}}}
\newcommand{\Ehat}      {{\widehat{E}}}
\newcommand{\What}      {{\widehat{W}}}
\newcommand{\Dhat}      {{\widehat{D}}} 
\newcommand{\what}      {{\hat{w}}}
\newcommand{\vhat}      {{\hat{v}}}
\newcommand{\vmap}      {{\pi}}
\newcommand{\Lhat}      {{\widehat{L}}}
\newcommand{\Fcal}      {{\mathcal{F}}}
\newcommand{\Lcal}      {{\mathcal{L}}}
\newcommand{\fhat}      {{\hat{f}}}
\newcommand{\vertexmap}  {{vertex map}}
\newcommand{\Qform}     {{\mathsf{Q}}}
\newcommand{\RQ}        {{\mathsf{R}}}
\newcommand{\dwL}       {{\mathsf{L}}}
\newcommand{\proj}      {{\mathcal{P}}}
\newcommand{\lift}      {{\mathcal{U}}}
\newcommand{\myMu}      {{\mathcal{M}_{\theta}}}
\newcommand{\N}      {N} 
\newcommand{\genL}      {{\mathcal{O}_G}} %
\newcommand{\combL}      {{L}}%
\newcommand{\n}      {n} 
\newcommand{\sx}      {\hat{x}} 
\newcommand{\sgenL}      {{\mathcal{O}_{\Ghat}}}
\newcommand{\scombL}      {{\Lhat}} %
\newcommand{\sdwL}      {\widehat{\dwL}}%
\newcommand{\LtildeVhat}      {\scombL}  %
\newcommand{\bx}      {x} 
\newcommand{\Ghat}      {{\widehat{G}}}
\newcommand{\nn}  {{\texttt{GOREN} }}
\newcommand{\nntight}  {{\texttt{GOREN}}}
\newcommand{\nL}  {{\mathcal{L}}}
\newcommand{\snL}  {{\widehat{\mathcal{L}}}}
\newcommand{\IN}  {{\Pi}} %
\newcommand{\ppt}  {({P^+})^T}
\newcommand{\lbd} {{\boldsymbol{\lambda}}}
\newcommand{\w} {\mathbf{w}}
\newcommand{\wt} {\mathbf{w}^t}
\newcommand{\wg} {\mathbf{w}_{\Ghat}}
\newcommand{\wgt} {{\mathbf{w}_{\Ghat}^t}}
\newcommand{\mini}{{\operatorname{minimize}}}
\newcommand{\A}{{U \operatorname{Diag}(\boldsymbol{\lambda}) U^{T}}}
\newcommand{\mask}[1]{\overline{#1}}
\newcommand{\lap}[1]{\mathfrak{L}{#1} }
\newcommand{\lapstar}[1]{\mathfrak{L^*}{#1} }
\newcommand{\er}{Erd\H{o}s-R\' {e}nyi }
\newcommand{\mc}{\mathcal}
\newcommand{\mb}{\mathbb}
\newcommand{\dnorm}{diagonal norm}
\newcommand{\mnorm}{matrix norm}
\newcommand{\IGN}{$2$-IGN}
\newcommand{\smallIGN}{IGN-small}
\newcommand{\kIGN}{$k$-IGN}
\newcommand{\sGNN}{spectral GNN}
\newcommand\dg[1]{\text{Diag}(#1) }
\newcommand\dgdual[1]{\text{Diag}^{*}(#1)}
\newcommand\LE{\textnormal{LE}}
\newcommand\MSE{\operatorname{RMSE}}
\newcommand{\one}{\mathbf{1}}
\newcommand{\dtensor}[2]{\mb{R}^{n^{#1}\times {#2}}}
\newcommand{\axis}[1]{\text{ax}(#1)}
\newcommand{\ord}[1]{\textnormal{ord}(#1)}
\newcommand{\bs}[1]{\boldsymbol{#1}}
\newcommand{\name}[1]{\textnormal{name}_{#1}}
\newcommand{\set}[1]{\{#1\}}
\newcommand\norm[1]{\|#1\| }
\newcommand\Linf[1]{\|#1\|_{L_\infty} }
\newcommand\newnorm[1]{\|#1\|_{\text{pn}} }
\newcommand\newnorminf[1]{\|#1\|_{\text{pn}-\infty} }
\newcommand\specnorm[1]{\|#1\|_{\text{spec}} }
\newcommand{\Lnorm}{$L_{2}$ norm}
\newcommand{\lnorm}{$\ell_{2}$ norm}
\newcommand{\Linfnorm}{$L_{\infty}$ norm }
\newcommand{\wbound}{\sqrt{\frac{A_1}{n}}}
\newcommand{\filterbound}{A_2}
\newcommand{\degmean}{D(A)_{\textnormal{mean}}}
\newcommand{\inducedW}{\widetilde{W_n} }
\newcommand{\sampleW}{W_{n \times n}}
\newcommand{\inducedX}{\widetilde{X_n} }
\newcommand{\inducedEW}{\widetilde{ W_{n, E} }}
\newcommand{\inducedEX}{\widetilde{ X_{n, E} }}
\newcommand{\inducedEf}{\widetilde{ f_{n, E} }}
\newcommand{\sampleE}{S_{n} }
\newcommand{\iid}{i.i.d. }
\newcommand{\lip}{Lipschitz}
\newcommand{\bell}[1]{\textnormal{bell}(#1)}
\newcommand{\xred}{X_{\gamma, \text{reduction}}}
\newcommand{\yrep}{Y_{\gamma, \text{replication}}}
\newcommand{\red}[1]{\textcolor{red}{#1}}
\newcommand{\blue}[1]{\textcolor{blue}{#1}}
\newcommand{\tn}[1]{\textnormal{#1}}
\newtheorem{assumption}{\hspace{0pt}\bf AS\hspace{-0.075cm}}
\newtheorem{remark}{\hspace{0pt}\bf Remark}
\title{Local-to-global Perspectives on Graph Neural Networks}
\author{Chen Cai}
\begin{document}
\frontmatter
\maketitle
\makecopyright
\makesignature

\begin{epigraph}

\begin{quote}
Some people may sit back and say, I want to solve this problem and they sit down and say, ``How do I solve this problem?'' I don’t. I just move around in the mathematical waters, thinking about things, being curious, interested, talking to people, stirring up ideas; things emerge and I follow them up. Or I see something which connects up with something else I know about, and I try to put them together and things develop. I have practically never started off with any idea of what I’m going to be doing or where it’s going to go. I’m interested in mathematics; I talk, I learn, I discuss and then interesting questions simply emerge. I have never started off with a particular goal, except the goal of understanding mathematics. \\
\hspace*{\fill} -- Michael Atiyah
\end{quote}

\begin{quote}
Talk is cheap. Show me the code. \\
\hspace*{\fill} -- Linus Torvalds
\end{quote}
\end{epigraph}
\tableofcontents
\listoffigures
\listoftables

\begin{acknowledgements}
I want to express my gratitude to my Ph.D. advisor, Prof. Yusu Wang.
Her guidance, support, and encouragement have been invaluable throughout my doctoral studies.
I took a meandering path to computer science and made a switch from computational topology to
graph representation learning in the middle of my Ph.D. Such a transition will not be possible without her support.
I am grateful for the patience, kindness, vision, and academic freedom she offered.

Besides Yusu, I also want four professors who shaped my research career. 
I would like to thank Prof. Victoria Sadovskaya for the opportunity of attending the MASS (Mathematics Advanced Study Semesters) program at Penn State University when I was an undergrad. It is one of the best times I will always remember. I also want to thank Prof. Yakov Pesin, who taught me the Erlangen program in the projective geometry course. It converts me into a geometer and leads me to geometric deep learning many years later.  I want to thank Prof. Joseph Mitchell from Stony Brook University and Prof. Jie Gao from Rutgers University for introducing me to the field of algorithm design and computational geometry. Joe's kindness and devotion to computational geometry are contagious and left a mark on me.
Jie's advice of grounding self in important applications balanced my tendency of going too abstract.

I benefited much from two summer schools, ``Mathematical Methods for High-Dimensional Data Analysis'' summer school organized by Ulrich Bauer, where I am intrigued by the topological data analysis course given by Steve Oudot, and London Geometry and Machine Learning (LOGML) summer school, where I had the opportunity to work with Haggai Maron on equivariant subgraph aggregation networks. I also want to thank Simons Institute for the Theory of Computing, and Institute for Pure and Applied Mathematics (IPAM) for making most talks available to the public. They have always been great learning resources throughout the years.

I am fortunate to work with many excellent researchers from different disciplines across the world,
from whom I learned and enjoyed so much.
Thank Bahador Bahmani, Goonmeet Bajaj, Gopinath Balamurugan, Beatrice Bevilacqua,  Michael M. Bronstein, Fabrizio Frasca, Saket Gurukar, Rafael Gómez-Bombarelli, Truong Son Hy, Moniba Keymanesh, Woojin Kim, Saravana Kumar, Derek Lim, Ran Ma, Lucas Magee, Pranav Maneriker, Haggai Maron,  Facundo Memoli, Benjamin Kurt Miller, Anasua Mitra, Srinivasan Parthasarathy, Vedang Patel,  Balaraman Ravindran, Tess Smidt, Aakash Srinivasan, Jian Tang, Priyesh Vijayan, Nikolaos Vlassis, Wujie Wang,  Yusu Wang,  Dingkang Wang, Teng-Fong Wong,  Zeyu Xiong, Rose Yu, Jie Zhang. I want to especially thank Haggai Maron for being
a great mentor and continuing to push the limit of equariance research. I want to thank Wujie Wang for turning a serendipity meetup in NeurIPS into a fruitful collaboration and Mingkai Xu for a long-term discussion on AI4Science. I want to thank Tess Smidt and Mario Geiger for the help with e3nn.  

I would like to pay special tributes to the researchers I look up to.
I want to thank Taco Cohen, Max Welling, Risi Kondor, Leonidas Guibas, Jure Leskovec, Haggai Maron, Joan Bruna, and Michael M. Bronstein for their pioneering work on graph neural networks and geometric deep learning. I want to thank Sanjeev Arora and Mikhail Belkin for their original perspectives on deep learning theory. 

I want to thank the friends and mentors I met during my internships. Thank you Stephan Eismann and Raphael Townshend for the great opportunity to work on RNA structural prediction in Atomic.ai. Thank Nick Boyd, Brandon Anderson, Paul Raccuglia, Ruth Brillman,
Yuan Zhang and Walid Chaabene for mentorship in Atomic.ai, Google, and Amazon.

I want to thank my friends Shuaicheng Chang, Justin Eldridge, Tao Hou, Like Hui, Wuwei Lan, Tianqi Li, Rui Li, Albert Liang, Lucas Magee, Dingkang Wang, Jiayuan Wang, Cheng Xin, Ryan Slechta, Jiankai Sun, Dayu Shi, Longhua Wu, Hao Zhang, Qi Zhao and Shi Zong
in the Ohio State University and
Tristan Brugere, Sam Chen, Truong Son Hy, Akbar Rafiey, Puoya Tabaghi, Xingyue Xia, Rui Wang, Libin Zhu and Zhengchao Wan from UC San Diego.
I also want to thank Huang Fang, Xueyu Mao, Haiyan Yin, and Yingxue Zhou, whom I got to know during internships and kept in touch since.
I want to especially thank Siyuan Ma for many discussions and insights.
I want to thank my long-time friends Yiming Chen, Shashan Shu, and Tianyu Zuo who have been with me for more than ten years.
I enjoy the time with all my friends and thank you all for making my Ph.D. life so much fun.

I would like to thank Prof. Jingbo Shang, Prof. Rose Yu, and Prof. Gal Mishne who agree to serve on my committee and
provide insightful comments and suggestions.

Lastly, I want to thank my wife for her love and support and my parents for raising me as a curious and independent person. This thesis is dedicated to them.

\end{acknowledgements}

\begin{vita}
\noindent
\begin{cv}{}
\begin{cvlist}{}
\item[2015] Bachelor of Science, China Agricultural University
\item[2017] Master of Science, Stony Brook University
\item[2020] Master of Science, Ohio State University
\item[2023] Doctor of Philosophy, University of California San Diego
\end{cvlist}
\end{cv}

\publications
\begin{itemize}[label={}, leftmargin=*]
    \item Chen Cai, Yusu Wang., ``Convergence of invariant graph networks.",  \textit{International Conference on Machine Learning (ICML)}, 2022.

    \item Chen Cai, Truong Son Hy, Rose Yu, Yusu Wang., `` On the connection between MPNN and Graph Transformer'', \textit{International Conference on Machine Learning (ICML)}, 2023.
    
    \item Chen Cai, Dingkang Wang, Yusu Wang., ``Graph coarsening with neural networks'', \textit{International Conference on Learning Representations (ICLR)}, 2022.
\end{itemize}

\end{vita}

\begin{dissertationabstract}
Message Passing Neural Networks (MPNN) has been the leading architecture for machine learning on graphs. Its theoretical study focuses on increasing expressive power and overcoming over-squashing \& over-smoothing phenomena. The expressive power study of MPNN suggests that one needs to move from local computation to global modeling to gain expressive power in terms of the Weisfeiler-Lehman hierarchy. My dissertation centers around understanding the theoretical property of global GNN, its relationship to the local MPNN, and how to use local MPNN for coarse-graining. In particular, it consists of three parts:

\textbf{Convergence of Invariant Graph Network}. 
One type of global GNNs is the so-called Invariant graph networks (IGN). In the first part, we aim to study the convergence behavior of IGNs, where a similar understanding has already been provided for the local MPNNs.
We investigate the convergence of one powerful GNN, Invariant Graph Network (IGN) over graphs sampled from graphons. We first prove the stability of linear layers for general $k$-IGN (of order $k$) based on a novel interpretation of linear equivariant layers. Building upon this result, we prove the convergence of $k$-IGN under the model 
of \cite{ruiz2020graphon}, where we access the edge weight but the convergence error is measured for graphon inputs. Under the more natural (and more challenging) setting of \cite{keriven2020convergence} where one can only access 0-1 adjacency matrix sampled according to edge probability, we first show a negative result that the convergence of any IGN is not possible. We then obtain the convergence of a subset of IGNs, denoted as IGN-small, after the edge probability estimation. We show that IGN-small still contains functions rich enough to approximate spectral GNNs arbitrarily well. Lastly, we perform experiments on various graphon models to verify our statements.

\textbf{The Connection between MPNN and Graph Transformer}. In the second part, we study the connection between local GNN (MPNN) and global GNN (Graph Transformer). Previous work \cite{kim2022pure} shows that with proper position embedding, GT can approximate MPNN arbitrarily well, implying that GT is at least as powerful as MPNN. Here we study the inverse connection and show that MPNN with virtual node (VN), a commonly used heuristic with little theoretical understanding, is powerful enough to arbitrarily approximate the self-attention layer of GT.
In particular, we first show that if we consider one type of linear transformer, the so-called Performer/Linear Transformer, then MPNN + VN with only $O(1)$ depth and $O(1)$ width can approximate a self-attention layer in Performer/Linear Transformer. Next, via a connection between MPNN + VN and DeepSets, we prove the MPNN + VN with $O(n^d)$ width and $O(1)$ depth can approximate the self-attention layer arbitrarily well, where d is the input feature dimension. Lastly, under some (albeit rather strong) assumptions, we provide an explicit construction of MPNN + VN with $O(1)$ width and $O(n)$ depth approximating the self-attention layer in GT arbitrarily well. 

\textbf{Graph Coarsening with Neural Networks}. Finally, one way to obtain global information via local MPNN is through graph coarsening, where at a coarser level, edges among super-nodes represent more global connections. However, when performing graph coarsening, one hope to be able to preserve the original graph's properties. The specific property we aim to preserve is its spectral property, which can capture long-range interaction in graphs (e.g., the behavior of random walks). 
In the last part, we first propose a framework for measuring the quality of coarsening algorithm and show that depending on the goal, we need to carefully choose the Laplace operator on the coarse graph and associated projection/lift operators. Motivated by the observation that the current choice of edge weight for the coarse graph may be sub-optimal, we parametrize the weight assignment map with GNN and train it to improve the coarsening quality in an unsupervised way. Through extensive experiments on both synthetic and real networks, we demonstrate that our method significantly improves common graph coarsening methods under various metrics, reduction ratios, graph sizes, and graph types. It generalizes to graphs of larger size (25x of training graphs), is adaptive to different losses (differentiable and non-differentiable), and scales to much larger graphs than previous work.
\end{dissertationabstract}

\mainmatter
\chapter{Introduction}
\section{Background}
Graphs are flexible representations for modeling complex objects, such as road networks, protein interaction networks,
social networks, molecules, and so on. From the methodology perspective, modeling functions on general graphs naturally requires handling greater variability
compared to deep learning on images (2d grid) and sequences (1d line graph).
This implies that the design of graph neural networks is more challenging
than common techniques in processing images and sequences such as CNN, RNN, and Transformer.
The study of machine learning \& deep learning on graphs is therefore of great theoretical interest and practical significance, and has been extensively studied in recent years \cite{kipf2016semi,velivckovic2017graph,hamilton2017inductive,gurukar2019network,wang2022generative,cai2018simple,gurukarbenchmarking,cai2019group,cai2023equivariant,zhang2022composition}.

The purpose of this thesis is to provide a local-to-global perspective on the graph neural network (GNN), a leading machine learning architecture for processing graphs.  The local approach to GNN results in Message Passing Neural Network (MPNN) that is widely used in practice, which includes popular models like GAT \cite{velivckovic2017graph}, GCN \cite{kipf2016semi}, and GraphSAGE \cite{hamilton2017inductive}. However,
the theoretical study of MPNN reveals its limitations, such as limited expressive power, over-smoothing, and over-squashing.
Take the expressive power as an example, it is well known MPNN can not be more expressive in terms of distinguishing non-isomorphic graphs than the 1-WL (weisfeiler-lehman) test \cite{xu2018powerful}.
Increasing the expressive power of GNN beyond 1-WL has been extensively studied \cite{bevilacqua2021equivariant,maron2019provably,azizian2020expressive,geerts2020expressive,sato2020survey}. 
On the high level, most work requires some elements of global modeling, in the form of modeling high-order interaction or subgraph aggregation. This motivates the study of the global approach to learning on graphs such as Invariant Graph Network (IGN) \cite{maron2018invariant} and  Graph Transformer (GT) \cite{kreuzer2021rethinking,ying2021transformers}.

The first part of the thesis in \Cref{chapter:ign} introduces the Invariant Graph Network (IGN), a global GNN,
and provides a systematic study of its convergence property. Convergence is closely related to generalization,
a central topic in graph neural network research and machine learning in general.

The second part of the thesis in \Cref{chapter:mpnn_gt} studies the connection between local MPNNs and global Graph Transformers. It connects the local approach (MPNN)
and global approach (Graph Transformer), with DeepSets and Invariant Graph Network (IGN) serving as the conceptual bridge.

One common approach to model long-range interaction modeling on irregular domain is graph coarsening. In \Cref{chapter:cg}, the last part of the thesis, we study the
creative use of MPNN to perform graph coarsening. MPNN offers an alternative to classical optimization techniques,
with the advantage of generalizing to graphs of different sizes.

We next give a short introduction of models that appeared in the thesis and then provide the outline of the thesis in \Cref{sec:outline} and list my contributions in \Cref{sec:contributions}. 
\subsection{Message Passing Neural Network (MPNN)}
Message Passing Neural Networks (MPNNs) are a class of neural networks designed to handle structured data, specifically graph-like structures. MPNNs are capable of capturing the complex relations between nodes in a graph, making them ideal for a variety of tasks ranging from social network analysis to chemical structure prediction. They operate through a process known as ``message passing'', where nodes in the graph exchange and aggregate information iteratively, thereby enabling the network to learn a representation of the whole graph based on local node features and their connections. 

Specifically, the $t$-th iteration of messaging passing takes the following form
\begin{equation*}
\begin{aligned}
m_v^{t+1} & =\sum_{w \in N(v)} M_t\left(h_v^t, h_w^t, e_{v w}\right) \\
h_v^{t+1} & =U_t\left(h_v^t, m_v^{t+1}\right)
\end{aligned}
\end{equation*}
where $h_v^{t+1}$, the hidden representation for node $v$, are updated based on messages $m_v^{t+1}$. $M_t$ is the message functions, and $U_t$ is the vertex update functions. Many popular graph networks such as GCN \cite{kipf2016semi}, GAT \cite{velivckovic2017graph}, and GraphSage \cite{hamilton2017inductive} can be realized under the MPNN framework.

\subsection{Invariant Graph Network (IGN)}
Invariant Graph Network (IGN) is a class of global GNN that treats graphs as order 2 tensors.
Just as CNN interleaves the linear and nonlinear layers, IGN follows the same approach to building graph networks.
As there is no canonical node order, the linear layer needs to be both linear and permutation equivariant.
Such linear and permutation equivariance constraints dramatically reduce the degrees of freedom. \cite{maron2018invariant} characterizes the space of linear and permutation equivariant functions from tensor of order $l$ to
tensor of order $m$, i.e. all linear permutation equivariant functions from $\mathbb{R}^{n^l}$ to $\mathbb{R}^{n^m}$ is of
dimension $Bell(l+m)$. Note that the dimension of space is independent of graph nodes $n$, and it is the reason why IGN
can generalize to graphs of different sizes.

Depending on the order of intermediate tensor representation, IGN can be parameterized by $k$-IGN, where $k$ is the largest tensor order.
It is shown $2$-IGN can arbitrarily approximate MPNN, and therefore as least has the 1-WL expressive power.
In general, $k$-IGN has the expressive power of $k$-WL in terms of distinguishing non-isomorphic graphs, which implies that
IGN is a class of highly expressive GNN that deserve further investigation.

\subsection{Graph Transformer (GT)}
We next introduce another class of global GNN, Graph Transformer (GT).
Because of the great successes of Transformers in natural language processing (NLP) \cite{vaswani2017attention,wolf2020transformers} and recently in computer vision \cite{dosovitskiy2020image,d2021convit,liu2021swin}, there is great interest in extending transformers for graphs. In particular, it encodes the graph structure into the position embeddings and then applies the Transformer layer to mix the features.
One common belief of advantage of graph transformer over MPNN is its capacity in capturing long-range interactions while alleviating over-smoothing \cite{li2018deeper,oono2019graph,cai2020note} and over-squashing in MPNN \cite{alon2020bottleneck,topping2021understanding}. 

Fully-connected Graph transformer \cite{dwivedi2020generalization} was introduced with eigenvectors of graph Laplacian as the node positional encoding (PE). Various follow-up works proposed different ways of PE to improve GT, ranging from an invariant aggregation of Laplacian's eigenvectors in SAN \cite{kreuzer2021rethinking}, pair-wise graph distances in Graphormer \cite{ying2021transformers}, relative PE derived from diffusion kernels in GraphiT \cite{mialon2021graphit}, and recently Sign and Basis Net \cite{lim2022sign} with a principled way of handling sign and basis invariance. 
Other lines of research in GT include combining MPNN and GT  \cite{wu2021representing,rampavsek2022recipe}, encoding the substructures \cite{chen2022structure}, and efficient graph transformers for large graphs \cite{wunodeformer}.

\section{Outline of Thesis}
\label{sec:outline}
In \Cref{chapter:ign}, we introduce a class of GNN model named Invariant Graph Network (IGN). IGN is
parameterized by the $k$, order of intermediate tensor representation. The $k$ is a hyperparameter that can be tuned to balance
the expressive power (in terms Weisfeiler-Lehman hierarchy) and the computational complexity. We study the convergence property of IGN. The problem is stated
as follows: given a continuous graphon model where we can sample a sequence of graphs $G_i$, we are interested in whether the output of
IGN $\phi(G_i)$ will converge. Convergence relates the output of IGN given a sequence of graphs sampled from the same continuous models.
Convergence is easier to study than generalization, a central topic in GNN research because we restrict the variability of input
graphs to come from the same model. Therefore, studying the convergence may shed light on the generalization of GNN \cite{cai2023connection}. 

In \Cref{chapter:mpnn_gt}, we study the connection between local MPNN and global Graph Transformer. Local MPNN has been widely studied while
the global Graph Transformer is a new model that receives a lot of attention recently. One advantage of the Graph Transformer is that
its Transformer backbone allows more effective feature mixing than MPNN, which will require many layers to pass information when
the radius of the input graph is large. Previous work \cite{kim2022pure} shows that with proper position embedding, GT can approximate MPNN arbitrarily well, implying that GT is at least as powerful as MPNN. In this chapter, we study the inverse connection and show that MPNN with virtual node (VN), a commonly used heuristic with little theoretical understanding, is powerful enough to arbitrarily approximate the self-attention layer of GT.
Our work draws a tighter connection between local and global GNN \cite{cai2022convergence}.

In \Cref{chapter:cg}, we study the problem of graph coarsening, which aims to reduce the size of the graph while preserving the essential
property. Despite rich graph coarsening literature, there is only limited exploration of data-driven methods in the field.
In this chapter, we propose a novel data-driven graph coarsening method based on the message-passing neural network (MPNN) \cite{cai2021graph}.

\section{Contributions}
\label{sec:contributions}
The first family of popular global GNN we look into is the so-called Invariant graph networks (IGN).
In \Cref{chapter:ign}, we study the convergence of $k$-IGN under two models, the edge weight continuous model, and the edge probability
discrete model. We first provide a novel interpretation of the linear permutation equivariant basis of $k$-IGN for any $k$, which
is interesting on its own. Based on such interpretations, we prove the convergence of $k$-IGN under the edge weight continuous model.
Under the more challenging edge probability discrete model, we first show that convergence of $k$-IGN is not possible. We then
showed that under the preprocessing step of edge smoothing (used in Graphon estimation),
we can retain the convergence property of a subset of $k$-IGN,  named IGN-small, under the edge probability discrete model.
Lastly, we characterize that IGN-small in some sense is not too small as it still contains the rich class of functions that can approximate spectral GNN arbitrarily well.

Another popular class of global GNN is Graph Transformer (GT).
GT recently has emerged as a new paradigm of graph learning algorithms, outperforming the previously popular Message Passing Neural Network (MPNN) on multiple benchmarks. 
In \Cref{chapter:mpnn_gt}, we provide a systematic study of the approximation power of MPNN with virtual node (VN). In particular,
In particular, we first show that if we consider one type of linear transformer, the so-called Performer/Linear Transformer (Choromanski et al., 2020; Katharopoulos et al., 2020), then MPNN + VN with only $O(1)$ depth and $O(1)$ width can approximate a self-attention layer in Performer/Linear Transformer. Next, via a connection between MPNN + VN and DeepSets, we prove the MPNN + VN with $O(n^d)$ width and $O(1)$ depth can approximate the self-attention layer arbitrarily well, where d is the input feature dimension. Lastly, under some assumptions, we provide an explicit construction of MPNN + VN with $O(1)$ width and O(n) depth approximating the self-attention layer in GT arbitrarily well. On the empirical side, we demonstrate that 1) MPNN + VN is a surprisingly strong baseline, outperforming GT on the recently proposed Long Range Graph Benchmark (LRGB) dataset, 2) our MPNN + VN improves over early implementation on a wide range of OGB datasets.

Finally, one way to obtain global information via local MPNN is through graph coarsening, where at a coarser level, connections among super-nodes represent more global connections. However, when performing graph coarsening, one hopes to be able to preserve the original graph's properties. The specific property we aim to preserve is its spectral property, which can capture long-range interaction in graphs (e.g., the behavior of random walks). 
In \Cref{chapter:cg}, we explored the use of data-driven methods for graph coarsening. We leverage the recent progress of deep learning on graphs for graph coarsening. We first propose a framework for measuring the quality of coarsening algorithm and show that depending on the goal, we need to carefully choose the Laplace operator on the coarse graph and associated projection/lift operators. Motivated by the observation that the current choice of edge weight for the coarse graph may be sub-optimal, we parametrize the weight assignment map with graph neural networks and train it to improve the coarsening quality in an unsupervised way. Through extensive experiments on both synthetic and real networks, we demonstrate that our method significantly improves common graph coarsening methods under various metrics, reduction ratios, graph sizes, and graph types. It generalizes to graphs of larger size (25× of training graphs), is adaptive to different losses (differentiable and non-differentiable), and scales to much larger graphs than previous work.

\chapter{Convergence of Invariant Graph Networks}\label{chapter:ign}
\section{Introduction}

In this chapter, we focus on the the convergence property of a class of powerful global GNN called Invariant Graph Networks (IGN).
Although theoretical properties of GNN such as expressive power \cite{maron2019universality,keriven2019universal,maron2019provably,garg2020generalization,azizian2020expressive,geerts2020expressive,bevilacqua2021equivariant} and over-smoothing \cite{li2018deeper,oono2019graph, cai2020note, zhou2021dirichlet} of GNNs have received much attention, their convergence property is less understood. In this chapter, we systematically investigate the convergence of one of the most powerful families of GNNs, the \emph{Invariant Graph Network (IGN)} \cite{maron2018invariant}.
Different from message passing neural network (MPNN) \cite{gilmer2017neural}, it treats graphs and associated node/edge features as monolithic tensors and processes them in a permutation equivariant manner. \IGN{} can approximate the message passing neural network (MPNN) arbitrarily well on the compact domain. When allowing the use of high-order tensor as the intermediate representation, $k$-IGN is shown at least as powerful as $k$-WL test. As the tensor order $k$ goes to $O(n^4)$, it achieves the universality and can distinguish all graphs of size $n$ \cite{maron2019universality,keriven2019universal,azizian2020expressive}. %

The high level question we are interested in is the convergence and stability of GNNs. In particular, given a sequence of graphs sampled from some generative models, does a GNN performed on them also converge to a limiting object? This problem has been considered recently, however, so far, the studies \cite{ruiz2020graphon,keriven2020convergence} focus on the convergence of \emph{spectral GNNs}, which encompasses several models \cite{bruna2013spectral,defferrard2016convolutional} including GCNs with order-1 filters \cite{kipf2016semi}. However, it is known that the expressive power of GCN is limited.
Given that 2(k)-IGN is strictly more powerful than GCN \cite{xu2018powerful} in terms of 
separating graphs\footnote{In terms of separating graphs, $\text{$k$-IGN}>\text{$2$-IGN}=\text{GIN}>\text{GCN}$ for $k>2$. } and 
its ability to achieve universality, it is of great interest to study the convergence of such powerful GNN. In fact, it is posted as an open question in \cite{keriven2021universality} to study convergence for models more powerful than spectral GNNs and higher order GNNs. This is the question we aim to study in this chapter.

\textbf{Contributions.}
We present the first convergence study of the powerful $k$-IGNs (strictly more powerful than the Spectral GNN which previous work studied).
We first analyze the building block of IGNs: linear equivariant layers, and develop a stability result for such layers.
The case of $2$-IGN is proved via case analysis while the general case of $k$-IGN uses a novel interpretation of the linear equivariant layers which we believe is of independent interest.

There have been two existing models of convergence of \sGNN{}s for graphs sampled from graphons developed in \cite{ruiz2020graphon} and \cite{keriven2020convergence}, respectively.
Using the model of \cite{ruiz2020graphon} (denoted by the \textit{\edgeweight{}}) where we access the edge weight but the convergence error is measured between \textit{graphon inputs} (see \Cref{sec:convergence-ruiz} for details), we obtain analogous convergence results for $k$-IGNs. %
The results cover both deterministic and random sampling for {\bf $k$-IGN} while \cite{ruiz2020graphon}
only covers deterministic sampling for the much weaker {\bf Spectral GNN}s.

Under more natural (and more challenging) setting of \cite{keriven2020convergence} where one can only access 0-1 adjacency matrix sampled according to edge probability (called the \textit{\edgeprob{}}), we first show a negative result that in general the convergence of all IGNs is not possible.
Building upon our earlier stability result, we obtain the convergence of a subset of IGN, denoted as \smallIGN{}, after a step of edge probability estimation. We show that \smallIGN{} still contains rich function class that can approximate Spectral GNN arbitrarily well. %
Lastly, we perform experiments on various graphon models to verify our statements.

\begin{table}[htp]
\renewcommand{\arraystretch}{1.5}
\centering
\caption{Linear equivariant maps for $\mb{R}^{n \times n} \rightarrow \mb{R}^{n\times n}$ and $\mb{R}^{[0,1]^2} \rightarrow \mb{R}^{[0,1]^2}$. $\one$ is a all-one vector  of size $n\times 1$ and $\mathrm{I}_{u=v}$ is the indicator function. }
\label{tab:R2-R2}
\resizebox{1\textwidth}{!}{
\begin{tabular}[]{@{}llll@{}}
\toprule
\makecell[l]{Operations} & Discrete & Continuous & Partitions \\ \midrule
\makecell[l]{1-2: The identity and\\ transpose operations} &
\makecell[l]{$T(A) = A$\\ $T(A) = A^T$} &
\makecell[l]{$T(W) = W$ \\ $T(W) = W^T$} &
\makecell[l]{$\{\{1,3\},\{2,4\}\}$ \\ $\{\{1,4\},\{2,3\}\}$}
\\ \hline

3: The diag operation &
\makecell[l]{$T(A) = \text{Diag}(\text{Diag}^{*}(A))$} &
\makecell[l]{$T(W)(u, v) = W(u, v)\mathrm{I}_{u=v}$}  &
\makecell[l]{$\{\{1,2,3,4\}\}$}
\\ \hline

\makecell[l]{4-6: Average of rows replicated \\ on rows/ columns/ diagonal} &
\makecell[l]{$T(A) = \frac{1}{n}A\one\one^T$ \\ $T(A) = \frac{1}{n}\one(A\one)^T$\\ $T(A) = \frac{1}{n}\text{Diag}(A\one)$}
&
\makecell[l]{$T(W)(*, u) = \int W(u, v)dv$\\ $T(W)(u, *) = \int W(u, v)dv$ \\ $T(W)(u, v) = \mathrm{I}_{u=v}\int W(u, v')dv' $}  &
\makecell[l]{$\{\{1,4\},\{2\},\{3\}\}$ \\ $\{\{1,3\},\{2\},\{4\}\}$ \\ $\{\{1,3,4\},\{2\}\}$}
\\\hline

\makecell[l]{7-9: Average of columns replicated \\on rows/ columns/ diagonal } &
\makecell[l]{$T(A) = \frac{1}{n}A^T\one\one^T$ \\ $T(A) = \frac{1}{n}\one(A^T \one)^T$ \\ $T(A) = \frac{1}{n}\text{Diag}(A^T\one).$} &
\makecell[l]{$T(W)(*, v) = \int W(u, v)du$\\ $T(W)(v, *) = \int W(u, v)du$ \\ $T(W)(u, v) = \mathrm{I}_{u=v}\int W(u', v)du' $} &
\makecell[l]{$\{\{1\},\{2,4\},\{3\}\}$ \\ $\{\{1\},\{2,3\},\{4\}\}$ \\ $\{\{1\},\{2,3,4\}\}$}
\\ \hline

\makecell[l]{10-11: Average of all elements \\replicated on all matrix/ diagonal} &
\makecell[l]{$T(A)=\frac{1}{n^2}(\one^T A\one) \cdot \one \one^T$
\\ $T(A) = \frac{1}{n^2}(\one^T A\one) \cdot \text{Diag}(\one).$} &
\makecell[l]{$T(W)(*, *) = \int W(u, v)dudv$\\ $T(W)(u, v) = \mathrm{I}_{u=v}\int W(u', v')du'dv'$ } &
\makecell[l]{$\{\{1\},\{2\},\{3\},\{4\}\}$ \\ $\{\{1\},\{2\},\{3,4\}\}$}
\\ \hline

\makecell[l]{12-13: Average of diagonal elements \\ replicated on all matrix/diagonal} &
\makecell[l]{$T(A) = \frac{1}{n}(\one^T \text{Diag}^{*}(A)) \cdot \one \one^T$\\ $T(A) = \frac{1}{n}(\one^T \text{Diag}^{*}(A)) \cdot \text{Diag}(\one)$} &
\makecell[l]{$T(W)(*, *) = \int \mathrm{I}_{u=v}W(u, v) dudv$\\ $T(W)(u,v) = \mathrm{I}_{u=v}\int W(u', u') du'$ } &
\makecell[l]{$\{\{1,2\},\{3\},\{4\}\}$ \\ $\{\{1,2\},\{3,4\}\}$}
\\ \hline

\makecell[l]{14-15: Replicate diagonal elements \\ on rows/columns} &
\makecell[l]{$T(A) = \text{Diag}^{*}(A)\one^T$ \\ $T(A) = \one\text{Diag}^{*}(A)^T$} &
\makecell[l]{$T(W)(u, v)= W(u, u)$\\ $T(W)(u, v) = W(v, v)$} &
\makecell[l]{$\{\{1,2,4\},\{3\}\}$ \\ $\{\{1,2,3\},\{4\}\}$}
\\
\bottomrule
\end{tabular}
}
\end{table}

\begin{table}
\renewcommand{\arraystretch}{1.5}
\centering
\caption{Linear equivariant maps for $\mb{R}^{n} \rightarrow \mb{R}^{n\times n}$ and $\mb{R}^{[0,1]} \rightarrow \mb{R}^{[0,1]^2}$.}
\label{tab:R1-R2}
\resizebox{1\textwidth}{!}{
\begin{tabular}[]{@{}llll@{}}
\toprule
Operations & Discrete & Continuous & Partitions\\ \midrule
1-3: Replicate to diagonal/rows/columns &
\makecell[l]{$T(A) = \text{Diag}(A)$\\  $T(A)_{i, j} = A_i$\\$T(A)_{i, j}= A_j$} &
\makecell[l]{$T(W)(u, v) = \mathrm{I}_{u=v}W(u)$\\ $T(W)(u,v) = W(u)$\\$T(W)(u,v)=W(v)$} &
\makecell[l]{ \{\{1,2,3\}\} \\ \{\{1,3\},\{2\}\} \\ \{\{1,2\},\{3\}\}}
\\

\hline

4-5: Replicate mean to diagonal/all matrix &
\makecell[l]{$T(A)_{i, i} = \frac{1}{n}A \one$\\$T(A)_{i, j} = \frac{1}{n}A \one $} &
\makecell[l]{$T(W)(u, v) = \mathrm{I}_{u=v}\int W(u) du$\\$T(W)(u,v) = \int W(u) du$} &
\makecell[l]{ \{\{1\},\{2,3\}\} \\ \{\{1\},\{2\},\{3\}\} }
\\
\bottomrule
\end{tabular}
}
\end{table}

\begin{table}
\renewcommand{\arraystretch}{1.5}
\centering
\caption{Linear equivariant maps for $\mb{R}^{n \times n} \rightarrow \mb{R}^{n}$ and $\mb{R}^{[0,1]^2} \rightarrow \mb{R}^{[0,1]}$.}
\label{tab:R2-R1}
\resizebox{1\textwidth}{!}{
\begin{tabular}[]{@{}llll@{}}
\toprule
Operations & Discrete & Continuous & Partitions\\ \midrule
\makecell[l]{1-3: Replicate diagonal/row mean/\\columns mean} &
\makecell[l]{$T(A)=\text{Diag}^{*}(A)$\\$T(A)=\frac{1}{n}A\one$\\$T(A) = \frac{1}{n}A^T\one$} &
\makecell[l]{$T(W)(u) = W(u, u)$\\$T(W)(u) = \int W(u, v)dv$\\$T(W)(u) = \int W(u, v)du$} &
\makecell[l]{ \{\{1,2,3\}\} \\ \{\{1,2\},\{3\}\} \\ \{\{1,3\},\{2\}\} }
\\ \hline

\makecell[l]{4-5: Replicate mean of all elements/\\ mean of diagonal} &
\makecell[l]{$T(A)_i = \frac{1}{n^2}\one^T A \one $ \\$T(A)_i = \frac{1}{n}\one^T \text{Diag}(\dgdual{A}) \one$} &
\makecell[l]{$T(W)(u) = \int W(u, v) du dv$ \\ $T(W)(u) = \int \mathrm{I}_{u, v} W(u, v)dudv$} &
\makecell[l]{ \{\{1\},\{2\},\{3\}\} \\ \{\{1,2\},\{3\}\} }
\\
\bottomrule
\end{tabular}
}
\end{table}

\section{Preliminaries}
\label{sec:pre}
\subsection{Notations}
To talk about convergence/stability, we will consider graphs of different sizes sampled from a generative model. Similar to the earlier work in this direction, the specific general model we consider is a graphon model.

\textbf{Graphons.} A graphon is a bounded, symmetric and measurable function~$W: [0,1]^2 \to [0,1]$. We denote the space of graphon as $\mc{W}$. It can  be intuitively thought of as an undirected weighted graph with an uncountable number of nodes: roughly speaking, given $u_i, u_j \in [0,1]$, we can consider there is an edge $(i,j)$ with weight $W(u_i, u_j)$.
Given a graphon $W$, we can sample {\bf unweighted} graphs of any size from $W$, either in a deterministic or stochastic manner. We defer the definition of the sampling process until we introduce the \edgeweight{} in \Cref{sec:convergence-ruiz} and \edgeprob{} in \Cref{sec:EP-convergence}.

\textbf{Tensor.} Let $[n]$ denote $\{1, ..., n\}$. A tensor $X$ of order $k$, called a \emph{$k$-tensor}, is a map from $[n]^{\otimes k}$ to $\mb{R}^d$. If we specify a name $\name{i}$ for each axis, we then say $X$ is indexed by $(\name{1}, ..., \name{k})$. With slight abuse of notation, we also write that $X \in \dtensor{k}{d}$. %
We refer to $d$ as the \emph{feature dimensions} or the \emph{channel dimensions}. If $d = 1$, then we have a $k$-tensor $\mb{R}^{n^k\times 1}  = \mb{R}^{n^k}$.
Although the name for each axis acts as an identifier and can be given arbitrarily, we will use \textit{set} to name each axis in this chapter. For example, given a 3-tensor $X$, we use $\{1\}$ to name the first axis, $\{2\}$ for the second axis, and so on. The benefits of doing so will be clear in \Cref{subsec:stability-of-k-ign}.

\textbf{Partition.} A partition of $[k]$, denoted as $\gamma$, is defined to be a set of disjoint sets $\gamma:=\{\gamma_1, ..., \gamma_s \}$  with $s\leqslant k$ such that the following condition satisfies,
1) for all $i\in [s],  \gamma_{i} \subset [k]$, 2) $\gamma_i \cap \gamma_j = \emptyset, \forall$ $i, j\in [s]$, and 3) $\cup_{i=1}^{s} \gamma_i = [k]$.
We denote the space of all partitions of $[k]$ as $\parspace_k$. %
Its cardinality is called the $k$-th \emph{bell number} $\bell{k}=|\parspace_k|$.

\textbf{Other conventions.}
 By default, we use 2-norm (Frobenius norm) to refer \lnorm\
for all vectors/matrices and \Lnorm\ for functions on $[0, 1]$ and $[0,1]^2$. $\|\cdot\|_2$ or $\|\cdot\|$ denotes the 2 norm for discrete objects 
while $\|W\|_{L_2}:= \int \int W(u, v)dudv$ denotes the norm for continuous objects. Similarly, we use $\| \cdot \|_{{\infty}}$ and $\| \cdot \|_{L_\infty}$ to denotes the infinity norm.
When necessary, we use $\|\cdot \|_{L_2([0, 1])}$ to specify the support explicitly. We use $\specnorm{\cdot}$ to denote spectral norm. $\Phi_c$ and $\Phi_d$ refers to the continuous IGN and discrete IGN respectively. We sometimes call a function $f: [0, 1] \rightarrow \mb{R}^d$ a \emph{graphon signal}. Given $A\in \mb{R}^{n^k \times d_1}, B\in \mb{R}^{n^k \times d_2}$, $[A, B]$ is defined to be the concatenation of $A$ and $B$ along feature dimensions, i.e., $[A, B] \in \mb{R}^{n^k \times (d_1 + d_2)}$. See \Cref{table:symbol_notation} for the full symbol list.

\begin{table}[!htbp]
\caption{Summary of important notations.}
\begin{center}
\resizebox{\textwidth}{!}{
\begin{tabular}{@{}l|l@{}}
    \hline
    \toprule
    Symbol & Meaning \\
    \midrule
    \midrule
    $\one_n$ & all-one vector of size $n\times 1$ \\
    $\| \cdot \|_2/\| \cdot \|_{L_2}$ & 2-norm for matrix/
    graphon \\
    $\| \cdot \|_\infty / \| \cdot \|_{L_{\infty}}$ & infinity-norm for matrix/graphon \\
    $[\cdot, \cdot]$ & \makecell[l]{Given $A\in \mb{R}^{n^k \times d_1}, B\in \mb{R}^{n^k \times d_2}$, \\ $[A, B]$ is the concatenation of $A$ and $B$ along feature dimension. $[A, B] \in \mb{R}^{n^k \times (d_1 + d_2)}$.}
      \\
    $W: [0, 1]^2 \rightarrow [0,1]$ &  graphon \\
    $X\in \mb{R}^{[0,1]\times d}$ & 1D signal \\
    $\mc{W}$ & space of graphons \\
    $\newnorm{\cdot}$ & \textnewnorm{}. When the underlying norm is $L_{\infty}$ norm, we also use $\newnorminf{\cdot}$. \\
    $\mathrm{I}$ & indicator function \\
    $I$ & interval \\
    SGNN & spectral graph neural networks, defined in \Cref{eq:sgnn} \\
    $\LE_{\ell, m}$ & linear equivariant maps from $\ell$-tensor to $m$-tensor \\
    \hline
     & Notations related to sampling \\ \hline

    $W_n$ & Induced piecewise constant graphon from fixed grid \\
    $\widetilde{W_n}$ & Induced piecewise constant graphon from random grid \\
    $\inducedEW$ & \makecell[l]{Induced piecewise constant graphon from random grid, but resize the all \\ individual blocks to be of equal size (also called chessboard graphon in this chapter). \\ $\inducedEW(I_i \times I_j) \:= W(u_{(i)}, u_{(j)})$} \\
    $\sampleW$ & $n\times n$ matrix sampled from $W$; $\sampleW(i, j) = W(u_{i}, u_{j})$ \\
    $\widehat{W}_{n\times n} \in \mb{R}^{n \times n}$ & \makecell[l]{the estimated edge probability from graphs sampled according to \\ edge probability discrete model from \cite{zhang2015estimating}} \\
    $\widetilde{x_n} \in \mb{R}^{n \times d}$ & sampled signal $[\widetilde{x_n}]_i:=X(u_i)$ \\
    $X_n$ & induced 1D piecewise graphon signal from fixed grid \\
    $\inducedX$ & induced 1D piecewise graphon signal from random grid \\
    $S_U$ & normalized sampling operator for random grid. $S_Uf(i, j) = \frac{1}{n}(f(u_{(i)}), f(u_{(j)})$ \\
    $S_n$ & normalized sampling operator for fixed grid. $S_nf(i, j) = \frac{1}{n}(f(\frac{i}{n}), f(\frac{j}{n}))$\\
    $\MSE_U(x, f)$ & $\left(n^{-1} \sum_{i=1}^{n}\left\|x_{i}-f\left(u_{i}\right)\right\|^{2}\right)^{1 / 2}$ for 1D signal; $\left(n^{-2} \sum_{i}\sum_{j}\left\|x_{i, j}-f\left(u_{i}, u_{j}\right)\right\|^{2}\right)^{1 / 2}$ for 2D case\\
    $\alpha_n$ & a parameter that controls the sparsity of sample graphs. Set to be $1$ in the chapter. \\
    \hline
     & Notations related to IGN \\ \hline
    $\bell{k}$ & Bell number: number of partitions of $[k]$. $\bell{2} = 2, \bell{3} = 5, \bell{4} = 15, \bell{5} = 52...$ \\
    $\parspace_k$ & space of all partitions of $[k]$ \\
    $\mc{I}_k$ & the space of indices. $\mc{I}_k:=\{(i_1, ..., i_k)| i_1\in [n], ..., i_k\in [n]\}$. Elements of $\mc{I}_k$ is denoted as $\bs{a}$ \\
    $\gamma \in [k]$ & \makecell[l]{partition of $[k]$. For example $\set{\set{1,2}, \set{3}}$ is a partition of $[3]$. \\
    The total number of partitions of $[k]$ is $\bell{k}$.} \\
    $\bs{a} \in \gamma$ & $\bs{a}$ satisfies the equivalence pattern of $\gamma$. For example, $(x, x, y) \in \set{\set{1, 2}, \set{3}}$ where $x, y, z\in [n]$.  \\
    $\gamma < \beta $ & given two partitions $\gamma, \beta \in \parspace_k$, $\gamma < \beta$ if $\gamma$ is finer than $\beta$. For example, $\set{1,2,3}<\set{\set{1,2}, \set{3}}$. \\

    $\bs{B}_{\gamma}$ & \makecell[l]{$l+m$ tensor; tensor representation of $\LE_{l, m}$ maps. \\ we differentiate $T_{\gamma}$ (operators) from  $\bs{B}_{\gamma}$ (tensor representation of operators)}\\

    $\mc{B}$ & a basis of the space of linear equivariant operations from $\ell$-tensor to $m$-tensor. $\mc{B}=\set{T_{\gamma}|\gamma \in \parspace_{l+k}}$ \\
    
    $T_c/T_d$ & linear equivariant layers for graphon (continuous) and graphs (discrete) \\
    
    $\Phi_c/\Phi_d$ & IGN for graphon (continuous) and graphs (discrete) \\
    $L^{(i)}$ & i-th linear equivariant layer of IGN \\
    $L$ & normalized graph Laplacian \\
    $T_{i}$ & basis element of the space of linear equivariant maps; sometimes also written as $T_{\gamma}$. \\

    \bottomrule
\end{tabular}
}
\end{center}
\label{table:symbol_notation}
\end{table}

\subsection{Invariant Graph Network}

\begin{definition}\label{def:ign}
An Invariant Graph Network (IGN) is a function $\Phi: \mb{R}^{n^{2} \times d_{0}} \rightarrow \mb{R}^d$ of the following form:
\begin{equation}\label{eqn:ign}
F=%
h \circ L^{(T)} \circ \sigma \circ \cdots \circ \sigma \circ L^{(1)},
\end{equation}
where each $L^{(t)}$ is a linear equivariant (LE) layer \cite{maron2018invariant} from $\mb{R}^{n^{k_{t-1}} \times d_{t-1}} \text { to } \mb{R}^{n^{k_{t}} \times d_{t}}$ (i.e., mapping a $k_{t-1}$ tensor with $d_{t-1}$ channels to a $k_t$ tensor with $d_t$ channels), $\sigma$ is nonlinear activation function, $h$ is a linear invariant layer from $k_T$-tensor $\mb{R}^{n^{k_T} \times d_{T}} \text { to vector in } \mb{R}^d$. $d_t$ is the channel number, and $k_t$ is tensor order in $t$-th layer. %
\end{definition}

Let $\dg{\cdot}$ be the operator of constructing a diagonal matrix from vector and $\dgdual{\cdot}$ be the operation of extracting a diagonal from a matrix.
Under the IGN framework, we view a graph with $n$ nodes as a $2$-tensor: In particular,
given its adjacency matrix $A_n$ of size $n\times n$ with node features $X_{n}\in \mb{R}^{n\times d_{\text{node}}}$ and edge features $E_{n \times n} \in \mb{R}^{n^2\times d_{\text{edge}}}$, the input of IGN is the concatenation of $[A_n, \text{Diag}(X_n), E_{n\times n}]\in \mb{R}^{n^2\times (1 + d_{\text{node}}+ d_{\text{edge}})}$ along different channels. %
We drop the subscript when there is no confusion.
We use {\bf $2$-IGN} to denote the IGN whose largest tensor order within any intermediate layer is $2$,
while {\bf $k$-IGN} is one whose largest tensor order across all layers is $k$. We use IGN to refer to the general IGN for any order $k$. %

Without loss of generality, we consider input and output tensor to have a single channel. %
Consider all linear equivariant maps from $\mb{R}^{n^{\ell}}$ to $\mb{R}^{n^{m}}$, denoted as $\LE_{\ell + m}$. \cite{maron2018invariant} characterizes the basis of the space of $\LE_{\ell, m}$. It turns out that the cardinality of the basis equals to the bell number $\bell{\ell + m}$, thus depending only on the order of input/output tensor and independent from graph size $n$.
As an example, we list a specific basis of the space of \LE{} maps for \IGN{} (thus with tensor order at most $2$) in \Cref{tab:R2-R2,tab:R1-R2,tab:R2-R1} when input/output channel numbers are both 1. Extending the \LE{} layers to multiple input/output channels is straightforward, and can be achieved by parametrizing the \LE{} layers according to indices of input/output channel. See \Cref{remark:IGN}.
Note that one difference of the operators in  \Cref{tab:R2-R1,tab:R1-R2,tab:R2-R2} from those given in the original paper is that
here we normalize all operators appropriately w.r.t. the graph size $n$. (This normalization is also in the official implementation of the IGN paper.)
This is necessary when we consider the continuous limiting case.

\begin{remark}[multi-channel IGN contains MLP]
\label{remark:IGN}
For simplicity, in the main text, we focus on the case when the input and output tensor channel number is 1. The general case of multiple input and output channels is presented in Equation 9 of \cite{maron2018invariant}. The main takeaway is that permutation equivariance does not constrain the mixing over feature channels, i.e., the space of linear equivariant maps from $\mb{R}^{n^{\ell}\times d_1} \rightarrow \mb{R}^{n^{m}\times d_2}$ if of dimension $d_1d_2\bell{l+m}$. Therefore IGN contains MLP.  %
\end{remark}

To talk about convergence, one has to define the continuous analog of IGN for graphons. In \Cref{tab:R2-R2,tab:R2-R1,tab:R1-R2} we extend all \LE{} operators %
defined for graphs to graphons, %
resulting in the continuous analog of $2$-IGN, denoted as 2-\cIGN{} or $\Phi_c$ in the remaining text. Similar operation can be done in general for $k$-IGN as well, where the basis elements for $k$-IGNs will be described in \Cref{subsec:stability-of-k-ign}.
\begin{definition}[$2$-cIGN]\label{def:cign}
By extending all \LE{} layers for $2$-IGN to the graphon case as shown in \Cref{tab:R2-R2,tab:R2-R1,tab:R1-R2}, we can definite the corresponding 2-\cIGN{} %
via \cref{eqn:ign}.
\end{definition}

\subsection{Edge Probability Estimation from \cite{zhang2015estimating}}
\label{subsec:zhang}
We next restate the setting and theorem regarding the theoretical guarantee of the edge probability estimation algorithm.

\begin{definition}
For any $\delta, A_1>0$, let $\mathcal{F}_{\delta ; L}$ de note a family of piecewise Lipschitz graphon functions $f:[0,1]^{2} \rightarrow[0,1]$ such that $(i)$ there exists an integer $K \geq 1$ and a sequence $0=x_{0}<\cdots<x_{K}=1$ satisfying $\min _{0 \leqslant  s \leqslant  K-1}\left(x_{s+1}-\right.$ $\left.x_{s}\right) \geq \delta$, and (ii) both $\left|f\left(u_{1}, v\right)-f\left(u_{2}, v\right)\right| \leqslant  A_1\left|u_{1}-u_{2}\right|$ and $\left|f\left(u, v_{1}\right)-f\left(u, v_{2}\right)\right| \leqslant  A_1 \mid v_{1}-$ $v_{2} \mid$ hold for all $u, u_{1}, u_{2} \in\left[x_{s}, x_{s+1}\right], v, v_{1}, v_{2} \in\left[x_{t}, x_{t+1}\right]$ and $0 \leqslant  s, t \leqslant  K-1$
\end{definition}

Assume that $\alpha_n$ = 1. It is easy to see that the setup considered in \cite{zhang2015estimating} is slightly more general than the setup in \cite{keriven2020convergence}. The statistical guarantee of the edge smoothing algorithm is stated below.

\begin{theorem}[\cite{zhang2015estimating}]\label{thm:graphon-estimation}
Assume that $A_1$ is a global constant and $\delta=\delta(n)$ depends on $n$, satisfying $\lim_{n \rightarrow \infty} \delta /(n^{-1} \log n)^{1 / 2} \rightarrow \infty$. Then the estimator $\tilde{P}$ with neighborhood $\mathcal{N}_{i}$ defined in \cite{zhang2015estimating} and $h=C(n^{-1} \log n)^{1 / 2}$ for any global constant $C \in(0,1]$, satisfies
$
\max_{f \in \mathcal{F}_{\delta; A_1}} \operatorname{pr}\{d_{2, \infty}(\tilde{P}, P)^{2} \geq C_{1}(\frac{\log n}{n})^{1 / 2}\} \leqslant  n^{-C_{2}}
$
where $C_{1}$ and $C_{2}$ are positive global constants. Here, $d_{2, \infty}(P, Q) \:= n^{-1 / 2}\|P-Q\|_{2, \infty}=\max_{i} n^{-1 / 2} \|P_{i}-Q_{i} \|_{2}$.
\end{theorem}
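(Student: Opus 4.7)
The plan is a classical bias--variance analysis of the data-driven neighborhood-smoothing estimator $\tilde{P}$. For each $i$, $\tilde{P}_{ij}$ averages the observed entries $A_{kj}$ over a neighborhood $\mathcal{N}_i$ chosen by thresholding a pairwise dissimilarity computed from $A$, so the two ingredients I would control are (a) that $\mathcal{N}_i$ is populated by nodes whose latent positions $u_k$ are close to $u_i$, and (b) that, conditional on this geometric closeness, the averaging error is small by piecewise Lipschitz continuity (bias) and by concentration of sums of bounded independent variables (variance).

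First I would show that the neighborhoods track geometric neighbors. Conditional on the latent positions $u$, the sample dissimilarity $\hat{d}(i,j)$ used by the algorithm is a bounded average whose conditional expectation is a population dissimilarity $d(u_i,u_j)$ that vanishes (modulo the at most $K$ Lipschitz-piece boundaries) iff $u_i = u_j$. Hoeffding/Bernstein plus a union bound over the $\binom{n}{2}$ pairs give $|\hat{d}(i,j) - d(u_i,u_j)| = O(\sqrt{\log n / n})$ uniformly, with failure probability at most $n^{-C_2'}$ for a tunable constant $C_2'$. The choice $h = C (n^{-1}\log n)^{1/2}$ then ensures that, on this high-probability event, $\mathcal{N}_i$ is contained in a latent band $\{k : |u_k - u_i| \lesssim h\}$, up to a vanishing fraction of boundary-crossing neighbors whose contribution is absorbed by the hypothesis $\delta \gg (n^{-1}\log n)^{1/2}$.

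Given this event I would run a per-entry decomposition. The piecewise Lipschitz hypothesis controls the bias: $|\mathbb{E}[\tilde{P}_{ij}\mid u,\mathcal{N}_i] - P_{ij}| \le A_1 \max_{k\in \mathcal{N}_i} |u_k - u_i| = O((\log n / n)^{1/2})$. For the variance, conditional on $\mathcal{N}_i$ of cardinality $\asymp nh$, $\tilde{P}_{ij}$ is a scaled sum of independent Bernoulli random variables, so Bernstein gives a deviation of $O((nh)^{-1/2}\sqrt{\log n}) = O((\log n / n)^{1/4})$ per entry with probability at least $1 - n^{-c}$. Squaring, summing over $j$, and invoking $d_{2,\infty}(\tilde{P},P)^2 = \max_i n^{-1}\|\tilde{P}_i - P_i\|_2^2$, a final union bound over $i \in [n]$ converts the per-row tail into the claimed $n^{-C_2}$ global tail at the rate $C_1 (\log n / n)^{1/2}$.

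The main obstacle is step (a): certifying that $\hat{d}(i,j)$ faithfully reflects closeness of the latent positions uniformly in $i$ and $j$. One needs a concentration bound that survives a union over $\Theta(n^2)$ pairs while the underlying signal scale is only $O(\sqrt{\log n/n})$, which is why the analysis is tight against both the prescribed bandwidth $h$ and the lower bound on $\delta$: below the $(n^{-1}\log n)^{1/2}$ scale, random fluctuations of $\hat{d}$ would drown out the geometric separation one is trying to detect, and the smoothing would leak across Lipschitz pieces, destroying the bias control in step (b).
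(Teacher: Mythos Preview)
The paper does not prove this theorem at all: it is quoted verbatim from \cite{zhang2015estimating} as an external input, used only as a black box in the proof of Theorem~\ref{thm:convergenceafterEM} to control the third (``estimation error'') term. There is therefore nothing in the paper to compare your proposal against.

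That said, your sketch is a reasonable high-level outline of how the original result in \cite{zhang2015estimating} is obtained---a bias--variance decomposition together with uniform concentration of the empirical dissimilarity used to build $\mathcal{N}_i$, with the bandwidth $h$ and the piece-width lower bound $\delta$ both calibrated at the $(n^{-1}\log n)^{1/2}$ scale. For the purposes of this paper, however, you are not expected to reprove it; simply citing the theorem and invoking $d_{2,\infty}(\tilde{P},P)\to 0$ with high probability is all that is used downstream.
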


\section{Stability of Linear Layers in IGN}
\label{sec:stability}
In this section, we first show a stability result for a single linear layer of IGN. That is, given two graphon $W_1, W_2$, we show that if $\|W_1 - W_2\|_{\tn{pn}}$ is small, then the distance between the objects after applying a single \LE{} layer remain close. Here $\| \cdot \|_{\tn{pn}}$ is a \textnewnorm{} that will be introduced in a moment.
Similar statements also hold for the discrete case when the input is a graph. We first describe how to prove stability for $2$-(c)IGN as a warm-up. We then prove it for $k$-(c)IGN, which is significantly more interesting and requires a new interpretation of the elements in a specific basis of the space of \LE{} operators in \cite{maron2018invariant}.

A the general \LE{} layer $T: \mb{R}^{n^\ell} \to \mb{R}^{n^m}$%
can be written as $T = \sum_{\gamma} c_{\gamma}T_{\gamma}$, where $T_{\gamma} \in \mc{B}:=\set{T_{\gamma}|\gamma \in \parspace_{\ell +m}}$ is the basis element of the space of $\LE_{\ell, m}$ and $c_{\gamma}$ are denoted as filter coefficients. Hence proving the stability of $T$ can be reduced to showing the stability for each element in $\mc{B}$, which we focus from now on.

\subsection{Stability of Linear Layers of $2$-IGN}

A natural way to show stability is by showing that the spectral norm of each \LE{} operator in a basis is bounded. However, even for 2-IGN, as we see some \LE{} operator %
requires replicating ``diagonal elements to all rows" (e.g., operator 14-15 in Table \ref{tab:R2-R2}), and has unbounded spectral norm. To address this challenge, we need a more refined analysis. In particular, below we will introduce a ``new" norm that treats the diagonal differently from non-diagonal elements for the $2$-tensor case. We term it \emph{\textnewnorm{}} as later when handling high order $k$-IGN, we will see that this norm arises naturally w.r.t. the partition of index set of tensors.

\begin{restatable}[\textNewnorm{}]{definition}{partitionnormdef}
\label{def:new-norm}
The \emph{\textnewnorm{}} of 2-tensor $A\in \mb{R}^{n^2}$ is defined as $\|A\|_{\tn{pn}}:=(\frac{\|\dgdual{A}\|_2}{\sqrt{n}}, \frac{\|A\|_2}{n})$.
The continuous analog of the \textnewnorm{} for graphon $W \in \mc{W}$ is defined as $\|W\|_{\tn{pn}} = \left(\sqrt{\int W^2(u,u)du}, \sqrt{\iint W^2(u, v) dudv} \right)$.

We refer to the first term as the \emph{normalized diagonal norm} and the second term as the \emph{normalized matrix norm}. Furthermore, we define operations like addition/comparison on the \textnewnorm{} simply as component-wise operations. For example, $\newnorm{A} \le \newnorm{B}$ if each of the two terms of $A$ is at most the corresponding term of $B$.
\end{restatable}

As each term in \textnewnorm{} is a norm on different parts of the input, the \textnewnorm{} is also a norm.
By summing over the finite feature dimension both for finite and infinite cases, the definition of the \textnewnorm{} can be extended to multi-channel tensors $\mb{R}^{n^2\times d}$ and its continuous version $ \mb{R}^{[0,1]^2 \times d}$. See \Cref{subsec:extending-new-norm} for details.  %

The following result shows that each basis operation for 2-IGN, shown in Tables \ref{tab:R2-R2}, \ref{tab:R1-R2} and \ref{tab:R2-R1}, is stable w.r.t. the \textnewnorm{}. Hence a \LE{} layer consisting of a finite combination of these operations will remain stable. The proof is via a case-by-case analysis and can be found in \Cref{app:2-IGN-linear-stability}.

\begin{restatable}[]{proposition}{RtwoRtwo}
\label{prop:R2-R2}
For all \LE{} operators $T_i:\mb{R}^{n^2} \rightarrow \mb{R}^{n^2}$ of discrete $2$-IGN listed in \Cref{tab:R2-R2}, $\|T_i(A)\|_{\tn{pn}} \leqslant  \newnorm{A}$  for any
$A \in \mb{R}^{n^2}$. Similar statements hold for $T_i: \mb{R}^{n} \rightarrow \mb{R}^{n^2}$ and $T_i: \mb{R}^{n^2} \rightarrow \mb{R}^{n}$ in \Cref{tab:R1-R2,tab:R2-R1}. In the case of continuous 2-cIGN, the stability also holds. %
\end{restatable}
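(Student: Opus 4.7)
The plan is to verify the bound $\newnorm{T_i(A)} \le \newnorm{A}$ operator by operator, going through each of the basis elements listed in \Cref{tab:R2-R2}, \Cref{tab:R1-R2}, and \Cref{tab:R2-R1}. For each basis $T_i$, I compute separately the two components of the output \textnewnorm{}, namely the normalized diagonal norm $\|\dgdual{T_i(A)}\|_2/\sqrt{n}$ and the normalized matrix norm $\|T_i(A)\|_2/n$, and bound each in terms of the corresponding quantities for $A$. The main computational tool is Cauchy--Schwarz in the form $\bigl(\sum_{k=1}^{n} a_k\bigr)^2 \le n\sum_{k=1}^{n} a_k^2$, applied over rows, columns, the diagonal, or the full index set as dictated by the structure of the operator.

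It is convenient to organize the fifteen operators of \Cref{tab:R2-R2} into five groups: (i) the identity and transpose, which preserve both components exactly; (ii) the diag-extraction $T(A)=\dg{\dgdual{A}}$, which keeps the diagonal part and trivially bounds the matrix component; (iii) the row/column-averaging operators (items 4--11), each disposed of by a single application of Cauchy--Schwarz on rows or columns of $A$; (iv) the diagonal-averaging operators (items 12--13), where the output is a rescaling of $\dgdual{A}$ and both components fall out from $\|\dgdual{A}\|_2^2 \le n\|A\|_2^2/n^2$ type bounds; and (v) the diagonal-replication operators (items 14--15), where the output matrix is constant along rows (resp.\ columns) with that constant equal to $A_{uu}$, so both components of $\newnorm{T(A)}$ are controlled by the diagonal component of $\newnorm{A}$. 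The operators of \Cref{tab:R1-R2} and \Cref{tab:R2-R1} are handled identically; the relevant norm on the 1-tensor side is $\|x\|_2/\sqrt{n}$, which plays the role of the diagonal component in the $2$-tensor \textnewnorm{}.

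For the continuous case, the argument is essentially identical symbol for symbol: the finite sum $\frac{1}{n}\sum_i$ becomes the integral $\int du$, the counting factor $n$ becomes the Lebesgue measure of $[0,1]$, and the finite-dimensional Cauchy--Schwarz inequality is replaced by its $L^2([0,1])$ analogue $\bigl(\int f\,du\bigr)^2 \le \int f^2\,du$. Since each continuous entry of the three tables is a faithful translation of its discrete counterpart, the same normalization bookkeeping goes through and yields the same estimates for \cIGN{}.

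The main obstacle here is not depth but bookkeeping: one has to be careful with the operators that mix diagonal and off-diagonal data. In particular, for items 6, 9, 11, and 13 of \Cref{tab:R2-R2}, which deposit a row/column/full/diagonal average onto the diagonal of the output, the extra $\sqrt{n}$ that appears when one converts a diagonal of length $n$ into a Frobenius norm must be matched exactly by the $1/n$ normalization in the matrix component of the \textnewnorm{} on the right-hand side; conversely, for items 14--15, the increase in mass from spreading $n$ diagonal entries over $n^2$ positions has to cancel against the $1/n$ versus $1/\sqrt{n}$ normalizations. It is precisely to make these cancellations line up uniformly across all basis elements that the partition-based normalization of \Cref{def:new-norm} was introduced in the first place, which is why the case analysis remains mechanical once the right norm is in hand.
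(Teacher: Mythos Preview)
Your approach is the same as the paper's: a case-by-case check of each basis operator, bounding the two partition-norm components via Cauchy--Schwarz/Jensen, with the continuous case handled by the obvious translation. The paper groups the cases slightly differently but the argument is identical in substance.

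One point to tighten: the literal component-wise inequality $\newnorm{T_i(A)}\le\newnorm{A}$ actually fails for operators 12--15. Take $A=I_n$ and $T_{14}(A)_{ij}=A_{ii}$: the output matrix component is $\tfrac{1}{n}\|T_{14}(A)\|_2=\tfrac{1}{\sqrt{n}}\|\dgdual{A}\|_2=1$, while the input matrix component is $\tfrac{1}{n}\|I_n\|_2=1/\sqrt{n}$. The bound you sketch in group (iv), $\|\dgdual{A}\|_2^2\le \|A\|_2^2/n$, is exactly the (false) inequality that would be needed to close this. Your observation in group (v) is correct---both output components are controlled by the input \emph{diagonal} component---but that is not the same-component bound the proposition literally asserts.

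What the paper's proof in fact establishes (see its concluding sentence for the $\mathbb{R}^{[0,1]^2}\to\mathbb{R}^{[0,1]^2}$ case) is the weaker statement: if $\newnorm{A}\le(\epsilon,\epsilon)$ then $\newnorm{T_i(A)}\le(\epsilon,\epsilon)$. This is also the form used downstream in \Cref{thm:linear-layer-stability} and \Cref{prop:Phi-stable}. So your plan is sound once you adjust the target to this $\epsilon$-form; the case analysis then goes through exactly as you describe.
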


\begin{remark}
Note that this also implies that given $W_1, W_2 \in \mc{W}$, we have that $\newnorm{T_i(W_1) - T_i(W_2)} \le \newnorm{W_1 - W_2}$. Similarly, given $A_1, A_2 \in \mb{R}^{n^2 \times 1}$ $= \mb{R}^{n^2}$, we have $\newnorm{T_i(A_1) - T_i(A_2)} \le \newnorm{A_1 - A_2}$.
\end{remark}

\subsection{Stability of Linear Layers of \kIGN{}}
\label{subsec:stability-of-k-ign}
We now consider the more general case of \kIGN{}. In principle, the proof of \IGN{} can still be extended to \kIGN{}, but going through all $\bell{k}$ number of elements of \LE{} basis of \kIGN{} one by one
 can be quite cumbersome. %
In the next two subsections, we provide a new interpretation of elements of the basis of space of $\LE_{\ell, m}$ in a unified framework so that we can avoid a case-by-case analysis. Such an interpretation, detailed in \Cref{subsec:interpretation}, is potentially of independent interest.
First, we need some notations. %

\begin{definition}[Equivalence pattern]\label{def:equivpattern}
Given a $k$-tensor $X$, denote the space of its indices $\{(i_1, ..., i_k) \mid i_1\in [n], ..., i_k\in [n]\}$ by $\mc{I}_{k}$. Given $X$, $\gamma =\{\gamma_1, ..., \gamma_d\}\in \parspace_k$ and an element $\bs{a}=(a_1, ..., a_k) \in \mc{I}_{k}$, we say $\bs{a} \in \gamma$ if $i, j \in \gamma_{l}$ for some $l\in[d]$ always implies $a_i = a_j$. %
  Alternatively, we also say $\bs{a}$ satisfies the equivalence pattern of $\gamma$ if $\bs{a} \in \gamma$.
\end{definition}
As an example, suppose $\gamma=\{\{1, 2\}, \{3\}\}$. Then $(x, x, y) \in \gamma$ %
while $(x, y, z) \notin \gamma$.
Equivalence patterns can induce ``slices''/sub-tensors of a tensor.
 \begin{definition}[Slice/sub-tensor of $X \in \dtensor{k}{1}$ for $\gamma \in \parspace_k$]
 \label{def:slice}
 Let $X \in \dtensor{k}{1}$ be a $k$-tensor indexed by $(\set{1}, ..., \set{k})$.
 Consider a partition $\gamma = \{\gamma_1, ..., \gamma_{k'}\} \in \parspace_k$ of cardinality $k' \leqslant k$.
 The \emph{slice (sub-tensor) of $X$ induced by $\gamma$} is a $k'$-tensor $X_{\gamma}$, indexed by $(\gamma_1, ..., \gamma_{k'})$, and defined to be
  $X_{\gamma}(j_1, ..., j_{k'}) := X(\iota_{\gamma}(j_1, ..., j_{k'}))$ where $j_{\cdot} \in [n]$ and $\iota_{\gamma}(j_1, ..., j_{k'})\in \gamma$.
  $\iota_{\gamma}: [n]^{k'} \rightarrow [n]^k$ is defined to be $\iota_{\gamma}(j_1, ..., j_{k'}) := (i_1, ..., i_k) $ such that $\{a, b\}\subseteq \gamma_c$   implies $i_a = i_b := j_c$. Here $a, b\in [k], c\in [k']$. %
As an example, we show five slices of a $3$-tensor in \Cref{fig:slices}.
 \end{definition}
Consider the \LE{} operators from $\mb{R}^{n^{\ell}}$ to $\mb{R}^{n^m}$. Each such map $T_{\gamma}$ can be represented by a matrix of size $n^{\ell}\times n^m$ which can further considered as a $(\ell+m)$-tensor $\bs{B}_{\gamma}$. %
\cite{maron2018invariant} showed that a specific basis for such operators can be characterized as follows: Each basis element will correspond to one of the $\bell{\ell+m}$ partitions in $\parspace_{\ell+m}$. In particular, given a partition $\gamma\in \parspace_{\ell+m}$, we have a corresponding basis \LE{} operator $T_{\gamma}$ and its tensor representation $\mathbf{B}_{\gamma}$ defined as follows:
\begin{equation}
\text{for any}~\bs{a} \in \mc{I}_{\ell +m}, ~~
\mathbf{B}_{\gamma}(\boldsymbol{a})= \begin{cases}1 & \boldsymbol{a} \in \gamma \\ 0 & \text { otherwise }\end{cases}
\end{equation}\label{eqn:kIGNbasis}
The collection $\mc{B} = \{ T_\gamma \mid \gamma \in \parspace_{\ell+m}\}$ form a basis for all $\LE_{\ell, m}$ maps. In \Cref{subsec:interpretation}, we will provide an interpretation of each element of $\mc{B}$, making it easy to reason its effect on an input tensor using a unified framework.

Before the main theorem, we also need to extend the \textnewnorm{} in \Cref{def:new-norm} from 2-tensor to high-order tensor. %
Intuitively, for $X \in \mb{R}^{n^{k}}$, $\newnorm{X}$ has $\bell{k}$ components, where each component corresponds to the normalized norm of $X_{\gamma}$, the slice of $X$ induced by $\gamma \in \parspace_{k}$. See \Cref{fig:slices} for examples of slices of a 3-tensor.  The \textnewnorm{} of input and output of a $\LE_{\ell, m}$ will be of dimension $\bell{\ell }$ and $\bell{m}$ respectively. See \Cref{subsec:extending-new-norm} for details.

\begin{figure}[htp]
  \centering
  \includegraphics[width=1\linewidth]{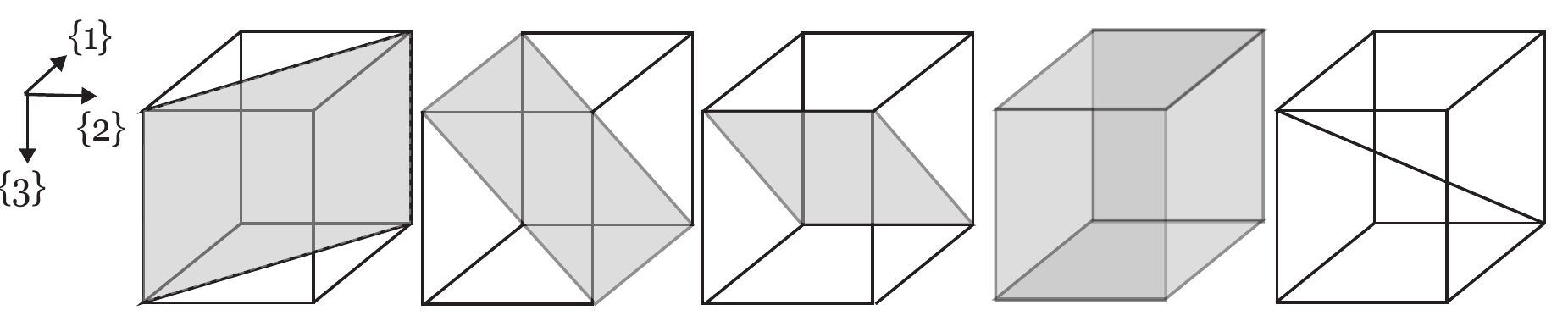}
\caption{Five possible ``slices'' of a 3-tensor, corresponding to $\bell{3}=5$ partitions of $[3]$. From left to right: a) $\{\{1, 2\}, \{3\}\}$ b) $\{\{1\}, \{2, 3\}\}$ c) $\{\{1, 3\}, \{2\}\}$ d) $\{\{1\}, \{2\}, \{3\}\}$ e) $\{\{1, 2, 3\}\}$.}
\label{fig:slices}
\end{figure}

 The following theorem characterizes the effect of each operator in $\mc{B}$ in terms of \textnewnorm{} of input and output, generalizing \Cref{prop:R2-R2} from matrix to high order tensor.

\begin{restatable}[Stability of \LE{} layers for $k$-IGN]{theorem}{kignlinearstability}
\label{thm:linear-layer-stability}
Let $T_{\gamma}: \mb{R}^{[0,1]^{\ell}} \rightarrow \mb{R}^{[0,1]^m}$ be a basis element of the space of $\LE_{\ell, m}$ maps where $\gamma \in \parspace_{\ell + m}$. %
If $\newnorm{X} \leqslant  \epsilon \one_{\bell{\ell}}$, then the \textnewnorm{} of $Y:=T_{\gamma}(X)$ satisfies  $\newnorm{Y}\leqslant  \epsilon \one_{\bell{m}}$ for all $\gamma \in \parspace_{\ell+m}$.
\end{restatable}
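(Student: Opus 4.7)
The plan is to derive the bound for each output slice $Y_\beta$, $\beta \in \parspace_m$, from two elementary inequalities about normalized $L_2$ norms on $[0,1]^d$, combined with the partition-indexed interpretation of the basis \LE{} operators to be developed in \Cref{subsec:interpretation}. Concretely, a basis element $T_\gamma$ with $\gamma \in \parspace_{\ell+m}$ can be viewed as acting in three stages: (i) restrict $X$ to the slice $X_{\gamma_{\text{in}}}$, where $\gamma_{\text{in}}$ is the input-side partition obtained by restricting $\gamma$ to $\{1,\ldots,\ell\}$; (ii) for every group of $\gamma$ that lives entirely inside the input coordinates, integrate out the corresponding single variable over $[0,1]$ (the ``averaging over free input coordinates'' step); and (iii) broadcast the resulting tensor to the $m$ output coordinates by identifying, for every group of $\gamma$ that meets the output, all output coordinates in that group with the unique variable of the group.

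The two primitive inequalities driving the proof are an \emph{averaging contraction} and a \emph{replication isometry}. The averaging contraction states that for any measurable $f:[0,1]^{d+1}\to\mathbb{R}$, $\bigl\| \int_0^1 f(\cdot,u)\,du \bigr\|_{L_2([0,1]^d)} \leqslant \|f\|_{L_2([0,1]^{d+1})}$, which is immediate from Cauchy--Schwarz applied with unit measure on $[0,1]$. The replication isometry states that if $g(x,u):=f(x)$ does not depend on $u\in[0,1]$, then $\|g\|_{L_2([0,1]^{d+1})} = \|f\|_{L_2([0,1]^d)}$. Both are precisely the reason for the $L_2$-normalization appearing in \Cref{def:new-norm}: averaging over a new coordinate cannot increase the normalized norm, and broadcasting onto a new coordinate preserves it exactly.

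Given a fixed output partition $\beta\in\parspace_m$, I would compute $Y_\beta$ by imposing the equivalence pattern of $\beta$ on the output coordinates of $Y=T_\gamma(X)$. This yields a well-defined input-side partition $\gamma_{\text{in}}^\beta \in \parspace_\ell$ and expresses $Y_\beta$ as (slice of $X$ along $\gamma_{\text{in}}^\beta$) with some subset of coordinates integrated out and the remaining coordinates broadcast over the output. By the assumption $\newnorm{X}\leqslant \epsilon\,\one_{\bell{\ell}}$, the slice $X_{\gamma_{\text{in}}^\beta}$ has normalized $L_2$ norm at most $\epsilon$; applying the averaging contraction once per integrated coordinate and the replication isometry once per broadcast output coordinate gives $\|Y_\beta\|_{L_2(\text{norm.})}\leqslant \epsilon$. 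Ranging over $\beta\in\parspace_m$ yields the claimed componentwise inequality $\newnorm{Y}\leqslant \epsilon\,\one_{\bell{m}}$.

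The main obstacle is bookkeeping rather than analysis: for a given pair $(\gamma,\beta)$, one must precisely identify the slice $\gamma_{\text{in}}^\beta$, especially when $\beta$ merges two distinct output-side groups of $\gamma$ and thereby forces transitive equalities among input coordinates that were not originally identified by $\gamma_{\text{in}}$. My plan is to phrase the whole argument in terms of the join (finest common coarsening) of $\gamma$ with the partition of $[\ell+m]$ induced by $\beta$ on the output block; this packages the case analysis into a single construction and replaces the case-by-case enumeration used for \IGN{} in \Cref{prop:R2-R2} with a uniform argument that scales to all $\bell{\ell+m}$ basis elements at once.
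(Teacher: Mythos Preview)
Your approach is essentially the paper's, reorganized. Both use the same decomposition of $T_\gamma$ (select an input slice, average over the purely-input blocks, align, replicate to output) and the same two analytic primitives (averaging contracts the normalized $L_2$ norm, replication preserves it). The paper packages these as four closure properties of $\newnorm{\cdot}$ under slicing, averaging, replication, and zero-extension (\Cref{lemma:property-of-partition-norm}(a)--(d)) and applies them sequentially to the four stages of $T_\gamma$; you instead fix an output partition $\beta$ and trace it back to a single input slice via the join of $\gamma$ with the $\beta$-extended partition of $[\ell+m]$. That join is precisely the auxiliary partition $\alpha$ appearing in the paper's proof of part~(d), so the two arguments are the same content arranged differently.

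One case you should make explicit: you discuss $\beta$ \emph{merging} distinct output-side blocks of $\gamma$ (which forces new input equalities), but not the reverse case where $\beta$ \emph{separates} output coordinates that $\gamma$ already identifies. There $Y_\beta$, as a function on $[0,1]^{|\beta|}$, is supported on a sub-diagonal of strictly lower dimension, so $\|Y_\beta\|_{L_2}=0$ and the bound is vacuous in the continuous setting of the theorem. Your ``slice, average, broadcast'' description does not literally produce this zero, so it needs to be noted separately. This is exactly where the paper's part~(d) does nontrivial work in the discrete version (the norm is not zero there, and one uses $|\alpha|\le|\beta|$ against the $n^{-|\beta|/2}$ normalization); if you want a uniform proof covering both settings you will need that third ingredient.
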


The proof relies on a new interpretation of elements of $\mc{B}$ in $k$-IGN. We give only an intuitive sketch using an example in the next subsection. See \Cref{subsec:linear-layer-stability-proof} for the proof.

\subsection{Interpretation of Basis Elements}
\label{subsec:interpretation}
For better understanding, we color the input axis %
 $\{1, ..., \ell\}$ as red and output axis $\{\ell +1, ..., \ell +m \}$ as blue. Each $T_{\gamma}$ corresponds to one partition $\gamma$ of $[\ell + m]$. %

For any partition $\gamma\in \parspace_{l+k}$, we can write this set as disjoint union $\gamma=S_1 \cup S_2 \cup S_3$ where $S_1$ is a set of set(s) of input axis, and $S_3$ is a set of set(s) of output axis. $S_2$ is a set of set(s) where each set contains both input and output axis. With slight abuse of notation, we omit the subscript $\gamma$ for $S_1, S_3, S_3$ when its choice is fixed or clear, and denote $\{\ell +1, ..., \ell +m \}$ as $\ell + [m]$.
As an example, one basis element of the space of $\LE_{3, 3}$ maps is $\gamma =\{ \{1,2\}, \{3,6\}, \{4\}, \{5\}\}$
\begin{equation}
\label{eq:partition-example}
\underbrace{S_1=\{\{\red{1},\red{2}\}\}}_{\text{Only has \red{input} axis}} \cup \underbrace{S_2=\{\{\red{3},\blue{6}\}\}}_{\substack{\text{has both} \\ \text{\red{input} and \blue{output} axis}}} \cup \underbrace{S_3=\{\{\blue{4}\}, \{\blue{5}\}\}}_{\text{only has \blue{output} axis}}
\end{equation}
where $1, 2, 3$ specifies the axis of input tensor and $4,5,6$ specifies the axis of the output tensor.
\begin{figure}[htp]
  \centering
  \vspace{-5pt}
  \includegraphics[width=.6\linewidth]{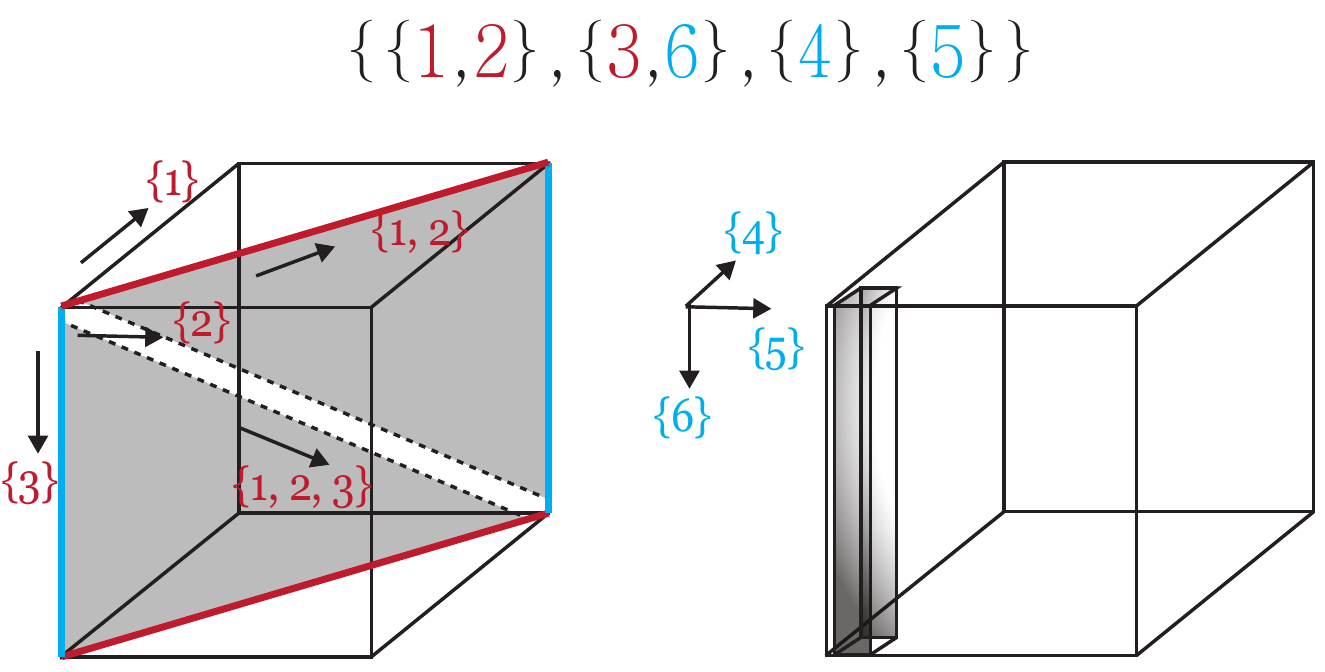}
\caption{An illustration of the one basis element of the space of $\LE_{3, 3}$. %
It selects area spanned by axis $\{1,2\}$ and $\{3\}$, average over the axis $\{1, 2\}$, and then align the resulting 1D tensor with axis $\{6\}$, and finally replicate the slices along axis $\{4\}$ and $\{5\}$ to fill in the whole cube on the right. }
\label{fig:partition}
\end{figure}
Recall that there is a one-to-one correspondence between the partitions over $[\ell+m]$ and the base elements in $\mathcal{B}$ as in Eqn (\ref{eqn:kIGNbasis}). The basis element $T_\gamma$ corresponding to $\gamma = S_1 \cup S_2 \cup S_3$ operates on an input tensor $X\in \mb{R}^{n^\ell}$ and produce an output tensor $Y \in \mb{R}^{n^m}$ as follows:  \vspace{-5pt}
\begin{quotation}
Given input $X$, (step 1) obtain its slice $X_{\gamma}$ on $\Pi_1$ (selection axis),
(step 2) average $X_{\gamma}$ over $\Pi_2$ (reduction axis), resulting in $\xred$.
(step 3) Align $\xred$ on $\Pi_3$ (alignment axis) with $Y_{\gamma}$  and
(step 4) replicate $Y_{\gamma}$ along $\Pi_4$ (replication axis), resulting $\yrep$, a slice of $Y$. Entries of $Y$ outside $\yrep$ will be set to be 0.
\end{quotation}
In general, $\Pi_i$ can be read off from $S_1$-$S_3$. See \Cref{subsec:linear-layer-stability-proof} for details.
As a running example, \Cref{fig:partition} illustrates the basis element corresponding to $\gamma = S_1\cup S_2 \cup S_3$ where $S_1 =\{\{\red{1},\red{2}\}\} \cup S_2=\{\{\red{3},\blue{6}\}\} \cup S_3=\{\{\blue{4}\}, \{\blue{5}\}\}$. In the first step, given 3-tensor $X$, indexed by $\set{\set{1}, \set{2}, \set{3}}$ we select slices of interest $X_{\gamma}$ on $\Pi_1 = \set{\set{1,2}, \set{3}}$, colored in grey in the left cube of \Cref{fig:partition}. In the second step, we average $X_{\gamma}$ over axis $\Pi_2 = \set{\set{1,2}}$ to reduce 2-tensor $X_{\gamma}$, indexed by $\set{\set{1,2}, \set{3}}$ to a 1-tensor $\xred$, indexed by $\set{\set{3}}$. In the third step, the $\xred$ is aligned with $\Pi_3 = \set{\set{6}}$,  resulting in the grey cuboid $Y_{\gamma}$ indexed by $\set{\set{6}}$, shown in the right cube in \Cref{fig:partition}. Here the only difference between $\xred$ and $Y_{\gamma}$ is the index name of two tensors. In the fourth step, we replicate the grey cuboid $Y_{\gamma}$ over axis $\Pi_4 = \set{\set{4}, \set{5}}$ to fill in the cube, resulting in $\yrep$, indexed by $\set{\set{3}, \set{4}, \set{5}}$. %
Note in general $\yrep$ is a slice of $Y$ and does have to be the same as $Y$. %

These steps are defined formally in the \Cref{sec:ign_missing_proofs}.
For each of the four steps, we can control the \textnewnorm{} of output for each step (shown in \Cref{lemma:property-of-partition-norm}), and therefore control the \textnewnorm{} of the final output for every basis element. See \Cref{subsec:linear-layer-stability-proof} for full proofs.

\section{Convergence of IGN in the Edge Weight Continuous Model}
\label{sec:convergence-ruiz}
\cite{ruiz2020graphon} consider  the convergence of $\|\Phi_c(W)-\Phi_c(W_n)\|_{L_2}$ in the \textit{graphon space}, where $W$ is the original graphon and $W_n$ is a piecewise constant graphon induced from graphs of size $n$ sampled from $W$ (to be defined soon). We call this model as the \textit{\edgeweight{}}. The main result of \cite{ruiz2021graph} is the convergence of \textit{continuous} \sGNN{} in the \textit{deterministic} sampling case where graphs are sampled from $W$ deterministically.
Leveraging our earlier stability result of linear layers of continuous IGNs in \Cref{thm:linear-layer-stability}, we can prove an analogous convergence result of cIGNs in the \edgeweight{} for both the deterministic and random sampling cases.

\textbf{Setup of the \edgeweight{}.} %
Given a graphon $W \in \mc{W}$ and a signal
$X \in \mb{R}^{[0, 1]\times d}$, the input of cIGN will be
$[W, \text{Diag}(X)] \in \mb{R}^{[0,1]^2 \times (1+d)}$. In the random sampling setting, we sample a graph of size $n$ from $W$ by setting the following edge weight matrix and discrete signal:
\begin{align}\label{eqn:det_gcn-random} \begin{split}
&[\widetilde{A_n}]_{ij} := W(u_i,u_j) \quad \mbox{and} \quad
[\widetilde{x_n}]_i := X(u_i)
\end{split}\end{align}
where $u_i$ is the $i$-th smallest point from $n$ i.i.d points sampled from uniform distribution on $[0,1]$.
We further lift the discrete graph $(\widetilde{A_n}, \widetilde{x_n})$ to a piecewise-constant graphon $\inducedW$ with signal $\inducedX$. Specifically, partition $[0,1]$ to be $I_1 \cup \ldots \cup I_n$ with $I_i = (u_i, u_{i+1}]$. We then define
\begin{align}\begin{split} \label{eqn:induced-cgcn-random}
&\inducedW(u,v) := {[\widetilde{A_n}]_{ij}} \times \mathrm{I}(u \in I_i)\mathrm{I}(v \in I_j) \quad \mbox{and} \\{}
&\inducedX(u) := [\widetilde{x_n}]_i \times \mathrm{I}(u \in I_i)
\end{split}\end{align}
 where $\mathrm{I}$ is the indicator function.
 Replacing the random sampling with fixed grid, i.e., let $u_i = \frac{i-1}{n}$,  we can get the deterministic \edgeweight{}, where $W_n$ and $X_n$ can be defined similarly as the lifting of a discrete sampled graph to a piecewise constant graphon. Note that $\inducedW$ is a piecewise constant graphon where each block is not of the same size, while all blocks $W_n$ are of size $\frac{1}{n} \times \frac{1}{n}$.
 We use $\widetilde{\cdot}$ to emphasize that $\widetilde{W_n}$/$\widetilde{X_n}$  are random variables, in contrast to the deterministic $W_n$/$X_n$.

We also need a few assumptions on the input and IGN.
\begin{assumption} \label{as:graphon-lip}
    The graphon $W$ is $A_1$-Lipschitz, i.e.
     $|W(u_2,v_2)-W(u_1,v_1)| \leqslant  A_1(|u_2-u_1|+|v_2-v_1|)$.
    \end{assumption}

    \begin{assumption} \label{as:filter-bound}
      The filter coefficients $c_{\gamma}$ are upper bounded by $\filterbound{}$. %
    \end{assumption}

    \begin{assumption} \label{as:signal-lip}
     The graphon signal $X$ is $A_3$-Lipschitz.
    \end{assumption}

    \begin{assumption} \label{as:activation-lip}
     The activation functions in IGNs are normalized Lipschitz, i.e.
     $|\rho(x)-\rho(y)| \leqslant  |x-y|$, and $\rho(0)=0$.
    \end{assumption}

Such four assumptions are quite natural and also adopted in \cite{ruiz2020graphon}. With AS 1-4, we have the following key proposition.
The proof leverages the stability of linear layers for $k$-IGN from \Cref{thm:linear-layer-stability}; see \Cref{app:proofs-EW} for details.

\begin{restatable}[Stability of $\Phi_c$]{proposition}{phistable}
\label{prop:Phi-stable}
If cIGN $\Phi_c: \mb{R}^{[0,1]^2 \times d_{\tn{in}}}   \rightarrow \mb{R}^{d_{\tn{out}}}$ satisfy AS\ref{as:filter-bound}, AS\ref{as:activation-lip} and $\newnorm{W_1-W_2}\leqslant  \epsilon \one_2$, then
 $\newnorm{\Phi_c(W_1) - \Phi_c(W_2)} = \| \Phi_c(W_1) - \Phi_c(W_2)\|_{L_2} \leqslant  C(A_2)\epsilon$ . The same statement still holds if we change the underlying norm of \textNewnorm{} from $L_2$ to $L_{\infty}$.
 \end{restatable}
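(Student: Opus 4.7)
The plan is to propagate the \textnewnorm{} error layer by layer through the composition $\Phi_c = h \circ L^{(T)} \circ \sigma \circ \cdots \circ \sigma \circ L^{(1)}$. Writing $X^{(0)}_i := W_i$ for $i=1,2$ and $X^{(t)}_i$ for the output of the $t$-th linear layer applied to $X_i$, I would track the running difference $\Delta^{(t)} := X^{(t)}_1 - X^{(t)}_2$ and show that $\newnorm{\Delta^{(t)}}$ does not grow by more than a multiplicative constant at each layer. The hypothesis gives $\newnorm{\Delta^{(0)}} \leq \epsilon \one_{\bell{2}}$, and the target inequality amounts to a composed Lipschitz bound with constant depending only on $A_2$ (and the fixed architecture).

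For the linear layers, I would use the decomposition $L^{(t)} = \sum_{\gamma \in \parspace_{k_{t-1}+k_t}} c_\gamma T_\gamma$ (extended to multi-channel via \Cref{remark:IGN}), together with linearity $L^{(t)}(X) - L^{(t)}(Y) = L^{(t)}(X-Y)$. Applying \Cref{thm:linear-layer-stability} to each basis operator $T_\gamma$ gives $\newnorm{T_\gamma(\Delta^{(t-1)})} \leq \|\Delta^{(t-1)}\|_{\tn{pn},\max} \cdot \one_{\bell{k_t}}$, where $\|\cdot\|_{\tn{pn},\max}$ denotes the max component of the partition-norm vector. Summing over the $\bell{k_{t-1}+k_t}$ basis elements with $|c_\gamma| \le A_2$ (AS\ref{as:filter-bound}) and over the finite input/output channels then yields $\newnorm{L^{(t)}(\Delta^{(t-1)})} \leq C_t(A_2)\, \|\Delta^{(t-1)}\|_{\tn{pn},\max} \cdot \one_{\bell{k_t}}$ for a constant $C_t$ depending only on $A_2$, the channel widths, and $\bell{k_{t-1}+k_t}$.

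For the activations, I would observe that $\sigma$ is applied pointwise and is normalized Lipschitz with $\sigma(0)=0$ (AS\ref{as:activation-lip}). Pointwise operations commute with taking slices $X_\gamma$ for any $\gamma \in \parspace_{k_t}$, so each component of $\newnorm{\sigma(X^{(t)}_1) - \sigma(X^{(t)}_2)}$ is bounded by the corresponding component of $\newnorm{\Delta^{(t)}}$. Thus activations do not enlarge the partition-norm. The final invariant layer $h$ is itself a linear equivariant map into a $0$-tensor; the partition-norm on $\R{d_{\tn{out}}}$ has a single component equal to the $L_2$ norm, which gives the claimed equality $\newnorm{\Phi_c(W_1)-\Phi_c(W_2)} = \|\Phi_c(W_1)-\Phi_c(W_2)\|_{L_2}$ and contributes one more factor of $C_{T+1}(A_2)$.

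Composing the bounds over all $T+1$ linear-invariant layers yields $\|\Phi_c(W_1)-\Phi_c(W_2)\|_{L_2} \leq \bigl(\prod_{t=1}^{T+1} C_t(A_2)\bigr) \cdot \epsilon =: C(A_2)\,\epsilon$. The $L_\infty$ version is identical in structure: one simply replaces $L_2$ by $L_\infty$ throughout the partition-norm definition and re-runs the same argument, because \Cref{thm:linear-layer-stability} and the non-expansiveness of $\sigma$ hold verbatim under the sup norm (the only operations involved in the interpretation of basis elements are selection of a slice, averaging over a subset of axes, and replication along the remaining axes, all of which are non-expansive in $L_\infty$). The main subtlety I anticipate is purely bookkeeping, namely correctly accounting for tensor-order changes $k_{t-1}\to k_t$ between layers when passing from a length-$\bell{k_{t-1}}$ partition-norm vector to a length-$\bell{k_t}$ one; this is handled cleanly by using the componentwise maximum as the scalar surrogate at each junction so that the recursion collapses to a product of scalar Lipschitz constants.
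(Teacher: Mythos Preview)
Your proposal is correct and follows essentially the same approach as the paper: decompose each linear layer into basis elements $T_\gamma$, apply \Cref{thm:linear-layer-stability} together with the coefficient bound AS\ref{as:filter-bound} and the triangle inequality to get a per-layer Lipschitz constant, use AS\ref{as:activation-lip} to conclude that $\sigma$ is non-expansive in the partition-norm, and compose across layers; the $L_\infty$ case is handled identically. The paper's version is terser (it works out the $2$-IGN case explicitly with the constant $15A_2$ per layer) but the structure is the same.
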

 \begin{remark}
 Statements in \Cref{prop:Phi-stable} holds for discrete IGN $\Phi_d$ as well.
 \end{remark}

 From AS\ref{as:signal-lip} we can also bound the difference between the original signal $X$ and the induced signal ($X_n$ and $\inducedX$).

\begin{restatable}[]{lemma}{xdiffrandom}
\label{lem:x-diff}
Let $X \in \mb{R}^{[0,1]\times d}$ be an $A_3$-\lip{} graphon signal satisfying AS\ref{as:signal-lip}, and let $\inducedX$ and $X_n$ be the induced graphon signal as in \cref{eqn:det_gcn-random,eqn:induced-cgcn-random}. Then we have i) $\newnorm{X-X_n}$ converges to 0 and ii) $\newnorm{X-\inducedX}$ converges to 0 in probability.
\end{restatable}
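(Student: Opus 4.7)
The plan is to reduce both statements to standard $L_2$-approximation arguments for piecewise-constant approximations of a Lipschitz function on $[0,1]$. First I would unpack the \textnewnorm{} on a $1$-tensor: since $\bell{1}=1$, the only partition is $\{\{1\}\}$, so $\newnorm{\cdot}$ on $\mb{R}^{[0,1]\times d}$ reduces to the ordinary $L_2$ norm (with the analogous reduction to the $\ell_2/\sqrt{n}$ norm in the discrete case). Thus both parts amount to bounding $\|X-X_n\|_{L_2}$ and $\|X-\inducedX\|_{L_2}$.

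For part (i), the deterministic grid $u_i=(i-1)/n$ gives intervals $I_i$ of length $1/n$, and on each $I_i$ the induced signal $X_n$ equals the constant $X(u_i)$. Assumption~\ref{as:signal-lip} then yields $|X(u)-X_n(u)|\leqslant A_3|u-u_i|\leqslant A_3/n$ for every $u\in I_i$, so
\[
\|X-X_n\|_{L_2}^{2}=\sum_{i=1}^{n}\int_{I_i}|X(u)-X(u_i)|^{2}\,du\leqslant \frac{A_3^{2}}{n^{2}},
\]
which tends to $0$. This is the easy half and requires no probabilistic input.

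For part (ii) with the random grid, the same Lipschitz bound gives $|X(u)-\inducedX(u)|\leqslant A_3\,|I_i|$ on each random interval $I_i=(u_i,u_{i+1}]$, hence
\[
\|X-\inducedX\|_{L_2}^{2}\leqslant A_3^{2}\sum_{i}|I_i|^{3}\leqslant A_3^{2}\,\bigl(\max_i |I_i|\bigr)^{2}\sum_i |I_i| = A_3^{2}\,\bigl(\max_i |I_i|\bigr)^{2}.
\]
The key step is therefore to show that the maximum spacing of $n$ i.i.d.\ uniform order statistics on $[0,1]$ tends to $0$ in probability. This is a classical fact: $\max_i |I_i|=O(\log n/n)$ with high probability (and even almost surely), which I would quote, or alternatively one can directly bound $\mathbb{E}\sum_i |I_i|^{3}$ using the fact that the spacings have a Dirichlet distribution with $\mathbb{E}|I_i|^{3}=\Theta(1/n^{3})$ and hence $\mathbb{E}\sum_i |I_i|^{3}=\Theta(1/n^{2})\to 0$; Markov's inequality then gives convergence in probability.

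The main obstacle is the probabilistic control in part (ii): one must avoid the naive pointwise Lipschitz estimate degrading when a single spacing is large. Either route above (maximum-spacing concentration, or the Dirichlet moment bound) circumvents this cleanly, and both are standard enough that no new technical machinery beyond AS~\ref{as:signal-lip} and basic order-statistics facts is needed. Once these two bounds are in hand, the conclusion that $\newnorm{X-X_n}\to 0$ and $\newnorm{X-\inducedX}\to 0$ in probability follows immediately from the reduction of \textnewnorm{} to $L_2$ in the $1$-tensor case.
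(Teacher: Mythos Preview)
Your proposal is correct and essentially matches the paper's proof: part~(i) is the same Lipschitz-on-each-interval computation (the paper integrates $u^2$ to get the slightly sharper constant $A_3^2/(3n^2)$, but this is immaterial), and your second route for part~(ii)---bounding $\mathbb{E}\sum_i |I_i|^3$ via the Beta/Dirichlet moments and then invoking Markov---is exactly what the paper does (it states $\mathbb{E}D_i^3=\Theta(n^{-3})$ in a preliminary lemma, applies Jensen to pass to $\mathbb{E}\|X-\inducedX\|_{L_2}$, then Markov). Your first route via the maximum spacing is a valid alternative that the paper does not use here, though it invokes precisely that bound later for the $L_\infty$ control in the edge-probability model.
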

We have the similar statements for $W$ as well. %
\begin{restatable}[]{lemma}{wdiffrandom}
\label{lem:w-diff}
If $W$ satisfies AS\ref{as:graphon-lip}, $\|W-W_n\|_{\tn{pn}} $ converges to 0. $\|W-\inducedW\|_{\tn{pn}} $ converges to 0 in probability.
\end{restatable}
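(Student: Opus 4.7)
The plan is to bound each of the two components of the \textnewnorm{} separately, in both the deterministic and random samplings, by reducing everything to a uniform pointwise Lipschitz bound governed by the maximum grid spacing $\Delta_n := \max_{0 \le i \le n-1}(u_{i+1}-u_{i})$ (with $u_{0}:=0$ and $u_{n+1}:=1$). Recall that $\newnorm{W-W_n}$ has two entries,
\[
\sqrt{\int_0^1 (W-W_n)^2(u,u)\,du}, \qquad \sqrt{\iint_{[0,1]^2} (W-W_n)^2(u,v)\,du\,dv},
\]
and analogously for $\widetilde{W_n}$. So it suffices to show each entry tends to $0$ (in probability, in the random case).

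First I would handle the deterministic grid $u_i = (i-1)/n$. Any $(u,v)\in I_i\times I_j$ satisfies $|u-u_i|\le 1/n$ and $|v-u_j|\le 1/n$, so by AS\ref{as:graphon-lip},
\[
|W(u,v)-W_n(u,v)| = |W(u,v)-W(u_i,u_j)| \le A_1(|u-u_i|+|v-u_j|) \le 2A_1/n.
\]
Integrating over $[0,1]^2$ gives the matrix component $\le 2A_1/n$. For the diagonal component, on $u\in I_i$ we have $W_n(u,u)=W(u_i,u_i)$ (the diagonal block) and by AS\ref{as:graphon-lip}, $|W(u,u)-W(u_i,u_i)|\le 2A_1/n$, so the diagonal norm is also $\le 2A_1/n$. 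Both tend to $0$, establishing (i).

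For the random case, replace $1/n$ by $\Delta_n$: on $I_i\times I_j$, $|u-u_i|+|v-u_j|\le 2\Delta_n$, hence
\[
\iint (W-\widetilde{W_n})^2 \le 4A_1^2\Delta_n^2, \qquad \int (W-\widetilde{W_n})^2(u,u)\,du \le 4A_1^2\Delta_n^2.
\]
Now I invoke the classical fact that the maximum spacing of $n$ i.i.d.\ uniform order statistics satisfies $\Delta_n = O(\log n / n)$ a.s., and in particular $\Delta_n \to 0$ in probability. Consequently both components of $\newnorm{W-\widetilde{W_n}}$ tend to $0$ in probability, proving (ii).

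The only mildly subtle point is treating the diagonal term, which is a one-dimensional integral rather than a two-dimensional one: we must observe that when $u\in I_i$ the diagonal entry of $W_n$ (resp.\ $\widetilde{W_n}$) uses the $(i,i)$ block, and then the same Lipschitz bound with spacing $1/n$ (resp.\ $\Delta_n$) applies. There is no genuine difficulty; once the maximum-spacing argument is in place, both claims follow directly from AS\ref{as:graphon-lip}.
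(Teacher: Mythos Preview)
Your argument is correct and in fact cleaner than the paper's. Both components of the \textnewnorm{} are controlled by the uniform bound $|W(u,v)-W_n(u,v)|\le 2A_1\Delta_n$ coming directly from AS\ref{as:graphon-lip}, and then $\Delta_n=1/n$ in the deterministic case and $\Delta_n=O(\log n/n)\to 0$ in probability in the random case. The diagonal term is handled exactly as you say.

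The paper takes a different route. For the deterministic case it first bounds $\|W-W_n\|_{L_1}$ block by block (each block contributing $A_1/n^3$, summed over $n^2$ blocks to give $A_1/n$), and then uses the fact that $W-W_n$ takes values in $[-1,1]$ to pass from $L_1$ to $L_2$ via $\|f\|_{L_2}\le\sqrt{\|f\|_{L_1}}$, yielding $\sqrt{A_1/n}$. For the random case it similarly integrates the Lipschitz bound on each $I_i\times I_j$, obtaining $\|W-\widetilde{W_n}\|_{L_1}\le A_1\sum_i D_i^2$, and then combines $E[D_i^2]=\Theta(1/n^2)$ from Lemma~\ref{lemma:lengh-distribution} with Jensen and Markov. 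Your $L_\infty$-then-integrate approach avoids the $L_1\to L_2$ detour and the Beta-moment computation, and even yields a sharper rate ($O(1/n)$ versus $O(1/\sqrt{n})$ deterministically; $O(\log n/n)$ versus $O(1/\sqrt{n})$ in probability). The paper's approach has the mild advantage of not needing the classical max-spacing fact as an external input, since it works directly with second moments of the spacings; but for the bare convergence statement your argument is shorter and more transparent.
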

The following main theorem (for $k$-cIGN of any order $k$) of this section can be shown by combining \Cref{prop:Phi-stable} with \Cref{lem:x-diff,lem:w-diff}; see  \Cref{app:proofs-EW} for details.

\begin{restatable}[Convergence of cIGN in the edge weight continuous model]{theorem}{EWconvergence}
\label{thm:EW-convergence}
Under the fixed sampling condition, IGN converges to cIGN, i.e., $\| \Phi_c\left([W, \dg{X}]\right) - \Phi_c([W_n, \dg{X_n}])\|_{L_2}$ converges to 0.

An analogous statement hold for the random sampling setting, where \\ $\| \Phi_c([W, \dg{X}]) - \Phi_c([\inducedW, \dg{\inducedX}])\|_{L_2}$ converges to 0 in probability.

\end{restatable}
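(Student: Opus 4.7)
The strategy is a direct composition of the three ingredients just stated: the stability of $\Phi_c$ (Proposition on stability), and the two convergence lemmas bounding $\newnorm{X-X_n}$ and $\newnorm{W-W_n}$ (resp.\ their random-grid analogues). The overall structure is: (i) quantify how close the lifted input $[W_n,\dg{X_n}]$ is to $[W,\dg{X}]$ in the \textnewnorm{}, then (ii) feed this into the stability estimate for $\Phi_c$ to transport convergence from the input level to the output level.

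First, I would bound the input discrepancy. Writing the input tensors as $U := [W,\dg{X}]$ and $U_n := [W_n,\dg{X_n}]$, channel-wise concatenation gives
\begin{equation*}
\newnorm{U-U_n} \;\le\; \newnorm{W-W_n} \;+\; \newnorm{\dg{X}-\dg{X_n}},
\end{equation*}
because the \textnewnorm{} of multi-channel tensors was defined by summing over feature dimensions. By AS\ref{as:graphon-lip} and Lemma (w-diff), $\newnorm{W-W_n}\to 0$ as $n\to\infty$. For the diagonal-signal term, note that $\dg{X}$ only contributes to the diagonal component of the \textnewnorm{}, and its diagonal slice equals $X$; hence the normalized diagonal norm of $\dg{X}-\dg{X_n}$ equals $\|X-X_n\|_{L_2}$ up to the same normalization as in Lemma (x-diff), which tends to $0$. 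Consequently $\newnorm{U-U_n}\le \varepsilon_n \one$ for some sequence $\varepsilon_n\to 0$.

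Second, I invoke Proposition (Stability of $\Phi_c$) with inputs $U$ and $U_n$. Since AS\ref{as:filter-bound} and AS\ref{as:activation-lip} hold by hypothesis, the proposition yields
\begin{equation*}
\bigl\|\Phi_c([W,\dg{X}]) - \Phi_c([W_n,\dg{X_n}])\bigr\|_{L_2} \;\le\; C(A_2)\,\varepsilon_n,
\end{equation*}
which goes to $0$, proving the deterministic-sampling statement.

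For the random-sampling case, I would run the exact same two-step argument with $(\widetilde W_n,\widetilde X_n)$ in place of $(W_n,X_n)$. The only change is that the bounds $\newnorm{W-\widetilde W_n}$ and $\newnorm{X-\widetilde X_n}$ now converge to $0$ \emph{in probability} (Lemmas w-diff and x-diff, random part), so $\varepsilon_n$ is a random sequence tending to $0$ in probability; since $\Phi_c$ is a deterministic Lipschitz map in the \textnewnorm{} (by the stability proposition), convergence in probability is preserved by the continuous mapping theorem, giving the second claim.

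The conceptually nontrivial step is the first one: verifying that the combined object $[W,\dg{X}]$ really does enjoy a \textnewnorm{} bound controlled by the two separate lemmas. The subtlety is that $\dg{X}$ is supported on the measure-zero diagonal of $[0,1]^2$, so it does \emph{not} contribute to the ``normalized matrix norm'' component but rather to the ``normalized diagonal norm'' component of the \textnewnorm{}; this is precisely why the \textnewnorm{} (rather than the ordinary $L_2$ norm on $[0,1]^2$) is the right quantity to track, and it is what makes the stability proposition applicable in one clean step. Once this bookkeeping is done, the rest of the proof is essentially mechanical composition.
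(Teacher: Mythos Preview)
Your proposal is correct and follows essentially the same approach as the paper: reduce to bounding the \textnewnorm{} of the input difference via the multi-channel decomposition, invoke the two lemmas on $W-W_n$ and $X-X_n$ (resp.\ their random analogues), and then apply the stability proposition to push convergence through $\Phi_c$. Your extra remark about why $\dg{X}$ only contributes to the diagonal component of the \textnewnorm{} is a helpful clarification that the paper leaves implicit.
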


\section{Convergence of IGN in the Edge Probability Discrete Model}
\label{sec:EP-convergence}

In this section, we will consider the convergence setup of  \cite{keriven2020convergence}, which we call the \emph{\edgeprob}.
The major difference from the \edgeweight{} of \cite{ruiz2020graphon} is that (1) we only access 0-1 adjacency matrix instead of full edge weights and (2) the convergence error is measured in the graph space (instead of graphon space).

This model is more natural. However, we will first show a  negative result that in general IGN does not converge in the \edgeprob{} in \Cref{subsec:negative-result}. This motivates us to consider a relaxed setting where we estimate the edge probability from data. With this extra assumption, we can prove the convergence of \smallIGN{}, a subset of IGN, in the edge probability discrete model in  \Cref{subsec:smallign-convergnce}. Although this is not entirely satisfactory, we show that nevertheless, the family of functions that can be represented by \smallIGN{} is still rich enough to for example approximate any \sGNN{} arbitrarily well.

\subsection{Setup: Edge Probability Continuous Model}
\label{subsec: EP}
We first state the setup and results of \cite{keriven2020convergence}. We keep the notation close to the original paper for consistency.
A random graph model $(P, W, f)$ is represented as a probability distribution $P$ uniform over latent space $\mc{U}=[0,1]$, a symmetric kernel $W: \mc{U} \times \mc{U} \rightarrow [0, 1]$ and a bounded function (graph signal) $f: \mc{U} \rightarrow \mathbb{R}^{d_{z}}$. A random graph $G_n$ with $n$ nodes is then generated from $(P,W,f)$ according to latent variables $U := \set{u_1, ..., u_n}$ as follows:
\begin{gather}
\forall j<i \leqslant n: \quad \mbox{graph node}~~ u_{i} \stackrel{i i d}{\sim} P, \quad z_{i}=f\left(u_{i}\right), \\
\quad  \text{graph edge } a_{i j} \sim \operatorname{Ber}\left(\alpha_{n} W(u_{i}, u_{j})\right) \numberthis
\label{equ:graph-generation-EP}
\end{gather}
where Ber is the Bernoulli distribution and $\alpha_n$ controls the sparsity of sampled graph.  %
Note that in our case, we assume that the sparsification factor $\alpha_n = 1$ (which is the classical graphon model).
We define a degree function by $d_{W, P}(\cdot):= \int W(\cdot, u) d P(u)$.
We assume the following
\begin{gather}
\|W(\cdot, u)\|_{L_\infty} \leqslant c_{\max }, \quad d_{W, P}(u) \geqslant c_{\min }, \\
\quad W(\cdot, u) \text { is }\left(c_{\text {Lip. }}, n_{\mathcal{U}}\right) \text {-piecewise Lipschitz. } %
\end{gather}
A function $f: \mathcal{U} \rightarrow \mathbb{R}$ is said to be $\left(c_{\text {Lip. }}, n_{\mathcal{U}}\right) \text {-piecewise Lipschitz}$ if there is 
a partition $\mathcal{U}_1, ..., \mathcal{U}_n$ of $\mathcal{U}$ such that, for all $u, u'$ in the same $\mathcal{U}_i$, 
we have $|f(u)-f(u')|< c_{Lip.} d(u, u')$.
We introduce two normalized sampling operator $S_U$ and $S_n$ that sample a continuous function to a discrete one over $n$ points. %
For a function %
 $W': \mc{U}^{\otimes k} \rightarrow \mb{R}^{d_{\tn{out}}}$, $S_UW'(i_1, ..., i_k) := (\frac{1}{\sqrt{n}})^k(W'(u_{(i_1)}), ..., W'(u_{(i_k)})$ where $u_{(i)}$ is the i-th smallest number over $n$ uniform random samples over $[0,1]$ and $i_{1}, ..., i_{k}\in[n]$.
Similarly, $S_nW'(i_1, ..., i_k) := (\frac{1}{\sqrt{n}})^k \left(W'(\frac{i_1}{n}), ..., W'(\frac{i_k}{n})\right)$ %
Note that the normalizing constant will depend on the dimension of the support of $W'$.
We have $\|S_UW'\|_2 \leqslant  \| W'\|_{L_{\infty}}$ and $\|S_nW'\|_2 \leqslant  \| W'\|_{L_{\infty}}$. %

To measure the convergence error, we consider root mean square error at the node level: for a signal $x \in \mb{R}^{n^2 \times d_{\tn{out}}}$ and latent variables $U$, we define
$\MSE_{U}(f, x)  := \| S_U f - \frac{x}{n}\|_2 = (n^{-2} \sum_{i=1}^{n}\sum_{j=1}^{n}\left\|f\left(u_i, u_j\right)-x(i, j)\right\|^{2})^{1 / 2}.$
Again, there is a dependency on the input dimension -- the normalization term $n^{-2}$ will need to be adjusted when the input order is different from 2.
\subsection{Negative Result}%
\label{subsec:negative-result}

\begin{restatable}[]{theorem}{convfailure}
\label{thm:conv-failure}
Given any graphon $W$ with $c_{\text{max}} < 1$ and an IGN architecture (fix hyper-parameters like number of layers), there exists a set of parameters $\theta$ such that convergence of $IGN_{\theta}$ to c$IGN_{\theta}$ is not possible, i.e., $\MSE_U(\Phi_c\left([W, \dg{X}]\right), \Phi_d([A_n, \dg{\widetilde{x_n}}]))$ does not converge to 0 as $n\to \infty$, where $A_n$ is 0-1 matrix generated according to \cref{equ:graph-generation-EP}, i.e., $A_n[i][j] = a_{i,j}$. %
\end{restatable}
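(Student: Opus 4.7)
The strategy is adversarial: given the fixed architecture, I will choose parameters $\theta$ so that $\Phi_\theta$ acts essentially as the identity on the adjacency channel. On the continuous side this forces the output to be the smooth kernel $W$, while on the discrete side it is the raw $0$-$1$ matrix $A_n$, and the residual Bernoulli fluctuation in $A_n - W$ prevents $\MSE_U$ from vanishing.

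\emph{Construction.} I work with the equivariant tensor-valued output of the IGN, as required by the form of $\MSE_U$ in Section~\ref{subsec: EP}. Assume first that every intermediate layer has tensor order $2$. By the basis characterization in Table~\ref{tab:R2-R2}, the identity map on $2$-tensors is the basis element for the partition $\{\{1,3\},\{2,4\}\}$. For each layer $L^{(t)}$, set the filter coefficient for this basis element on the adjacency input channel to $1$ and all remaining coefficients (including all cross-channel couplings to the signal channel) to $0$; this is permitted by the multi-channel structure in Remark~\ref{remark:IGN}. With a ReLU activation, which satisfies AS~\ref{as:activation-lip} and is the identity on $[0,1]$, each intermediate representation on the first channel is unchanged. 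Hence $\Phi_d([A_n,\dg{\widetilde{x_n}}])$ equals $A_n$ on the first output channel, and $\Phi_c([W,\dg{X}])$ equals $W$ on the first output channel.

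\emph{Non-vanishing MSE.} With the above,
\begin{equation*}
\MSE_U(\Phi_c,\Phi_d)^2 \;=\; \frac{1}{n^2}\sum_{i,j=1}^n \bigl(W(u_i,u_j) - a_{ij}\bigr)^2.
\end{equation*}
Conditioned on $U$, each $a_{ij}$ is Bernoulli with mean $W(u_i,u_j)$, so its conditional variance is $W(u_i,u_j)\bigl(1-W(u_i,u_j)\bigr)$. A routine two-level law of large numbers (first over the Bernoulli draws given $U$, then over the i.i.d.\ $U$) gives convergence in probability
\begin{equation*}
\MSE_U(\Phi_c,\Phi_d)^2 \;\xrightarrow{\;P\;}\; \iint_{[0,1]^2} W(u,v)\bigl(1-W(u,v)\bigr)\,du\,dv.
\end{equation*}
The right-hand side is strictly positive: $c_{\max}<1$ gives $1-W>0$, while the degree bound $d_{W,P}\ge c_{\min}>0$ forces $W>0$ on a set of positive $P\otimes P$-measure, so $W(1-W)>0$ there. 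Therefore $\MSE_U\not\to 0$, contradicting convergence.

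\emph{Main obstacle.} The delicate point is architectures where consecutive layers change tensor order, so the literal identity basis element is not available. The fix is to observe that the argument only needs $\theta$ to produce an output tensor whose entries are a fixed, nontrivial linear combination of the entries of $A_n$ on the discrete side and of $W$ on the continuous side: the discrete output will then still carry $O(1)$ Bernoulli fluctuations about its continuous counterpart, and the computation above is replaced by the (strictly positive) variance of the corresponding linear statistic. One obtains such a $\theta$ by alternating a ``lift'' basis element from Table~\ref{tab:R1-R2} with a matching ``restrict'' basis element from Table~\ref{tab:R2-R1}, chained so that the adjacency information is preserved up to a fixed invertible linear transform; positivity of the limiting variance again follows from $c_{\max}<1$ and $c_{\min}>0$.
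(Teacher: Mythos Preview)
Your argument for the case where every layer has tensor order $2$ is correct and is a genuinely different route from the paper's. You set $\theta$ so that the network is the identity on the adjacency channel and then invoke the non-vanishing Bernoulli variance $\iint W(1-W)$. The paper instead exploits the \emph{margin} between $c_{\max}$ and $1$: it sets the first linear layer to the identity plus a bias $b\in(-1,-c_{\max}]$, so that after ReLU every entry of $W$ is killed (sent to $0$) while every $1$-entry of $A_n$ survives as $1+b>0$. A later averaging layer then yields $\Phi_c(W)=0$ exactly, while $\Phi_d(A_n)$ converges to a strictly positive constant. The paper's trick is therefore \emph{deterministic} (no variance computation needed) and, more importantly, local: the separation is achieved in a single layer, and subsequent layers can be handled by simply propagating a nonnegative signal.

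Where your proposal runs into trouble is the ``Main obstacle'' paragraph. Your claimed fix---that any nontrivial fixed linear combination of entries of $A_n$ still carries $O(1)$ Bernoulli fluctuations---is not correct when the linear combination averages over many entries. Concretely, if the fixed architecture has output order $1$ and you route through the row-average basis element of Table~\ref{tab:R2-R1}, the $i$-th output is $\tfrac1n\sum_j a_{ij}$, whose conditional variance is $O(1/n)$; the resulting $\MSE_U$ then tends to $0$, and your argument collapses. The ``invertible linear transform'' claim cannot hold either, since any $\mathbb{R}^{n^2}\to\mathbb{R}^{n}$ equivariant map is rank-deficient. The paper's bias-threshold construction sidesteps this entirely: once $\sigma\circ L^{(1)}$ has produced an output that is identically $0$ on the continuous side and strictly positive on edges on the discrete side, \emph{any} subsequent averaging still preserves a nonzero gap. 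So for arbitrary fixed architectures you should adopt the thresholding idea rather than try to preserve raw adjacency information through order-changing layers.
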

The proof of \Cref{thm:conv-failure} hinges on the fact that the input to IGN in discrete case is 0-1 matrix while the input to cIGN in the continuous case has edge weight upper bounded by $c_{\tn{max}} < 1$. The margin between 1 and $c_{\tn{max}}$ makes it easy to construct counterexamples.  See \Cref{app:missing-proofs-neg} for details.

Theorem \ref{thm:conv-failure} states that we cannot expect every IGN will converge to its continuous version cIGN.
As the proof of this theorem crucially uses the fact that we can only access 0-1 adjacency matrix, a natural question is what if we can estimate the edge probability from the data?
Interestingly, we can obtain the convergence of for a subset of IGNs (which is still rich enough), called \smallIGN{}, in this case. %

\subsection{Convergence of \smallIGN{}}
\label{subsec:smallign-convergnce}

Let $\widehat{W}_{n \times n}$ be the estimated $n\times n$ edge probability matrix from $A_n$. $\widetilde{W_n}$ is the induced graphon defined in \cref{eqn:induced-cgcn-random}.
 To analyze the convergence error for general IGN after edge probability estimation, we first decompose the convergence error of the interest using triangle inequality. Assuming the output is 1-tensor, then

\begin{align*}
\label{}
& \MSE_U(\Phi_c(W), \Phi_d(\widehat{W}_{n \times n})) \\
& = \|S_U \Phi_c(W)-\frac{1}{\sqrt{n}}\Phi_d (\widehat{W}_{n \times n}) \| \\
&\leqslant  \underbrace{\|S_U \Phi_c(W)- S_U\Phi_c(\inducedW)\|}_\text{First term: discretization error} +
\underbrace{\|S_U\Phi_c(\inducedW) - \Phi_dS_U(\inducedW)\|}_\text{Second term: sampling error} \\
& + \underbrace{\|\Phi_dS_U(\inducedW) - \frac{1}{\sqrt{n}}\Phi_d (\widehat{W}_{n \times n})\|}_\text{Third term: estimation error} \numberthis
\end{align*}
\begin{figure*}[htp!]
  \centering
  \includegraphics[width=.33\linewidth]{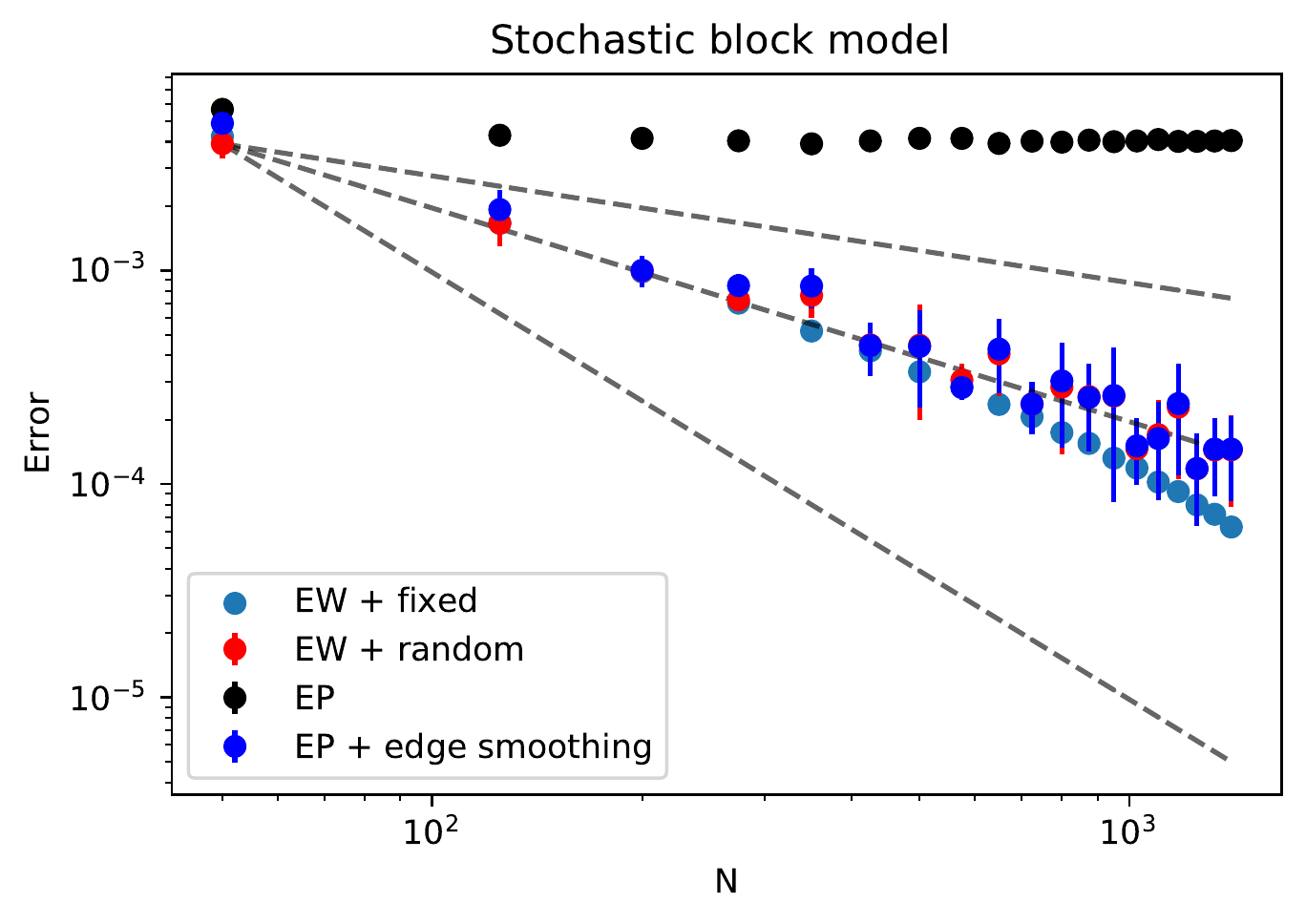}
  \includegraphics[width=.33\linewidth]{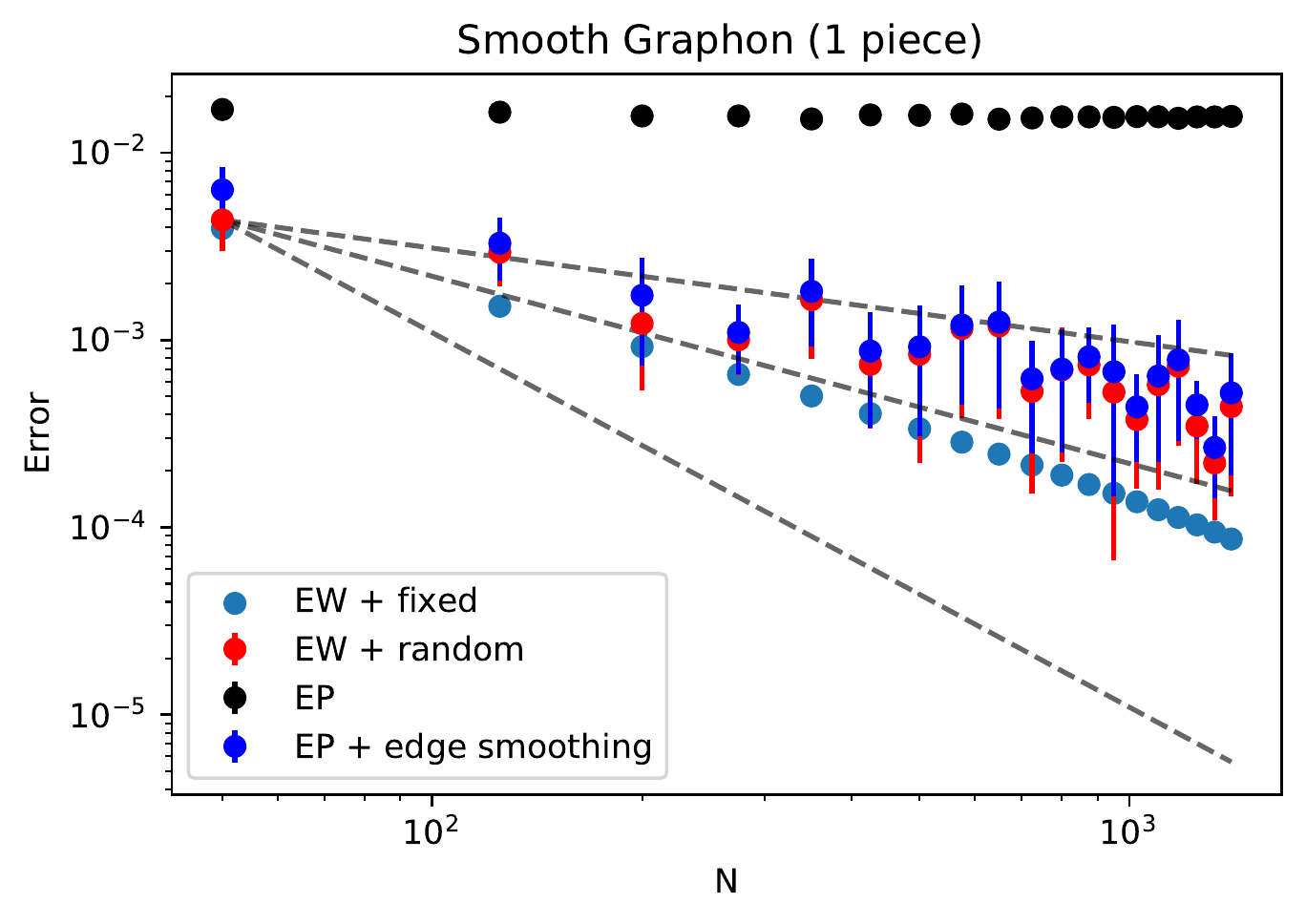}
  \includegraphics[width=.33\linewidth]{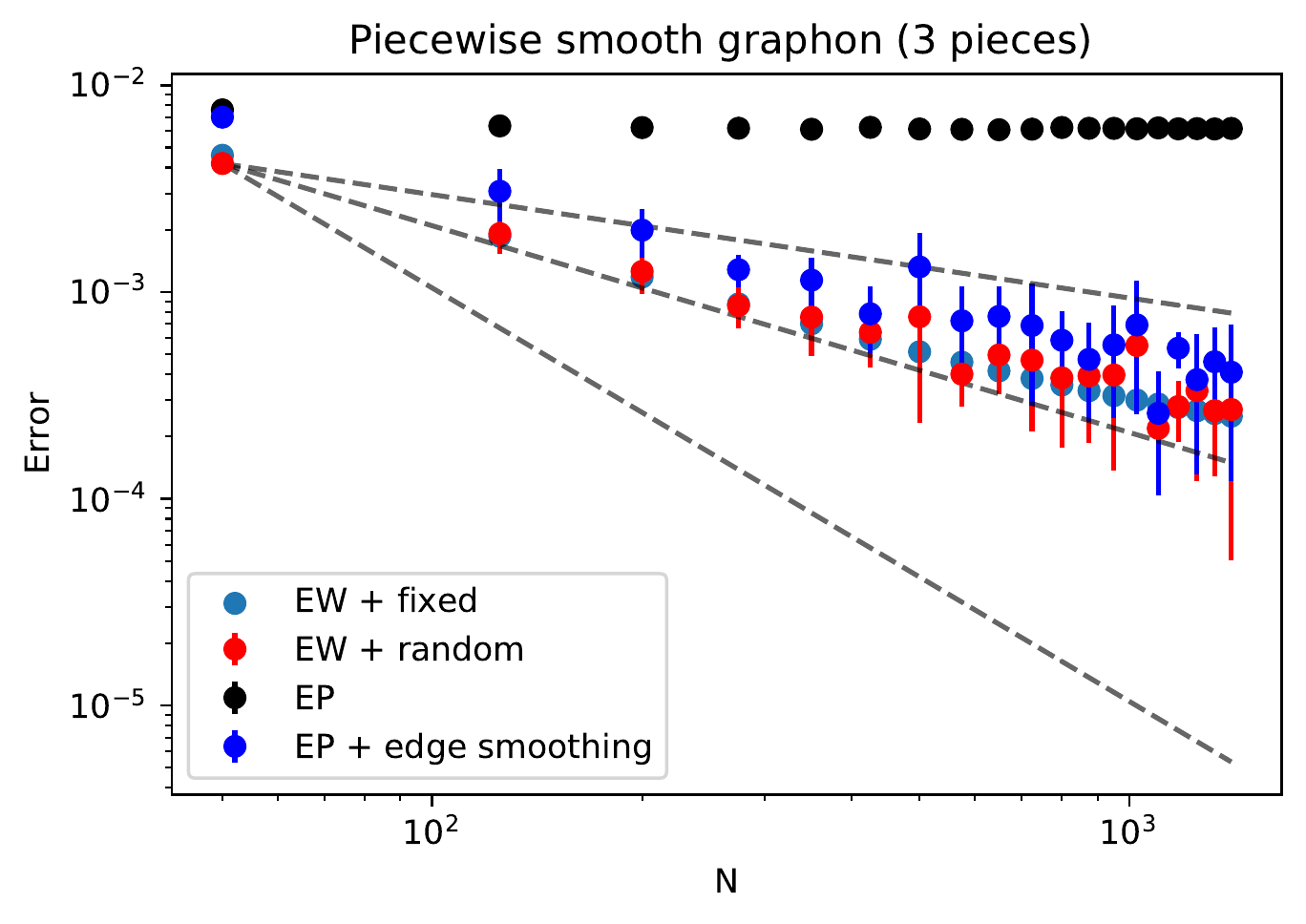}
  \label{fig:sfig1}
\vspace{-15pt}
\caption{The convergence error for three generative models: (left) stochastic block model, (middle) smooth graphon, (right) piece-wise smooth graphon. EW and EP stands for edge weight continuous model (\cref{eqn:det_gcn-random}) and edge probability discrete model (\cref{equ:graph-generation-EP}). %
}
\vspace{-5pt}
\label{fig:convergence}
\end{figure*}
The three terms measure the different sources of error. First-term is concerned with the discretization error, which can be controlled via a property of $S_U$ and \Cref{prop:Phi-stable}. The  Second term concerns the sampling error from the randomness of $U$. This term will vanish if we consider only $S_n$ instead of $S_U$ under the extra condition stated below. The third term concerns the edge probability estimation error, which can also be controlled by leveraging existing literature on the statistical guarantee of the \textit{edge probability estimation} algorithm from \cite{zhang2015estimating}. \footnote{For better readability, here we only use the $W$ as input instead of $[W, \dg{X}]$. Adding $\dg{X}$ into the input is easy and is included in the full proof in \Cref{app:smallign-convergnce}. }

Controlling the second term is more involved. This is also the place where we have to add an extra assumption to constrain the IGN space in order to achieve convergence after edge smoothing.

\begin{definition}[\smallIGN{}]
Let $\inducedEW$ be a graphon with ``chessboard pattern'' \footnote{See full definition in \Cref{def:chessboard}.}, i.e., it is a piecewise constant graphon where each block is of the same size. Similarly, define $\inducedEX$ as the 1D analog.
\smallIGN{} denotes a subset of IGN that satisfies $S_n\Phi_c([\inducedEW, \dg{\inducedEX}]) = \Phi_d S_n([\inducedEW, \dg{\inducedEX}])$.
\end{definition}

\begin{restatable}[convergence of \smallIGN{} in the edge probability discrete model]{theorem}{convergenceafterEM}
\label{thm:convergenceafterEM}
Assume AS 1-4, and let $\widehat{W}_{n \times n}$ be the estimated edge probability that satisfies $\frac{1}{n}\|W_{n \times n} - \widehat{W}_{n \times n}\|_2$ converges to 0 in probability. Let $\Phi_c, \Phi_d$ be continuous and discrete \smallIGN{}. Then  \\ $\MSE_U\left(\Phi_c\left([W, \dg{X}]\right), \Phi_d\left([\widehat{W}_{n \times n}, \dg{\widetilde{x_n}}]\right)\right)$ converges to 0 in probability.
\end{restatable}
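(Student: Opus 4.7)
The plan is to control each of the three terms in the triangle decomposition already written just before the theorem statement, showing each vanishes in probability, and then conclude by the triangle inequality. The discretization and estimation errors can be handled directly using stability of the (c)IGN together with the input-convergence lemmas. The subtler sampling error is where the \smallIGN{} hypothesis is essential, and this is where I expect the main technical difficulty.

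\textbf{Discretization error (first term).} Since the sampling operator satisfies $\|S_U g\|_2 \le \|g\|_{L_\infty}$ for any bounded $g$, I would bound the first term by $\|\Phi_c([W,\dg{X}])-\Phi_c([\inducedW,\dg{\inducedX}])\|_{L_\infty}$. The $L_\infty$ version of \Cref{prop:Phi-stable} reduces this to $C(A_2)\,\|[W,\dg{X}]-[\inducedW,\dg{\inducedX}]\|_{\text{pn}-\infty}$, which tends to $0$ in probability by \Cref{lem:x-diff,lem:w-diff}.

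\textbf{Estimation error (third term).} The discrete analogue of \Cref{prop:Phi-stable} (noted in the remark following it) gives that the third term is bounded by a constant multiple of $\|S_U(\inducedW)-\tfrac{1}{n}\widehat{W}_{n\times n}\|_{\text{pn}}$. Observing that $n \cdot S_U(\inducedW)(i,j) = W(u_i,u_j) = [W_{n\times n}]_{ij}$, the inner difference equals $\tfrac{1}{n}(W_{n\times n}-\widehat{W}_{n\times n})$, so this is controlled by $\tfrac{1}{n}\|W_{n\times n}-\widehat{W}_{n\times n}\|_2$, which tends to $0$ in probability by hypothesis (ultimately guaranteed by \Cref{thm:graphon-estimation}).

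\textbf{Sampling error (second term).} This is where the \smallIGN{} restriction enters. The strategy is to route through the chessboard graphon $\inducedEW$ and its signal analogue $\inducedEX$: the discrete tensors $S_U(\inducedW)$ and $S_n(\inducedEW)$ are identical as matrices (both equal $\tfrac{1}{n}[W(u_{(i)},u_{(j)})]_{ij}$), so by the defining property of \smallIGN{} one has $\Phi_d S_U(\inducedW) = \Phi_d S_n(\inducedEW) = S_n \Phi_c(\inducedEW)$. Hence the sampling term reduces to $\|S_U \Phi_c(\inducedW)-S_n \Phi_c(\inducedEW)\|_2$. Both $\inducedW$ and $\inducedEW$ are close to $W$ in \textnewnorm{} (the chessboard case being analogous to \Cref{lem:w-diff}, replacing the actual random bin widths with $1/n$), so \Cref{prop:Phi-stable} forces $\Phi_c(\inducedW)$ and $\Phi_c(\inducedEW)$ to be close in $L_\infty$, and passing through the sampling operator does not increase this distance.

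The main obstacle I anticipate is the index bookkeeping for the sampling term, specifically aligning the normalizations in $S_U$ versus $S_n$ and verifying that the chessboard rearrangement preserves the $\Phi_c$-output at corresponding sample points up to an $o_p(1)$ error. Once the three terms are individually $o_p(1)$, the triangle inequality yields $\MSE_U(\Phi_c([W,\dg{X}]),\Phi_d([\widehat{W}_{n\times n},\dg{\widetilde{x_n}}])) \to 0$ in probability.
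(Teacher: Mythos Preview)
Your overall three-term split matches the paper's proof, and your handling of the discretization and estimation terms is essentially correct. One minor point: for the first term you invoke \Cref{lem:x-diff,lem:w-diff}, but those give only the $L_2$-based partition-norm; since you (rightly) use the $L_\infty$ version of \Cref{prop:Phi-stable}, you need the $L_\infty$ convergence of $\inducedW,\inducedX$ to $W,X$, which is the content of \Cref{lem:inducedW-converges}.

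The genuine gap is in the sampling term. You correctly route through the chessboard graphon and reduce to bounding $\|S_U\Phi_c([\inducedW,\dg{\inducedX}]) - S_n\Phi_c([\inducedEW,\dg{\inducedEX}])\|$. But your final step --- ``$\Phi_c(\inducedW)$ and $\Phi_c(\inducedEW)$ are close in $L_\infty$, and passing through the sampling operator does not increase this distance'' --- does not close the argument, because you are applying \emph{two different} sampling operators to two different functions. $L_\infty$ closeness lets you swap $S_U\Phi_c(\inducedW)$ for $S_U\Phi_c(\inducedEW)$, but you are then left with $\|(S_U-S_n)\Phi_c([\inducedEW,\dg{\inducedEX}])\|$, and $\Phi_c$ applied to a chessboard graphon is in general discontinuous, so nothing prevents $S_U$ and $S_n$ from sampling on opposite sides of a jump.

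The paper resolves this by routing once more through $W$ itself. After another triangle inequality through $\Phi_c([W,\dg{X}])$, the two endpoint pieces are again controlled by $L_\infty$ closeness using \Cref{lem:inducedEW-converges}, but the middle piece $\|(S_U-S_n)\Phi_c([W,\dg{X}])\|$ requires a new ingredient: \Cref{lem:property-of-T} shows that $\Phi_c([W,\dg{X}])$ is \emph{piecewise} Lipschitz on the diagonal and off-diagonal separately (operations like ``copy row average to diagonal'' create a discontinuity along the diagonal but preserve Lipschitzness on each piece), and \Cref{lem:sxsn-sampling} then uses Glivenko--Cantelli to show that $S_U$ and $S_n$ agree asymptotically on such piecewise-Lipschitz functions. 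Without this step, the $S_U$-versus-$S_n$ alignment you flag as the ``main obstacle'' remains unresolved.
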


We leave the detailed proofs in \Cref{app:smallign-convergnce} with some discussion on the challenges for achieving full convergence results in the \Cref{remark:difficulty}.
We note that Theorem \ref{thm:convergenceafterEM} has a practical implication: It suggests that in practice, for a given unweighted graph (potentially sampled from some graphon), it may be beneficial to first perform edge probability estimation before feeding into the general IGN framework, to improve the architecture's stability and convergence.

Finally, although the convergence of \smallIGN{} is not entirely satisfactory, it contains some interesting class of functions that can approximate any \sGNN{} arbitrarily well. See \Cref{app:approx} for proof details.  %

\begin{restatable}{theorem}{smalligngcn}
\smallIGN{} can approximates \sGNN{} (both discrete and continuous ones) arbitrarily well on the compact domain in the $\|\cdot \|_{L_{\infty}}$ sense.
\end{restatable}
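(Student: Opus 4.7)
The plan is to exhibit, inside \smallIGN{}, an explicit network that approximates any target \sGNN{} layer, and then close the argument by composition together with the $L_\infty$ version of the stability result of \Cref{prop:Phi-stable}.

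First I would unfold an \sGNN{} layer into elementary operations: a polynomial filter $\sum_{k=0}^{K}\theta_k L^k X$ followed by a channelwise nonlinearity $\sigma$. Forming the (normalized) Laplacian $L$ from $W$ uses only LE basis elements: row/column sums (entries of \Cref{tab:R2-R1}) produce the degree signal, reshaping it onto the diagonal is an LE map from $\mathbb{R}^n$ to $\mathbb{R}^{n^2}$ (\Cref{tab:R1-R2}), and subtracting from $W$, together with the $D^{-1/2}\cdot D^{-1/2}$ rescaling, is absorbed into channelwise MLPs available via multi-channel LE layers (\Cref{remark:IGN}). The scalar combination $\sum_k\theta_k(\cdot)$ and the activation $\sigma$ are also channelwise MLPs, handled by standard universal approximation. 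The delicate ingredient is the iterated matrix--vector multiplication $Y\mapsto L\,Y$, which is bilinear and therefore cannot appear as a single LE map.

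To realize $L\,Y$ I would work at tensor order $3$: lift $L\in\mathbb{R}^{n^2}$ and $Y\in\mathbb{R}^{n}$ into a common $3$-tensor via LE replication operators, multiply their channels pointwise through a channelwise MLP that uniformly approximates $(a,b)\mapsto ab$ on the bounded input range, and then contract one axis using the normalized ``average over an axis'' LE reduction. The $1/n$ factor in the discrete reduction is precisely the Riemann-sum analog of integration against the uniform measure in the continuous case, matching the normalizations baked into \Cref{tab:R2-R2}. Iterating this gadget produces $L^k Y$ for all $k\le K$, with all intermediate tensor orders bounded by $3$, so the construction lives inside a $3$-IGN.

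The crucial verification is that this architecture sits inside \smallIGN{}, i.e.\ that $S_n\Phi_c = \Phi_d S_n$ on chessboard graphons. Because chessboard graphons are piecewise constant on $n\times n$ blocks of equal measure, every integral performed by a continuous LE layer equals exactly the $(1/n)$-weighted discrete sum of its samples, so $S_n$ commutes with every LE basis element on chessboard inputs; channelwise MLPs commute with $S_n$ trivially since they act pointwise in the spatial variable. Composition preserves both properties, so the network lies in \smallIGN{} by construction. I expect this commutativity check to be the main obstacle: one must set up the $3$-tensor contraction gadget so that matrix--vector multiplication is \emph{exactly} (not merely approximately) equivariant under sampling on chessboard inputs, which is precisely what forces the normalization conventions inherited from \Cref{tab:R2-R2}--\Cref{tab:R2-R1}.

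Finally I would close the $L_\infty$ approximation. Given a compact family of inputs and $\varepsilon>0$, universal approximation lets me realize each channelwise MLP---the multiplication gadget, the $\theta_k$-combination, and the activation $\sigma$---to $L_\infty$ tolerance $\varepsilon/(CK)$ on the bounded range of intermediate signals. The $L_\infty$ version of \Cref{prop:Phi-stable} then propagates these pointwise errors through the finitely many layers, giving an output $L_\infty$ error of order $\varepsilon$. The same construction works in parallel for the discrete and continuous cases, covering both variants of \sGNN{} asserted in the statement.
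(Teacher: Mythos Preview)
Your overall strategy---build an IGN that mimics one \sGNN{} layer, then verify membership in \smallIGN{}---matches the paper. Two points deserve comment, one minor and one substantive.

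First (minor): the order-$3$ detour for matrix--vector multiplication is unnecessary. The paper stays entirely inside $2$-IGN: to compute $\frac{1}{n}Mx$, it replicates the $1$-signal $x$ across columns to get a $2$-tensor with the chessboard pattern, applies a channelwise MLP approximating $(a,b)\mapsto ab$ to the pair $(M,\,\text{replicated }x)$, and then applies the ``row mean'' reduction. No lift to $3$-tensors is needed; see \Cref{lem:ign-multiplication} and the accompanying \Cref{fig:approx}.

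Second, and this is the real gap: your sentence ``$S_n$ commutes with every LE basis element on chessboard inputs \ldots\ Composition preserves both properties'' is exactly what fails in general, and the paper says so explicitly in \Cref{remark:difficulty}. The issue is that several LE basis elements (the diagonal operations, e.g.\ entries $3$ and $12$--$15$ of \Cref{tab:R2-R2}) map a chessboard input to an output supported on the diagonal, which is \emph{not} chessboard. Once that happens, a subsequent averaging/integration layer gives different answers in the continuous and discrete pictures: the continuous integral over a measure-zero diagonal is $0$, while the discrete $\frac{1}{n^2}$-weighted sum over the sampled diagonal is not. So composition does \emph{not} automatically preserve the commutation $S_n\Phi_c=\Phi_d S_n$, and your argument for \smallIGN{} membership collapses at that step.

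The paper's fix is not to assert general commutativity, but to prove a targeted statement: \Cref{lemma:sgnn-linear-layer-commutes} shows that the specific operation $W\!\cdot\! X$ (graphon acting on a $1$D chessboard signal) satisfies $S_n(\inducedEW\,\inducedEX)=(S_n\inducedEW)(S_n\inducedEX)$, and \Cref{remark: replace-multiplication} observes that this proof uses only the chessboard structure of the two factors, so replacing exact multiplication by its MLP approximation does not break it. Combining this with $S_n\circ\sigma=\sigma\circ S_n$ (item 2 of \Cref{lem:property-of-T}) layer by layer gives $S_n\Phi_{c,\text{approx}}=\Phi_{d,\text{approx}}S_n$ on chessboard inputs. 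The crucial point is that the paper's concrete $2$-IGN construction never routes information through a diagonal-only intermediate; instead of storing $x$ as $\dg{x}$, it immediately broadcasts to a full chessboard $2$-tensor (cf.\ the caption of \Cref{fig:chessboard}: pattern (b) is replaced by pattern (c)). Your proposal would go through if you similarly audited your construction to ensure every intermediate tensor retains the (higher-order) chessboard pattern, but that is precisely the work you skipped.
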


\section{Experiments}
\label{sec:exp}

We experiment 2-IGN on three graphon models of increasing complexity: Erdoes Renyi graph with $p=0.1$, stochastic block model of 2 blocks of equal size and probability matrix $[[0.1, 0.25], [0.25, 0.4]]$, a \lip{} graphon model with $W(u, v) = \frac{u + v + 1}{4}$, and a piecewise \lip{} graphon with $W(u, v) = \frac{u\%\frac{1}{3} + v\%\frac{1}{3} + 1}{4}$ where $\%$ is modulo operation. Similar to \cite{keriven2020convergence}, we consider untrained IGN with random weights to assess how convergence depends on the choice of architecture rather than learning.
 We use a 5-layer IGN with hidden dimension 16. We take graphs of different sizes as input and plot the error in terms of the norm of the output difference. The results are plotted in \Cref{fig:convergence}. %

As suggested by the \Cref{thm:EW-convergence}, for both deterministic and random sampling, the error decreases as we increase the size of the sampled graph. %
Interestingly, if we take the 0-1 adjacency matrix as the input, the error does not decrease, which aligns with the negative result in \Cref{thm:conv-failure}. We further implement the edge smoothing algorithm \cite{eldridge2016graphons} and find that after the edge probability estimation, the error again decreases, as implied by \Cref{thm:convergenceafterEM}. We remark that although \Cref{thm:convergenceafterEM} works only for \smallIGN{}, our experiments for the general $2$-IGN with randomized initialized weights still show encouraging convergence results. Understanding the convergence of general IGN after edge smoothing is an important direction that we will leave for further investigation.

\section{Related Work}
One type of convergence in deep learning concerns the limiting behavior of neural networks when the width goes to infinity \cite{jacot2018neural,du2018gradient,arora2019fine,lee2019wide,du2019gradient}.  In that regime, the gradient flow on a normally initialized, fully connected neural network with a linear output layer in the infinite-width limit turns out to be equivalent to kernel regression with respect to the Neural Tangent Kernel \cite{jacot2018neural}.

Another type of convergence
concerns the limiting behavior of neural networks when the depth goes to infinity. In the continuous limit, models such as residual networks, recurrent neural network decoders, and normalizing flows can be seen as an Euler discretization of an ordinary differential equation \cite{weinan2017proposal, chen2018neural,lu2018beyond,ruthotto2020deep}. %

The type of convergence we consider in this chapter concerns when the input objects converge to a limit, does the output of some neural network over such sequence of objects also converge to a limit? In the context of GNNs, such convergence and related notion of stability and transferability have been studied in both graphon  \cite{ruiz2020graphon,keriven2020convergence,gama2020stability,ruiz2021graph} and manifold setting \cite{kostrikov2018surface, levie2021transferability}. In the manifold setting, the analysis is closely related to the literature on convergence of Laplacian operator \cite{xu2004discrete,wardetzky2008convergence, belkin2008discrete,belkin2009constructing,dey2010convergence}. %

Lastly, after ICML 2022 conference (where the papaer this chapter is based on is published) it is brought to our attention that the characterization of linear permutation equivariant layers in $k$-IGN bears similarity in \cite{albooyeh2019incidence}. The pooling and broadcasting operations in \cite{albooyeh2019incidence} are the same as what we call the "averaging" and "replication" operations in our paper. This is discussed in details in \Cref{remark:difference}.   

\section{Concluding Remarks}
in this chapter, we investigate the convergence property of a powerful GNN, Invariant Graph Network. We first prove a general stability result of linear layers in IGNs. We then prove a convergence result under the model of \cite{ruiz2020graphon} for both \IGN{} and high order \kIGN{}. %
Under the model of \cite{keriven2020convergence}
we first show a negative result that in general the convergence of every IGN is not possible.
Nevertheless, we pinpoint the major roadblock and prove that if we preprocess input graphs by edge smoothing \cite{zhang2015estimating}, the convergence of a subfamily of IGNs, called \smallIGN{}, can be obtained. As an attempt to quantify the size of \smallIGN{}, we also show that \smallIGN{} contains a rich class of functions that can approximate any \sGNN.

\section{Missing proofs}
\label{sec:ign_missing_proofs}
\subsection{Missing Proofs from \Cref{sec:stability}}
\subsubsection{Extension of \textNewnorm{}}
\label{subsec:extending-new-norm}
There are three ways of extending  \textNewnorm{} 1) extend the definition of \textnewnorm{} to multiple channels 2) changing the underlying norm from $L_2$ norm to $L_{\infty}$ norm, and 3) extend \textNewnorm{} defined for 2-tensor to $k$-tensor.

First recall the definition \textnewnorm{}.
\partitionnormdef*

To extend \textnewnorm{} to signal $A \in \mb{R}^{n^2 \times d}$ of multiple channels, we denote $A = [A_{\cdot, 1} \in \mb{R}^{n^2 \times 1}, ..., A_{\cdot, d} \in \mb{R}^{n^2 \times 1}]$ where $[\cdot, \cdot]$ is the concatenation along channels. $\newnorm{A}:=\sum_{i=1}^d \newnorm{A_{\cdot, i}}$.
both for multi-channel signal both for graphs and graphons.

Another way of generalizing \textNewnorm{} is to change the $L_2$ to $L_{\infty}$ norm. We denote the resulting norm as $\newnorminf{\cdot}$. For $W \in \mc{W}, \newnorminf{W} := (\max_{u\in[0, 1]} W(u, u), \max_{u\in[0, 1], v\in[0, 1]} W(u, v))$. The discrete case and high order tensor case can be defined similarly as the $L_2$ case.

The last way of extending \textNewnorm{} to $k$-tensor $X \in \dtensor{k}{1}$ is to define the norm for each slice of $X$, i.e., $\newnorm{X}:=((\frac{1}{\sqrt{n}})^{|\gamma_1|}\|X_{\gamma_1} \|_2, ..., \frac{1}{\sqrt{n}})^{|\gamma_{\bell{k}}|}\|X_{\gamma_{\bell{k}}} \|_2)$ where $\gamma_{\cdot} \in \parspace_k$. Note how we order $(\gamma_1, ..., \gamma_{\bell{k}})$ can be arbitrary as long as the order is used consistent.

\subsubsection{Proof of stability of linear layer for 2-IGN}
\label{app:2-IGN-linear-stability}

\RtwoRtwo*
\begin{proof}

The statements hold in both discrete and continuous cases. Without loss of generality, we only prove the continuous case by going over all linear equivariant maps $\mb{R}^{[0, 1]^2} \rightarrow \mb{R}^{[0, 1]^2}$ in \Cref{tab:R2-R2}.
\begin{itemize}
\item 1-3: It is easy to see that the \textnewnorm{} does not increase for all three cases.

\item 4-6:
It is enough to prove case 4 only. Since $T(W)(*, u) = \int W(u, v)dv$, \dnorm{} $\|\dg{T(W)}\|_{L_2}^2 = \int (\int W(u, v)dv)^2 du \leqslant  \iint W^2(u, v)dudv $.  For matrix norm: $\|T(W)\|^2_{L_2} = \|\dg{T(W)}\|_{L_2} \leqslant   \iint W^2(u, v)dudv$. 
Therefore the statement holds for this linear equivariant operation.

\item 7-9: same as case 4-6.

\item 10-11: It is enough to prove the first case: average of all elements replicated on the whole matrix. The \dnorm{} is the same as the \mnorm{}. Both norms are decreasing so we are done.

\item 12-13: It is enough to prove only case 12. Since \dnorm{} is equal to \mnorm{}, and \dnorm{} is decreasing by Jensen's inequality we are done.

\item 14-15: Since \mnorm{} is the same as \dnorm{}, which stays the same so we are done.
\end{itemize}
As shown in all cases for any $W \in \mc{W}$ with $\|W\|_{\tn{pn}} < (\epsilon, \epsilon)$, $\|T_i(W)\|_{\tn{pn}} < (\epsilon, \epsilon)$. Therefore we finish the proof for $\mb{R}^{[0,1]^2} \rightarrow \mb{R}^{[0,1]^2}$.
We next go over all linear equivariant maps $\mb{R}^{[0, 1]} \rightarrow \mb{R}^{[0, 1]^2}$ in \Cref{tab:R1-R2} and prove it case by case.

\begin{itemize}
\item 1-3: It is enough to prove the second case. It is easy to see diagonal norm is preserved and $\|T(W)\|_2 = \|W\|_2 \leqslant  \epsilon$. Therefore $\|T(W)\|_{\tn{pn}} \leqslant  (\epsilon, \epsilon)$.

\item 4-5: It is enough to prove the second case. Norm on diagonal is no larger than $\|W\|$ by Jensen's inequality. The matrix norm is the same as the diagonal norm therefore also no large than $\epsilon$. Therefore $\|T(W)\|_{\tn{pn}} \leqslant  (\epsilon, \epsilon)$.
\end{itemize}

Last, we prove the cases for $\mb{R}^{[0, 1]^2} \rightarrow \mb{R}^{[0, 1]}$.

For cases 1-3, it is enough to prove case 2. Since the norm of the output is no large than the matrix norm of input by Jensen's inequality, we are done. Similar reasoning applies to cases 4-5 as well.
\end{proof}

\subsubsection{Proof of \Cref{thm:linear-layer-stability}}
\label{subsec:linear-layer-stability-proof}
We need a few definitions and lemmas first.
\begin{definition}[axis of a tensor]
Given a k-tensor $X \in \dtensor{k}{1}$ indexed by $(\name{1}, ..., \name{k})$. The axis of $X$, denoted as $\axis{X}$, is defined to be $\axis{X}:=(\name{1}, ..., \name{k})$. %
\end{definition}
As an example, the aixs of the first grey sub-tensor in \Cref{fig-app:slices}a, which is a $2$-tensor, is $\set{\set{1,2}, \set{3}}$.

\begin{definition}[replication of a tensor]
\label{def:replication}
Given a k-tensor $X \in \dtensor{k}{1}$ indexed by $(1, ..., k)$, replicating $X$ over new axis $(k+1, ..., k+d)$ means that the resulting new tensor $X'$ of $k+d$ dimension is $X'(i_1, ..., i_k, *, ..., *) := X(i_1, ..., i_k)$. %
\end{definition}

\begin{definition}[partial order of partitions]
\label{def:partial-order}
Given two partitions of $[k]$, denoted as $\gamma = \{\gamma_1, ..., \gamma_{d_1}\}$ and $\beta = \{\beta_1, ..., \beta_{d_2}\}$, we say $\gamma$ is finer than $\beta$, denoted as $\gamma < \beta$, if and only if 1) $\gamma \neq \beta$ and 2) for any $\beta_j \in \beta$, there exists $\gamma_i \in \gamma$ such that $ \beta_j \subseteq \gamma_i$.
\end{definition}
For example, $\{\{1,2,3\}\}$ is finer than $\{\{1, 2\}, \{3\}\}$ but $\set{\set{1,2}, \set{3}}$ is not comparable with $\set{\set{1,3}, \set{2}}$. Note that space of partitions forms a Hasse diagram under the partial order defined above (each set of elements has a least upper bound and a greatest lower bound, so that it forms a lattice). See \Cref{fig:hasse-diagram} for an example.

\begin{figure}%
\centering
\begin{tikzpicture}[scale=.8]
  \node (one) at (0,2) {$\set{\set{1,2,3}}$};
  \node (a) at (-4,0) {$\set{\set{1,2}, \set{3}}$};
  \node (b) at (0,0) {$\set{\set{1,3}, \set{2}}$};
  \node (c) at (4,0) {$\set{\set{2, 3}, \set{1}}$};
  \node (zero) at (0,-2) {$\set{\set{1}, \set{2}, \set{3}}$};
  \draw (zero) -- (a) -- (one) -- (b) -- (zero) -- (c) -- (one) ;
\end{tikzpicture}
\caption{Space of partitions forms a Hasse diagram under the partial order defined in \Cref{def:partial-order}.} Top to bottom corresponds to coarse partition to finer partition. %
\label{fig:hasse-diagram}
\end{figure}
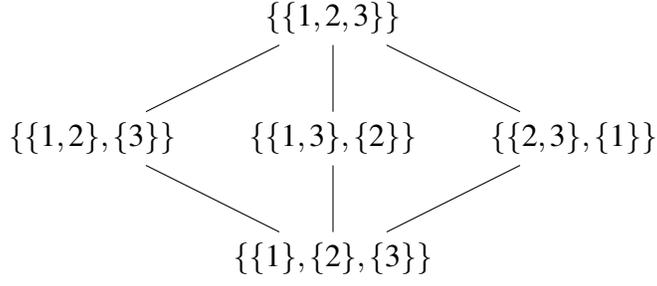

\begin{definition}[average a $k$-tensor $X$ over $\Pi$]\label{def:averaging}
Let $X \in \dtensor{k}{1}$ be a $k$-tensor indexed by $\set{\set{1}, ..., \set{k}}$. Without loss of generality, let $\Pi = \set{\set{1}, ..., \set{d}}$. Denote the resulting $(k-d)$-tensor $X'$, indexed by $\set{\set{d+1}, ..., \set{k}}$.
By averaging $X$ over $\Pi$, we mean %
\begin{equation*}
 X'(\cdot):=\frac{1}{n^d}\sum_{t\in \mc{I}_d} X(t, \cdot).
 \end{equation*}
The definition can be extended to $\mb{R}^{[0, 1]^k}$ by replacing average with integral.
 \end{definition}

\begin{lemma}[properties of \textnewnorm{}]
\label{lemma:property-of-partition-norm}
We list some properties of the \textnewnorm{}. Although all lemmas are stated in the discrete case, the continuous version also holds. The statements also holds for $\newnorminf{\cdot}$ as well.
\begin{enumerate}[label=(\alph*)]
\item Let $X \in \dtensor{k}{1}$ be a $k$-tensor and denote one of its slices $X' \in \dtensor{k'}{1}$ with $k'\leqslant k$.  If $\newnorm{X} \leqslant \epsilon \one_{\bell{k}}$, then $\newnorm{X'} \leqslant \epsilon \one_{\bell{k'}}$.

\item Let $k'<k$. Let $X \in \dtensor{k}{1}$ be a $k$-tensor and $X' \in \dtensor{k'}{1}$ be the resulting $k'$-tensor after averaging over $k-k'$ axis of $X$. If $\newnorm{X} \leqslant \epsilon \one_{\bell{k}}$, then $\newnorm{X'} \leqslant \epsilon \one_{\bell{k'}}$.

\item Let $k'>k$. Let $X \in \dtensor{k}{1}$ be a $k$-tensor and $X'$ be the resulting $k'$-tensor after replicating $X$ over $k'-k$ axis of $X'$. If $\newnorm{X} \leqslant \epsilon \one_{\bell{k}}$, then $\newnorm{X'} \leqslant \epsilon \one_{\bell{k'}}$. 

\item Let $k'<k$ and $X \in \dtensor{k}{1}$ be a $k$-tensor such that it has only one non-zero slice $X_{\gamma}$ of order $k'$, i.e., 
if $\bs{a}\in \mc{I}_k, X(\bs{a})\neq 0 $, it implies $\bs{a} \in \gamma$.
If $\newnorm{X_{\gamma}} \leqslant \epsilon \one_{\bell{k'}}$, then $\newnorm{X} \leqslant \epsilon \one_{\bell{k}}$.
\end{enumerate}

\end{lemma}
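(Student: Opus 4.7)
All four parts share a common template: for each partition $\beta$ appearing on the output side, express the slice $(X')_\beta$ (or $X_\beta$ in (d)) in terms of a slice of the original tensor, possibly after an average, and then compare the $(1/\sqrt{n})^{|\cdot|}$ normalization factors that appear in the corresponding components of $\newnorm{\cdot}$.

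For part (a) the key observation is that a slice of a slice is itself a slice: if $X' = X_\delta$ with $|\delta| = k'$, then for any $\beta \in \parspace_{k'}$ the sub-tensor $(X')_\beta$ equals $X_{\delta'}$, where $\delta' \in \parspace_k$ is obtained by merging parts of $\delta$ according to the equivalence relation of $\beta$; in particular $|\delta'| = |\beta|$, so the $\beta$-component of $\newnorm{X'}$ equals the $\delta'$-component of $\newnorm{X}$ and is therefore $\leqslant \epsilon$. For part (b), when $X'$ is produced by averaging over a set $S$ of $k-k'$ axes, fix $\beta \in \parspace_{k'}$ and extend it to $\gamma' := \beta \cup \{\{i\} : i \in S\} \in \parspace_k$, which has $|\gamma'| = |\beta| + |S|$; one application of Cauchy--Schwarz (the squared mean is at most the mean of squares) gives $\|(X')_\beta\|_2^2 \leqslant n^{-|S|}\|X_{\gamma'}\|_2^2$, and multiplying both sides by $(1/\sqrt{n})^{2|\beta|}$ matches the $\beta$-component of $\newnorm{X'}$ against the $\gamma'$-component of $\newnorm{X}$. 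For part (c), replicating $X$ over axes $R$, let $\gamma := \{\beta_j \cap [k] : \beta_j \cap [k] \neq \emptyset\} \in \parspace_k$ be the restriction of $\beta$ to the original axes; since $X'$ is constant along $R$, $(X')_\beta$ is exactly $X_\gamma$ replicated along the $|\beta|-|\gamma|$ parts of $\beta$ lying entirely in $R$, so $\|(X')_\beta\|_2^2 = n^{|\beta|-|\gamma|}\|X_\gamma\|_2^2$ and the $\beta$-component of $\newnorm{X'}$ equals the $\gamma$-component of $\newnorm{X}$.

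Part (d) is the most interesting and will use (a) as a subroutine. Fix any $\beta \in \parspace_k$; since $X$ vanishes off indices in $\gamma$, only $\bs{a}$ satisfying both $\beta$ and $\gamma$ contribute to $X_\beta$, and this set is precisely the equivalence pattern of the partition $\delta \in \parspace_k$ whose equivalence relation is obtained as the transitive closure of those of $\beta$ and $\gamma$. In the paper's convention $\delta \leqslant \beta$ and $\delta \leqslant \gamma$, hence $|\delta| \leqslant \min(|\beta|,|\gamma|)$, and $\|X_\beta\|_2 = \|X_\delta\|_2$. Because $\delta \leqslant \gamma$, $X_\delta$ is a further slice of $X_\gamma$, so by (a) applied to $X_\gamma$ we have $\newnorm{X_\delta} \leqslant \epsilon\, \one$; reading off the all-singletons component of $\newnorm{X_\delta}$ gives $\|X_\delta\|_2 \leqslant \epsilon\, n^{|\delta|/2}$, and therefore $(1/\sqrt{n})^{|\beta|}\|X_\beta\|_2 \leqslant \epsilon\, n^{(|\delta|-|\beta|)/2} \leqslant \epsilon$.

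The main obstacle throughout is the bookkeeping: one must consistently track the correspondence between source-side and target-side partitions under slicing, averaging, and replication, and verify that the $(1/\sqrt{n})^{|\cdot|}$ normalizations line up exactly so that every ``$n$-to-a-power'' factor cancels. Once this is done the analogous statements for $\newnorminf{\cdot}$ and for the continuous versions follow by essentially the same arguments -- sub-max $\leqslant$ max for (a), max of averages $\leqslant$ max for (b), replication preserves max for (c), and support in $X_\gamma$ bounds the global max by $\max |X_\gamma|$ for (d) -- with finite sums replaced by integrals and Cauchy--Schwarz by its integral form where needed.
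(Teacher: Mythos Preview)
Your proposal is correct and follows essentially the same approach as the paper. In particular: for (a) both you and the paper use ``a slice of a slice is a slice''; for (b) you use Cauchy--Schwarz directly over all averaged axes where the paper uses Jensen's inequality and induction on one axis at a time, but these are equivalent; for (c) you restrict the partition to the original axes where the paper does the same via induction; and for (d) your transitive-closure partition $\delta$ is precisely the paper's ``most coarse partition finer than both $\beta$ and $\gamma$,'' followed by the same appeal to (a) and the identity $\|X_\beta\|_2 = \|X_\delta\|_2$.
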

\begin{proof}
We prove statements one by one. Note that although the proof is done for $L_2$ norm, we do not make use of any specific property of $L_2$ norm and the same proof can be applied to $L_{\infty}$ as well. Therefore all statements in the lemma apply to $\newnorminf{\cdot}$ as well.
\begin{enumerate}
    \item By the definition of \textnewnorm{} and slice in \Cref{def:slice}, we know that any slice of $X'$ is also a slice of $X$, therefore any component of $\newnorm{X'}$ will be upper bounded by $\epsilon$, which concludes the proof.

    \item Without loss of generality, we can assume that $k' = k-1$ as the general case can be handled by induction. Let the axis of $X$ that is averaged over is axis $\set{1}$. To bound $\newnorm{X'}$, we need to bound the normalized norm of any slice of $X'$. Let $X'_{\gamma'}$ be arbitrary slice of $X'$. Since $X'$ is obtained by averaging over axis 1 of $X$, we know that $X'_{\gamma'}$ is the obtained by averaging over axis of 1 of $X_{\gamma}$, a slice of $X$, where $\gamma := \gamma' \cup \{\set{1}\}$. Since $\newnorm{X} \leqslant \epsilon \one_{\bell{k}}$, we know that $(\frac{1}{\sqrt{n}})^{|\gamma|}\| X_{\gamma}\| \leqslant \epsilon$. By Jensen's inequality, we have $(\frac{1}{\sqrt{n}})^{|\gamma'|}\| X'_{\gamma'}\| \leqslant (\frac{1}{\sqrt{n}})^{|\gamma|}\| X_{\gamma}\|$, and therefore $(\frac{1}{\sqrt{n}})^{|\gamma'|}\| X'_{\gamma'}\| \leqslant \epsilon $. Since $(\frac{1}{\sqrt{n}})^{|\gamma'|}\| X'_{\gamma'}\| \leqslant \epsilon $ holds for arbitrary slice of $X'$, we conclude that $\newnorm{X'} \leqslant \epsilon \one_{\bell{k'}}$.

    The proof above only handles the case of $k'=k-1$. The general case where $k-k'>1$ can be handled by evoking the proof above multiple times for different reduction axis.

    \item Similar to the \Cref{lemma:property-of-partition-norm} (b), we can handle general case by performing induction. Therefore without loss of generality,  we assume $X$ is indexed by $(\set{1}, ..., \set{k} )$ and $X'$ is indexed by $(\set{1}, ..., \set{k+1})$. Just as the last case, without loss of generality we assume that $X'$ is obtained by replicating $X$ over $1$ new axis, denoted as $\set{k+1}$. In other words, $\axis{X'} = \axis{X} \cup \set{\set{k+1}}$.

    To control $\newnorm{X'}$, we need to bound $(\frac{1}{\sqrt{n}})^{|\gamma|}\| X'_{\gamma}\|$ where $\gamma \in \parspace_{k+1}$. Since $X'$ is obtained from $X$ by replicating it over $\set{k+1}$,  $(\frac{1}{\sqrt{n}})^{|\gamma|}\| X'_{\gamma}\| = (\frac{1}{\sqrt{n}})^{|\beta|}\| X_{\beta}\|$ where $\beta = \gamma |_{[k]}$. As $\newnorm{X} \leqslant \epsilon \one_{\bell{k}}$, it implies that $(\frac{1}{\sqrt{n}})^{|\gamma|}\| X'_{\gamma}\| \leqslant \epsilon $ holds for any $\gamma \in \parspace_{k'}$. Therefore we conclude that $\newnorm{X'} \leqslant \epsilon \one_{\bell{k'}}$.

    \item %
    To bound $\newnorm{X}$, we need to bound the normalized norm of any slice of $X$. Let $X_{\beta}$ be arbitrarily slice of $X$ where $\beta \in \parspace_k$.  Since $\gamma$ and $\beta$ are partitions of $[k]$, there exist partitions that are finer than both $\beta$ and $\gamma$, where the notion of finer between two partitions is defined in \Cref{def:partial-order}. Among all partitions that satisfy such conditions, denote the most coarse one as $\alpha \in \parspace_k$. This can be done because the $\parspace_k$ is finite.
    Note that $|\alpha| < |\beta|$ and $|\alpha| < |\gamma|$.

    Since $X_{\alpha}$ is a slice of $X_{\gamma}$ and $\newnorm{X_{\gamma}} \leqslant \epsilon \one_{\bell{k'}}$,
    $(\frac{1}{\sqrt{n}})^{|\alpha|}\|X_{\alpha}\| \leqslant \epsilon$ according to \Cref{lemma:property-of-partition-norm} (a).
    As $X_{\alpha}$ is the slice of $X_{\beta}$ (implies $\| X_{\alpha} \leqslant X_{\beta}\|$ )
    and $\alpha$ is the most coarse partition that is finer than $\beta$ and $\gamma$ (implies $\|X_{\alpha}\|\geqslant \| X_{\beta}\|$ we have $\| X_{\beta}\| = \| X_{\alpha}\|$.
    This implies $(\frac{1}{\sqrt{n}})^{|\beta|}\| X_{\beta}\| \leqslant (\frac{1}{\sqrt{n}})^{|\alpha|}\| X_{\alpha}\| \leqslant \epsilon$.
    
    As $(\frac{1}{\sqrt{n}})^{k'}\| X_{\beta}\| \leqslant \epsilon $ holds for arbitrary slice $\beta$ of $X$, we conclude that $\newnorm{X}\leqslant \epsilon \one_{\bell{k}}$.
\end{enumerate}

\end{proof}

Now we are ready to prove the main theorem.
\kignlinearstability*
\begin{proof}
Without loss of generality, we first consider discrete cases of mapping from $X \in \mb{R}^{n^{\ell}}$ to $Y \in \mb{R}^{n^m}$. In general, each element $T_{\gamma}$ of linear permutation equivariant basis can be identified with the following operation on input/output tensors.
\begin{quotation}
Given input $X$, (step 1) obtain its subtensor $X_{\gamma}$ on a certain $\Pi_1$ (selection axis),
(step 2) average $X_{\gamma}$ over $\Pi_2$ (reduction axis), resulting in $\xred$.
(step 3) Align $\xred$ on $\Pi_3$ (alignment axis) with $Y_{\gamma}$  and
(step 4) replicate $Y_{\gamma}$ along $\Pi_4$ (replication axis), resulting $\yrep$, a slice of $Y$. Entries of $Y$ outside $\yrep$ will be set to be 0. In general, $\Pi_i$ can be read off from $S_1$-$S_3$. %
\end{quotation}
$\Pi_1$-$\Pi_4$ corresponds to different axis of input/output tensor and can be read off from different parts of $S_{\gamma} = S_1 \cup S_2 \cup S_3$ as we introduced in the main text. Note such operation can be naturally extended to the continuous case, as done in \Cref{tab:R1-R2,tab:R2-R2,tab:R2-R1} for $2$-IGN. We next give detailed explanations of each step.

\begin{figure}[htp]
  \centering
  \includegraphics[width=.8\linewidth]{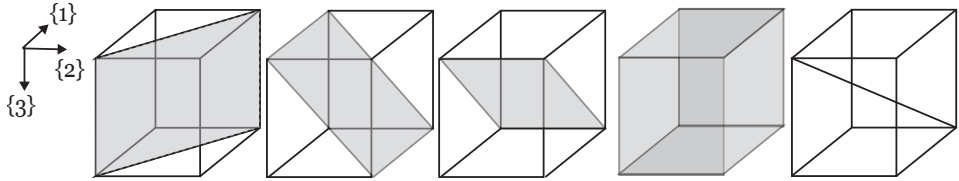}
\caption{Five ``slices'' of a 3-tensor, corresponding to $\bell{3}=5$ partitions of $[3]$. From left to right: a) $\{\{1, 2\}, \{3\}\}$ b) $\{\{1\}, \{2, 3\}\}$ c) $\{\{1, 3\}, \{2\}\}$ d) $\{\{1\}, \{2\}, \{3\}\}$ e) $\{\{1, 2, 3\}\}$.}
\label{fig-app:slices}
\end{figure}

\textbf{First step ($X \rightarrow X_\gamma$):
select $X_{\gamma}$ from $X$ via $\Pi_1$.}

$\Pi_1$ corresponds to $$S|_{[\ell]}\:=\{s\cap [l]\mid s\in S \text{ and } s\cap [l] \neq \emptyset\}. $$ It specifies the what parts (such as diagonal part for 2-tensor) of the input $\ell$-tensor is under consideration. We denote the resulting subtensor as $X_{\gamma}$. See \Cref{def:slice} for formal definition.
As an example in \Cref{eq:partition-example}, $\Pi_1$ corresponds to $\{\{1, 2\},\{3\}\}$, meaning we select a 2-tensor with axises $\{1, 2\}$ and $\{3\}$. Note that the cardinality $|S|_{[\ell]}| = |(S_1 \cup S_2)|_{[\ell]}|\leqslant  l$ encodes the order of $X_{\gamma}$.

\textbf{Second step ($X_\gamma \rightarrow \xred$): average of $X_{\gamma}$ over $\Pi_2$.}
 $\Pi_2$ corresponds axes in  $S_1\subset S|_{[\ell]}$, which tells us along what axis to average over $X_{\gamma}$.
 It will reduce the tensor $X_{\gamma}$ of order $|S_1| + |S_2|$, indexed by $S|_{[\ell]} $, to a tensor of order $|S|_{[\ell]}|-|S_1|=|S_2|$, indexed by $S_2|_{[l]}$. Recall the definition of "averaging" in Definition \ref{def:averaging}.

 In the example of \Cref{fig-app:partition}, this corresponds to averaging over axis $\{\{1, 2\}\}$ %
 , reducing 2-tensor (indexed by axis $\{1,2\}$ and $\{3\}$) to 1-tensor (indexed by axis $\{3\}$). The normalization factor in the discrete case is $n^{|S_1|}$. We denote the tensor after reduction as $\xred$.

As the second step performs tensor order reduction, we end up with a tensor $\xred$ of order $|S_2|$.
The next two steps will describe how to fill in the output tensor $Y$ using $\xred$. To fill in $Y$, we will first align $\xred$ with $Y_{\gamma}$, a subtensor of $Y$, in the third step. We then replicate $Y_{\gamma}$ on $\Pi_4$ in the fourth step, resulting in $\yrep$, a sub-tensor of $Y$.  Finally, we fill all entries of $Y$ outside the subtensor $Y_\gamma$ to be zero.

\textbf{Third step ($\xred \rightarrow Y_{\gamma}$): align $\xred$ with $Y_{\gamma}$}.
To fill in $Y_{\gamma}$, we need to specify how the resulting $|S_2|$-tensor $\xred$ is \textit{aligned} with a certain $|S_2|$-subtensor $Y_\gamma$ of $Y$. After all, there are many ways of selecting a $|S_2|$-tensor from $Y$, which is indexed by $\{ \{l+1\}, ..., \{\ell +m\}\}$.
Specifically, set $Y_{\gamma}$ be the $|S_2|$-tensor indexed by $S_2 |_{\ell+[m]}$. We next define the precise relationship between $\xred$ and $Y_{\gamma}$. $\xred$ is indexed by $S_2|_{[l]}$ while $Y_{\gamma}$ is indexed by $S_2|_{l + [m]}$ and defined to be $Y_{\gamma}(\cdot) = \xred(\cdot)$. %
In the example of \Cref{fig-app:partition}, $\xred$ is a 1D tensor indexed by $\{3\}$ and $Y_{\gamma}$ (the grey cuboid on the right cube of \Cref{fig-app:partition}) is indexed by $\{6\}$.

\textbf{Fourth step ($Y_{\gamma} \rightarrow \yrep$): replicating $Y_{\gamma}$ over $\Pi_4$}. $\Pi_4$ corresponds to axes in $S_3$. It will be used to specify along what axis (axes) we will replicate the $|S_2|$-tensor $Y_{\gamma}$ over. Recall that $Y_{\gamma}$ is indexed by $S_2|_{l + [m]}$. Let $\yrep$ be a subtensor of $Y \in \mb{R}^{n^{l}}$ indexed by $(S_2 \cup S_3)|_{l + [m]}$. Obviously, the tensor $Y_{\gamma}$ output from the Third step is a subtensor of $\yrep$. 
Without loss of generality, let the first $|S_2|$ component are indexed by $S_2|_{l + [m]}$ and the rest components are indexed by $S_3|_{l + [m]}$. The mathematical definition of the fourth step is then $\yrep(\cdot, t) := Y_{\gamma}(\cdot)$ for all $t\in [n]^{|S_3|}$. Note that the order of $\yrep$ can be smaller than order of $Y$.

The example in \Cref{eq:partition-example} has $S_3=\{\{4\}, \{5\}\}$, which means that we will replicate the 1-tensor along axis $\{4\}$ and $\{5\}$. Note that in general, we do not have to fill in the whole $m$-tensor (think about copy row average to diagonal in \Cref{tab:R2-R2}).

\begin{figure}[htp]
  \centering
  \vspace{-5pt}
  \includegraphics[width=.4\linewidth]{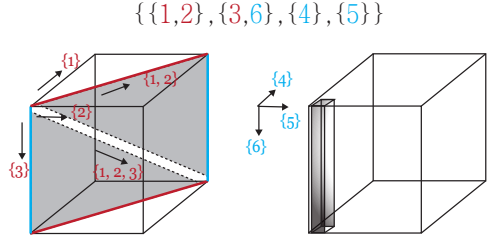}
\caption{An illustration of the one basis element of the space of $\LE_{3, 3}$. %
It selects area spanned by axis $\{1,2\}$ and $\{3\}$, average over the axis $\{1, 2\}$, and then align the resulting 1D tensor with axis $\{6\}$, and finally replicate the slices along axis $\{4\}$ and $\{5\}$ to fill in the whole cube on the right. }
\label{fig-app:partition}
\end{figure}

After the interpretation of general linear equivariant maps in $k$-IGN, We now show that %
if $\newnorm{X} \leqslant  \epsilon \one_{\bell{\ell}}$, then $T_{\gamma}(X)\leqslant  \epsilon \one_{\bell{m}}$ holds for all $\gamma$. This can be done easily with the use of \Cref{lemma:property-of-partition-norm}. %

For any partition of $[\ell +m]$ $\gamma$, according to the first step we are mainly concerned about the $\newnorm{X_{\gamma}}$ instead of $\newnorm{X}$.  Since $X_{\gamma}$ is a slice of $X$, then if $\newnorm{X} \leqslant \epsilon \one_{\bell{\ord{X}}}$, by \Cref{lemma:property-of-partition-norm} (a), then $\newnorm{X_{\gamma}} \leqslant \epsilon \one_{\bell{|S_1|+|S_2|}}$.

According to the second step and \Cref{lemma:property-of-partition-norm} (b), we can also conclude that $\newnorm{\xred} \leqslant \epsilon \one_{\bell{|S_2|}}$.

For the third step of align $\xred$ with $Y_{\gamma}$, it is quite obvious that $\newnorm{Y_{\gamma}} = \newnorm{\xred} \leqslant \epsilon \one_{\bell{|S_2|}}$.

For the fourth step of replicating $Y_{\gamma}$ over $\Pi_4$ to get $\yrep$, by \Cref{lemma:property-of-partition-norm} (c), we have $\newnorm{\yrep} \leqslant \epsilon \one_{\bell{|S_2|+|S_3|}}$.

Lastly, we evoke \Cref{lemma:property-of-partition-norm} (d) to get $\newnorm{Y} \leqslant \epsilon \one_{\bell{m}}$, which concludes our proof.

\end{proof}

\begin{remark}[On the difference from Incidence Networks for Geometric Deep Learning.]
\label{remark:difference}

A recent preprint Incidence Networks for Geometric Deep Learning \cite{albooyeh2019incidence} characterize the linear equivariant maps between incidence tensor, which encodes the combinatorial structure of graphs and its higher order analog simplicial complex and polytopes. \cite{albooyeh2019incidence} characterizes the linear permutation equivariant maps in terms of pooling and broadcasting operations. The pooling and broadcasting operations is the same as the averaging and replication operation defined in \Cref{def:averaging} and \Cref{def:replication}.

The main difference of \cite{albooyeh2019incidence} from our paper is 1) their motivation is to characterize the linear permutation equivariant maps between incidence tensors while in our paper, the similar characterization (in the case of linear permutation equivariant maps of $k$-IGN) serves as a building block for our convergence proof; 2) the characterization in \cite{albooyeh2019incidence} is slightly more general as incidence tensor can have different length for different axis while tensors considered in our case has the same length across all axis.
\end{remark}

\subsection{Missing Proofs from \Cref{sec:convergence-ruiz} (Edge Weight \\ Continuous Model)}
\label{app:proofs-EW}
First we need a lemma on the distribution of gaps between $n$ uniform sampled points on $[0,1]$.
\begin{lemma}
\label{lemma:lengh-distribution}
Let $u_{(i)}$ be $n$ points uniformly sampled on $[0,1]$, sorted from small to large with $u_{(0)}=0$ and $u_{(n+1)}=1$. Let $D_i = u_{(i)} - u_{(i-1)}$. All $D_i$s have same distribution, which is $\tn{Beta}(1, n)$. 
In particular, expectation of $D_i$ $\mathbb{E}(D_i) = \frac{1}{n+1}$, $\mathbb{E}(D_i^2) = \frac{2}{(n+1)(n+2)}$, $\mathbb{E}(D_i^3) = \frac{6}{(n+1)(n+2)(n+3)}$. 
\end{lemma}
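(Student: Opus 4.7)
The plan is to establish the marginal distribution of a single spacing via a symmetry-plus-CDF argument, and then read off the three moments from the standard Beta-density formula. I would begin by recording the well-known symmetry of uniform spacings: since $(u_1,\ldots,u_n)$ are i.i.d.\ uniform, the joint distribution of the spacing vector $(D_1,\ldots,D_{n+1})$ is invariant under permutations of coordinates. In particular every $D_i$ has the same marginal law, so it suffices to compute the distribution of $D_1 = u_{(1)}$ (the minimum of $n$ uniforms).

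Next I would compute the marginal CDF of $u_{(1)}$ directly: for $t\in[0,1]$,
\[
\Pr(u_{(1)} > t) = \Pr(u_1 > t,\ldots,u_n > t) = (1-t)^n,
\]
so the density of $u_{(1)}$ is $f(t) = n(1-t)^{n-1}$ on $[0,1]$, which is precisely the $\mathrm{Beta}(1,n)$ density. Combined with the exchangeability of the $D_i$, this gives $D_i \sim \mathrm{Beta}(1,n)$ for every $i\in\{1,\ldots,n+1\}$.

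Finally, the three moments follow by direct integration against $f(t)=n(1-t)^{n-1}$, or equivalently by applying the standard Beta moment formula $\mathbb{E}[X^k] = \tfrac{\Gamma(1+k)\,\Gamma(n+1)}{\Gamma(n+1+k)}$ for $X\sim\mathrm{Beta}(1,n)$. Specializing to $k=1,2,3$ yields $\tfrac{1}{n+1}$, $\tfrac{2}{(n+1)(n+2)}$, and $\tfrac{6}{(n+1)(n+2)(n+3)}$, respectively. No step here is really a serious obstacle; the only thing to be careful about is the convention that $u_{(0)}=0$ and $u_{(n+1)}=1$ produces $n+1$ (not $n$) spacings, so that the exchangeability argument is applied to the correct vector and the marginal index $1$ in the Beta parameter is matched to the remaining $n$ sample points.
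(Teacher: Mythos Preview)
Your proposal is correct and follows essentially the same approach as the paper: both invoke the exchangeability of the uniform spacings, reduce to computing the distribution of $D_1=u_{(1)}$ via $\Pr(u_{(1)}>t)=(1-t)^n$, identify the resulting density $n(1-t)^{n-1}$ as $\mathrm{Beta}(1,n)$, and then read off the moments. The paper's version is slightly terser (it just says ``the expectation of higher moments follows easily'') and adds the remark that the $D_i$ are identically distributed but not independent since they sum to $1$.
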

\begin{proof}
By a symmetry argument, it is easy to see that all intervals follow the same distribution.
For the first interval, the probability all the $n$ points are above $x$ is $(1-x)^n$ so the density of the length of the first (and so each) interval is $n(1-x)^{n-1}$. This is a Beta distribution with parameters $\alpha=1$ and $\beta=n$ The expectation of higher moments follows easily. Note that although the intervals are identically distributed, they are not independently distributed, since their sum is 1.
\end{proof}

\xdiffrandom*
\begin{proof}
We first bound the $\|X -X_n\|_{L_2[0, 1]}$ and $\|X -\inducedX\|_{L_2[0, 1]}$. %
For the first case, partitioning the unit interval as $I_i = [(i-1)/n, i/n]$ for $1 \leqslant  i \leqslant   n$ (the same partition used to obtain $x_n$, and thus $X_n$, from $X$), we can use the Lipschitz property of $X$ to derive
\begin{align*}
\left\|X-X_n\right\|^2_{L_2(I_i)} \leqslant  A_3^2\int_0^{1/n} u^2 du =  \frac{A_3^2}{3n^3} %
\end{align*}
We can then write
$\|X-X_n\|^2_{{L_2([0,1])}} = \sum_{i}\left\|X-X_n\right\|^2_{L_2(I_i)} \leqslant  \frac{A_3}{3n^2} $, %
which implies that $\|X-X_n\|_{{L_2([0,1])}} \leqslant  \sqrt{\frac{A_3}{3n^2}}$.

For the second case, since
$
\|X-\inducedX\|^2_{{L_2([0,1])}} = \sum_{i}\|X-\inducedX\|^2_{L_2(I_i)} %
$
, we will bound the $\left\|X-\inducedX\right\|^2_{L_2(I_i)}$.
As
\begin{equation*}
\left\|X-\inducedX\right\|^2_{L_2(I_i)} \leqslant  A_3^2\int_0^{D_i} u^2 du = A_3 D_i^{3} / 3
\end{equation*}
therefore
\begin{equation*}
\left\|X-\inducedX\right\|^2_{L_2(I)} = \sum_i \left\|X-\inducedX\right\|^2_{L_2(I_i)} \leqslant  A_3/3 \sum_i D_i^3
\end{equation*}

where $D_i$ stands for the length of $I_i$, which is a random variable due to the random sampling.

According to \Cref{lemma:lengh-distribution}, all $D_i$ are identically distributed and follows the Beta distribution $B(1, n-1)$. The expectation $E(D_i^3) = \frac{6}{n(n+1)(n+2)}$. %
Since by Jensen's inequality $E(\sqrt{Y}) \leqslant  \sqrt{E(Y)} $ holds for any positive random variable $Y$, $E(\sqrt{\frac{A_3}{3} \sum_i D_i^3)} \leqslant  \sqrt{E(\frac{A_3}{3} \sum_i D_i^3)} = \sqrt{\frac{A_3}{3} \frac{1}{n(n+2)}} = \Theta(\frac{1}{n})$. Using Markov inequality, we can then upper bound the
\begin{equation}
 P(\|X-\inducedX\|_{L_2(I)} \geq \epsilon) \leqslant
 P(\sqrt{\frac{A_3}{3} \sum_i D_i^3} \geq \epsilon) \leq
 \frac{E(\sqrt{\frac{A_3}{3} \sum_i D_i^3})}{\epsilon} =
 \Theta(\frac{1}{n\epsilon})
\end{equation}
Since the $P(\|X-\inducedX\|_{L_2(I)} \geq \epsilon)$ goes to 0 as $n$ increases, we conclude that $\newnorm{X-\inducedX}$ converges to 0 in probability. %
\end{proof}

\wdiffrandom*
\begin{proof}
For the first case, partitioning the unit interval as $I_i = [(i-1)/n, i/n]$ for $1 \leqslant  i \leqslant   n$, we can use the graphon's Lipschitz property to derive
\begin{align*}
\left\|W-W_n\right\|_{L_1(I_i\times I_j)} \leqslant  A_1\int_0^{1/n} \int_0^{1/n} |u| du dv &+ A_1\int_0^{1/n} \int_0^{1/n} |v| dv du =  \frac{A_1}{2n^3} + \frac{A_1}{2n^3} = \frac{A_1}{n^3} \text{.}
\end{align*}
We can then write
$\|W-W_n\|_{{L_1([0,1]^2)}} = \sum_{i,j}\left\|W-W_n\right\|_{L_1(I_i\times I_j)} \leqslant  n^2 \frac{A_1}{n^3} = \frac{A_1}{n}
$
which, since $W-W_n: [0,1]^2 \to [-1,1]$, implies
$\|W-W_n\|_{{L_2([0,1]^2)}} \leqslant  \sqrt{\|W-W_n\|_{{L_1([0,1]^2)}}} \leqslant  \wbound \text{.}$ The second last inequality holds because all entries of $W-W_n$ lies in $[-1, 1]$.

Similarly, $\|\text{Diag}(W-W_n)\|_{L_2[0,1]} \leqslant  \sqrt{\|\text{Diag}(W-W_n)\|_{L_1[0,1]}} \leqslant  \sqrt{2nA_1\int_0^{1/n}udu} = \sqrt{\frac{A_1}{n}}$. Therefore we conclude the first part of the proof.

For the second case, \dnorm{} is similar to the proof of \Cref{lem:x-diff} so we only focus on the $\|W-W_n\|_{{L_2([0,1]^2)}}$. Since $W-\inducedW: [0,1]^2 \to [-1,1]$ implies
\begin{equation*}
\|W-\inducedW\|_{{L_2([0,1]^2)}} \leqslant
\sqrt{\|W-\inducedW\|_{{L_1([0,1]^2)}}} =
\sqrt{\sum_{i,j} \|W-\inducedW\|_{{L_1(I_i \times I_j)}}}
\end{equation*}
where
\begin{equation*}
\|W-\inducedW\|_{{L_1(I_i\times I_j)}} \leqslant  A_1\int_{I_v} \int_{I_u} |u| du dv + A_1\int_{I_u} \int_{I_v} |v| dv du =
\frac{A_1}{2} (D_iD_j^2+D_jD_i^2)
\end{equation*}
Therefore
\begin{equation}
\|W-\inducedW\|_{{L_2([0,1]^2)}} \leqslant
\sqrt{\|W-\inducedW\|_{{L_1([0,1]^2)}}} =
\sqrt{\sum_{i, j} \frac{A_1}{2} (D_jD_i^2 + D_iD_j^2)} =
\sqrt{A_1\sum_i D_i^2}
\end{equation}
where we use the $\sum_i D_i =1$ for the last equality.
Since by Jensen's inequality $E(\sqrt{Y}) \leqslant  \sqrt{E(Y)} $ for any positive random variable $Y$, $E(\sqrt{\sum_i D_i^2}) \leqslant  \sqrt{E(\sum_i D_i^2)} = \Theta(\frac{1}{\sqrt{n}})$ since $E(D_i^2) = \Theta(\frac{1}{n^2})$ by \Cref{lemma:lengh-distribution}. By Markov inequality, we then bound
\begin{equation*}
P(\|W-\inducedW\|_{{L_2([0,1]^2)}}>\epsilon) \leq
P(\sqrt{\|W-\inducedW\|_{{L_1([0,1]^2)}}} > \epsilon) \leqslant
\frac{E(\sqrt{\sum_i D_i^2})}{\epsilon} \leq
\Theta(\frac{1}{\sqrt{n}\epsilon})
\end{equation*}
\end{proof}
Therefore, we conclude that both $\newnorm{W-W_n}$ and $\newnorm{W-\inducedW}$ converges to 0. %

\phistable
\begin{proof}
Without loss of generality, it suffices to prove for $2$-IGN as $k$-IGN follows the same proof with the constant being slightly different.
Since we have proved stability of every linear layers of IGN in \Cref{thm:linear-layer-stability}, the general linear layer $T$ is just a linear combinations of individual linear basis, i.e. $T = \sum_{\gamma} c_{\gamma}T_{\gamma}$ where $c_i \leqslant  A_2$ for all $i$ according to AS\ref{as:filter-bound}. Without loss of generality, We can assume $T(X)$ is of order 2 and have
\begin{align*}
\|T(W_1)-T(W_2)\|_{\tn{pn}} & = \|\sum_i c_{\gamma}T_{\gamma}(W_1-W_2)\|_{\tn{pn}} \\
& \leqslant  \sum_i \|c_{\gamma}T_{\gamma}(W_1-W_2)\|_{\tn{pn}} \\
& \leqslant  (\sum |c_{\gamma}|\epsilon, \sum |c_{\gamma}|\epsilon) = (15\filterbound \epsilon, 15 \filterbound \epsilon)
\end{align*}

To extend the result to nonlinear layer, note that AS\ref{as:activation-lip} ensures the 2-norm shrinks after passing through nonlinear layers. Therefore $\newnorm{\sigma \circ T(X) - \sigma \circ T(Y)} \leqslant  \newnorm{T(X) - T(Y)} = \newnorm{T(X-Y)}\leqslant  15A_2\newnorm{X-Y}$.
Repeating such process across layers, we finish the proof of the $L_{2}$ case. %

The extension to $L_{\infty}$ is similar to the case of $L_2$ norm. The main modification is to change the definition of the \textnewnorm{} from $L_2$ norm on different slices (corresponding to different partitions of $[\ell]$ where $\ell$ is the order of input) to $L_{\infty}$ norm. The extension to the case where input and output tensor is of order $\ell$ and $m$ is also straightforward according to \Cref{thm:linear-layer-stability}.

\end{proof}

\EWconvergence*
\begin{proof}
By \Cref{prop:Phi-stable}, it suffices to prove that $\newnorm{[W, \dg{X}]) - [W_n, \dg{X_n}]}$ and $\newnorm{[W, \dg{X}]) - [\inducedW, \dg{\inducedX}]}$ goes to 0.

$\newnorm{[W, \dg{X}]) - [W_n, \dg{X_n}]}$ is upper bounded by $(\Theta(\frac{1}{n^{1.5}}), \Theta(\frac{1}{n^{1.5}}))$ according to \Cref{lem:w-diff,lem:x-diff}, which decrease to 0 as $n$ increases. Therefore we finish the proof of convergence for the deterministic case.

For the random sampling case, by \Cref{lem:w-diff,lem:x-diff}, we know that both $\|W-\inducedW\|_{{L_2([0,1]^2)}}$ and $\|X-\inducedX\|_{L_2(I)}$ goes to 0 as $n$ increases in probability at the rate of $\Theta(\frac{1}{n^{1.5}})$. Therefore we can also conclude that the convergence of IGN in probability according to \Cref{prop:Phi-stable}.
\end{proof}

\subsection{Missing Proof from \Cref{sec:EP-convergence} (Edge Probability Continuous Model)}
\subsubsection{Missing Proof for \Cref{subsec:negative-result}}
\label{app:missing-proofs-neg}

\convfailure*
\begin{proof}
Given a fixed IGN architecture $\Phi_c$ that maps input $\mathbb{R}^{n^2 \times d_1}$ to $ \mathbb{R}^{n^k \times d_2}$, it suffices to show the case of $k=1$ and $d_2 = 1$. 
Under the case of $k=1$ and $d_2 = 1$, it suffice to show that single layer IGN may not converge. Let $IGN =  \sigma \circ L^{(1)}$ have only one linear layer, and let the input to IGN be $A$ in the discrete case and $W$ in the continuous case. For simplicity, we assume that graphon $W$ is constant $p$ on $[0,1]^2$.
As $A$ consists of only 0 and 1 and all entries of $W$ is below $c_{\text{max}}$, we can set weights of IGN such that its first linear layer consists of only identity map and bias term. By choosing bias term to be any number between $[-1, -c_{\text{max}}]$, $L^{(1)}$ map any number no large than $c_\text{max}$ to negative and maps 1 to positive.

Therefore $L^{(1)}(W)=0$ and $L^{(1)}(A)$ is a positive number $c\in \mb{R}^{+}$ on entries $(i, j)$ where $A(i, j)=1$. Let $\sigma$ be ReLU and $L^{(2)}$ be average of all entries. We can see that c$IGN(W)=0$ for all $n$ while $IGN(A)$ converges to  $\sigma(c)p$ as $n$ increases.

As the construction above only relies on the fact that there is a separation between $c_{\text{max}}$ and $1$ (but not on size $n$), it can be extended to deeper IGNs 
, which means the gap between c$IGN(W)$ and $IGN(A)$ will not decrease as $n$ increases. In the general case of $W$ not being constant, the only difference is that $IGN(A)$ will converge to be $\sigma(c)p^*$ where $p^*$ is a different constant that depends on $W$.
Therefore we conclude the proof. 
\end{proof}
\begin{remark}
The reason that the same argument does not work for \sGNN{} is that \sGNN{} always maintains $Ax$ in the intermediate layer. In contrast, IGN keeps both $A$ and $\dg{x}$ in separate channels, which makes it easy to isolate them to construct counterexamples. 

\end{remark}

\subsubsection{Missing Proofs from \Cref{subsec:smallign-convergnce}}
\label{app:smallign-convergnce}
\textbf{Notation.} For any $P, Q \in \mb{R}^{n \times n}$, define $d_{2, \infty}$, the normalized $2, \infty$ matrix norm, by
$d_{2, \infty}(P, Q)=n^{-1 / 2}\|P-Q\|_{2, \infty}:=\max _{i} n^{-1 / 2}\left\|P_{i, \cdot}-Q_{i, \cdot} \right\|_{2}$ where $P_{i, \cdot}, Q_{i, \cdot}$ are $i$-th row of $P$ and $Q$, respectively. Note that $d_{2, \infty}(P, Q) \geq  \frac{1}{n}\| P - Q\|_2$. %

 Let $S_U$ be the sampling operator for $W$, i.e., $S_U(W) = \frac{1}{n}[W(U_i, U_j)]_{n\times n}$. Note that as $U$ is randomly sampled, $S_U$ is a random operator. Denote $\sampleE$ as sampling on a fixed equally spaced grid of size $n\times n$, i.e. $S_nW = \frac{1}{n}[W(\frac{i}{n}, \frac{j}{n})]_{n \times n}$. $S_n$ is a fixed operator when $n$ is fixed.

Let $\widehat{W}_{n \times n}$ be the estimated edge probability from graphs $A$ sampled from $W$. Let $\inducedW$ be the piece-wise constant graphon induced from sample $U$ as \cref{eqn:induced-cgcn-random}. Similarly, denote $\sampleW$ be the $n\times n$ matrix realized on sample $U$, i.e., $\sampleW[i, j] = W(u_i, u_j)$. It is easy to see that $S_U(W)  = \frac{1}{n}\sampleW$. Let $\inducedEW$ be the graphon induced by $\sampleW$ with $n\times n$ blocks of the same size. In particular, $\inducedEW(I_i \times I_j) \:= W(u_{(i)}, u_{(j)})$ where $I_i = [\frac{i-1}{n}, \frac{i}{n}]$. $E$ in the subscript is the shorthand for the ``blocks of equal size''. Similarly we can also define the 1D analog of $\inducedW$ and $\inducedEW$, $\inducedX$ and $\inducedEX$.

\textbf{Proof strategy.} We first state five lemmas that will be used in the proof of \Cref{thm:convergenceafterEM}.
\Cref{lem:property-snsx} concerns the property of normalized sampling operator $S_U$ and $S_n$.
\Cref{lem:inducedW-converges,lem:inducedEW-converges} concern the convergence of $\Linf{\inducedW-W}$ and $\Linf{\inducedEW-W}$. \Cref{lem:property-of-T} characterize the effects of linear equivariant layers $T$ and IGN $\Phi$ on \Linfnorm of the input and output. \Cref{lem:sxsn-sampling} bounds the \Linfnorm of the difference of stochastic sampling operator $S_U$ and the deterministic sampling operator $S_n$. \Cref{thm:convergenceafterEM} is built on the results from five lemmas and the existing result on the theoretical guarantee of edge probability estimation from \cite{zhang2015estimating}.

The convergence some lemmas states is almost surely convergence.  Convergence almost surely implies convergence in probability, and in this chapter, all theorems concern convergence in probability.  Note that proofs of \Cref{lem:property-snsx,lem:inducedW-converges,lem:inducedEW-converges,lem:sxsn-sampling} for the $W$ and $X$ are almost the same. Therefore without loss of generality, we mainly prove the case of $W$.

\begin{definition}[Chessboard pattern]
\label{def:chessboard}
Let $u_i = \frac{i-1}{n}$ for all $i\in [n]$. A graphon $W$ is defined to have chessboard pattern if and only if there exists a $n$ such that $W$ is a piecewise constant on $[u_i, u_{i+1}]\times [u_j, u_{j+1}]$ for all $i, j\in [n]$. Similarly, $f: [0, 1] \rightarrow \mb{R}$ has 1D chessboard pattern if there exists $n$ such that $f$ is a piecewise constant on $[u_i, u_{i+1}]$ for all $i\in [n]$.
\end{definition}
See \Cref{fig:chessboard} for examples and counterexamples.

\begin{figure}[htp!]
  \centering
  \includegraphics[width=.7\linewidth]{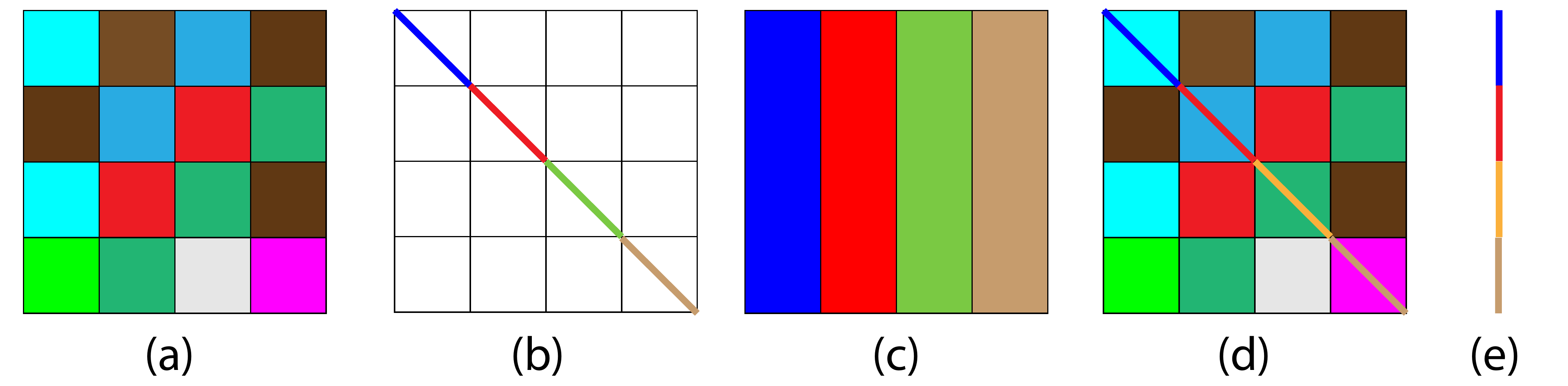}
\caption{(a) and (c) has chessboard pattern. (e) has 1D chessboard pattern. (d) does not has the chessboard pattern. (b) is of form $\dg{\inducedEf}$ and also does not have chessboard pattern, but in the case of IGN approximating Spectral GNN, (b) is represented in the form of c).} %
\label{fig:chessboard}
\end{figure}

\begin{lemma}[Property of $S_n$ and $S_U$]\label{lem:property-snsx}
We list some properties of sampling operator $S_U$ and $S_n$
\begin{enumerate}
\item $S_U \circ \sigma = \sigma \circ S_U$. Similar result holds for $S_n$ as well.

\item $\|S_Uf_{\text{1d}}\| \leqslant  \| f_{\text{1d}} \|_{L_\infty}$ 

where $f_{\text{1d}}: [0,1] \rightarrow \mb{R}$. Similar result holds for $f_{\text{2d}}: [0,1]^2 \rightarrow \mb{R}$ and $S_n$ as well.

\end{enumerate}
\end{lemma}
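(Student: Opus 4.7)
Both claims are essentially direct consequences of unfolding the definitions of the sampling operators, and I do not expect any real obstacle beyond clean bookkeeping; the interest is in pointing out the one hypothesis that makes Part~1 nontrivial.

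For Part~1, my plan is to compute both sides entrywise. Recalling that for a function $f: [0,1]^k \to \mathbb{R}$ and any index $(i_1,\ldots,i_k) \in [n]^k$, $(S_U f)(i_1,\ldots,i_k) = (\sqrt{n})^{-k}\, f(u_{(i_1)},\ldots,u_{(i_k)})$, and that $\sigma$ acts pointwise, direct substitution will give $(S_U(\sigma\circ f))(i_1,\ldots,i_k) = (\sqrt{n})^{-k}\sigma(f(u_{(i_1)},\ldots,u_{(i_k)}))$ against $(\sigma\circ S_U f)(i_1,\ldots,i_k) = \sigma((\sqrt{n})^{-k}f(u_{(i_1)},\ldots,u_{(i_k)}))$. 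These agree whenever $\sigma$ is positively homogeneous of degree one, which is the case for the activations used in the IGN architectures under consideration (e.g.\ ReLU). Replacing $u_{(i)}$ by $i/n$ then gives the identical argument for $S_n$. The one subtlety worth flagging is that exact commutation does require homogeneity: for a general $1$-Lipschitz $\sigma$ with $\sigma(0)=0$ only (as in AS~\ref{as:activation-lip}), the scalar $(\sqrt{n})^{-k}$ does not pass through, and one would have to read the identity modulo that harmless scaling.

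For Part~2, the plan is a Riemann-sum style estimate. In the 1D case I will write
\[
\|S_U f_{\text{1d}}\|^2 \;=\; \sum_{i=1}^{n} \frac{1}{n}\, f_{\text{1d}}(u_{(i)})^2 \;\leqslant\; \|f_{\text{1d}}\|_{L_\infty}^2,
\]
and take square roots to obtain the claimed inequality. The 2D case is identical, with a double sum of $n^2$ terms balanced against the normalization $n^{-2}$, and the statement for $S_n$ follows verbatim with the random order statistics replaced by the deterministic grid $\{i/n\}$. The underlying observation is that the $(\sqrt{n})^{-k}$ factor built into the sampling operators is calibrated precisely so that a pointwise $L_\infty$ bound on the continuum transfers to a bound on the unnormalized $\ell_2$ norm on the discrete side, which is exactly the form in which the subsequent error decomposition (into discretization, sampling, and estimation terms) consumes it.
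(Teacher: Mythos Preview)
Your proof is correct, and in fact more careful than the paper's own treatment: the paper states this lemma without proof, and elsewhere (Lemma~\ref{lem:property-of-T}) justifies the commutation $S_n\circ\sigma=\sigma\circ S_n$ merely by ``$\sigma$ acts on input pointwise.'' Your entrywise computation for Part~1 and the Riemann-sum bound for Part~2 are exactly the right arguments.

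The point you flag about Part~1 is genuinely important and worth emphasizing: pointwise action of $\sigma$ alone does \emph{not} give $S_U\circ\sigma=\sigma\circ S_U$, because the normalizing scalar $(\sqrt{n})^{-k}$ sits inside the argument of $\sigma$ on one side and outside on the other. Exact equality requires $\sigma$ to be positively homogeneous of degree one, as you observe. This holds for ReLU and LeakyReLU (the activations actually used in IGN), but is strictly stronger than AS\ref{as:activation-lip}, which only asks for $1$-Lipschitz with $\sigma(0)=0$. The paper's argument implicitly relies on the ReLU-type activation without saying so; your proposal makes this hidden hypothesis explicit.
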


\begin{lemma}\label{lem:inducedW-converges}
Let $W$ be $[0,1]^2\rightarrow \mb{R}$ and $X$ be $[0,1]\rightarrow \mb{R}$.
If $W$ is \lip{}, $\|\inducedW - W \|_{L_\infty}$ converges to 0 in probability. If $X$ is \lip{}, $\|\inducedX - X \|_{L_\infty}$ converges to 0 in probability.

\end{lemma}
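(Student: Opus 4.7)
The plan is to reduce the statement to controlling the maximum gap among $n$ i.i.d. uniform samples on $[0,1]$, and then invoke a standard concentration estimate for that maximum gap.

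First I would unpack the definition. For any $(u,v)\in [0,1]^2$, there exist indices $i,j$ with $u\in I_i = (u_i,u_{i+1}]$ and $v\in I_j$, and by construction $\inducedW(u,v)=W(u_i,u_j)$. Writing $D_i := u_{i+1}-u_i$ (with $u_0=0$, $u_{n+1}=1$) and applying the Lipschitz hypothesis from AS\ref{as:graphon-lip},
\begin{equation*}
|\inducedW(u,v) - W(u,v)| \;=\; |W(u_i,u_j)-W(u,v)| \;\leq\; A_1(|u-u_i| + |v-v_j|) \;\leq\; 2 A_1 \max_{0\leq k\leq n} D_k .
\end{equation*}
Taking the supremum over $(u,v)$ gives $\|\inducedW - W\|_{L_\infty} \leq 2 A_1 \max_k D_k$, so it suffices to show $\max_k D_k \to 0$ in probability.

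Next I would invoke the classical fact that for $n$ i.i.d.\ uniform order statistics on $[0,1]$, $\max_k D_k = O\!\left(\frac{\log n}{n}\right)$ with high probability. One clean way is via a union bound: from \Cref{lemma:lengh-distribution} each $D_k \sim \mathrm{Beta}(1,n)$, so $\Pr(D_k > t) = (1-t)^n \leq e^{-nt}$, and therefore
\begin{equation*}
\Pr\!\left(\max_{0\leq k\leq n} D_k > t\right) \;\leq\; (n+1)\, e^{-nt}.
\end{equation*}
Choosing $t = t_n$ with $n t_n/\log n \to \infty$ (e.g.\ $t_n = 2\log n/n$) makes the right-hand side vanish, which gives $\max_k D_k \to 0$ in probability. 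Combined with the Lipschitz bound above this proves $\|\inducedW - W\|_{L_\infty}\to 0$ in probability.

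The argument for $\|\inducedX - X\|_{L_\infty}$ is entirely analogous: for $u\in I_i$ one has $|\inducedX(u) - X(u)| \leq A_3\, D_i \leq A_3 \max_k D_k$ by AS\ref{as:signal-lip}, and the same concentration bound on $\max_k D_k$ finishes the proof. There is no real obstacle here beyond citing (or re-deriving) the maximum-spacing tail bound; the main conceptual step is the reduction $\|\cdot\|_{L_\infty}\leq 2A_1\max_k D_k$ that turns an $L_\infty$ statement into a purely combinatorial question about the order statistics of the sample.
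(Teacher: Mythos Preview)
Your proposal is correct and follows the same conceptual route as the paper: reduce $\|\inducedW - W\|_{L_\infty}$ to $2A_1 \max_k D_k$ via the Lipschitz assumption, then show the maximum spacing tends to $0$ in probability. The only difference is in how you control $\max_k D_k$. The paper cites the classical exact distribution of the maximum uniform spacing, computes its expectation $\mathbb{E}[\max_k D_k] = \frac{1}{n+1}\sum_{i=1}^{n+1}\frac{1}{i} = \Theta(\log n/n)$, and applies Markov's inequality. Your union-bound argument using the marginal $\Pr(D_k>t)=(1-t)^n\le e^{-nt}$ is more self-contained (it only needs \Cref{lemma:lengh-distribution}, not the inclusion--exclusion formula for the joint maximum) and actually gives a sharper tail than Markov; either route works here, and yours is arguably the cleaner one for this purpose.
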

\begin{proof}
Without loss of generality, we only prove the case for $W$. By the \lip{} condition of $W$, if suffices to bound the $Z_n\:=\text{max}_{i=1}^{n}D_i$ where $D_i$ is the length of i-th interval $|u_{(i)} - u_{(i-1)}|$. Characterizing the distribution of the length of largest interval is a well studied problem \cite{renyi1953theory,pyke1965spacings,holst1980lengths}.  It can be shown that $Z_n$ follows  $P\left(Z_{n} \leqslant  x\right)=\sum_{j=0}^{n+1}\left(\begin{array}{c}n+1 \\ j\end{array}\right)(-1)^{j}(1-j x)_{+}^{n}$ with the expectation $E(Z_k) = \frac{1}{n+1}\sum_{i=1}^{n+1}\frac{1}{i} = \Theta(\frac{\log n}{n})$. By Markov inequality, we conclude that $\|\inducedW - W \|_{L_\infty}$ converges to 0 in probability. %

\end{proof}

\begin{lemma}\label{lem:inducedEW-converges}
Let $W$ be $[0,1]^2\rightarrow \mb{R}$ and $X$ be $[0,1]\rightarrow \mb{R}$.
If $W$ is \lip{}, $\| \inducedEW - W\|_{L_\infty}$ converges to 0 almost surely. If $X$ is \lip{}, $\| \inducedEX - X\|_{L_\infty}$ converges to 0 almost surely.
\end{lemma}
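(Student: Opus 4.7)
The strategy is to reduce the uniform convergence claim to a statement about the order statistics $u_{(1)} \leq \cdots \leq u_{(n)}$, namely that $\max_{1\le i\le n} |u_{(i)} - i/n| \to 0$ almost surely, and then apply the Lipschitz hypothesis on $W$ (resp.\ $X$). First I would fix a point $(u,v) \in I_i \times I_j$ where $I_i = [(i-1)/n, i/n]$; by the definition of $\inducedEW$ and the $A_1$-Lipschitz assumption AS\ref{as:graphon-lip},
\begin{equation*}
|\inducedEW(u,v) - W(u,v)| = |W(u_{(i)}, u_{(j)}) - W(u,v)| \le A_1\bigl(|u_{(i)} - u| + |u_{(j)} - v|\bigr).
\end{equation*}
Since $u \in I_i$, the triangle inequality gives $|u_{(i)} - u| \le |u_{(i)} - i/n| + 1/n$, and similarly for $v$. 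Taking the supremum over all $(u,v)$ in the unit square and all blocks, we obtain the pointwise bound
\begin{equation*}
\|\inducedEW - W\|_{L_\infty} \le 2A_1\bigl(\max_{1\le i\le n} |u_{(i)} - i/n| + 1/n\bigr).
\end{equation*}

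The main content of the proof is thus to show $\Delta_n := \max_i |u_{(i)} - i/n| \to 0$ almost surely. This is where I would invoke the Glivenko--Cantelli / Dvoretzky--Kiefer--Wolfowitz (DKW) theorem applied to the empirical CDF $F_n$ of the i.i.d.\ uniform samples $u_1,\dots,u_n$. Since $F_n(u_{(i)}) = i/n$ and the true CDF of the uniform distribution satisfies $F(x) = x$, one has $|u_{(i)} - i/n| = |F(u_{(i)}) - F_n(u_{(i)})| \le \sup_{x \in [0,1]} |F_n(x) - x|$, so $\Delta_n \le \sup_x|F_n(x)-x|$. The DKW inequality gives $\Pr(\sup_x|F_n(x)-x| > \epsilon) \le 2\exp(-2n\epsilon^2)$, which together with Borel--Cantelli yields $\sup_x|F_n(x)-x| \to 0$ a.s., hence $\Delta_n \to 0$ a.s. Combining with the display above finishes the proof for $\inducedEW$; the proof for $\inducedEX$ is identical, with the Lipschitz constant $A_3$ of AS\ref{as:signal-lip} replacing $A_1$ and a one-dimensional analog of the bound.

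The argument is essentially routine once the right quantity ($\Delta_n$) is isolated; there is no serious obstacle. The only subtlety worth flagging is the boundary convention at $i=0$ and $i=n$, which is handled either by adding the dummy endpoints $u_{(0)}=0$, $u_{(n+1)}=1$ (as in Lemma \ref{lemma:lengh-distribution}) or by noting that the extra $1/n$ slack already absorbs any off-by-one issue. The a.s.\ (rather than only in-probability) convergence is crucial and follows precisely because the DKW tail bound is summable in $n$, which is why Glivenko--Cantelli is the right tool here rather than the weaker Markov-inequality argument used earlier in Lemma \ref{lem:w-diff} for $\inducedW$.
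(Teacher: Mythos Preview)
Your proposal is correct and follows essentially the same route as the paper: reduce the $L_\infty$ bound via the Lipschitz hypothesis to controlling $\max_i |u_{(i)} - i/n|$, then identify this quantity with a supremum of $|F_n - F|$ and invoke Glivenko--Cantelli for almost-sure convergence. Your treatment is in fact slightly more careful than the paper's, which checks the difference only at the grid points $(i/n, j/n)$ rather than over the full square, and invokes Glivenko--Cantelli directly without the DKW/Borel--Cantelli detour.
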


\begin{proof}
As $\inducedEW$ is a piecewise constant graphon and $W$ is \lip{}, we only need to examine $\text{max}_{i, j}\|(W - \inducedEW)(\frac{i}{n}, \frac{j}{n})\|$.

It is easy to see that $(W - \inducedEW)(\frac{i}{n}, \frac{j}{n}) = W(\frac{i}{n}, \frac{j}{n}) - W(u_{(i)}, u_{(j)})$ where $u_{(i)}$ stands for the i-th smallest random variable from uniform \iid samples from $[0,1]$. By the \lip{} condition of $W$, if suffices to bound $\|\frac{i}{n} - u_{(i)} \| + \|\frac{j}{n} - u_{(j)} \|$.
Glivenko-Cantelli theorem tells us that the $L_\infty$ of empirical distribution $F_n$ and  cumulative distribution function $F$ converges to 0 almost surely, i.e., $\text{sup}_{u\in [0,1]} |F(u) - F_n(u)| \rightarrow 0$ almost surely. Since $\text{max}_i \|u_{(i)} - \frac{i}{n} \| = \text{sup}_{u\in \{u_{(1)}, ..., u_{(n)}\}} |F(u) - F_n(u)|  \leqslant  \text{sup}_{u\in [0,1]} |F(u) - F_n(u)|$ when $F(u)=u$ (cdf of uniform distribution), we conclude that $\| \inducedEW - W\|_{L_\infty}$ converges to 0 almost surely.

\end{proof}

We also need a lemma on the property of the linear equivariant layers $T$.
\begin{lemma}[Property of $T_c$ and $\sigma$]
\label{lem:property-of-T}
Let $\sigma$ be nonlinear layer. Let $T_c$ be a linear combination of elements of basis of the space of linear equivariant layers of cIGN, with coefficients upper bounded. We have the following property about $T_c$ and $\sigma$
\begin{enumerate}
\item If $W$ is \lip{}, $T_c(W)$ is piecewise \lip{} on diagonal and off-diagonal. Same statement holds for $\Phi_c(W)$.

\item $S_n \circ \sigma (\inducedEW) = \sigma \circ S_n (\inducedEW)$.
\end{enumerate}
\end{lemma}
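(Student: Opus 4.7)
The plan is to treat the two parts separately, leveraging the explicit description of the basis elements in \Cref{tab:R2-R2,tab:R1-R2,tab:R2-R1} and the normalized-Lipschitz assumption on $\sigma$ (AS\ref{as:activation-lip}).

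For part (1), I would first verify the claim for a single basis element $T_\gamma$, then take a bounded linear combination, then induct on depth. The operators in \Cref{tab:R2-R2} fall into a few geometrically distinct categories: (a) pure reshapings (identity, transpose); (b) integrations collapsing $W$ to a lower-order Lipschitz object (e.g.\ $\int W(u,v)\,dv$); (c) replications of a lower-order Lipschitz object across rows/columns of the unit square (e.g.\ operators 14--15); and (d) operators whose range is supported on the diagonal via the indicator $\mathbf{1}_{u=v}$ (operators 3, 6, 9, 11, 13). Categories (a)--(c) preserve global Lipschitz regularity by direct inspection; category (d) yields an output that is $0$ off the diagonal and a Lipschitz function of $u$ on the diagonal, hence is piecewise Lipschitz on $\{u=v\}$ and $\{u\neq v\}$. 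The operators in \Cref{tab:R1-R2} that lift a Lipschitz 1D signal to 2D are analogous. A linear combination with coefficients bounded by $A_2$ of such outputs decomposes as ``globally Lipschitz'' plus ``diagonal-supported Lipschitz'', which is piecewise Lipschitz in the required sense.

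To extend from a single $T_c$-layer to the full $\Phi_c(W)$, I induct on the layer index with the invariant: each intermediate tensor decomposes as a globally Lipschitz part plus a diagonal-supported Lipschitz part. The base case is the Lipschitz input $[W,\dg{X}]$ from AS\ref{as:graphon-lip} and AS\ref{as:signal-lip}. The inductive step applies $T_c$ (preserves the invariant by the previous paragraph, since category (c) operators broadcast a Lipschitz 1D diagonal part to a globally Lipschitz 2D part, while the remaining cases keep the two pieces in their slots) followed by $\sigma$, which is pointwise and Lipschitz with $\sigma(0)=0$; hence the off-diagonal zero set of the diagonal-supported piece is mapped to zero and Lipschitz constants on each piece are preserved. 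Part (2) is immediate from the first claim of \Cref{lem:property-snsx}: $\sigma$ acts pointwise and $S_n$ evaluates on the fixed grid $\{i/n\}$, so the two operators commute on any argument, including $\inducedEW$.

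The main obstacle is bookkeeping inside the inductive step of part (1): one must track how the diagonal-supported component of an intermediate tensor interacts with each category of operator---in particular, verifying that (c)-type broadcasts convert a diagonal-supported Lipschitz piece into a genuinely globally Lipschitz object, and that (d)-type extractions of a globally Lipschitz part produce a diagonal-supported Lipschitz piece whose constant is controlled by the original Lipschitz constant on the diagonal slice. Enumerating all cases in \Cref{tab:R2-R2,tab:R1-R2,tab:R2-R1} is tedious but mechanical; I do not anticipate any genuine analytic difficulty beyond this case analysis, and the final Lipschitz constant for $\Phi_c(W)$ will depend only on $A_1$, $A_2$, $A_3$, the input dimension, and the depth.
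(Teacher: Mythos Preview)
Your proposal is correct and follows the same approach as the paper: a case analysis over the basis elements in \Cref{tab:R2-R2,tab:R1-R2,tab:R2-R1} for part (1), and the pointwise nature of $\sigma$ for part (2). The paper's own proof is extremely terse---essentially two sentences asserting that the diagonal-replication operators break global Lipschitzness while preserving piecewise Lipschitzness on diagonal/off-diagonal, and that $\sigma$ preserves this---whereas you give an explicit operator categorization and a depth induction with a maintained structural invariant (globally Lipschitz part plus diagonal-supported Lipschitz part), which is more than the paper does. One minor wording issue: at the $\sigma$ step you say ``the off-diagonal zero set of the diagonal-supported piece is mapped to zero,'' which reads as if $\sigma$ acts on the two summands separately; the correct justification is that for $f=g+h$ one takes $g'=\sigma(g)$ and $h'=\sigma(g+h)-\sigma(g)$, which reinstates the decomposition. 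With that clarification your argument goes through cleanly.
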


\begin{proof}
We prove two statements one by one.
\begin{enumerate}
\item
We examine the linear equivariant operators from $\mb{R}^{[0,1]^2}$ to $\mb{R}^{[0,1]^2}$ in \Cref{tab:R2-R2}. There are some operations such as ``average of rows replicated on diagonal'' will destroy the \lip{} condition of $T_c(W)$ but $T_c(W)$ will still be piecewise \lip{} on diagonal and off-diagonal.
Since $\sigma$ will preserve the \lip{}ness, $\Phi_c(W)$ is piecewise \lip{} on diagonal and off-diagonal.

\item This is easy to see as $\sigma$ acts on input pointwise.
\end{enumerate}
\end{proof}

\begin{lemma}\label{lem:sxsn-sampling}
Let $W$ be $[0,1]^2\rightarrow \mb{R}$
\begin{enumerate}
\item If $W$ is \lip{},  $\| S_UW - S_nW\|$ converges to 0 almost surely. Similarly, if $X$ is \lip{},  $\| S_U\dg{X} - S_n\dg{X}\|$ converges to 0 almost surely.

\item If $W$ is piecewise \lip{} on $S_1$ and $S_2$ where $S_1$ is the diagonal and $S_2$ is off-diagonal,  then $\| S_UW - S_nW\|$ converges to 0 almost surely. %

\end{enumerate}
\end{lemma}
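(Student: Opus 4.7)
The plan is to expand the Frobenius norm of the difference matrix entrywise and reduce both statements to controlling a single quantity, $\Delta_n := \max_{i \in [n]} |u_{(i)} - i/n|$. Almost-sure convergence $\Delta_n \to 0$ is already available via Glivenko--Cantelli (exactly as invoked in the proof of \Cref{lem:inducedEW-converges}, since $\Delta_n = \sup_{u \in \{u_{(1)}, \ldots, u_{(n)}\}} |F_n(u) - u|$ for the uniform cdf $F$). With that in hand, everything else should be direct calculation.

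For Part~1, I would write
\begin{equation*}
\| S_U W - S_n W \|^2 \;=\; \frac{1}{n^{2}} \sum_{i,j=1}^{n} \bigl( W(u_{(i)}, u_{(j)}) - W(i/n, j/n) \bigr)^{2},
\end{equation*}
and apply the global $A_1$-Lipschitz hypothesis pointwise to get $|W(u_{(i)}, u_{(j)}) - W(i/n, j/n)| \leq A_1(|u_{(i)} - i/n| + |u_{(j)} - j/n|) \leq 2 A_1 \Delta_n$. Summing and taking square roots gives $\| S_U W - S_n W \| \leq 2 A_1 \Delta_n \to 0$ a.s. The signal case $\|S_U \dg{X} - S_n \dg{X}\|$ is analogous (in fact easier, since only the $n$ diagonal entries are nonzero), and reduces to $\sqrt{n} \cdot A_3 \Delta_n / n = A_3 \Delta_n / \sqrt{n} \to 0$ a.s.

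For Part~2, I would split the double sum into diagonal pairs ($i=j$) and off-diagonal pairs ($i \neq j$) and argue they both vanish. For $i=j$, both $(i/n, i/n)$ and $(u_{(i)}, u_{(i)})$ sit in the diagonal piece $S_1$, so the piecewise Lipschitz bound on $S_1$ gives each term at most $\sqrt{2} A_1 \Delta_n$; the contribution of all $n$ diagonal entries to the squared norm is at most $2 A_1^2 \Delta_n^2 / n$, which vanishes a.s. For $i \neq j$, we have $i/n \neq j/n$ and, almost surely, $u_{(i)} \neq u_{(j)}$, so both compared points lie in $S_2$; the Lipschitz bound on $S_2$ then yields the same per-entry estimate $2 A_1 \Delta_n$ as in Part~1, and summing gives off-diagonal contribution at most $4 A_1^2 \Delta_n^2 \to 0$ a.s.

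The only subtle step is in Part~2: before invoking local Lipschitzness I must ensure both endpoints of each comparison lie in the \emph{same} piece of the partition. This is clean only because the partition is very coarse (diagonal versus off-diagonal) and the diagonal is almost surely avoided by $(u_{(i)}, u_{(j)})$ with $i \neq j$, since the joint density of two order statistics is absolutely continuous. Beyond that observation, everything else is routine, and the same decomposition/Lipschitz argument handles the $X$-statement, so I do not expect any real obstacle.
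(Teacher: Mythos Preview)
Your proposal is correct and follows essentially the same approach as the paper: both arguments reduce to controlling $\max_i |u_{(i)} - i/n|$ via Glivenko--Cantelli and then apply the Lipschitz bound entrywise, and for Part~2 both split the sum into diagonal ($i=j$) and off-diagonal ($i\neq j$) contributions. The only cosmetic difference is that the paper first bounds the Frobenius norm by $n$ times the entrywise $\infty$-norm before invoking Lipschitz, whereas you sum the squared Lipschitz bounds directly; the resulting estimates are the same.
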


\begin{proof}
Since the case of $X$ is essentially the same with that of $W$, we only prove the case of $W$.
\begin{enumerate}
\item As $n\| S_UW - S_nW\|_{\infty} \geq \| S_UW - S_nW\|$, it suffices to prove that $n\| S_UW - S_nW\|_{\infty} = \text{max}_{i, j}|W(u_{(i)}, u_{(j)}) - W(\frac{i}{n}, \frac{j}{n})|$  converges to 0 almost surely. Similar to \Cref{lem:inducedEW-converges}, using \lip{} condition of $W$ and Glivenko-Cantelli theorem concludes the proof.

\item This statement is stronger than the one above. The proof of the last item can be adapted here. As $W$ is $A_1$ \lip{} on off-diagonal region and $A_2$ \lip{} on diagonal,
\begin{align*}
 n\| S_UW - S_nW\|_{\infty} & =
  \text{max}_{i, j}\left|W(u_{(i)}, u_{(j)}) - W(\frac{i}{n}, \frac{j}{n})\right| \\
  = & \max\left(\text{max}_{i\neq j} \left|W(u_{(i)}, u_{(j)}) - W(\frac{i}{n}, \frac{j}{n})\right|, \text{max}_{i = j}\left|W(u_{(i)}, u_{(j)}) - W(\frac{i}{n}, \frac{j}{n})\right|\right).
\end{align*}
Using \lip{} condition on diagonal and off-diagonal part of $W$ and Glivenko-Cantelli theorem concludes the proof.
\end{enumerate}
\end{proof}
With all lemmas stated, we are ready to prove the main theorem.

\convergenceafterEM*
\begin{proof}

Using the triangle inequality %

\begin{align*}
\small
& \MSE_U(\Phi_c \left([W, \dg{X}] \right), \Phi_d\left([\widehat{W}_{n \times n}, \dg{\widetilde{x_n}}]\right)) \\
& = \left\|S_U \Phi_c\left([W, \dg{X}]\right)-\frac{1}{\sqrt{n}}\Phi_d \left([\widehat{W}_{n \times n}, \dg{\widetilde{x_n}}]\right) \right\| \\
& = \|S_U \Phi_c\left([W, \dg{X}]\right)- S_U\Phi_c\left([\inducedW, \dg{\inducedX}]\right) \\
& + S_U\Phi_c\left([\inducedW, \dg{\inducedX}]\right) - \Phi_dS_U([\inducedW, \dg{\widetilde{x_n}}])  \\
&+ \Phi_dS_U([\inducedW, \dg{\inducedX}]) - \frac{1}{\sqrt{n}}\Phi_d ([\widehat{W}_{n \times n}, \dg{\inducedX}])\|\\
&\leqslant  \underbrace{\left\|S_U \Phi_c\left([W, \dg{X}]\right)- S_U\Phi_c\left([\inducedW, \dg{\inducedX}]\right)\right\|}_\text{First term: discretization error}  \\
&+\underbrace{\left\|S_U\Phi_c\left([\inducedW, \dg{\inducedX}]\right) - \Phi_dS_U([\inducedW, \dg{\inducedX}])\right\|}_\text{Second term: sampling error} \\
& + \underbrace{\left\|\Phi_dS_U([\inducedW, \dg{\inducedX}]) - \frac{1}{\sqrt{n}}\Phi_d \left([\widehat{W}_{n \times n}, \dg{\widetilde{x_n}}]\right)\right\|}_\text{Third term: estimation error} \numberthis \label{equ:threeterms} %
\end{align*}
The three terms measure the different sources of error. The first term is concerned with the discretization error. The second term concerns the sampling error from the randomness of $U$. This term will vanish if we consider only $S_n$ instead of $S_U$ for \smallIGN{}. The third term concerns the edge probability estimation error.

For the first term, it is similar to the sketch in \Cref{subsec:smallign-convergnce}. $\|S_U \Phi_c([W, \dg{X}]) - S_U \Phi_c ([\inducedW, \dg{\inducedX}]) \| = \| S_U(\Phi_c([W, \dg{X}]) - \Phi_c([\inducedW, \dg{\inducedX}]))\|$, if suffices to upper bound $\|\Phi_c([W, \dg{X}])-\Phi_c([\inducedW, \dg{\inducedX}])\|_{L_\infty}$ according to property of $S_U$ in \Cref{lem:property-snsx}. Since $\| \Phi_c ([W, \dg{X}]) - \Phi_c ([\inducedW, \dg{\inducedX}])\|_{L_\infty} \leqslant  C (\|W - \inducedW\|_{L_\infty} + \|\dg{X} - \dg{\inducedX}\|_{L_\infty})$ by \Cref{prop:Phi-stable}, and $\|W - \inducedW\|_{L_\infty}$ converges to 0 in probability according to \Cref{lem:inducedW-converges}, we conclude that the first term will converges to 0 in probability.

For the third term $\| \Phi_d S_U ([\inducedW, \dg{\inducedX}]) - \frac{1}{\sqrt{n}}\Phi_d ([\widehat{W}_{n \times n}, \dg{\widetilde{x_n}}])\| $$ \\ =\| \frac{1}{\sqrt{n}}( \Phi_d ([\sampleW, \dg{\widetilde{x_n}}]) - \Phi_d ([\widehat{W}_{n \times n}, \dg{\widetilde{x_n}}]) )\| $$=\newnorm{\Phi_d([\sampleW, \dg{\widetilde{x_n}}]) - \\ \Phi_d([\widehat{W}_{n \times n}, \dg{\widetilde{x_n}}])}$,

it suffices to control the $\newnorm{[W_{n \times n}, \dg{\widetilde{x_n}}]-[\widehat{W}_{n \times n}, \dg{\widetilde{x_n}}]} = \frac{1}{n}\| \sampleW-\widehat{W}_{n\times n} \|_2 \leqslant  \|\sampleW-\widehat{W}_{n\times n}\|_{2, \infty}$, which will also goes to 0 in probability as $n$ increases according to the statistical guarantee of edge probability estimation of neighborhood smoothing algorithm \cite{zhang2015estimating}, stated in \Cref{thm:graphon-estimation}. 
Therefore by \Cref{prop:Phi-stable}, the third term also goes to 0 in probability.

Therefore the rest work is to control the second term $\|S_U \Phi_c\left([\inducedW, \dg{\inducedX}]\right) - \\ \Phi_d S_U\left([\inducedW, \dg{\inducedX}]\right)\|$. Again, we use the triangle inequality

{\fontsize{10}{12}\selectfont
\begin{align*}
& \tn{Second term} \\
& = \left\|S_U \Phi_c\left([\inducedW, \dg{\inducedX}]\right) - \Phi_d S_U\left([\inducedW, \dg{\inducedX}]\right)\right\| \\
& \leqslant  \left\|S_U \Phi_c\left([\inducedW, \dg{\inducedX}]\right) - \sampleE \Phi_c\left([\inducedEW, \dg{\inducedEX}]\right) \right\| + \left\| \sampleE \Phi_c\left([\inducedEW, \dg{\inducedEX}]\right) - \Phi_d S_U\left([\inducedW, \dg{\inducedX}]\right)\right\| \\
& =  \left\|S_U \Phi_c\left([\inducedW, \dg{\inducedX}]\right) - \sampleE \Phi_c\left([\inducedEW, \dg{\inducedEX}]\right) \right\| + \left\| \sampleE \Phi_c\left([\inducedEW, \dg{\inducedEX}]\right) - \Phi_d \sampleE([\inducedEW, \dg{\inducedEX}])\right\|\\
& = \left\|S_U \Phi_c\left([\inducedW, \dg{\inducedX}]\right) - \sampleE \Phi_c\left([\inducedEW, \dg{\inducedEX}]\right) \right\|\\
& \leqslant  \left\|S_U \Phi_c\left([\inducedW, \dg{\inducedX}]\right) - S_U \Phi_c\left([\inducedEW, \dg{\inducedEX}]\right)\right\| + \left\|S_U \Phi_c\left([\inducedEW, \dg{\inducedEX}]\right) - \sampleE \Phi_c\left([\inducedEW, \dg{\inducedEX}]\right)  \right\| \\
& = \underbrace{\left\|S_U \left(\Phi_c ([\inducedW, \dg{\inducedX}]\right) - \Phi_c \left([\inducedEW, \dg{\inducedEX}]\right)\right\|}_\text{term $a$} +
\underbrace{\left\|(S_U - \sampleE)\Phi_c\left([\inducedEW, \dg{\inducedEX}]\right) \right\|}_\text{term $b$}
\end{align*}
}

The second equality holds because $S_U([\inducedW, \dg{\inducedX}]) = S_n([\inducedEW, \inducedEX])$ by definition of $\inducedEW$ and \smallIGN{} (See \Cref{remark:difficulty} for more discussion).  The third equality holds by the definition of \smallIGN{}.
We will bound the term a) $\|S_U (\Phi_c ([\inducedW, \dg{\inducedX}]) - \Phi_c ([\inducedEW, \inducedEX]))\|$ and b) $\|(S_U - \sampleE)\Phi_c([\inducedEW, \dg{\inducedEX}]) \|$ next.

For term a) $\|S_U (\Phi_c ([\inducedW, \dg{\inducedX}]) - \Phi_c ([\inducedEW, \inducedEX]))\|$, if suffices to prove that \\ $\|\Phi_c ([\inducedW, \dg{\inducedX}]) -\Phi_c ([\inducedEW, \inducedEX]))\|_{L_\infty}$ converges to 0 in probability.
According to \Cref{prop:Phi-stable}, it suffices to bound the $\| [\inducedW, \inducedX] - [\inducedEW, \inducedEX] \|_{L_\infty}$. Because $[\inducedW, \inducedX] - [\inducedEW, \inducedEX] \|_{L_\infty}$ \\ $ =  \|\inducedW - \inducedEW \|_{L_\infty} + \|\dg{\inducedX} - \dg{\inducedEX} \|_{L_\infty}) \leqslant  \|\inducedW - W \|_{L_\infty} + \|\inducedEW - W \|_{L_\infty} + \|\dg{\inducedX} - \dg{X} \|_{L_\infty} + \|\dg{\inducedEX} - \dg{X} \|_{L_\infty}$, we only need to upper bound $\|\inducedW - W \|_{L_\infty}$, $\|\inducedEW - W \|_{L_\infty}$, $\|\dg{\inducedX} - \dg{X} \|_{L_\infty})$ and $\|\dg{\inducedEX} - \dg{X} \|_{L_\infty})$, which are proved by \Cref{lem:inducedW-converges} and \Cref{lem:inducedEW-converges} respectively.

For term b) $\|(S_U - \sampleE)\Phi_c\left([\inducedEW, \dg{\inducedEX}]\right) \|$
{\fontsize{10}{12}\selectfont
\begin{align*}
&\left\|(S_U - \sampleE)\Phi_c\left([\inducedEW, \dg{\inducedEX}]\right) \right\| \\
&= \left\|(S_U \Phi_c\left([\inducedEW, \dg{\inducedEX}]\right) - \sampleE \Phi_c\left([\inducedEW, \dg{\inducedEX}]\right)\right\|\\
& \leqslant  \left\|(S_U \Phi_c\left([\inducedEW, \dg{\inducedEX}]\right) - S_U \Phi_c\left([W, \dg{X}]\right) \right\| + \left\|S_U \Phi_c\left([W, \dg{X}]\right) - S_n \Phi_c\left([W, \dg{X}]\right)\right\|  \\
& + \left\|S_n \Phi_c\left([W, \dg{X}]\right) - \sampleE \Phi_c\left([\inducedEW, \dg{\inducedEX}]\right)\right\|\\
& = \left\|(S_U (\Phi_c\left([\inducedEW, \dg{\inducedEX}]\right)-\Phi_c ([W, \dg{X}])) \right\| + \left\|S_U \Phi_c\left([W, \dg{X}]\right) - S_n \Phi_c\left([W, \dg{X}]\right)\right\| \\
& + \left\|\sampleE (\Phi_c\left([\inducedEW, \dg{\inducedEX}]\right)-\Phi_c ([W, \dg{X}]))\right\|
\end{align*}
}
For the first and last term, by the property of $S_U, S_n$ and $\Phi_c$, it suffices to bound $\|W-\inducedEW \|_{L_\infty}$ and $\|\dg{X}-\dg{\inducedEX} \|_{L_\infty}$. Without loss of generality, We only prove the case for $W$. As $\|W-\inducedEW \|_{L_\infty}$ converges to 0 almost surely by \Cref{lem:inducedEW-converges}, we conclude that the first and last term converges to 0 almost surely (therefore in probability).
For the second term $\|S_U \Phi_c\left([W, \dg{X}]\right) - S_n \Phi_c\left([W, \dg{X}]\right)\|$, $\Phi_c\left([W, \dg{X}]\right)$ is piecewise \lip{} on diagonal and off-diagonal according to \Cref{lem:property-of-T} 
, and it converges to 0 almost surely according to the second part of \Cref{lem:sxsn-sampling}.

As all terms converge to 0 in the probability or almost surely, we conclude that \\  $\|S_U \Phi_c\left([W, \dg{X}]\right)-\Phi_d ([\widehat{W}_{n \times n}, \dg{\inducedX}]) \|$ converges to 0 in probability.
\end{proof}

\begin{remark}
\label{remark:difficulty}
Note that we can not prove $S_n \cdot \Phi_c (\inducedEW) = \Phi_d \cdot S_n (\inducedEW)$ in general.
The difficulty is that starting with $\inducedEW$ of chessboard pattern, after the first layer, pattern like \Cref{fig:chessboard}(e) may appear in $\sigma \circ T_1(\inducedW)$. If $T_2$ is just a average/integral to map $\dtensor{2}{1}$ to $\mb{R}$, then $S_n \circ T_2 \circ \sigma \circ T_1(\inducedW) = T_2 \circ \sigma \circ T_1(\inducedW)$ will not be equal to $T_2 \circ \sigma \circ T_1(S_n \inducedW)$. The reason is that both $\sigma \circ T_1(\inducedW)$ and $\sigma \circ T_1(S_n \inducedW)$ will no longer be of chessboard pattern (\Cref{fig:chessboard}(e) may occur). The diagonal in the $\sigma \circ T_1(\inducedW)$ has no effect after taking integral in $T_2$ as it is of measure 0. On the other hand, the diagonal in the matrix $\sigma \circ T_1(S_n \inducedW)$ will affect the average. Therefore in general,  $S_n \Phi_c (\inducedEW) = \Phi_d S_n (\inducedEW)$ does not hold.

\end{remark}

\subsection{\smallIGN{} can Approximate Spectral GNN}
\label{app:approx}
\textbf{Definition of Spectral GNN.}
The \sGNN{} (SGNN) here stands for GNN with multiple layers of the following form $\forall j=1, \ldots d_{\ell+1}$,
\begin{align}
\label{eq:sgnn}
{\quad }z_{j}^{(\ell+1)}=\sigma \left(\sum_{i=1}^{d_{\ell}} h_{i j}^{(\ell)}(L) z_{i}^{(\ell)}+b_{j}^{(\ell)} 1_{n}\right) \in \mb{R}^{n}
\end{align}
where $L = D(A)^{-\frac{1}{2}} A D(A)^{-\frac{1}{2}}$ stands for normalized adjacency,\footnote{We follow the same notation as \cite{keriven2020convergence}, which is different from the conventional notation.} $z_j^{\ell}, b_j^{\ell} \in \mb{R}$ denotes the embedding and bias at layer $\ell$. $d_\ell$ stands for the number of output channels in $\ell$-th layer. 
 $h: \mb{R} \rightarrow \mb{R}, h(\lambda) = \sum_{k\geq 0} \beta_k \lambda^k, h(L)=\sum_{k} \beta_{k} L^{k}$, i.e., we apply $h$ to the eigenvalues of $L$ when it is diagonalizable. Extending $h$ to multiple input output channels which are indexed in $i$ and $j$, we have $h_{i j}^{(\ell)}(\lambda)=\sum_{k} \beta_{i j k}^{(\ell)} \lambda^{k}$. By defining all components of \sGNN{} for graphon, the continuous version of \sGNN{} can also be defined. %
See \cite{keriven2020convergence} for details.

We first prove IGN can approximate \sGNN{} arbitrarily well, both for discrete SGNN and continuous SGNN. Next, we show that such IGN belongs to \smallIGN{}.
We need the following simple assumption to ensure the input lies in a compact domain.
    \begin{assumption}\label{as:x-compact}
    There exists an upper bound on $\|x\|_{L_\infty}$ for the discrete case and $\|X\|_{L_\infty}$ in the continuous case.
    \end{assumption}

    \begin{assumption}\label{as:min-deg-lower-bound}
    $\text{min}(\degmean) \geq c_{\tn{min}}$ where $\degmean$ is defined to be $\frac{1}{n}\dg{A\one}$. The same lower bound holds for graphon case.
    \end{assumption}

\begin{restatable}[]{lemma}{ignmultiplication}
\label{lem:ign-multiplication}
Assume AS1-AS6 and $DMD$ arbitrarily well in $L_\infty$ sense on a compact domain %
\end{restatable}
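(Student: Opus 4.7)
The plan is to realize the map $(D,M) \mapsto DMD$ (with $D$ diagonal) as an entrywise polynomial in three scalar quantities that are each producible by a single linear equivariant layer of IGN, and then invoke universal approximation of MLPs to approximate this polynomial on a compact domain. Concretely, since $D$ is diagonal, the target satisfies $(DMD)_{ij} = D_{ii}\,M_{ij}\,D_{jj}$, so the operation is entirely \emph{pointwise} in the index pair $(i,j)$ once the three quantities $D_{ii}$, $D_{jj}$ and $M_{ij}$ are placed in three separate channels of a single $2$-tensor entry. This reduction from a matrix-multiplication to a channelwise scalar function is what makes IGN well-suited for the task.

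First, I would build the required feature tensor. Starting from the input $[D,M] \in \mathbb{R}^{n^2 \times 2}$, apply the linear equivariant operators $T(A)=\mathrm{Diag}^*(A)\mathbf{1}^T$ and $T(A)=\mathbf{1}\,\mathrm{Diag}^*(A)^T$ (operators 14--15 in Table~\ref{tab:R2-R2}) to the $D$-channel. This produces a $2$-tensor $Z \in \mathbb{R}^{n^2 \times 3}$ whose entry at position $(i,j)$ is the triple $(D_{ii},\,D_{jj},\,M_{ij})$. Note that each of these operators is among the \textNewnorm-stable basis elements from \Cref{prop:R2-R2} and is built into the LE layer of IGN; in particular the channel-mixing required to stack the three scalars is allowed by the multi-channel extension of LE layers (\Cref{remark:IGN}).

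Second, I would apply an entrywise MLP $\phi:\mathbb{R}^3 \to \mathbb{R}$ to approximate $\phi^*(a,b,c)=abc$. A pointwise MLP is realized inside an IGN by alternating the identity LE layer (operator $1$ in Table~\ref{tab:R2-R2}) with activations $\sigma$ and channel-mixing weights; this is a standard consequence of the IGN construction and the fact that permutation-equivariance imposes no constraint on mixing of feature channels (\Cref{remark:IGN}). Under AS\ref{as:x-compact} together with AS\ref{as:min-deg-lower-bound}, the relevant input triples $(D_{ii},D_{jj},M_{ij})$ lie in a compact subset $K \subset \mathbb{R}^3$, so by the universal approximation theorem for MLPs, for any $\epsilon>0$ there is an MLP $\phi$ with $\sup_{(a,b,c)\in K}|\phi(a,b,c)-abc|<\epsilon$. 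Applying $\phi$ entrywise gives an IGN output $Y$ with $\|Y - DMD\|_{L_\infty} < \epsilon$ uniformly in $n$, which is the desired conclusion.

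The main obstacle is not the construction itself but the $L_\infty$ (rather than $L_2$) guarantee, since the MLP approximation must hold uniformly across all entries $(i,j)$ simultaneously and across all admissible inputs. This is handled by compactness: AS\ref{as:x-compact} bounds $M$ uniformly, and AS\ref{as:min-deg-lower-bound} combined with the bound on $M$ yields a uniform upper bound on $D_{ii}$ (and a uniform lower bound when needed), confining every triple $(D_{ii},D_{jj},M_{ij})$ to a single compact set $K$ independent of $n$. Uniform approximation of the continuous map $\phi^*$ on $K$ then transfers directly to a uniform, entrywise bound on $Y-DMD$, giving the $L_\infty$ approximation claim. A minor bookkeeping step is to verify that the approximation error does not blow up through the remaining (identity) LE layers that follow, which is immediate since those layers act as the identity on the constructed channel and \Cref{prop:R2-R2} shows they are \textNewnorm-non-expansive.
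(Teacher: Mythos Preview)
Your proposal is correct and takes essentially the same approach as the paper: broadcast the diagonal of $D$ along rows and columns via the LE operators 14--15 of Table~\ref{tab:R2-R2}, reducing $(DMD)_{ij}$ to an entrywise scalar product of three channels, and then approximate this product with an MLP on the compact domain furnished by the assumptions. Note that the paper's proof of this lemma also contains a second, analogous construction showing IGN can approximate $\tfrac{1}{n}Mx$ (broadcast $\mathrm{Diag}(x)$ to columns, entrywise MLP for the product, then a row-mean operator), which the truncated statement omits but which is needed downstream for Theorem~\ref{thm:ign-approx-sgnn}.
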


    \begin{proof}
    Given diagonal matrix $D$ and matrix $M$,
    to implement $DMD$ with linear equivariant layers of \IGN, we first use operation 14-15 in \Cref{tab:R2-R2} to copy diagonal elements in $D$ to rows and columns of two matrix $D_\tn{row}$ and $D_\tn{col}$. Then calculating $DMD$ becomes entry-wise multiplication of three matrix $D_\tn{row}, M, D_\tn{col}$. Assuming all entries of $D$ and $M$ lies in a compact domain, we can use MLP (which is part of IGN according to \Cref{remark:IGN}) to approximate multiplication arbitrarily well \cite{cybenko1989approximation,hornik1989multilayer}.  for illustration.

    To implement $\frac{1}{n}Mx$ with linear equivariant layers of \IGN,
    first map $x$ into a diagonal matrix $\text{Diag}(x)$ and concatenate it with $M$ as the input
    $[\text{Diag}(x), M] \in \mb{R}^{n\times n \times 2}$ to \IGN. Apply ``copy diagonal to all columns'' to the first channel and use MLP to uniformly approximates up to arbitrary precision $\epsilon$ the multiplication of first channel with the second channel. Then use operation ``copy row mean'' to map $\mb{R}^{n\times n } \rightarrow \mb{R}^{n}$ to get the $\frac{1}{n}Mx$ within $\epsilon$ precision. See \Cref{fig:approx}.
    \end{proof}

    \begin{remark}\label{remark:IGN-mm-multiplication}
    Linear layers in \IGN{} can not implement matrix-matrix multiplication in general. When we introduce the matrix multiplication component, the expressive power of GNN in terms of WL test provably increases from 2-WL to 3-WL \cite{maron2019provably}). 
    \end{remark}

    \begin{restatable}[]{theorem}{ignapproxsgnn}
    \label{thm:ign-approx-sgnn}
    Given $n$, $\epsilon$, and $\tn{SGNN}_{\theta_1}(n)$, there exists a \tn{\IGN{}} $\tn{IGN}_{\theta_2}(n)$ such that it approximates $\tn{SGNN}_{\theta_1}(n)$ on a compact set (support of input feature $x_n$) arbitrarily well in $L_\infty$ sense. %
    \end{restatable}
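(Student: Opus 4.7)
The plan is to construct, layer-by-layer, an IGN that mimics the SGNN evaluation on a compact set of input signals. Each SGNN layer decomposes into a handful of primitive operations: (a) forming the normalized Laplacian $L = D^{-1/2} A D^{-1/2}$ from the adjacency $A$; (b) evaluating the polynomial $h(L) = \sum_k \beta_k L^k$ applied to a vector $z$; (c) adding the bias $b\,\mathbf{1}_n$; (d) applying the activation $\sigma$. I will show that each primitive can be approximated to arbitrary precision by an IGN sub-module, and then compose these modules to obtain the claimed uniform approximation of $\mathrm{SGNN}_{\theta_1}$ on the compact input set.

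To implement (a), I would first extract the degree vector from $A$ via the linear equivariant ``row sum'' operator (entry 2 of \Cref{tab:R2-R1}), producing a $1$-tensor $d \in \mathbb{R}^n$. By AS\ref{as:min-deg-lower-bound} the entries of $d$ lie in a compact interval bounded away from zero, so the scalar map $t \mapsto t^{-1/2}$ can be uniformly approximated by an MLP, which is permitted as the channel-mixing component of IGN (see \Cref{remark:IGN}). Lifting the result back to a diagonal $2$-tensor via a ``replicate to diagonal'' operator (entry 1 of \Cref{tab:R1-R2}) yields an approximation of $D^{-1/2}$. Then \Cref{lem:ign-multiplication} provides an IGN sub-module that approximates the triple product $D^{-1/2} A D^{-1/2}$ in sup norm on our bounded range, producing an approximation $\widehat L$ of $L$.

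For (b), since IGN cannot implement matrix--matrix multiplication in general (\Cref{remark:IGN-mm-multiplication}), I would compute $h(L) z$ as $\sum_{k=0}^{K} \beta_k (L^k z)$ by iterating the matrix--vector product from \Cref{lem:ign-multiplication}: set $z^{(0)} := z$, and let $z^{(k)}$ be the IGN sub-block output on input $[\widehat L, \dg{z^{(k-1)}}]$, which approximates $\widehat L\, z^{(k-1)}$ up to the built-in $1/n$ normalization of that sub-module. Each $z^{(k)}$ is carried along as its own feature channel of the running $2$-tensor representation. A final linear combination across channels with coefficients $\beta_{ijk}$ (rescaled so as to cancel the $1/n$ normalizations introduced by the basis operators of \Cref{tab:R2-R1,tab:R2-R2}), together with the addition of a constant bias channel and a pointwise application of $\sigma$, completes one emulated SGNN layer. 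Stacking $T$ such blocks in series produces a full multi-layer SGNN emulator.

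The final step is error propagation. By AS\ref{as:x-compact} and AS\ref{as:min-deg-lower-bound} all intermediate tensors remain in an a-priori bounded set that depends only on $n$ and the SGNN parameters, so every MLP used in the construction -- for $t \mapsto t^{-1/2}$, for the scalar products inside \Cref{lem:ign-multiplication}, and for the activations -- is evaluated on a compact domain and can thus serve as a uniform approximator via classical universal approximation. Composing a fixed number of sub-modules, each Lipschitz on the relevant bounded range and each introducing at most an $L_\infty$ error of size $\delta$, yields a total sup-norm error of order $\delta$ with a constant depending only on $n$, $K$, $T$, and the network depth; this can be driven below $\epsilon$ by choosing $\delta$ small enough. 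The main obstacle I anticipate is careful bookkeeping: the basis operators of $k$-IGN in \Cref{tab:R2-R2,tab:R2-R1,tab:R1-R2} come with built-in $1/n$ factors, so one must rescale the $\beta_{ijk}$ consistently across different powers of $L$ and obtain uniform-in-$k$ range bounds on the iterates $L^k z$ so that a single MLP suffices to approximate multiplication on a fixed compact domain throughout the depth of the construction.
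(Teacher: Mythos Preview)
Your proposal is correct and takes essentially the same approach as the paper: decompose each SGNN layer, implement $D^{-1/2}AD^{-1/2}$ and the iterated matrix--vector products $L^k z$ via \Cref{lem:ign-multiplication} together with MLP-approximated scalar maps ($t\mapsto t^{-1/2}$ and multiplication) on compact domains guaranteed by AS\ref{as:x-compact} and AS\ref{as:min-deg-lower-bound}, then compose and control error. The only cosmetic differences are that the paper absorbs the built-in $1/n$ normalization by rewriting $L=\frac{1}{n}\degmean^{-1/2}A\,\degmean^{-1/2}$ with $\degmean:=\frac{1}{n}\dg{A\one}$ (so the $1/n$ in the row-sum operator and in the matrix--vector lemma cancel automatically) rather than rescaling the $\beta_{ijk}$, and it explicitly invokes the bounded spectral norm of $L$ to supply the uniform-in-$k$ range bounds you flagged as your main anticipated obstacle.
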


\begin{figure}[htp!]
  \centering
  \includegraphics[width=.8\linewidth]{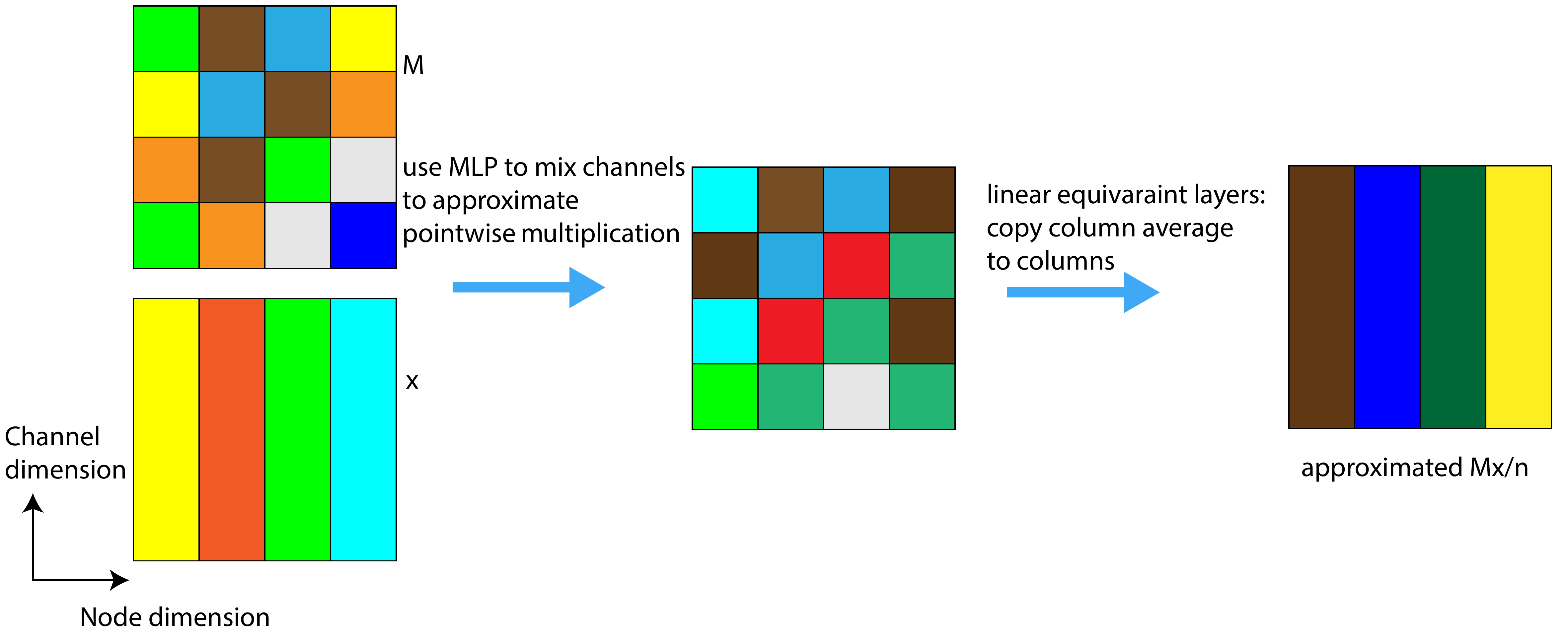}
\caption{An illustration of how we approximate the major building blocks of SGNN: $\frac{1}{n}Ax$. }
  \label{fig:approx}
\end{figure}

    \begin{proof}
    Since IGN and SGNN has the same non-linearity. To show that IGN can approximate SGNN, it suffices to show that IGN can approximate linear layer of SGNN, which further boils down to prove that IGN can approximate $Lx$.

    Here we assume the input of \IGN{} is $A\in \mb{R}^{n \times n}$ and $x\in \mb{R}^{n \times d}$. We need to first show how $L = D(A)^{-\frac{1}{2}} A D(A)^{-\frac{1}{2}}$ can be implemented by linear layers of IGN. This is achieved by noting that $L = \frac{1}{n} \degmean^{-\frac{1}{2}} A \degmean^{-\frac{1}{2}}$ where $\degmean$ is normalized degree matrix $\frac{1}{n}\dg{A\one}$.
    Representing $L$ as $\frac{1}{n} \degmean^{-\frac{1}{2}} A \degmean^{-\frac{1}{2}}$ ensures that all entries in $A$ and $\degmean$ lies in a compact domain, which is crucial when we extending the approximation proof to the graphon case.

    Now we show how $Lx = \frac{1}{n} \degmean^{-\frac{1}{2}} A \degmean^{-\frac{1}{2}} x$ is implemented.
    First, it is easy to see that \IGN{} can calculate exactly $\degmean$ using equivariant layers. Second, as approximating a) $f(a, b) = ab$ and b) $f(a) = \frac{1}{\sqrt{a}}$ can achieved by MLP on compact domain, approximating $\degmean^{-\frac{1}{2}} A \degmean^{-\frac{1}{2}}$ can also achieved by \IGN{} layers according to \Cref{lem:ign-multiplication}. Third, we need to show $\frac{1}{n} \degmean^{-\frac{1}{2}} A \degmean^{-\frac{1}{2}}x$ can also be implemented. This is proved in \Cref{lem:ign-multiplication}.

    There are two main functions we need to approximate with MLP:
    a) $f(x) = 1/\sqrt{a}$ and b) $ f(a, b) = ab$. 

    For a) the input is entries of $\degmean$ which lie in $[0,1]$. By classical universal approximation theorem \cite{cybenko1989approximation,hornik1989multilayer}, we know MLP can approximate a) arbitrarily well.

    For b) the input is $(\degmean^{-1/2}, A)$ for normalized adjacency matrix calculation, and $(L, x)$ for graph signal convolution.
    
    To ensure the uniform approximation, we need to ensure all of them lie in a compact domain. This is indeed the case as all entries in $\degmean, A, x$ are all upper bounded

    \begin{enumerate}
    \item  every entry in $A$ is either 0 or 1 therefore lies in a compact domain.
    \item similarly, all entries $\degmean$ lies in $[c_\text{min}, 1]$ by AS\ref{as:min-deg-lower-bound}, and therefore $\degmean^{-\frac{1}{2}}$ also lies in a compact domain. As $L(A)$ is the multiplication of $\degmean^{-1/2}, A, \degmean^{-1/2}$, every  entry of $L(A)$ also lies in compact domain.
    \item input signal $x$ has bounded $l_\infty$-norm by assumption AS\ref{as:x-compact}.
    \item  all coefficient for operators is upper bounded and independent from $n$ by AS\ref{as:filter-bound}.
    \end{enumerate}

    Since we showed the $L(A)x$ can be approximated arbitrarily well by IGN, repeating such processes and leveraging the fact that $L$ has bounded spectral norm, we can then approximate $L^k(A)x$ up to $\epsilon$ precision. The errors $\epsilon$ depend on the approximation error of the MLP to the relevant function, the previous errors, and uniform bounds as well as uniform continuity of the approximated functions.
    \end{proof}

    \begin{restatable}[]{theorem}{cignapproxcsgnn}
    \label{thm:cign-approx-csgnn}
    Given $\epsilon$, and a \sGNN{} c$\tn{SGNN}_{\theta_1}$, there exists a continuous \IGN{} c$\tn{IGN}_{\theta_2} $such that it approximates c$\tn{SGNN}_{\theta_1}$ on a compact set (input feature $X$) arbitrarily well.
    \end{restatable}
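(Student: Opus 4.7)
The plan is to lift the discrete argument of Theorem \ref{thm:ign-approx-sgnn} to the graphon setting, using the continuous equivariant basis already tabulated in \Cref{tab:R2-R2,tab:R1-R2,tab:R2-R1}. Since the activations in cIGN and cSGNN are the same, it suffices to approximate in $L_\infty$, on a compact set, the core linear-layer operation of cSGNN, namely a single application of $L$ to a graphon signal, where $L$ is the continuous normalized adjacency obtained from $W$ via $L(u,v) = d_{W}(u)^{-1/2} W(u,v) d_W(v)^{-1/2}$ with $d_W(u)=\int W(u,v)\,dv$. Iterating this and using the bounded spectral norm of $L$ together with the Lipschitz property of $\sigma$ (AS\ref{as:activation-lip}) will let the $L_\infty$ error be propagated through all layers of cSGNN at the cost of a uniform multiplicative constant.

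First, I would implement $d_W$ exactly using the continuous row-mean operator (operator $2$ in \Cref{tab:R2-R1}), which maps the $W$-channel in $\mb{R}^{[0,1]^2}$ to a signal in $\mb{R}^{[0,1]}$. Under AS\ref{as:min-deg-lower-bound} and the trivial upper bound $d_W\leq 1$, the image of $d_W$ lies in the compact interval $[c_{\min},1]$, so a single pointwise MLP channel inside cIGN uniformly approximates $t\mapsto t^{-1/2}$ to any desired precision (classical universal approximation, as in \Cref{lem:ign-multiplication}). Next, use operators $14$ and $15$ of \Cref{tab:R2-R2} to replicate the computed $d_W^{-1/2}$ along rows and columns, yielding two 2-tensor channels which, together with the $W$-channel, all have values in a compact domain (bounded by $c_{\min}^{-1/2}$ and $1$). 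Another pointwise MLP then approximates the triple product, giving a 2-tensor channel close to $L$ in $\|\cdot\|_{L_\infty}$.

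To multiply this approximate $L$ against the current graphon signal $X^{(\ell)}$, I would first lift $X^{(\ell)}$ to the diagonal of a 2-tensor via the replicate-to-row operator of \Cref{tab:R1-R2} (entry $(u,v)\mapsto X^{(\ell)}(v)$), apply a pointwise MLP to approximate the product of the two channels on their compact range (AS\ref{as:x-compact} gives the needed bound for $X$, and the argument used in Theorem \ref{thm:ign-approx-sgnn} for later layers carries over since each intermediate signal lies in the image of a Lipschitz-bounded composition of cIGN layers applied to bounded inputs, hence stays uniformly bounded), and finally collapse back with the continuous row-mean operator (operator $2$ of \Cref{tab:R2-R1}), which realizes $\int L(u,v)X^{(\ell)}(v)\,dv$ exactly up to the pointwise MLP error. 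Adding the bias $b^{(\ell)}\mathbf 1$ uses a constant-replication operator from \Cref{tab:R1-R2}. The multi-channel form of cSGNN is handled by the multi-channel version of cIGN (\Cref{remark:IGN}), which lets us mix the $i,j$ feature indices linearly.

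The main obstacle, and the only delicate point, is controlling error accumulation across depth while maintaining \emph{uniform} approximation. Concretely, after approximating $L$ within $\eps_L$ and $LX^{(\ell)}$ within $\eps_\ell$, the next layer's MLPs must still see inputs in a compact set. This is handled by fixing, a priori, a compact superset $K$ containing all the exact intermediate values (which exists because $d_W\geq c_{\min}$, $\|W\|_{L_\infty}\leq 1$, and the Lipschitz activations plus bounded filter coefficients AS\ref{as:filter-bound} keep every $X^{(\ell)}$ bounded in $L_\infty$), then inflating $K$ slightly and choosing the MLP accuracies small enough so that all approximations stay inside the inflated set. The uniform continuity of the target functions on the inflated compact domain then permits a Lipschitz-style $\eps$-chaining argument that concludes $\|\Phi_c^{\text{cIGN}}(W,X)-\Phi^{\text{cSGNN}}(W,X)\|_{L_\infty}<\eps$, finishing the proof.
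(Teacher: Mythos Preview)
Your proposal is correct and follows essentially the same approach as the paper: lift the discrete construction of Theorem~\ref{thm:ign-approx-sgnn} to the graphon setting by replacing each discrete equivariant operator with its continuous counterpart from \Cref{tab:R2-R2,tab:R1-R2,tab:R2-R1}, and verify that all operands (entries of $W$, $d_W$, $d_W^{-1/2}$, $L$, and intermediate signals $X^{(\ell)}$) remain in compact domains so that the MLP approximations are uniform. The paper's own proof is considerably terser---it simply checks that the compactness conditions enumerated in the discrete proof carry over---whereas you spell out the specific operators used and the error-chaining argument across layers, but the underlying strategy is identical.
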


    \begin{proof}

In the continuous case, $Lx = \frac{1}{n} \degmean^{-\frac{1}{2}} A \degmean^{-\frac{1}{2}} x$ in the discrete case will be replaced with $D(W)^{-\frac{1}{2}} W D(W)^{-\frac{1}{2}} X$ where $D(W)$ is a diagonal graphon defined to be $D(W)(i, i) = \int_0^1 W(i, j)dj$. 

We show that all items listed in proof of \Cref{thm:ign-approx-sgnn} still holds in the continuous case
    \begin{itemize}
      \item we consider the $W$ instead in the continuous case, where all entries still lies in a compact domain $[0,1]$.
      \item similarly all entries of the continuous analog of $\degmean, \degmean^{-\frac{1}{2}}$, and $T(W)$ also lies in a compact domain according to AS\ref{as:min-deg-lower-bound}.
      \item the statements about input signal $X$ and the coefficient for linear equivariant operators also holds in the continuous setting.
    \end{itemize}
    Therefore we conclude the proof. Now we are ready to prove that those IGN that can approximate SGNN well is a subset of \smallIGN{}.
    \end{proof}

\begin{lemma}
\label{lemma:sgnn-linear-layer-commutes}
With slight abuse of notation, let $\inducedEW$ be graphon of chessboard pattern. Let $\inducedEX$ be a graphon signal with 1D chessboard pattern.
$S_n \circ \inducedEW \inducedEX = (S_n \inducedEW) (S_n \inducedEX)$. %
\end{lemma}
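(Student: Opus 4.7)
The proof strategy is a direct computation that exploits the block structure of chessboard-pattern graphons. The plan is to evaluate both sides explicitly on each ``cell'' of the grid and observe that they coincide.

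Let $I_i = [(i-1)/n, i/n]$. Since $\inducedEW$ has chessboard pattern, there exist constants $W_{ij}$ such that $\inducedEW(u,v) = W_{ij}$ for $(u,v) \in I_i \times I_j$, and similarly constants $X_j$ with $\inducedEX(v) = X_j$ on $I_j$ (endpoints being a measure-zero set, handled by the same convention used in the proof of \Cref{lem:inducedEW-converges}, where $\inducedEW(i/n, j/n) = W_{ij}$). Under this setup, $S_n\inducedEW$ is the $n \times n$ matrix with entries $\tfrac{1}{n}W_{ij}$ and $S_n\inducedEX$ is the vector with entries $\tfrac{1}{\sqrt n}X_j$.

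For the left-hand side, I first compute the 1D graphon signal $g := \inducedEW \inducedEX$ defined by $g(u) = \int_0^1 \inducedEW(u,v)\inducedEX(v)\,dv$. For any $u \in I_i$, piecewise-constancy gives
\begin{equation*}
g(u) = \sum_{j=1}^n \int_{I_j} W_{ij} X_j\,dv = \frac{1}{n}\sum_{j=1}^n W_{ij} X_j,
\end{equation*}
so $g$ is itself a 1D chessboard signal. Applying $S_n$ yields $S_n g(i) = \tfrac{1}{\sqrt n} g(i/n) = \tfrac{1}{n\sqrt n}\sum_j W_{ij} X_j$. For the right-hand side, the $i$-th entry of the matrix--vector product is
\begin{equation*}
\big((S_n\inducedEW)(S_n\inducedEX)\big)_i = \sum_{j=1}^n \tfrac{1}{n}W_{ij}\cdot \tfrac{1}{\sqrt n} X_j = \frac{1}{n\sqrt n}\sum_j W_{ij} X_j,
\end{equation*}
which matches, proving the identity.

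There is no real obstacle here: the lemma is essentially a bookkeeping fact about how the normalization factors of $S_n$ conspire to turn the integral operator into a matrix--vector product on block-constant inputs. The only point that merits care is the endpoint convention at $u = i/n$, which is resolved exactly as in \Cref{lem:inducedEW-converges}. This lemma is then used inside the proof of \Cref{thm:convergenceafterEM} to justify the equality $S_U \Phi_c([\inducedW, \dg{\inducedX}]) = S_n \Phi_c([\inducedEW, \dg{\inducedEX}])$ at the linear-layer level, together with the commutation $S_n \circ \sigma = \sigma \circ S_n$ from \Cref{lem:property-snsx}.
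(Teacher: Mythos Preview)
Your proof is correct and takes essentially the same approach as the paper: both are direct componentwise computations that exploit the block-constancy of the chessboard pattern to convert the integral into a finite sum and then match the normalization factors. One small inaccuracy in your final paragraph: the paper actually invokes this lemma in the proof that \smallIGN{} approximates spectral GNN (to show $S_n\Phi_{c,\mathrm{SGNN}} = \Phi_{d,\mathrm{SGNN}}S_n$ on chessboard inputs), not directly inside the proof of \Cref{thm:convergenceafterEM}.
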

\begin{proof}
 Since $ S_n \circ \inducedEW \inducedEX = S_n \circ\int_{j\in [0, 1]} \inducedEW(i, j) \inducedEX(j) dj = \left(..., \frac{1}{\sqrt{n}}\int_{j\in [0, 1]} \inducedEW(\frac{i}{n}, j) \inducedEX(j), ...\right)$, it suffices to analyze $i$-th component $\frac{1}{\sqrt{n}}\int_{j\in [0, 1]} \inducedEW(\frac{i}{n}, j) \inducedEX(j)$.

 Since $\inducedEW, \inducedEX$ are of chessboard pattern, we can replace integral with summation.
 \begin{align*}
 S_n \circ \inducedEW \inducedEX(i)
 & =  \frac{1}{\sqrt{n}}\int_{j\in [0, 1]} \inducedEW(\frac{i}{n}, j) \inducedEX(j) \\
  & = \frac{1}{\sqrt{n}} \frac{1}{n} \sum_{j \in [n]} \inducedEW(\frac{i}{n}, \frac{j}{n}) \inducedEX(\frac{j}{n}) \\
  & = \sum_{j \in [n]} \frac{1}{n} \inducedEW(\frac{i}{n}, \frac{j}{n}) ( S_n \inducedEX ) (j) \\
  & = \sum (S_n \inducedEW )(i, j) ( S_n \inducedEX ) (j) \\
  & = \left((S_n \inducedEW ) ( S_n \inducedEX ) \right)(i)
\end{align*}
Which concludes the proof. Note that our proof does make use of the property of multiplication between two numbers.
\end{proof}
\begin{remark}
\label{remark: replace-multiplication}
The whole proof only relies on that $\inducedEW$ and $\inducedEX$ have checkerboard patterns. Therefore replacing the multiplication with other operations (such as a MLP) will still hold.
\end{remark}

\smalligngcn*
\begin{proof}
To prove this, we only need to show that $S_n\Phi_{c, \tn{approx}}([\inducedEW, \inducedEf]) = \\ \Phi_{d, \tn{approx}} S_n([\inducedEW, \inducedEf])$. Here  $\Phi_{c, \tn{approx}}$ and $\Phi_{d, \tn{approx}}$ denotes those specific IGN in \Cref{thm:ign-approx-sgnn,thm:cign-approx-csgnn} constructed to approximate SGNN. 

To build up some intuition, let $\Phi_{\tn{SGNN}}$ denotes the \sGNN{} that $\Phi_{\tn{approx}}$ approximates. it is easy to see that $S_n\Phi_{c, \tn{SGNN}}([\inducedEW, \inducedEf]) = \Phi_{d, \tn{SGNN}}S_n([\inducedEW, \inducedEf])$  due to \Cref{lemma:sgnn-linear-layer-commutes} and \Cref{lem:property-of-T}.2. 
To show the same holds for $\Phi_{\tn{approx}}$, note that the only difference between $\inducedEW\inducedEf$ implemented by SGNN and approximated by  $\Phi_{\tn{approx}}$ is that $\Phi_{\tn{approx}}$ use MLP to simulate multiplication between numbers. According to \Cref{remark: replace-multiplication}, the approximated version of $\inducedEW\inducedEf$ still commutes with $S_n$.    %

Since nonlinear layer $\sigma$ in $\Phi_{\tn{approx}}$ also commutes with $S_n$ according to \Cref{lem:property-of-T}.2, we can combine the result above and conclude that $\Phi_{\tn{approx}}$ commutes with $S_n$.  Therefore $\Phi_{\tn{approx}}$ belongs to \smallIGN{}, which finishes the proof.
\end{proof}

\chapter{On the Connection Between MPNN and Graph Transformer}\label{chapter:mpnn_gt}
\setcounter{assumption}{0}
\section{Introduction}
In this chapter, we study the connection between MPNN and Graph Transformer. 
MPNN (Message Passing Neural Network) \cite{gilmer2017neural} has been the leading architecture for processing graph-structured data. Recently, transformers in natural language processing \cite{vaswani2017attention,kalyan2021ammus} and vision \cite{d2021convit,han2022survey} have extended their success to the domain of graphs. There have been several pieces of work \cite{ying2021transformers,wu2021representing,kreuzer2021rethinking,rampavsek2022recipe, kim2022pure} showing that with careful position embedding \cite{lim2022sign}, graph transformers (GT) can achieve compelling empirical performances on large-scale datasets and start to challenge the dominance of MPNN. 
\begin{figure}[t!]
  \centering
  \includegraphics[width=1\linewidth]{./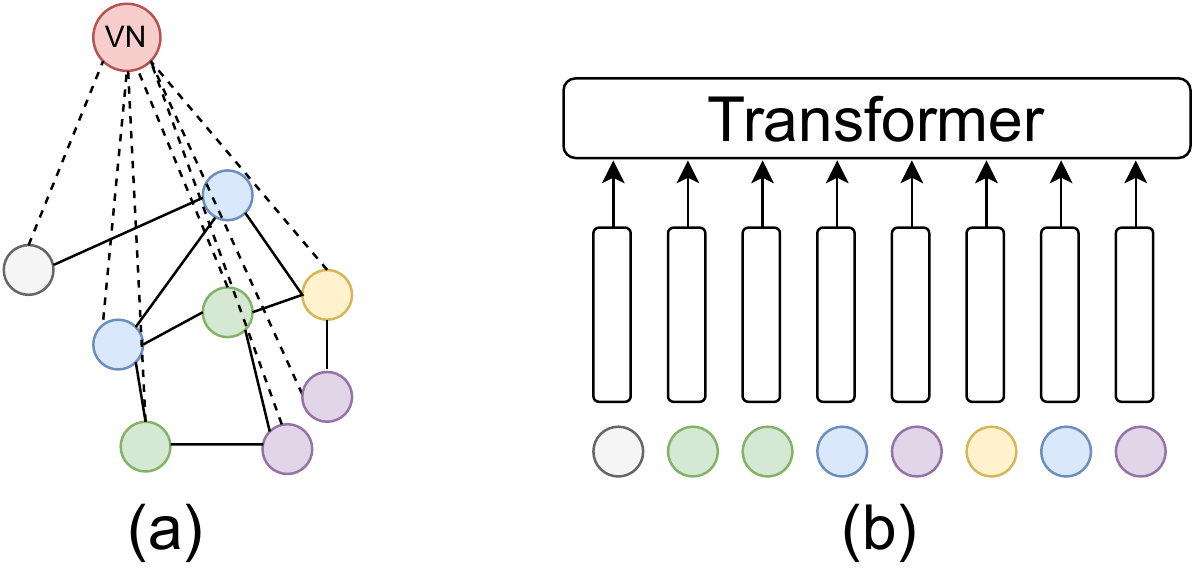}
\caption{MPNN + VN and Graph Transformers.}
\label{fig:mpnn+gt}
\end{figure}

\begin{table}[th!]
\centering
\caption{Summary of approximation result of MPNN + VN on self-attention layer. $n$ is the number of nodes and $d$ is the feature dimension of node features. The dependency on $d$ is hidden. }
\tabtopvspace
\label{tab:theoretical-result}
\resizebox{1\textwidth}{!}{
\begin{tabular}{@{}lllll@{}}
\toprule
 & Depth & Width & Self-Attention & Note \\ \midrule
 \Cref{thm:constant-depth-constant-width} & $O(1)$ & $O(1)$ & Approximate & Approximate self attention in Performer \cite{choromanski2020rethinking} \\ 
 \Cref{thm:constant-depth} & $O(1)$ & $O(n^d)$ & Full & Leverage the universality of equivariant \DS{} \\
\Cref{thm:constant-width}  & $O(n)$ & $O(1)$ & Full & Explicit construction, strong assumption on $\mc{X}$ \\
\Cref{prop:gat-v2-selection} & $O(n)$ & $O(1)$ & Full & Explicit construction, more relaxed (but still strong) assumption on $\mc{X}$ \\ \bottomrule
\end{tabular}
}
\end{table}

MPNN imposes a sparsity pattern on the computation graph and therefore enjoys linear complexity. It however suffers from well-known over-smoothing \cite{li2018deeper,oono2019graph,cai2020note} and over-squashing \cite{alon2020bottleneck,topping2021understanding} issues, limiting its usage on long-range modeling tasks where the label of one node depends on features of nodes far away. GT relies purely on position embedding to encode the graph structure and uses vanilla transformers on top. \footnote{GT in this chapter refers to the practice of tokenizing graph nodes and applying standard transformers on top \cite{ying2021transformers,kim2022pure}. There exists a more sophisticated GT \cite{kreuzer2021rethinking} that further conditions attention on edge types but it is not considered in this chapter. } %
It models all pairwise interactions directly in one layer, making it computationally more expensive. Compared to MPNN, GT shows promising results on tasks where modeling long-range interaction is the key, but the quadratic complexity of self-attention in GT limits its usage to graphs of medium size. Scaling up GT to large graphs remains an active research area \cite{wunodeformer}. %

Theoretically, it has been shown that graph transformers can be powerful graph learners \cite{kim2022pure}, i.e., graph transformers with appropriate choice of token embeddings have the capacity of approximating linear permutation equivariant basis, and therefore can approximate 2-IGN (Invariant Graph Network), a powerful architecture that is at least as expressive as MPNN \cite{maron2018invariant}. This raises an important question that \textit{whether GT is strictly more powerful than MPNN}. Can we approximate GT with MPNN?

One common intuition of the advantage of GT over MPNN is its ability to model long-range interaction more effectively. However, from the MPNN side, one can resort to a simple trick to escape locality constraints for effective long-range modeling: the use of an additional \emph{virtual node (VN)} that connects to all input graph nodes. On a high level, MPNN + VN augments the existing graph with one virtual node, which acts like global memory for every node exchanging messages with other nodes. Empirically this simple trick has been observed to improve the MPNN and has been widely adopted \cite{gilmer2017neural,hu2020open,hu2021ogb} since the early beginning of MPNN \cite{gilmer2017neural, battaglia2018relational}. However, there is very little theoretical study of MPNN + VN \cite{hwanganalysis}. 

In this work, we study the theoretical property of MPNN + VN, and its connection to GT. We systematically study the representation power of MPNN + VN, both for certain approximate self-attention and for the full self-attention layer, and provide a depth-width trade-off, summarized in \Cref{tab:theoretical-result}. In particular, 
\begin{itemize}
\item With $O(1)$ depth and $O(1)$ width, MPNN + VN can approximate one self-attention layer of Performer \cite{choromanski2020rethinking} and Linear Transformer \cite{katharopoulos2020transformers}, a type of linear transformers \cite{tay2020efficient}. %
\item %
Via a link between MPNN + VN with \DS{} \cite{zaheer2017deep}, we prove MPNN + VN with $O(1)$ depth and $O(n^d)$ width ($d$ is the input feature dimension) is permutation equivariant universal, implying it can approximate self-attention layer and even full-transformers.  %

\item %
Under certain assumptions on node features, we prove an explicit construction of $O(n)$ depth $O(1)$ width MPNN + VN approximating 1 self-attention layer arbitrarily well on graphs of size $n$. 
Unfortunately, the assumptions on node features are rather strong, and whether we can alleviate them will be an interesting future direction to explore. %

\item Empirically, we show 1) that MPNN + VN works surprisingly well on the recently proposed LRGB (long-range graph benchmarks) datasets \cite{dwivedi2022long}, which arguably require long-range interaction reasoning to achieve strong performance 2) our implementation of MPNN + VN is able to further improve the early implementation of MPNN + VN on OGB datasets and 3) MPNN + VN outperforms Linear Transformer \cite{katharopoulos2020transformers} and MPNN on the climate modeling task. %
\end{itemize}

\section{Preliminaries}
We denote $\mX\in \R{n \times d}$ the concatenation of graph node features and positional encodings, where node $i$ has feature $\vx_i \in \R{d}$. When necessary, we use $\vx^{(l)}_j$ to denote the node $j$'s feature at depth $l$.  Let $\mc{M}$ be the space of multisets of vectors in $\R{d}$. We use $\mc{X} \subseteq \R{n\times d} $ to denote the space of node features and the $\mc{X}_i$ be the projection of $\mc{X}$ on $i$-th coordinate. $\norm{\cdot}$ denotes the 2-norm. $[\vx, \vy, \vz]$ denotes the concatenation of $\vx, \vy, \vz$. $[n]$ stands for the set $\{1, 2, ..., n\}$.

\begin{definition}[attention]
We denote key and query matrix as $\mW_K, \mW_Q\in \R{d \times d'}$, and value matrix as $\mW_V \in \R{d \times d}$ \footnote{For simplicity, we assume the output dimension of self-attention is the same as the input dimension. All theoretical results can be extended to the case where the output dimension is different from $d$.}. Attention score between two vectors $\vu, \vv \in \R{d \times 1}$ is defined as $\alpha(\vu, \vv) = \text{softmax}(\vu^T \mW_Q(\mW_K)^T\vv)$. We denote $\mc{A}$ as the space of attention $\alpha$ for different $\mW_Q, \mW_K, \mW_V$. We also define unnormalized attention score $\alpha'(\cdot, \cdot)$ to be $\alpha'(\vu, \vv) = \vu^T \mW_Q(\mW_K)^T\vv$.
Self attention layer is a matrix function $\mL: \R{n\times d} \rightarrow \R{n\times d}$  of the following form: $\mL(\mX) = \text{softmax}(\mX\mW_Q(\mX\mW_K)^T)\mX \mW_V$. %
\end{definition}

\subsection{MPNN Layer}
\begin{definition}[MPNN layer \cite{gilmer2017neural}]
An MPNN layer on a graph $G$ with node features $\vx^{(k)}$ at $k$-th layer and edge features $\ve$ is of the following form
\begin{equation*}
\vx_i^{(k)}=\gamma^{(k)}\left(\vx_i^{(k-1)}, \pool_{j \in \mc{N}(i)} \phi^{(k)}\left(\vx_i^{(k-1)}, \vx_j^{(k-1)}, \ve_{j, i}\right)\right)
\end{equation*}

Here $\gamma: \mb{R}^d \times \mb{R}^{d'} \rightarrow \mb{R}^d$ is update function,
$\phi:\mb{R}^d \times \mb{R}^{d} \times \mb{R}^{d_e} \rightarrow \mb{R}^{d'}$ is message function where $d_e$ is the edge feature dimension, 
$\pool: \mc{M}  \rightarrow \mb{R}^d$ is permutation invariant aggregation function 
and $\mathcal{N}(i)$ is the neighbors of node $i$ in $G$.
Update/message/aggregation functions are usually parametrized by neural networks. For graphs of different types of edges and nodes, one can further extend MPNN to the heterogeneous setting.  We use $1, ..., n$ to index graph nodes and $\vn$ to denote the virtual node. 
\end{definition}

\begin{definition}[heterogeneous MPNN + VN layer]\label{def-hetero-mpnn-vn-layer} 
The heterogeneous MPNN + VN layer operates on two types of nodes: 1) virtual node and 2) graph nodes, denoted as \text{vn} and \text{gn}, and three types of edges: 1) \text{vn}-\text{gn} edge and 2) \text{gn}-\text{gn} edges and 3) \text{gn}-\text{vn} edges. It has the following form

\begin{equation}
\vx_{\vn}^{(k)}=\gamma_{\vn}^{(k)}\left(\vx_i^{(k-1)}, \pool_{j \in [n]}  \phi^{(k)}_{\vngn}\left(\vx_i^{(k-1)}, \vx_j^{(k-1)}, \ve_{j, i}\right)\right)  
\end{equation}
for the virtual node, and 
\begin{equation}
\begin{split}
\vx_i^{(k)}&=\gamma_{\gn}^{(k)}(\vx_i^{(k-1)}, \pool_{j \in \mathcal{N}_1(i)}  \phi^{(k)}_{\gnvn}\left(\vx_i^{(k-1)}, \vx_j^{(k-1)}, \ve_{j, i}\right)  \\
& +  \pool_{j \in \mathcal{N}_2(i)} \phi^{(k)}_{\gngn}\left(\vx_i^{(k-1)}, \vx_j^{(k-1)}, \ve_{j, i})\right)  
\end{split}
\end{equation}
for graph node. Here $\mc{N}_1(i)$ for graph node $i$ is the virtual node and $\mc{N}_2(i)$ is the set of neighboring graph nodes.
\end{definition}
Our proof of approximating self-attention layer $\mL$ with MPNN layers does not use the graph topology. Next, we introduce a simplified heterogeneous MPNN + VN layer, which will be used in the proof. It is easy to see that setting $\phi^{(k)}_{\gngn}$ to be 0 in \Cref{def-hetero-mpnn-vn-layer} recovers the simplified heterogeneous MPNN + VN layer.

\begin{definition}[simplified heterogeneous MPNN + VN layer]
\label{def:simplified-hetero-mpnn-vn}
A simplified heterogeneous MPNN + VN layer is the same as a heterogeneous MPNN + VN layer in \Cref{def-hetero-mpnn-vn-layer} except we set $\theta_{\gngn}$ to be 0. I.e., we have
\begin{equation*}
\vx_{\vn}^{(k)}=\gamma_{\vn}^{(k)}\left(\vx_i^{(k-1)}, \pool_{j \in [n]}  \phi^{(k)}_{\vngn}\left(\vx_i^{(k-1)}, \vx_j^{(k-1)}, \ve_{j, i}\right)\right)  
\end{equation*}
for the virtual node, and
\begin{equation*}
\vx_i^{(k)}=\gamma_{\gn}^{(k)}\left(\vx_i^{(k-1)}, \pool_{j \in \mathcal{N}_1(i)}  \phi^{(k)}_{\gnvn}\left(\vx_i^{(k-1)}, \vx_j^{(k-1)}, \ve_{j, i}\right)\right)  
\end{equation*}
for graph nodes. 
\end{definition}

Intuitively, adding the virtual node (VN) to MPNN makes it easy to compute certain quantities, for example, the mean of node features (which is hard for standard MPNN unless the depth is proportional to the diameter of the graph). Using VN thus makes it easy to implement for example the mean subtraction, which helps reduce over-smoothing and improves the performance of GNN \cite{yang2020revisiting,zhao2019pairnorm}. See more connection between MPNN + VN and over-smoothing in \Cref{subsec:over-smoothing}.

\subsection{Assumptions}
We have two mild assumptions on feature space $\mathcal{X}\subset \mb{R}^{n \times d}$ and the regularity of target function $\mL$.

\begin{assumption}\label{AS-2} $ \forall i\in [n], \vx_i \in \mc{X}_i,  \norm{\vx_i} < C_1$. This implies $\mc{X}$ is compact.  
\end{assumption} 

\begin{assumption}\label{AS-3}  $\norm{\mW_Q}< C_2, \norm{\mW_K} < C_2, \norm{\mW_V} < C_2$ for target layer $\mL$. Combined with AS\ref{AS-2} on $\mathcal{X}$, this means $\alpha'(\vx_i, \vx_j)$ is both upper and lower bounded, which further implies $\sum_j e^{\alpha'(\vx_i, \vx_j)}$ be both upper bounded and lower bounded. 
\end{assumption}

\subsection{Notations} 
We provide a notation table for references. 
\begin{table}[!htbp]
\caption{Summary of important notations.} 
\begin{center}
\resizebox{\textwidth}{!}
{
\begin{tabular}{@{}l|l@{}}
    \hline
    \toprule
    Symbol & Meaning \\
    \midrule
    \midrule
    $\mX \in \mc{X} \subset \R{n \times d}$ & graph node features \\
    $\vx_i \in \R{1\times d}$ & graph node $i$'s feature \\
    $\tilde{\vx}_i \in \R{1\times d}$ & approximated graph node $i$'s feature via attention selection \\
    $\mc{M}$ & A multiset of vectors in $\R{d}$  \\
    $\mW_Q^{(l)}, \mW_K^{(l)}, \mW_V^{(l)} \in \R{d\times d'}$ & attention matrix of $l$-th self-attention layer in graph transformer \\
    $\mc{X}$ & feature space \\ %
    $\mc{X}_i$ & projection of feature space onto $i$-th coordinate \\
    $\mL^{\ds}_i$ & $i$-th linear permutation equivariant layer in \DS{} \\ 
    $\mL, \mL'$ & full self attention layer; approximate self attention layer in Performer \\  
    $\vz_{\vn}^{(l)}, \vz_{i}^{(l)}$ &  virtual/graph node feature at layer $l$ of heterogeneous MPNN + VN  \\
    $\alpha_{\vn}$ & attention score in MPNN + VN \\
    
    $\alpha(\cdot, \cdot)$ & normalized attention score \\
    $\alpha_{\gat}(\cdot, \cdot)$ & normalized attention score with \gat{} \\
    $\alpha'(\cdot, \cdot)$ & unnormalized attention score. $\alpha'(\vu, \vv) = \vu \mW_Q(\mW_K)^T\vv^T$ \\
    $\alpha'_{\gat}(\cdot, \cdot)$ & unnormalized attention score with \gat{}. $\alpha'_{\text{GATv2}}(\vu, \vv) :=  \va^T \operatorname{LeakyReLU}\left(\mW \cdot\left[\vu \| \vv\right] + \vb \right)$ \\
    $\mc{A}$ & space of attentions, where each element $\alpha \in \mc{A}$ is of form $\alpha(\vu, \vv) = \text{softmax}(\vu \mW_Q(\mW_K)^T\vv^T)$ \\
    $C_1$ & upper bound on norm of all node features $\norm{\vx_i}$ \\
    $C_2$ & upper bound on the norm of $\mW_Q, \mW_K, \mW_V$ in target $\mL$ \\
    $C_3$ & upper bound on the norm of attention weights of $\alpha_{\vn}$ when selecting $\vx_i$ \\
    \midrule
    $\gamma^{(k)}(\cdot, \cdot)$ & update function \\
    $\theta^{(k)}(\cdot, \cdot)$ & message function \\
    $\pool(\cdot)$ & aggregation function \\
    \bottomrule
\end{tabular}
}
\end{center}
\label{table:symbol_notation}
\end{table}

\section{$O(1)$-Depth $O(1)$-Width MPNN + VN for Unbiased Approximation of Attention}
\label{sec:shallow-narrow-attention}
The standard self-attention takes $O(n^2)$ computational time, therefore not scalable for large graphs. Reducing the computational complexity of self-attention in Transformer is active research \cite{tay2020efficient}. 
In this section, we consider self-attention in a specific type of efficient transformers, Performer \cite{choromanski2020rethinking} and Linear Transformer \cite{katharopoulos2020transformers}.

One full self-attention layer $\mL$ is of the following form
\begin{equation}
\label{equ:attention-kernel-repr}
\vx_i^{(l+1)}=\sum_{j=1}^n \frac{\kappa\left(\mW_Q^{(l)} \vx_i^{(l)}, \mW_K^{(l)} \vx_j^{(l)}\right)}{\sum_{k=1}^n \kappa\left(\mW_Q^{(l)} \vx_i^{(l)}, \mW_K^{(l)} \vx_k^{(l)}\right)} \cdot\left(\mW_V^{(l)} \vx_j^{(l)}\right)
\end{equation}

where $\kappa: \mb{R}^d \times \mb{R}^d \rightarrow \mb{R}$ is the softmax kernel $\kappa(\vx, \vy):=\exp(\vx^T\vy)$. The kernel function can be approximated via $\kappa(\vx, \vy) = \ip{\Phi(\vx), \Phi(\vy)}_{\mc{V}} \approx \phi(\vx)^T\phi(\vy)$ where the first equation is by Mercer's theorem and $\phi(\cdot): \mb{R}^d \rightarrow \mb{R}^m $ is a low-dimensional feature map with random transformation. For Performer \cite{choromanski2020rethinking}, the choice of $\phi$ is taken as $\phi(\vx)=\frac{\exp \left(\frac{-\|\vx\|_2^2}{2}\right)}{\sqrt{m}}\left[\exp \left(\vw_1^{T} \vx\right), \cdots, \exp \left(\vw_m^{T} \vx\right)\right]$ where $\vw_k \sim \mathcal{N}\left(0, I_d\right)$ is i.i.d sampled random variable. For Linear Transformer \cite{katharopoulos2020transformers}, $\phi(\vx)=\operatorname{elu}(\vx)+1$. 

By switching $\kappa(\vx, \vy)$ to be $\phi(\vx)^T\phi(\vy)$, and denote $\vq_i=\mW_Q^{(l)} \vx_i^{(l)}, \vk_i=\mW_K^{(l)} \vx_i^{(l)} \text { and } \vv_i=\mW_V^{(l)} \vx_i^{(l)}$, the approximated version of \Cref{equ:attention-kernel-repr} by Performer and Linear Transformer becomes 
\begin{equation}
\begin{split}
\label{equ:modified-layer}
\vx_i^{(l+1)}&=\sum_{j=1}^n \frac{\phi\left(\vq_i\right)^{T} \phi\left(\vk_j\right)}{\sum_{k=1}^n \phi\left(\vq_i\right)^{T} \phi\left(\vk_k\right)} \cdot \vv_j \\
& =\frac{\left(\phi\left(\vq_i\right)^T \sum_{j=1}^n \phi\left(\vk_j\right) \otimes \vv_j\right)^T}{\phi\left(\vq_i\right)^{T} \sum_{k=1}^n \phi\left(\vk_k\right)}. \\
\end{split}
\end{equation}
where we use the matrix multiplication association rule to derive the second equality. 

The key advantage of \Cref{equ:modified-layer} is that $\sum_{j=1}^n \phi\left(\vk_j\right)$ and $\sum_{j=1}^n \phi(\vk_j) \otimes \vv_j$ can be approximated by the virtual node, and shared for all graph nodes, using only $O(1)$ layers of MPNNs.  
We denote the self-attention layer of this form in \Cref{equ:modified-layer} as $\mL_{\text{Performer}}$. Linear Transformer differs from Performer by choosing a different form of $\phi(\vx)=\operatorname{Relu}(\vx)+1$ in its self-attention layer $\mL_{\text{Linear-Transformer}}$.  

In particular, the VN will approximate $\sum_{j=1}^n \phi\left(\vk_j\right)$ and $\sum_{j=1}^n \phi\left(\vk_j\right) \otimes \vv_j$, and represent it as its feature. Both $\phi\left(\vk_j\right)$ and $\phi\left(\vk_j\right) \otimes \vv_j$ can be approximated arbitrarily well by an MLP with constant width (constant in $n$ but can be exponential in $d$) and depth. Note that $\phi(\vk_j) \otimes \vv_j \in \mb{R}^{dm}$ but can be reshaped to 1 dimensional feature vector.

More specifically, the initial feature for the virtual node is $\bm{1}_{(d+1)m}$, where $d$ is the dimension of node features and $m$ is the number of random projections $\omega_i$.  
Message function + aggregation function for virtual node 
$\pool \phi_{\vngn}: \mb{R}^{(d+1)m} \times \mc{M} \rightarrow \mb{R}^{(d+1)m} $ is
\begin{equation}
\label{eq:vn-gn}
\begin{split}
 & \pool_{j \in [n]} \phi_{\vngn}^{(k)}(\cdot, \{\vx_i\}_i)  =  [\sum_{j=1}^n \phi\left(\vk_j\right), \\ & \tooned{\sum_{j=1}^n \phi\left(\vk_j\right) \otimes \vv_j}]
 \end{split}
\end{equation}
 where $\tooned{\cdot}$ flattens a 2D matrix to a 1D vector in raster order. This function can be arbitrarily approximated  by MLP. Note that the virtual node's feature dimension is $(d+1)m$ (where recall $m$ is the dimension of the feature map $\phi$ used in the linear transformer/Performer), which is larger than the dimension of the graph node $d$. This is consistent with the early intuition that the virtual node might be overloaded when passing information among nodes. %
 The update function for virtual node $\gamma_{\vn}: $ $\mb{R}^{(d+1)m} \times \mb{R}^{(d+1)m} \rightarrow \mb{R}^{(d+1)m}$ is just coping the second argument, which can be exactly implemented by MLP. 

VN then sends its message back to all other nodes, where each graph node $i$ applies the update function $\gamma_{\gn}: \mb{R}^{(d+1)m} \times \mb{R}^d \rightarrow \mb{R}^d$ of the form
\begin{equation}
\label{eq:gn}
\begin{split}
 & \gamma_{\gn} (\vx_i,  [\sum_{j=1}^n \phi\left(\vk_j\right), \tooned{\sum_{j=1}^n \phi\left(\vk_j\right) \otimes \vv_j}])\\ 
 & = \frac{\left(\phi\left(\vq_i\right) \sum_{j=1}^n \phi\left(\vk_j\right) \otimes \vv_j\right)^T}{\phi\left(\vq_i\right)^{T} \sum_{k=1}^n \phi\left(\vk_k\right)} 
 \end{split}
\end{equation}
 to update the graph node feature. 

As the update function $\gamma_{\gn}$ can not be computed exactly in MLP, what is left is to show that error induced by using MLP to approximate $\pool \phi_{\vngn}$ and $\gamma_{\gn}$ in \Cref{eq:vn-gn} and \Cref{eq:gn} can be made arbitrarily small. %

\begin{restatable}{theorem}{doubleconstant}
\label{thm:constant-depth-constant-width}
Under the AS\ref{AS-2} and AS\ref{AS-3}, MPNN + VN of $O(1)$ width and $O(1)$ depth can approximate $\mL_{\text{Performer}}$ and $\mL_{\text{Linear-Transformer}}$ arbitrarily well. 
\end{restatable}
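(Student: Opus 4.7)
The plan is to realize the reformulated attention in Equation~\eqref{equ:modified-layer} explicitly with a two-round MPNN + VN, where the virtual node carries the shared sufficient statistics $S_1 := \sum_{j=1}^n \phi(\vk_j) \in \mb{R}^m$ and $S_2 := \sum_{j=1}^n \phi(\vk_j)\otimes \vv_j \in \mb{R}^{dm}$, and each graph node recovers its output by the rational combination $\bigl(\phi(\vq_i)^T S_2\bigr)^T / \bigl(\phi(\vq_i)^T S_1\bigr)$. Concretely, I would build a heterogeneous MPNN + VN of depth $2$ and width $(d+1)m = O(1)$: in the first round each graph node sends the message $[\phi(\vk_j), \tooned{\phi(\vk_j)\otimes \vv_j}]$ to VN, which aggregates by summation; in the second round VN broadcasts its state to every graph node, which applies $\gamma_{\gn}$ from Equation~\eqref{eq:gn} as its update.

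The actual MPNN + VN, however, must implement $\phi_{\vngn}$ and $\gamma_{\gn}$ by MLPs rather than exact formulas, so the second step is to certify these two approximations on compact domains. By AS\ref{AS-2} and AS\ref{AS-3}, $\vk_j = \mW_K \vx_j$, $\vv_j = \mW_V \vx_j$ and $\vq_i = \mW_Q \vx_i$ all lie in a compact subset of $\mb{R}^d$. Hence $\phi$ (which is a smooth, and in particular continuous, map, both for Performer's Gaussian feature map and for the $\operatorname{elu}+1$ map of Linear Transformer) is uniformly continuous on that domain, so by the classical universal approximation theorem we can choose an MLP $\widehat{\phi}_{\vngn}$ with $\|\widehat{\phi}_{\vngn} - [\phi(\vk_j), \tooned{\phi(\vk_j)\otimes \vv_j}]\|_{\infty} \le \varepsilon_1$. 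Summation inside VN's aggregation $\pool$ is exact, yielding VN features $\widehat{S}_1,\widehat{S}_2$ with $\|\widehat{S}_1 - S_1\| + \|\widehat{S}_2 - S_2\| \le n\,\mathrm{poly}(\varepsilon_1)$.

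The third step is to approximate $\gamma_{\gn}$, which requires computing the rational map $F(\vq,S_1,S_2) = (\phi(\vq)^T S_2)^T / (\phi(\vq)^T S_1)$ on the compact set of admissible $(\vq, S_1, S_2)$. The crucial subclaim is that the denominator is uniformly bounded below by some $c > 0$: for Performer, each coordinate of $\phi$ is strictly positive and bounded away from $0$ on the compact domain, so $\phi(\vq_i)^T S_1 \ge n \cdot c_0 > 0$; an identical argument (with positivity of $\operatorname{elu}(x)+1$) works for Linear Transformer. On the compact domain $\{\phi(\vq)^T S_1 \ge c\}$, division is Lipschitz, so $F$ is uniformly continuous and admits an MLP approximant $\widehat{\gamma}_{\gn}$ to within $\varepsilon_2$. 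Combining the two approximations and using the Lipschitz constants of each layer gives a final bound of the form $O(\varepsilon_1 + \varepsilon_2)$, which can be made smaller than any prescribed $\varepsilon > 0$ by choosing the MLP widths large enough (still $O(1)$ in $n$).

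The main obstacle, as suggested above, is not any single piece but the book-keeping of error propagation through the composition $\gamma_{\gn}\circ\pool\circ\phi_{\vngn}$: one must show that the $\varepsilon_1$-perturbation in the VN statistics $\widehat{S}_1,\widehat{S}_2$ does not collapse the denominator of $F$ to zero, i.e.\ that $\widehat{S}_1$ still lies inside the region where $\phi(\vq)^T \widehat{S}_1 \ge c/2$ say. This is handled by choosing $\varepsilon_1$ small relative to the uniform positive lower bound $c$ established above, and then $\varepsilon_2$ small relative to the Lipschitz constant of $F$ on the enlarged compact domain. Once this is in place the composite error bound is straightforward, and the overall construction uses $O(1)$ depth, $O(1)$ width, establishing the theorem.
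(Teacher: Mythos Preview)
Your proposal is correct and follows essentially the same route as the paper: realize \Cref{equ:modified-layer} by a two-round heterogeneous MPNN + VN of width $(d+1)m$ where the virtual node aggregates the sufficient statistics $S_1,S_2$, then invoke universal approximation on compact domains (compactness coming from AS\ref{AS-2} and AS\ref{AS-3}) for both the per-node message map and the rational update $\gamma_{\gn}$, with the key subclaim that the denominator $\phi(\vq_i)^T S_1$ is uniformly bounded away from zero because every coordinate of $\phi$ is strictly positive on the compact input region (exponential features for Performer, $\operatorname{elu}+1$ for Linear Transformer). Your explicit tracking of the error through the composition---in particular, ensuring the perturbed $\widehat{S}_1$ stays in the region where division is Lipschitz---is more careful than the paper's proof, which dispatches this with a single appeal to uniform continuity of the composed map, but the substance is identical.
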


\begin{proof}
We first prove the case of $\mL_{\text{Performer}}$.
We can decompose our target function as the composition of $\pool_{j \in [n]} \phi_{\vngn}^{(k)}$, $\gamma_{\gn}$ and $\phi$. 
By the uniform continuity of the functions, it suffices to show that 1) we can approximate $\phi$, 2) we can approximate operations in $\gamma_{\gn}$ and $\pool \phi_{\vngn}$ arbitrarily well on the compact domain, and 3) the denominator $\phi\left(\vq_i\right)^{T} \sum_{k=1}^n \phi\left(\vk_k\right)$ is uniformly lower bounded by a positive number for any node features in $\mc{X}$.

For 1), each component of $\phi$ is continuous and all inputs $\vk_j, \mathbf{q}_j$ lie in the compact domain so $\phi$ can be approximated arbitrarily well by MLP with $O(1)$ width and $O(1)$ depth \cite{cybenko1989approximation}. 

For 2), we need to approximate the operations in $\gamma_{\gn}$ and $\pool \phi_{\vngn}$, i.e., approximate multiplication, and vector-scalar division arbitrarily well.  As all those operations are continuous, it boils down to showing that all operands lie in a compact domain. By assumption AS\ref{AS-2} and AS\ref{AS-3} on $\mW_Q, \mW_K, \mW_V$ and input feature $\mc{X}$, we know that $\vq_i, \vk_i, \vv_i$ lies in a compact domain for all graph nodes $i$. As $\phi$ is continuous, this implies that $\phi(\vq_i), \sum_{j=1}^n \phi(\vk_j) \otimes \vv_j$ lies in a compact domain ($n$ is fixed), therefore the numerator lies in a compact domain. Lastly, since all operations do not involve $n$, the depth and width are constant in $n$.  

For 3), it is easy to see that $\phi\left(\vq_i\right)^{T} \sum_{k=1}^n \phi\left(\vk_k\right)$ is always positive.  We just need to show that the denominator is bound from below by a positive constant. For Performer, $\phi(\vx)=\frac{\exp \left(\frac{-\|\vx\|_2^2}{2}\right)}{\sqrt{m}}\left[\exp \left(\vw_1^{T} \vx\right), \cdots, \exp \left(\vw_m^{T} \vx\right)\right]$ where $\vw_k \sim \mathcal{N}\left(0, I_d\right)$. As all norm of input $\vx$ to $\phi$ is upper bounded by AS\ref{AS-2}, $\exp(\frac{-\|\vx\|_2^2}{2})$ is lower bounded. As $m$ is fixed, we know that $\norm{\vw^T_i \vx} \leq \norm{\vw_i} \norm{\vx}$, which implies that $\vw_i^T \vx$ is lower bounded by $-\norm{\vw_i} \norm{\vx}$ %
which further implies that $\exp(\vw_i^T \vx)$ is lower bounded. This means that $\phi\left(\vq_i\right)^{T} \sum_{k=1}^n \phi\left(\vk_k\right)$ is lower bounded. 

For Linear Transformer, the proof is essentially the same as above. We only need to show that $\phi(\vx)=\operatorname{elu}(\vx)+1$ is continuous and positive, which is indeed the case. 
\end{proof}

Besides Performers, there are many other different ways of obtaining linear complexity. In \Cref{subsec:ohter-linear-transformer}, we discuss the limitation of MPNN + VN on approximating other types of efficient transformers such as Linformer \cite{wang2020linformer} and Sparse Transformer \cite{child2019generating}.  

\section{$O(1)$ Depth $O(n^d)$ Width MPNN + VN}
\label{sec:shadow-wide-mpnn}

\begin{figure}[t!]
  \centering
  \includegraphics[width=1\linewidth]{./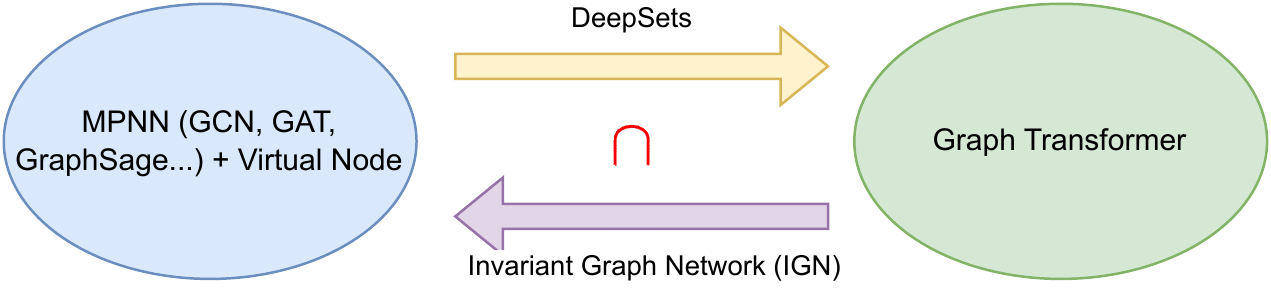}
\caption{The link between MPNN and GT is drawn via \DS{} in \Cref{sec:shadow-wide-mpnn} of this chapter and Invariant Graph Network (IGN) in \cite{kim2022pure}. Interestingly, IGN is a generalization of \DS{} \cite{maron2018invariant}. }
\label{fig:mpnn+gt}
\end{figure}

We have shown that the MPNN + VN can approximate self-attention in Performer and Linear Transformer using only $O(1)$ depth and $O(1)$ width. 
One may naturally wonder whether MPNN + VN can approximate the self-attention layer in the \textit{full} transformer. In this section,
we show that MPNN + VN with $O(1)$ depth (number of layers), but with $O(n^d)$ width, can approximate 1 self-attention layer (and full transformer) arbitrarily well. 

The main observation is that MPNN + VN is able to exactly simulate (not just approximate) equivariant \DS{} \cite{zaheer2017deep}, which is proved to be universal in approximating any permutation invariant/equivariant maps \cite{zaheer2017deep,segol2019universal}. Since the self-attention layer is permutation equivariant, this implies that MPNN + VN can approximate the self-attention layer (and full transformer) with $O(1)$ depth and $O(n^d)$ width following a result on \DS{} from \cite{segol2019universal}.

We first introduce the permutation equivariant map, equivariant \DS{}, and permutation equivariant universality.

\begin{definition}[permutation equivariant map]
A map $\mF: \mb{R}^{n \times k} \rightarrow \mb{R}^{n \times l}$ satisfying $\mF(\sigma \cdot \mX)=\sigma \cdot \mF(\mX)$ for all $\sigma \in S_n$ and $\mX \in \mb{R}^{n \times d}$ is called permutation equivariant.
\end{definition}

\begin{definition}[equivariant \DS{} of  \cite{zaheer2017deep}]
Equivariant \DS{} has the following form $\mF(\mX)=\mL_m^{\ds} \circ \nu \circ \cdots \circ \nu \circ \mL^{\ds}_1(\mX)$, where $\mL^{\ds}_i$ is a linear permutation equivariant layer and $\nu$ is a nonlinear layer such as ReLU. 
The linear permutation equivariant layer in \DS{} has the following form $\mL^{\ds}_i(\mX)=\mX \boldsymbol{A}+\frac{1}{n} \mathbf{1 1}^T \mX \mB+\mathbf{1} \boldsymbol{c}^T$, where $\mA, \mB \in \mb{R}^{d_{i} \times d_{i+1}}$, $\vc \in \mb{R}^{d_{i+1}}$ is the weights and bias in layer $i$, and $\nu$ is ReLU. %
\end{definition}

\begin{definition}[permutation equivariant universality]
\label{def:ds-universality}
Given a compact domain $\mc{X}$ of $\mb{R}^{n \times d_{\text{in}}}$, permutation equivariant universality of a model $\mF: \mb{R}^{n \times d_{\text{in}}} \rightarrow \mb{R}^{n \times d_{\text{out}}}$ means that for every permutation equivariant continuous function $\mH: \mb{R}^{n \times d_{\text{in}}} \rightarrow \mb{R}^{n \times d_{\text{out}}}$ defined over $\mc{X}$, and any $\epsilon>0$, there exists a choice of $m$ (i.e., network depth), $d_i$ (i.e., network width at layer $i$) and the trainable parameters of $\mF$ so that $\|\mH(\mX)-\mF(\mX)\|_{\infty}<\epsilon$ for all $\mX \in \mc{X}$.
\end{definition}

The universality of equivariant \DS{} is stated as follows. %

\begin{theorem}[\cite{segol2019universal}]
\label{thm:ds-universality}
\DS{} with constant layer is universal. Using ReLU activation the width $\omega:=\text{max}_i d_i $ ($d_i$ is the width for $i$-th layer of \DS{}) required for universal permutation equivariant
network satisfies $\omega \leq d_{\text{out}}+d_{\text{in}}+\left(\begin{array}{c} n+d_{\text{in}} \\ d_{\text{in}} \end{array}\right) = O(n^{d_\text{in}})$.
\end{theorem}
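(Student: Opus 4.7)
The plan is to establish permutation equivariant universality of $\DS$ via three reductions: continuous functions to polynomials, equivariant polynomials to a canonical structural form, and that form to a concrete $\DS$ realization. First, I would apply a vector-valued Stone--Weierstrass argument on the compact domain $\mc{X}\subset \mb{R}^{n\times d_{\text{in}}}$: every continuous equivariant $\mathbf{H}$ is within $\epsilon/2$ (in sup norm) of some polynomial map $\mathbf{P}$, and averaging $\mathbf{P}$ over $S_n$ via $\tfrac{1}{|S_n|}\sum_{\sigma}\sigma^{-1}\mathbf{P}(\sigma\cdot)$ preserves that closeness because $\mathbf{H}$ is already equivariant. So without loss of generality $\mathbf{P}$ is an equivariant polynomial map.

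Next, I would use the structure of equivariant polynomial maps: any such $\mathbf{P}:\mb{R}^{n\times d_{\text{in}}}\to \mb{R}^{n\times d_{\text{out}}}$ admits the row-wise representation $\mathbf{P}(\mX)_i = \mathbf{q}\bigl(\vx_i,\,\mathbf{u}(\mX)\bigr)$ where $\mathbf{u}(\mX)$ is a vector of $S_n$-invariant polynomials of $\mX$. I would take $\mathbf{u}$ to be the multisymmetric power-sum basis $p_{\bs{\alpha}}(\mX)=\tfrac{1}{n}\sum_{j=1}^n \vx_j^{\bs{\alpha}}$ indexed by multi-indices $\bs{\alpha}\in \mb{N}^{d_{\text{in}}}$ with $|\bs{\alpha}|\le n$; classical invariant theory for the symmetric group says these generate the full ring of $S_n$-invariants on $\mb{R}^{n\times d_{\text{in}}}$, and there are exactly $\binom{n+d_{\text{in}}}{d_{\text{in}}}$ of them. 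To realize this form in $\DS$ I would exploit the two specializations of $\mL^{\ds}(\mX)=\mX\mA+\tfrac{1}{n}\mathbf{1}\mathbf{1}^T\mX\mB+\mathbf{1}\vc^T$: a pointwise affine map (take $\mB=0$) and a pure pooling broadcast (take $\mA=0$, $\vc=0$). Interleaving these with ReLU, a constant-depth pointwise subnetwork produces the monomials $\vx_j^{\bs{\alpha}}$ on each node by scalar MLP universality, one pooling layer broadcasts $\mathbf{u}(\mX)$ to every node, and a final pointwise subnetwork acting on $[\vx_j,\mathbf{u}(\mX)]$ approximates $\mathbf{q}$ within $\epsilon/2$, giving total error below $\epsilon$ as required by \Cref{def:ds-universality}.

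The main obstacle is not universality itself but the tight width bound $\omega \le d_{\text{out}}+d_{\text{in}}+\binom{n+d_{\text{in}}}{d_{\text{in}}}$: this asserts that at the widest layer one needs only three disjoint channel blocks --- the $d_{\text{in}}$ passthrough of $\vx_j$, the $\binom{n+d_{\text{in}}}{d_{\text{in}}}$ broadcast invariants, and the $d_{\text{out}}$ channels accumulating the output --- with no additional hidden channels. To match this I would implement the pointwise approximator for $\mathbf{q}$ \emph{in place} within the $d_{\text{out}}$-channel output slab, using alternating pointwise-linear and ReLU steps that at every layer re-read the fixed invariant and passthrough blocks as auxiliary inputs, rather than allocating fresh hidden channels. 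The technical heart of the argument is then showing that such an in-place, constant-depth ReLU pointwise subnetwork of output width $d_{\text{out}}$, with repeated access to the augmented input $[\vx_j,\mathbf{u}(\mX)]$, still uniformly approximates an arbitrary continuous $\mathbf{q}$ on the compact augmented domain; standard scalar universal approximation must be adapted to this fixed-width, feature-reuse regime, and I expect this is where most of the delicate work lies.
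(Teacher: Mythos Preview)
The paper does not prove this theorem; it is quoted from \cite{segol2019universal} and used as a black box in the proof of \Cref{thm:constant-depth}. So there is no ``paper's own proof'' to compare against. Your outline is essentially the Segol--Lipman argument: Stone--Weierstrass plus symmetrization to reduce to equivariant polynomials, the row-wise decomposition $\mathbf{P}(\mX)_i=\mathbf{q}(\vx_i,\mathbf{u}(\mX))$ with $\mathbf{u}$ the multisymmetric power sums of total degree at most $n$, and a $\DS$ realization via pointwise MLP $\to$ pooling $\to$ pointwise MLP. That part is correct and matches the source.

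Where your plan diverges from the actual argument is the width bound. You propose to keep the pointwise approximator for $\mathbf{q}$ at \emph{constant depth} while confining its hidden computation to the $d_{\text{out}}$ output channels, re-reading the frozen $[\vx_j,\mathbf{u}(\mX)]$ block at every step. This cannot succeed: a constant-depth ReLU network of width $d_{\text{in}}+\binom{n+d_{\text{in}}}{d_{\text{in}}}+d_{\text{out}}$ is not a universal approximator of continuous $\mathbf{q}$ on a domain of that input dimension --- fixed depth forces width to grow with the target accuracy. The way Segol--Lipman obtain the stated width is the opposite trade-off: they let the \emph{depth} grow (the number $m$ of $\mL^{\ds}$ layers is not bounded independently of $\epsilon$) and invoke a deep-narrow universality result for the pointwise part, for which width equal to input dimension plus output dimension suffices. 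That is exactly $d_{\text{in}}+\binom{n+d_{\text{in}}}{d_{\text{in}}}+d_{\text{out}}$. The phrase ``constant layer'' in the theorem statement here is loose and should not be read as depth independent of $\epsilon$; your instinct that the width bound is the delicate part is right, but the resolution is growing depth, not an in-place constant-depth construction.
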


We are now ready to state our main theorem. 

\begin{restatable}{theorem}{constantdepth}
\label{thm:constant-depth} 
MPNN + VN can simulate (not just approximate) equivariant \DS{}: $\mb{R}^{n \times d} \rightarrow \mb{R}^{n \times d}$. The depth and width of MPNN + VN needed to simulate \DS{} is up to a constant factor of the depth and width of \DS{}. 
This implies that MPNN + VN of $O(1)$ depth and $O(n^d)$ width is permutation equivariant universal, and can approximate self-attention layer and transformers arbitrarily well. 
\end{restatable}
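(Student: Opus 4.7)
The plan is to prove the theorem by showing that a single linear equivariant layer of DeepSets can be exactly realized by one simplified heterogeneous MPNN + VN layer (in the sense of \Cref{def:simplified-hetero-mpnn-vn}), and then composing such simulations to reproduce a full DeepSets network. Finally, I would invoke the universality result of \Cref{thm:ds-universality} together with the fact that a self-attention layer is permutation equivariant and continuous on the compact domain $\mathcal{X}$ (which follows from AS\ref{AS-2} and AS\ref{AS-3}).

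First I would unpack the structure of the DeepSets linear layer $\mL^{\ds}_i(\mX) = \mX \mA + \tfrac{1}{n}\mathbf{11}^T \mX \mB + \mathbf{1}\vc^T$. The three summands correspond exactly to the three primitive operations available in MPNN + VN: (i) the purely local map $\vx_i \mapsto \vx_i \mA$ can be put into the graph-node update function $\gamma_{\gn}$, (ii) the mean term $\tfrac{1}{n}\mathbf{1}\mathbf{1}^T \mX \mB$ is a global quantity that is identical across all nodes and is therefore naturally carried by the virtual node's feature, and (iii) the bias $\mathbf{1}\vc^T$ is a constant injected via the update function. Concretely, I would define the message $\phi_{\vngn}$ to be $\phi_{\vngn}(\cdot,\vx_j,\cdot) = \vx_j \mB$ and the aggregation $\pool$ to be the normalized sum, so that after the virtual node update $\gamma_{\vn}$ stores $\vm := \tfrac{1}{n}\sum_j \vx_j \mB$ exactly. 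On the next half-step, the virtual node broadcasts $\vm$ to every graph node via $\phi_{\gnvn}$, and $\gamma_{\gn}(\vx_i, \vm) = \vx_i \mA + \vm + \vc^T$ implements the remaining two summands. All of these are affine maps, hence can be realized exactly (not merely approximated) by the MLPs parametrizing $\gamma$, $\phi$, and $\pool$, so one DeepSets linear layer is matched by a constant number of MPNN + VN sub-steps.

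Next I would handle the nonlinearity $\nu$: since ReLU is a standard building block of the MLP parametrizing $\gamma_{\gn}$, I can absorb $\nu$ into $\gamma_{\gn}$ at no extra cost in depth or width. Stacking $m$ such simulations, one per DeepSets layer, yields an MPNN + VN that exactly computes $\mL_m^{\ds}\circ\nu\circ\cdots\circ\nu\circ\mL_1^{\ds}(\mX)$, with depth $O(m)$ and per-layer width matching the DeepSets width (up to the extra $\vm$ slot on the VN, which is absorbed into the constant factor). This establishes the first claim: MPNN + VN simulates equivariant DeepSets up to constant factors in depth and width.

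For the second claim, I would note that any self-attention layer $\mL: \mathcal{X} \to \mathbb{R}^{n\times d}$ is permutation equivariant and continuous, and under AS\ref{AS-2} its domain is compact. \Cref{thm:ds-universality} then provides a DeepSets network of constant depth and width $O(n^d)$ that approximates $\mL$ to arbitrary precision in $\|\cdot\|_\infty$; composing this approximation guarantee with the exact simulation above yields the stated MPNN + VN approximation of the self-attention layer (and by stacking, of a full transformer block, since LayerNorm and feedforward pieces are also permutation equivariant and continuous). The main obstacle I anticipate is bookkeeping rather than a conceptual one: verifying that the aggregation function $\pool$ and the update/message MLPs can jointly realize the affine maps exactly while remaining within the heterogeneous MPNN + VN template of \Cref{def:simplified-hetero-mpnn-vn}, and tracking feature dimensions carefully so that the width bound $O(n^d)$ from \Cref{thm:ds-universality} carries over without hidden $n$-dependence in the constant factor.
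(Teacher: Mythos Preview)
Your proposal is correct and follows essentially the same approach as the paper: exactly simulate each DeepSets linear layer $\mX\mA+\tfrac{1}{n}\mathbf{11}^T\mX\mB+\mathbf{1}\vc^T$ by having the virtual node aggregate the (normalized) sum of node features and broadcast it back, with the affine pieces absorbed into the message/update MLPs, then stack and invoke \Cref{thm:ds-universality}. The only cosmetic difference is that the paper first computes the raw mean at the VN and applies $\mB$ during the graph-node update (using two MPNN + VN layers per DeepSets linear layer), whereas you push $\mB$ into the VN-bound message; both are exact and within a constant factor in depth and width.
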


\begin{table}[t!]
    \caption{Baselines for \pepfunc (graph classification) and \pepstruct (graph regression). The performance metric is Average Precision (AP) for classification and MAE for regression. \textbf{Bold}: Best score.} %
    \label{tab:experiments_peptides}
    \tabtopvspace
    \begin{adjustwidth}{-2.5 cm}{-2.5 cm}\centering
   \resizebox{1\textwidth}{!}{
    \setlength\tabcolsep{4pt} %
    \begin{tabular}{l c c c c c}\toprule
    \multirow{2}{*}{\textbf{Model}} & \multirow{2}{*}{\textbf{\# Params.}} & \multicolumn{2}{c}{\pepfunc} & \multicolumn{2}{c}{\pepstruct} \\ \cmidrule(lr){3-4} \cmidrule(lr){5-6}
                                    &                                      & \textbf{Test AP before VN}  & \textbf{Test AP after VN $\uparrow$}                                    & \textbf{Test MAE before VN} & \textbf{Test MAE after VN $\downarrow$} \\\midrule
    GCN                             & 508k                                 & 0.5930$\pm$0.0023            & 0.6623$\pm$0.0038                                                       & 0.3496$\pm$0.0013           & \first{0.2488$\pm$0.0021} \\
    GINE                            & 476k                                 & 0.5498$\pm$0.0079            & 0.6346$\pm$0.0071                                                       & 0.3547$\pm$0.0045           & 0.2584$\pm$0.0011 \\
    GatedGCN                        & 509k                                 & 0.5864$\pm$0.0077            & 0.6635$\pm$0.0024                                                       & 0.3420$\pm$0.0013           & 0.2523$\pm$0.0016 \\
    GatedGCN+RWSE                   & 506k                                 & 0.6069$\pm$0.0035            & \first{0.6685$\pm$0.0062}                                               & 0.3357$\pm$0.0006           & 0.2529$\pm$0.0009 \\ \midrule
    Transformer+LapPE               & 488k                                 & 0.6326$\pm$0.0126            & -                                                                       & 0.2529$\pm$0.0016           & - \\
    SAN+LapPE                       & 493k                                 & 0.6384$\pm$0.0121            & -                                                                       & 0.2683$\pm$0.0043           & - \\
    SAN+RWSE                        & 500k                                 & 0.6439$\pm$0.0075            & -                                                                       & 0.2545$\pm$0.0012           & - \\
    \bottomrule
    \end{tabular}
    }
    \end{adjustwidth}
\end{table}

\begin{proof}
Equivariant \DS{} has the following form $\mF(\mX)=\mL^{\ds}_m \circ \nu \circ \cdots \circ \nu \circ \mL^{\ds}_1(\mX)$, where $\mL^{\ds}_i$ is the linear permutation equivariant layer and $\nu$ is an entrywise nonlinear activation layer.
Recall that the linear equivariant layer has the form $\mL^{\ds}_i(\mX)=\mX \boldsymbol{A}+\frac{1}{n} \mathbf{1 1}^T \mX \mB+\mathbf{1} \boldsymbol{c}^T$. 
As one can use the same nonlinear entrywise activation layer $\nu$ in MPNN + VN, it suffices to prove that MPNN + VN can compute linear permutation equivariant layer $\mL^{\ds}$. Now we show that 2 layers of MPNN + VN can exactly simulate any given linear permutation equivariant layer $\mL^{\ds}$.

Specifically, at layer 0, we initialized the node features as follows: The VN node feature is set to 0, while the node feature for the $i$-th graph node is set up as $\vx_i \in \mathbb{R}^d$.  

At layer 1: VN node feature is $\frac{1}{n} \mathbf{1 1}^T \mX$, average of node features. The collection of features over $n$ graph node feature is $\mX \mA$. 
We only need to transform graph node features by a linear transformation, and set the VN feature as the average of graph node features in the last iteration. Both can be exactly implemented in \Cref{def:simplified-hetero-mpnn-vn} of simplified heterogeneous MPNN + VN. %

At layer 2: VN node feature is set to be 0, and the graph node feature is $\mX \boldsymbol{A}+\frac{1}{n} \mathbf{1 1}^T \mX \mB+\mathbf{1} \boldsymbol{c}^T$. Here we only need to perform the matrix multiplication of the VN feature with $\mB$, as well as add a bias $\vc$. This can be done by implementing a linear function for $\gamma_{\gn}$. %

It is easy to see the width required for MPNN + VN to simulate \DS{} is constant.
Thus, one can use 2 layers of MPNN + VN to compute linear permutation equivariant layer $\mL^{\ds}_i$, which implies that MPNN + VN can simulate 1 layer of \DS{} exactly with constant depth and constant width (independent of $n$). Then by the universality of \DS{}, stated in \Cref{thm:ds-universality}, we conclude that MPNN + VN is also permutation equivariant universal, which implies that the constant layer of MPNN + VN with $O(n^d)$ width is able to approximate any continuous equivariant maps. As the self-attention layer $\mL$ and full transformer are both continuous and equivariant, they can be approximated by MPNN + VN arbitrarily well.      
\end{proof}
Thanks to the connection between MPNN + VN with \DS{}, there is no extra assumption on $\mc{X}$ %
except for being compact. The drawback on the other hand is that the upper bound on the computational complexity needed to approximate the self-attention with wide MPNN + VN is worse than directly computing self-attention when $d>2$. 

\section{$O(n)$ Depth $O(1)$ Width MPNN + VN}
\label{sec:deep-vn}

The previous section shows that we can approximate a full attention layer in Transformer using MPNN with $O(1)$ depth but $O(n^d)$ width where $n$ is the number of nodes and $d$ is the dimension of node features. 
In practice, it is not desirable to have the width depend on the graph size. %

In this section, we hope to study MPNN + VNs with $O(1)$ width and their ability to approximate a self-attention layer in the Transformer. However, this appears to be much more challenging. Our result in this section only shows that for a rather restrictive family of input graphs (see Assumption \ref{AS-1} below), we can approximate a full self-attention layer of transformer with an MPNN + VN of $O(1)$ width and $O(n)$ depth. We leave the question of MPNN + VN's ability in approximate transformers for more general families of graphs for future investigation.

We first introduce the notion of $(\mv,\delta)$ separable node features. This is needed to ensure that VN can approximately select one node feature to process at each iteration with attention $\alpha_{\vn}$, the self-attention in the virtual node.

 \begin{definition}[$(\mv,\delta)$ separable by $\alpha$]
\label{vdeltaseparable}
Given a graph $G$ of size $n$ and a fixed $\mV \in \mb{R}^{n\times d} = [\vv_1, ..., \vv_n]$ 
and $\bar{\alpha} \in \mc{A}$, we say node feature $\mX\in \mb{R}^{n\times d}$ of $G$ is 
$(\mv,\delta)$ separable by some $\bar{\alpha}$ if the following holds. For any node feature $\vx_i$, there exist weights $\mW^{\bar{\alpha}}_K, \mW^{\bar{\alpha}}_Q$ in attention score $\bar{\alpha}$ 
such that $\bar{\alpha}(\vx_i, \vv_i) > \max_{j\neq i} \bar{\alpha}(\vx_j, \vv_i) + \delta$. We say set $\mathcal{X}$ is $(\mv,\delta)$ 
separable by $\bar{\alpha}$ if every element $\mX \in \mathcal{X}$ is $(\mv,\delta)$ separable by $\bar{\alpha}$.
 \end{definition}

\begin{table}[t]
    \caption{Test performance in graph-level OGB benchmarks \cite{hu2020open}. Shown is the mean~$\pm$~s.d.~of 10 runs. %
    }
    \label{tab:results_ogb}
    \tabtopvspace
    \centering
    \fontsize{10pt}{10pt}\selectfont
    \begin{tabular}{lccccc}\toprule
    \multirow{2}{*}{\textbf{Model}} &\textbf{ogbg-molhiv} &\textbf{ogbg-molpcba} &\textbf{ogbg-ppa} &\textbf{ogbg-code2} \\\cmidrule{2-5}
                               & \textbf{AUROC $\uparrow$}    & \textbf{Avg.~Precision $\uparrow$} & \textbf{Accuracy $\uparrow$} & \textbf{F1 score $\uparrow$} \\\midrule
    GCN                        & 0.7606 $\pm$ 0.0097          & 0.2020 $\pm$ 0.0024          & 0.6839 $\pm$ 0.0084          & 0.1507 $\pm$ 0.0018 \\
    GCN+virtual node           & 0.7599 $\pm$ 0.0119          & 0.2424 $\pm$ 0.0034          & 0.6857 $\pm$ 0.0061          & 0.1595 $\pm$ 0.0018 \\
    GIN                        & 0.7558 $\pm$ 0.0140          & 0.2266 $\pm$ 0.0028          & 0.6892 $\pm$ 0.0100          & 0.1495 $\pm$ 0.0023 \\
    GIN+virtual node           & 0.7707 $\pm$ 0.0149          & 0.2703 $\pm$ 0.0023          & 0.7037 $\pm$ 0.0107          & 0.1581 $\pm$ 0.0026 \\
    \midrule
    SAN                        & 0.7785 $\pm$ 0.2470          & 0.2765 $\pm$ 0.0042          & --                           & -- \\
    GraphTrans (GCN-Virtual)   & --                           & 0.2761 $\pm$ 0.0029          & --                           & 0.1830 $\pm$ 0.0024 \\
    K-Subtree SAT              & --                           & --                           & 0.7522 $\pm$ 0.0056          & 0.1937 $\pm$ 0.0028 \\
    \method                    & 0.7880 $\pm$ 0.0101          & 0.2907 $\pm$ 0.0028          & 0.8015 $\pm$ 0.0033          & 0.1894 $\pm$ 0.0024 \\ \midrule
    MPNN + VN + NoPE           & 0.7676 $\pm$ 0.0172          & 0.2823 $\pm$ 0.0026          & 0.8055 $\pm$ 0.0038          & 0.1727 $\pm$ 0.0017             & \\
    MPNN + VN + PE             & 0.7687 $\pm$ 0.0136          & 0.2848 $\pm$ 0.0026          & 0.8027 $\pm$ 0.0026          & 0.1719 $\pm$ 0.0013             & \\
    \bottomrule
    \end{tabular}
\end{table}

The use of $(\mv,\delta)$ separability is to approximate hard selection function arbitrarily well, which is stated below and proved in \Cref{subsec:assumptions}. 
\begin{restatable}[approximate hard selection]{lemma}{UniformSelection}
\label{lemma-uniform-selection}
Given $\mathcal{X}$ is $(\mv,\delta)$ separable by $\bar{\alpha} $ for some fixed $\mv\in \mb{R}^{n\times d}$, $\bar{\alpha} \in \mc{A}$ and $\delta > 0$, the following holds. For any $\epsilon>0$ and $i\in [n]$, there exists a set of attention weights $\mW_{i, Q}, \mW_{i, K}$ in $i$-th layer of MPNN + VN such that $\alpha_{\vn}(\vx_i, \vv_i ) > 1-\epsilon$ for any $\vx_i \in \mathcal{X}_i$. In other words, we can approximate a hard selection function $f_i(\vx_1, ..., \vx_n) = \vx_i$ arbitrarily well on $\mathcal{X}$ by setting $\alpha_{\vn} = \bar{\alpha}$.  %
\end{restatable}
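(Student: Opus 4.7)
The plan is a standard softmax-sharpening argument applied to the separability hypothesis. First I would unpack $(\mv,\delta)$-separability to obtain, for each fixed index $i\in[n]$, a pair of matrices $\mW_K^{i}, \mW_Q^{i}$ (independent of $\mX\in\mathcal{X}$) for which the unnormalized logit $\alpha'(\vx_i,\vv_i) := \vx_i^T \mW_Q^{i}(\mW_K^{i})^T \vv_i$ exceeds $\max_{j\neq i}\alpha'(\vx_j,\vv_i)$ by at least $\delta>0$, uniformly over all $\mX\in\mathcal{X}$. This uniform gap is the only structural input we need; everything else is softmax bookkeeping.

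Next I would introduce a scaling parameter $t>0$ and define the candidate weights at the $i$-th layer of MPNN + VN as $\mW_{i,Q} := t\,\mW_Q^{i}$ and $\mW_{i,K} := \mW_K^{i}$, so that the query-key product is scaled by $t$. Writing out the softmax normalization of the VN's attention over the $n$ graph nodes when querying with $\vv_i$, I would obtain
\begin{equation*}
\alpha_{\vn}(\vx_i,\vv_i)
=\frac{1}{\displaystyle\sum_{j=1}^{n} \exp\!\bigl(t\,[\alpha'(\vx_j,\vv_i)-\alpha'(\vx_i,\vv_i)]\bigr)}
\;\ge\; \frac{1}{1+(n-1)\,e^{-t\delta}},
\end{equation*}
where the inequality uses the uniform gap $\alpha'(\vx_j,\vv_i)-\alpha'(\vx_i,\vv_i)\le -\delta$ for $j\neq i$ and equals $0$ for $j=i$. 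Choosing $t$ sufficiently large (any $t>\tfrac{1}{\delta}\log\tfrac{(n-1)(1-\epsilon)}{\epsilon}$ works) forces the right-hand side above $1-\epsilon$, giving the claim for every $\vx_i\in\mathcal{X}_i$.

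The argument is essentially mechanical, so there is no serious obstacle; the only subtlety is pinning down the correct reading of the separability hypothesis. In particular, I need the weights $\mW_K^{i},\mW_Q^{i}$ provided by $(\mv,\delta)$-separability to be a \emph{single} pair that realizes the $\delta$-gap simultaneously for every $\mX\in\mathcal{X}$ (with $i$ fixed), since otherwise no single choice of MPNN + VN parameters at layer $i$ can approximate the hard selector $f_i$ uniformly on $\mathcal{X}$. Under that reading (which the subsequent application to constructing $O(n)$-depth, $O(1)$-width MPNN + VN clearly requires), the uniformity of the bound $1+(n-1)e^{-t\delta}$ in $\vx_i$ is automatic because $t$, $\delta$, and $n$ do not depend on $\mX$. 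Finally, I would remark that this scaling trick is exactly the attention analogue of how one sharpens a Boltzmann distribution with low temperature, and that the construction respects AS\ref{AS-2}–AS\ref{AS-3} since we only rescale the query weights and leave the bounded input features untouched.
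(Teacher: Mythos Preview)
Your proposal is correct and follows essentially the same approach as the paper: both scale the attention weights by a large constant to amplify the uniform $\delta$-gap guaranteed by $(\mv,\delta)$-separability, then observe that the resulting softmax score is at least $\frac{e^{c\delta}}{e^{c\delta}+n-1}$ (equivalently $\frac{1}{1+(n-1)e^{-t\delta}}$), which exceeds $1-\epsilon$ for large enough scaling. Your version is in fact more explicit than the paper's about the threshold for $t$ and about the uniformity-in-$\mX$ reading of the separability hypothesis.
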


With the notation set up, We now state an extra assumption needed for deep MPNN + VN case and the main theorem. 
\begin{assumption}\label{AS-1} 
$\mathcal{X}$ is $(\mv,\delta)$ separable by $\bar{\alpha}$ for some fixed $\mv \in \mb{R}^{n\times d}$, $\bar{\alpha}\in \mc{A}$ and $\delta>0$. 
\end{assumption}

\begin{restatable}{theorem}{mainthm}
\label{thm:constant-width}
Assume AS 1-3 hold for the compact set $\mc{X}$ and $\mL$. Given any graph $G$ of size $n$ with node features $\mX\in \mc{X}$, and a self-attention layer $\mL$ on $G$ (fix $\mW_K, \mW_Q, \mW_V$ in $\alpha$), there exists a $O(n)$ layer of heterogeneous MPNN + VN with the specific aggregate/update/message function that can approximate $\mL$ on $\mc{X}$ arbitrarily well. 
\end{restatable}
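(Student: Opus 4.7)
The plan is to simulate $\mL$ one row at a time: for $i=1,\ldots,n$, a block of $O(1)$ heterogeneous MPNN + VN layers computes an approximation $y_i$ of the $i$-th output row $\mL(\mX)_i$ and writes it into a dedicated output channel of graph node $i$. Since each block has constant depth and width (independent of $n$), the total cost is $O(n)$ depth and $O(1)$ width. The two main ingredients are the approximate hard selection from Lemma~\ref{lemma-uniform-selection} under Assumption~\ref{AS-1}, and the fact that every arithmetic primitive we will need (namely $\exp$, multiplication, and division by a quantity bounded away from $0$) is uniformly continuous on the compact domain guaranteed by Assumptions~\ref{AS-2}--\ref{AS-3}, hence uniformly approximable by a constant-size MLP.

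First I would augment each graph node's hidden state with extra channels carrying (i) a pristine copy of $\vx_j$, (ii) a running output $y_j$ initialized to $\vzero$, (iii) the separator identifier $\vv_j$ from Assumption~\ref{AS-1}, and (iv) a scratch channel. The VN gets analogous scratch channels for the currently selected feature and for numerator/denominator accumulators. Non-active channels are preserved across layers by letting $\gamma_{\gn}$ and $\gamma_{\vn}$ act as the identity on them, which keeps the width $O(1)$ in $n$.

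Block $i$ is organized into four substeps, each of constant depth. In the \emph{selection-up} substep I invoke Lemma~\ref{lemma-uniform-selection} with the key $\vv_i$ to choose $\mW_Q^{\vn},\mW_K^{\vn}$ so that the VN's softmax attention over graph nodes concentrates on node $i$; the aggregator $\pool\,\phi_{\vngn}$ then returns a vector $\tilde{\vx}_i$ with $\|\tilde{\vx}_i-\vx_i\|<\epsilon_1$ uniformly over $\mc{X}$. In the \emph{broadcast} substep the VN sends $\tilde{\vx}_i$ to every graph node via $\gamma_{\gn}$, so that each node $j$ holds both its preserved $\vx_j$ and the broadcast $\tilde{\vx}_i$. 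In the \emph{weighted-sum} substep every node $j$ uses an MLP inside $\phi_{\vngn}$ to compute $\exp(\alpha'(\tilde{\vx}_i,\vx_j))$ and $\exp(\alpha'(\tilde{\vx}_i,\vx_j))\mW_V\vx_j$; by Assumptions~\ref{AS-2}--\ref{AS-3} these operands lie in a fixed compact set. The VN sums these quantities and a final MLP forms $y_i:=\bigl(\sum_j e^{\alpha'(\tilde{\vx}_i,\vx_j)}\mW_V\vx_j\bigr)\big/\bigl(\sum_j e^{\alpha'(\tilde{\vx}_i,\vx_j)}\bigr)$, using the fact that the denominator is uniformly bounded below by Assumption~\ref{AS-3}. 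In the \emph{selection-down} substep I apply Lemma~\ref{lemma-uniform-selection} once more, this time to let each graph node $j$ use its stored identifier $\vv_j$ together with a broadcast encoding of $i$ to form a soft indicator $g_j\approx \mathbf{1}[j=i]$, so that the update $y_j\leftarrow(1-g_j)y_j + g_j\, y_i$ writes $y_i$ into node $i$'s output channel while leaving every other output channel unchanged up to error $\epsilon_2$.

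After $n$ blocks, node $i$'s output channel stores $y_i\approx\mL(\mX)_i$ for every $i$. Because every map in every block is uniformly continuous on a compact product domain, tightening the per-block tolerances $\epsilon_1,\epsilon_2,\ldots$ keeps the cumulative error below any prescribed $\epsilon$. The main obstacle is controlling the two approximate hard-selection steps simultaneously: Lemma~\ref{lemma-uniform-selection} yields only an $\epsilon$-approximation, so substep~(1) feeds a convex combination of all $\vx_j$ rather than $\vx_i$ exactly into the attention computation of substep~(3), and substep~(4) leaks a small residue into the output channels of all other nodes. I must check that (a) the per-step tolerances can be chosen uniformly in $n$, which follows because Assumption~\ref{AS-1} supplies the same margin $\delta$ for all nodes and all $\mX\in\mc{X}$, and (b) the residues do not accumulate destructively across the $n$ blocks, which holds because each block only writes into one output channel and reads from the preserved $\vx_j,\vv_j$ channels that are never overwritten. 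A secondary technicality is that substep~(4) requires a separability condition on the graph-node side; the basic $(\mv,\delta)$ form of Assumption~\ref{AS-1} suffices for the construction above, and Proposition~\ref{prop:gat-v2-selection} using \gatii{} can be substituted to relax the hypothesis if desired.
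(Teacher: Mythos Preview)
Your approach is sound but takes a different route from the paper. You compute $\mL(\mX)$ \emph{row by row}: block $i$ has the VN gather all contributions $\{e^{\alpha'(\tilde{\vx}_i,\vx_j)}\mW_V\vx_j\}_j$, form the full output $y_i$, and then route $y_i$ back to node $i$ via a second approximate selection. The paper instead accumulates \emph{term by term}: at iteration $i$ the VN broadcasts $\tilde{\vx}_i$ and every graph node $j$ simply adds the single term $e^{\alpha'(\vx_j,\tilde{\vx}_i)}\mW_V\tilde{\vx}_i$ to its own running numerator and $e^{\alpha'(\vx_j,\tilde{\vx}_i)}$ to its own running denominator, with one final normalization layer at the end. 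Because each node is always building its own output, the paper never has to address a result to a specific node, so it needs only one approximate selection per iteration rather than two. This makes the construction and the error bookkeeping noticeably cleaner.

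Your selection-down substep can be made to work, but two points need tightening. First, your justification in (b) is not quite right: with a soft indicator, each block writes into \emph{every} output channel, not just one, so across $n$ blocks the drift on each $y_j$ is $O(n\epsilon_2)$; you must take $\epsilon_2=O(\epsilon/n)$, which is permissible since the construction already depends on $n$, but it should be said. Second, a graph node $j$ can compute the score $\bar{\alpha}'(\vx_j,\vv_i)$ locally but cannot convert it into a softmax indicator without the normalizer $Z=\sum_k e^{c\bar{\alpha}'(\vx_k,\vv_i)}$; you therefore need an additional VN round-trip inside substep~(4) to broadcast $Z$. This still fits in $O(1)$ layers per block, but it is worth making explicit. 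The paper's term-by-term accumulation sidesteps both issues entirely.
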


The proof is presented in the \Cref{sec:approximate-full-self-attention}. On the high level, we can design an MPNN + VN where the $i$-th layer will select $\tilde{\vx}_i$, an approximation of $\vx_i$ via attention mechanism, enabled by \Cref{lemma-uniform-selection}, and send $\tilde{\vx}_i$ to the virtual node. Virtual node will then pass the $\tilde{\vx}_i$ to all graph nodes and computes the approximation of $e^{\alpha(\vx_i, \vx_j)}, \forall j\in [n]$. Repeat such procedures $n$ times for all graph nodes, and finally, use the last layer for attention normalization. A slight relaxation of AS\ref{AS-1} is also provided in \Cref{sec:connection_missing_proofs}.

\begin{table}[th!]
    \caption{Evaluation on PCQM4Mv2~\cite{hu2021ogb} dataset.
    For \method evaluation, we treated the \emph{validation} set of the dataset as a test set, since the \emph{test-dev} set labels are private.
    }
    \label{tab:results_pcqm4m}
    \centering
    \fontsize{10.0pt}{10.0pt}\selectfont
    \begin{tabular}{lccccc}\toprule
    \label{tab:pcqm}
    \multirow{2}{*}{\textbf{Model}} &\multicolumn{3}{c}{\textbf{PCQM4Mv2}} \\\cmidrule{2-5}
    &\textbf{Test-dev MAE $\downarrow$} &\textbf{Validation MAE $\downarrow$} &\textbf{Training MAE} &\textbf{\# Param.} \\\midrule
    GCN &0.1398 &0.1379 & n/a &2.0M \\
    GCN-virtual &0.1152 &0.1153 & n/a &4.9M \\
    GIN &0.1218 &0.1195 & n/a &3.8M \\
    GIN-virtual &0.1084 &0.1083 & n/a &6.7M \\\midrule
    GRPE~\cite{park2022grpe} &0.0898 &0.0890 & n/a &46.2M \\
    EGT~\cite{hussain2022global} &0.0872 &0.0869 & n/a &89.3M \\
    Graphormer~\cite{shi2022benchmarking} &n/a &0.0864 &0.0348 & 48.3M \\
    \method-small &n/a &0.0938 &0.0653 &6.2M \\
    \method-medium &n/a &0.0858 &0.0726  &19.4M \\ \midrule
    MPNN + VN + PE (small)  &n/a &0.0942 &0.0617 &5.2M \\
    MPNN + VN + PE (medium)  &n/a &0.0867 &0.0703 &16.4M \\
    MPNN + VN + NoPE (small)  &n/a &0.0967 &0.0576 &5.2M \\
    MPNN + VN + NoPE (medium)  &n/a &0.0889 &0.0693 &16.4M \\
    \bottomrule
    \end{tabular}
\end{table}

\section{Experiments}
We benchmark MPNN + VN for three tasks, long range interaction modeling in \Cref{subsec:lrgb} and OGB regression tasks in \Cref{subsec:ogb}. The code is available \url{https://github.com/Chen-Cai-OSU/MPNN-GT-Connection}. 

\subsection{Dataset Description}
\textbf{ogbg-molhiv} and \textbf{ogbg-molpcba} \cite{hu2020open} are molecular property prediction datasets
adopted by OGB from MoleculeNet. These datasets use a common node (atom) and edge (bond)
featurization that represent chemophysical properties. 
The prediction task of ogbg-molhiv is a binary
classification of molecule's fitness to inhibit HIV replication. The ogbg-molpcba, derived from
PubChem BioAssay, targets to predict the results of 128 bioassays in the multi-task binary classification
setting.

\textbf{ogbg-ppa} \cite{wu2021representing} consists of protein-protein association (PPA) networks derived from
1581 species categorized into 37 taxonomic groups. Nodes represent proteins and edges encode the
normalized level of 7 different associations between two proteins. The task is to classify which of the
37 groups does a PPA network originate from.

\textbf{ogbg-code2} \cite{wu2021representing} consists of abstract syntax trees (ASTs) derived from the source
code of functions written in Python. The task is to predict the first 5 subtokens of the original
function's name. %

\textbf{OGB-LSC PCQM4Mv2} \cite{hu2021ogb} is a large-scale molecular dataset that shares the
same featurization as ogbg-mol* datasets. It consists of 529,434 molecule graphs. The task is to predict the HOMO-LUMO gap, a quantum physical property originally calculated using Density Functional Theory. True labels for original
test-dev and test-challange dataset splits are kept private by the OGB-LSC challenge organizers.
Therefore for the purpose of this chapter, we used the original validation set as the test set, while we
left out random 150K molecules for our validation set.

\subsection{MPNN + VN for LRGB Datasets}
\label{subsec:lrgb}
We experiment with MPNN + VN for Long Range Graph Benchmark (LRGB) datasets. %
Original paper \cite{dwivedi2022long} observes that GT outperforms MPNN on 4 out of 5 datasets, among which GT shows significant improvement over MPNN on \pepfunc and \pepstruct for all MPNNs. To test the effectiveness of the virtual node, we take the original code and modify the graph topology by adding a virtual node and keeping the hyperparameters of all models unchanged. 

Results are in \Cref{tab:experiments_peptides}. Interestingly, such a simple change can boost MPNN + VN by a large margin on \pepfunc and \pepstruct. Notably, with the addition of VN, GatedGCN + RWSE (random-walk structural encoding) after augmented by VN {\bf outperforms all transformers} on \pepfunc, and GCN outperforms transformers on \pepstruct. 

\subsection{Stronger MPNN + VN Implementation}
\label{subsec:ogb}
Next, by leveraging the modularized implementation from GraphGPS \cite{rampavsek2022recipe}, we implemented a version of MPNN + VN with/without extra positional embedding. Our goal is not to achieve SOTA but instead to push the limit of MPNN + VN and better understand the source of the performance gain for GT. 
In particular, we replace the GlobalAttention Module in GraphGPS with \DS{}, which is equivalent to one specific version of MPNN + VN. We tested this specific version of MPNN + VN on 4 OGB datasets, both with and without the use of positional embedding. %
The results are reported in Table \ref{tab:results_ogb}. Interestingly, even without the extra position embedding, our MPNN + VN is able to further improve over the previous GCN + VN \& GIN + VN implementation. 
The improvement on \textbf{ogbg-ppa} is particularly impressive, which is from 0.7037 to 0.8055. 
Furthermore, it is important to note that while MPNN + VN does not necessarily outperform GraphGPS, which is a state-of-the-art architecture using both MPNN, Position/structure encoding and Transformer, the difference is quite small -- this however, is achieved by a simple MPNN + VN architecture. %

We also test MPNN + VN on large-scale molecule datasets PCQMv2, which has 529,434 molecule graphs. 
We followed \cite{rampavsek2022recipe}
and used the original validation set as the test set, while we left out random 150K molecules for our validation set. 
As we can see from \Cref{tab:pcqm}, MPNN + VN + NoPE performs significantly better than the early MPNN + VN implementation: GIN + VN and GCN + VN. 
The performance gap between GPS on the other hand is rather small: 0.0938 (GPS) vs. 0.0942 (MPNN + VN + PE) for the small model and 0.0858 (GPS) vs. 0.0867 (MPNN + VN + PE) for the medium model.

\subsection{Connection to Over-Smoothing Phenomenon}
\label{subsec:over-smoothing}
Over-smoothing refers to the phenomenon that deep GNN will produce same features at different nodes after too many convolution layers. Here we draw some connection between VN and common ways of reducing over-smoothing. We think that using VN can potentially help alleviate the over-smoothing problem. In particular, we note that the use of VN can simulate some strategies people use in practice to address over-smoothing. We give two examples below. 

Example 1: In \cite{zhao2019pairnorm}, the two-step method (center \& scale) PairNorm is proposed to reduce the over-smoothing issues. In particular, PairNorm consists of 1) Center and 2) Scale

$$\tilde{\mathbf{x}}_i^c =\tilde{\mathbf{x}}_i-\frac{1}{n} \sum_i \tilde{\mathbf{x}}_i$$

$$\dot{\mathbf{x}}_i = s \cdot \frac{\tilde{\mathbf{x}}_i^c}{\sqrt{\frac{1}{n} \sum_i \left||\tilde{\mathbf{x}}_i^c\right||_2^2}}$$

Where $\tilde{\mathbf{x}}$ is the node features after graph convolution and $s$ is a hyperparameter. The main component for implementing PairNorm is to compute the mean and standard deviation of node features. For the mean of node features, this can be exactly computed in VN. For standard deviation, VN can arbitrarily approximate it using the standard universality result of MLP [5]. If we further assume that the standard deviation is lower bounded by a constant, then MPNN + VN can arbitrarily approximate the PairNorm on the compact set. 

Example 2: In \cite{yang2020revisiting} mean subtraction (same as the first step of PairNorm) is also introduced to reduce over-smoothing. As mean subtraction can be trivially implemented in MPNN + VN, arguments in \cite{yang2020revisiting} (with mean subtraction the revised power Iteration in GCN will lead to the Fiedler vector) can be carried over to MPNN + VN setting. 

In summary, introducing VN allows MPNN to implement key components of \cite{yang2020revisiting,zhao2019pairnorm}, we think this is one reason why we observe encouraging empirical performance gain of MPNN + VN. 

\section{On the Limitation of MPNN + VN}
\label{sec:limitation-of-approximate-transformer}
Although we showed that in the main part of this chapter, MPNN + VN of varying depth/width can approximate the self-attention of full/linear transformers, this does not imply that there is no difference in practice between MPNN + VN and GT. Our theoretical analysis mainly focuses on approximating self-attention without considering computational efficiency. In this section, we mention a few limitations of MPNN + VN compared to GT.  

\subsection{Representation Gap}
The main limitation of deep MPNN + VN approximating full self-attention is that we require a quite strong assumption: we restrict the variability of node features in order to select one node feature to process each iteration. Such assumption is relaxed by employing stronger attention in MPNN + VN but is still quite strong. 

For the large width case, the main limitation is the computational complexity: even though the self-attention layer requires $O(n^2)$ complexity, to approximate it in wide MPNN + VN framework, the complexity will become $O(n^d)$ where $d$ is the dimension of node features.

We think such limitation shares a similarity with research in universal permutational invariant functions. Both \DS{} \cite{zaheer2017deep} and Relational Network \cite{santoro2017simple} are universal permutational invariant architecture but there is still a representation gap between the two \cite{zweig2022exponential}. Under the restriction to analytic activation functions, one can construct a symmetric function acting on sets of size $n$ with elements in dimension $d$, which can be efficiently
approximated by the Relational Network, but provably requires width exponential in $n$ and $d$ for the \DS{}. We believe a similar representation gap also exists between GT and MPNN + VN and leave the characterization of functions lying in such gap as the future work. 

\subsection{On the Difficulty of Approximating Other Linear Transformers}
\label{subsec:ohter-linear-transformer}
In \Cref{sec:shallow-narrow-attention}, we showed MPNN + VN of $O(1)$ width and depth can approximate the self-attention layer of one type of linear transformer, Performer. The literature on efficient transformers is vast \cite{tay2020efficient} and we do not expect MPNN + VN can approximate many other efficient transformers. Here we sketch a few other linear transformers that are hard to  approximate by MPNN + VN of constant depth and width. %

Linformer \cite{wang2020linformer} projects the $n\times d$ dimension keys and values to $k\times d$ suing additional projection layers, which in graph setting is equivalent to graph coarsening. As MPNN + VN still operates on the original graph, it fundamentally lacks the key component to approximate Linformer.

We consider various types of efficient transformers effectively generalize the virtual node trick. By first switching to a more expansive model and reducing the computational complexity later on, efficient transformers effectively explore a larger model design space than MPNN + VN, which always sticks to the linear complexity.

\subsection{Difficulty of Representing SAN Type Attention}
In SAN \cite{kreuzer2021rethinking}, different attentions are used conditional on whether an edge is presented in the graph or not, detailed below. One may wonder whether we can approximate such a framework in MPNN + VN. 

In our proof of using MPNN + VN to approximate regular GT, we mainly work with \Cref{def:simplified-hetero-mpnn-vn} where we do not use any \gngn{}   edges and therefore not leverage the graph topology. It is straightforward to use \gngn{} edges and obtain the different message/update/aggregate functions for \gngn{} edges non-\gngn{} edges. Although we still achieve the similar goal of SAN to condition on the edge types, it turns out that we can not arbitrarily approximate SAN. 

Without loss of generality, SAN uses two types of attention depending on whether two nodes are connected by the edge. Specifically, 
\begin{equation}
\begin{aligned}
& \hat{\boldsymbol{w}}_{i j}^{k, l}=\left\{\begin{array}{lr}
\frac{\boldsymbol{Q}^{1, k, l} \boldsymbol{h}_i^l \circ \boldsymbol{K}^{1, k, l} \boldsymbol{h}_j^l \circ \boldsymbol{E}^{1, k, l} \boldsymbol{e}_{i j}}{\sqrt{d_k}} & \text { if } i \text { and } j \text { are connected in sparse graph } \\
\frac{\boldsymbol{Q}^{2, k, l} \boldsymbol{h}_i^l \circ \boldsymbol{K}^{2, k, l} \boldsymbol{h}_j^l \circ \boldsymbol{E}^{2, k, l} \boldsymbol{e}_{i j}}{\sqrt{d_k}} & \text { otherwise }
\end{array}\right\} \\
& w_{i j}^{k, l}=\left\{\begin{array}{cc}
\frac{1}{1+\gamma} \cdot \operatorname{softmax}\left(\sum_{d_k} \hat{\boldsymbol{w}}_{i j}^{k, l}\right) & \text { if } i \text { and } j \text { are connected in sparse graph } \\
\frac{\gamma}{1+\gamma} \cdot \operatorname{softmax}\left(\sum_{d_k} \hat{\boldsymbol{w}}_{i j}^{k, l}\right) & \text { otherwise }
\end{array}\right\}
\end{aligned}
\end{equation}
where $\circ$ denotes element-wise multiplication and $\boldsymbol{Q}^{1, k, l}, \boldsymbol{Q}^{2, k, l}, \boldsymbol{K}^{1, k, l}, \boldsymbol{K}^{2, k, l}, \boldsymbol{E}^{1, k, l}, \boldsymbol{E}^{2, k, l} \in$ $\mathbb{R}^{d_k \times d}$. $\gamma \in \mathbb{R}^{+}$is a hyperparameter that tunes the amount of bias towards full-graph attention, allowing flexibility of the model to different datasets and tasks where the necessity to capture long-range dependencies may vary. 

To reduce the notation clutter, we remove the layer index $l$, and edge features, and also consider only one-attention head case (remove attention index $k$). The equation is then simplified to
\begin{equation}
\label{equ:san-simplified-attention}
\begin{aligned}
& \hat{\boldsymbol{w}}_{i j}=\left\{\begin{array}{lr}
\frac{\boldsymbol{Q}^{1} \boldsymbol{h}_i^l \circ \boldsymbol{K}^{1} \boldsymbol{h}_j^l }{\sqrt{d_k}} & \text { if } i \text { and } j \text { are connected in sparse graph } \\
\frac{\boldsymbol{Q}^{2} \boldsymbol{h}_i^l \circ \boldsymbol{K}^{2} \boldsymbol{h}_j^l}{\sqrt{d_k}} & \text { otherwise }
\end{array}\right\} \\
& w_{i j}=\left\{\begin{array}{cc}
\frac{1}{1+\gamma} \cdot \operatorname{softmax}\left(\sum_{d} \hat{\boldsymbol{w}}_{i j}\right) & \text { if } i \text { and } j \text { are connected in sparse graph } \\
\frac{\gamma}{1+\gamma} \cdot \operatorname{softmax}\left(\sum_{d} \hat{\boldsymbol{w}}_{i j}\right) & \text { otherwise }
\end{array}\right\}
\end{aligned}
\end{equation}
We will then show that \Cref{equ:san-simplified-attention} can not be expressed (up to an arbitrary approximation error) in MPNN + VN framework. To simulate SAN type attention, our MPNN + VN framework will have to first simulate one type of attention for all edges, and then simulate the second type of attention between \gngn{} edges by properly offset the contribution from the first attention. This seems impossible (although we do not have rigorous proof) as we cannot express the difference between two attention in the new attention mechanism.

\section{Related Work}
\textbf{Virtual node in MPNN.}
The virtual node augments the graph with an additional node to facilitate the information exchange among all pairs of nodes. It is a heuristic proposed in \cite{gilmer2017neural} and has been observed to improve the performance in different tasks \cite{hu2021ogb,hu2020open}. Surprisingly, its theoretical properties have received little study. To the best of our knowledge, only a recent paper \cite{hwanganalysis} analyzed the role of the virtual node in the link prediction setting in terms of 1) expressiveness
of the learned link representation and 2) the potential impact on under-reaching and over-smoothing. 

\textbf{Graph transformer.}
Because of the great successes of Transformers in natural language processing (NLP) \cite{vaswani2017attention,wolf2020transformers} and recently in computer vision \cite{dosovitskiy2020image,d2021convit,liu2021swin}, there is great interest in extending transformers for graphs. One common belief of advantage of graph transformer over MPNN is its capacity in capturing long-range interactions while alleviating over-smoothing \cite{li2018deeper,oono2019graph,cai2020note} and over-squashing in MPNN \cite{alon2020bottleneck,topping2021understanding}. 

Fully-connected Graph transformer \cite{dwivedi2020generalization} was introduced with eigenvectors of graph Laplacian as the node positional encoding (PE). Various follow-up works proposed different ways of PE to improve GT, ranging from an invariant aggregation of Laplacian's eigenvectors in SAN \cite{kreuzer2021rethinking}, pair-wise graph distances in Graphormer \cite{ying2021transformers}, relative PE derived from diffusion kernels in GraphiT \cite{mialon2021graphit}, and recently Sign and Basis Net \cite{lim2022sign} with a principled way of handling sign and basis invariance. 
Other lines of research in GT include combining MPNN and GT  \cite{wu2021representing,rampavsek2022recipe}, encoding the substructures \cite{chen2022structure} and efficient graph transformers for large graphs \cite{wunodeformer}.

\textbf{Deep Learning on Sets.} Janossy pooling \cite{murphy2018janossy} is a framework to build permutation invariant architecture for sets using permuting \& averaging paradigm while limiting the number of elements in permutations to be $k < n$.  Under this framework, \DS{} \cite{zaheer2017deep} and PointNet \cite{qi2017pointnet} are recovered as the case of $k=1$. For case $k=2$, self-attention and Relation Network \cite{santoro2017simple} are recovered \cite{wagstaff2022universal}. Although \DS{} and Relation Network \cite{santoro2017simple} are both shown to be universal permutation invariant, recent work \cite{zweig2022exponential} provides a finer characterization on the representation gap between the two architectures.  %

\section{Concluding Remarks}
in this chapter, we study the expressive power of MPNN + VN under the lens of GT. If we target the self-attention layer in Performer and Linear Transformer, one only needs $O(1)$-depth $O(1)$ width for arbitrary approximation error. 
For self-attention in full transformer, we prove that heterogeneous MPNN + VN of either $O(1)$ depth $O(n^d)$ width or $O(n)$ depth $O(1)$ width (under some assumptions) can approximate 1 self-attention layer arbitrarily well. 
Compared to early results \cite{kim2022pure} showing GT can approximate MPNN, our theoretical result draws the connection from the inverse direction. %

On the empirical side, we demonstrate that MPNN + VN remains a surprisingly strong baseline. Despite recent efforts, we still lack good benchmark datasets where GT can outperform MPNN by a large margin. Understanding the inductive bias of MPNN and GT remains challenging. For example, can we mathematically characterize tasks that require effective long-range interaction modeling, and provide a theoretical justification for using GT over MPNN (or vice versa) for certain classes of functions on the space of graphs? We believe making processes towards answering such questions is an important future direction for the graph learning community.    

\section{Missing Proofs}
\label{sec:connection_missing_proofs}
In this section, we show the missing proofs of $O(n)$ heterogeneous MPNN + VN Layer with $O(1)$ width can approximate $1$ self attention layer arbitrarily well.
\label{sec:approximate-full-self-attention}

\subsection{Assumptions}
\label{subsec:assumptions}

A special case of $(\mv,\delta)$ separable is when $\delta=0$, i.e., $\forall i, \bar{\alpha}(\vx_i, \vv_i) > \max_{j\neq i} \bar{\alpha}(\vx_j, \vv_i)$. We provide a geometric characterization of $\mX$ being $(\mv, 0)$ separable. 
\begin{restatable}{lemma}{Geometriccharacterization}
\label{lemma:Geometric-characterization} Given $\bar{\alpha}$ and $\mV$, $\mX$ is $(\mv, 0)$ separable by $\bar{\alpha}$  $\Longleftrightarrow$ $\vx_i$ is not in the convex hull spanned by $\{\vx_j\}_{j\neq i}$. $\Longleftrightarrow$ there are no points in the convex hull of $\{\vx_i\}_{i\in[n]}$.
\end{restatable}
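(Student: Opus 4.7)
The plan is to reduce the attention separability condition to a classical hyperplane‐separation statement about $\vx_i$ relative to the other points, and then invoke the strict separation theorem for convex sets. All three equivalences will follow from this reduction.

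First, I would unpack $(\mv, 0)$-separability. The attention score $\bar{\alpha}(\vu, \vv_i)$ is a softmax whose normalizing denominator (when comparing different query nodes against a fixed key $\vv_i$) does not depend on the query, and softmax is strictly increasing in its input. Hence the inequality $\bar{\alpha}(\vx_i, \vv_i) > \max_{j \neq i} \bar{\alpha}(\vx_j, \vv_i)$ is equivalent to
\begin{equation*}
(\vx_i - \vx_j)^T\, \mW_Q (\mW_K)^T \vv_i \;>\; 0 \qquad \forall j \neq i.
\end{equation*}
Setting $M_i := \mW_Q (\mW_K)^T \vv_i \in \mb{R}^d$, the next key observation is that, provided $\vv_i \neq 0$, the map $(\mW_Q,\mW_K) \mapsto \mW_Q(\mW_K)^T \vv_i$ is surjective onto $\mb{R}^d$: e.g.\ take $\mW_Q = I$ and $\mW_K^T = M_i \vv_i^T / \|\vv_i\|^2$. (If $\vv_i = 0$ then $M_i = 0$ necessarily and the separability condition fails trivially; this degenerate case can be ruled out in the setup, or handled separately.) Therefore $(\mv, 0)$-separability for index $i$ is equivalent to the existence of some $M_i \in \mb{R}^d$ with $(\vx_i - \vx_j)^T M_i > 0$ for every $j \neq i$.

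Next, I would apply the strict hyperplane-separation theorem. The existence of such an $M_i$ is, by a standard convex-analytic argument (Gordan's theorem, or separation of a point from a closed convex set), equivalent to $0 \notin \operatorname{conv}\{\vx_j - \vx_i : j \neq i\}$. Translating by $\vx_i$, this is exactly $\vx_i \notin \operatorname{conv}\{\vx_j : j \neq i\}$, giving the first $\Longleftrightarrow$. For the second $\Longleftrightarrow$, the condition that no $\vx_i$ lies in the convex hull of the remaining points is equivalent to every $\vx_i$ being an extreme point (vertex) of $\operatorname{conv}\{\vx_j : j \in [n]\}$, which is the intended reading of ``no points in the convex hull'' (i.e.\ no $\vx_i$ lies in the relative interior or on a face spanned by the others).

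The main obstacle is largely bookkeeping: (i) verifying that the softmax reduction is clean (no per-query normalization dependence), (ii) confirming surjectivity of $(\mW_Q,\mW_K) \mapsto \mW_Q\mW_K^T \vv_i$ onto $\mb{R}^d$ so that the expressive power of attention matches arbitrary linear functionals, and (iii) invoking the correct strict version of the separation theorem (strict inequality, not $\geq 0$), which requires the target set $\{\vx_j - \vx_i : j \neq i\}$ to be finite, hence its convex hull is compact and strict separation from $0$ applies whenever $0$ is not in it. None of these steps should be difficult; the statement is essentially a packaging of Gordan's alternative into the attention-selection language.
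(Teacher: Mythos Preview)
Your approach is essentially the same as the paper's: reduce the softmax comparison to an inner-product inequality $\langle \vx_i - \vx_j,\, \mW_Q \mW_K^T \vv_i\rangle > 0$, then invoke hyperplane separation to identify this with $\vx_i \notin \operatorname{conv}\{\vx_j\}_{j\neq i}$. If anything you are more careful than the paper's proof, which does not explicitly verify the surjectivity of $(\mW_Q,\mW_K)\mapsto \mW_Q\mW_K^T\vv_i$ that you correctly flag as needed for the converse direction.
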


\begin{proof}
The second equivalence is trivial so we only prove the first equivalence. By definition, $\mX$ is $(\mv, 0)$ separable by $\bar{\alpha}$ $\Longleftrightarrow$ $\bar{\alpha}(\vx_i, \vv_i) > \max_{j\neq i} \bar{\alpha}(\vx_j, \vv_i) \forall i \in [n]$ $\Longleftrightarrow$ $\ip{\vx_i, \mW_Q^{\bar{\alpha}} \mW_K^{{\bar{\alpha}, T}} \vv_i} > \max_{j\neq i} \ip{\vx_j, \mW_Q^{\bar{\alpha}} \mW_K^{{\bar{\alpha}}, T} \vv_i} \forall i \in [n]$.

By denoting the $\vv'_i := \mW_Q^{\bar{\alpha}}\mW_K^{{\bar{\alpha}}, T} \vv_i \in \mb{R}^{d}$, we know that $\ip{\vx_i, \vv'_i} > \max_{j\neq i} \ip{\vx_j, \vv'_i} \forall i \in [n]$, which implies that  %
$\forall i \in [n], \vx_i$ can be linearly seprated from $\{\vx_j\}_{j\neq i}$ $\Longleftrightarrow$ $\vx_i$ is not in the convex hull spanned by $\{\vx_j\}_{j\neq i}$, which concludes the proof. 
\end{proof}

\UniformSelection*
\begin{proof}
Denote $\bar{\alpha}'$ as the unnormalized $\bar{\alpha}$. As $\mathcal{X}$ is $(\mv,\delta)$ separable by $\bar{\alpha}$, by definition we know that $\bar{\alpha}(\vx_i, \vv_i) > \max_{j\neq i }\bar{\alpha}(\vx_j, \vv_i) + \delta$ holds for any $i\in [n]$ and $\vx_i \in \mc{M}$. We can amplify this by multiple the weight matrix in $\bar{\alpha}$ by a constant factor $c$ to make $\bar{\alpha}'(\vx_i, \vv_i)> \max_{j\neq i}\bar{\alpha}'(\vx_j, \vv_i) + c\delta$. This implies that $e^{\bar{\alpha}'(\vx_i, \vv_i)}> e^ {c\delta} \max_{j\neq i}e^{\bar{\alpha}'(\vx_j, \vv_i)}$. This means after softmax, the attention score $\bar{\alpha}(\vx_i, \vv_i)$ will be at least $\frac{e^{c \delta}}{e^{c \delta}+n-1}$. We can pick a large enough $c(\delta, \epsilon)$ such that $\bar{\alpha}(\vx_i, \vv_i) > 1-\epsilon$ for any $\vx_i \in \mathcal{X}_i$ and $\epsilon > 0$.
\end{proof}

\textbf{Proof Intuition and Outline.} On the high level, $i$-th MPNN + VN layer will select $\tilde{\vx}_i$, an approximation $i$-th node feature $\vx_i$ via attention mechanism, enabled by \Cref{lemma-uniform-selection}, and send $\tilde{\vx}_i$ to the virtual node. Virtual node will then pass the $\tilde{\vx}_i$ to all graph nodes and computes the approximation of $e^{\alpha(\vx_i, \vx_j)}, \forall j\in [n]$. Repeat such procedures $n$ times for all graph nodes, and finally, use the last layer for attention normalization. 

The main challenge of the proof is to 1) come up with message/update/aggregation functions for heterogeneous MPNN + VN layer, which is shown in \Cref{subsec-mpnn-form},
and 2) ensure the approximation error, both from approximating Aggregate/Message/Update function with MLP and the noisy input, can be well controlled, which is proved in \Cref{subsec-controling-error}.

We will first instantiate the Aggregate/Message/Update function for virtual/graph nodes in \Cref{subsec-mpnn-form}, and prove that each component can be either exactly computed or approximated to an arbitrary degree by MLP. Then we go through an example in \Cref{subsec-a-running-example} of approximate self-attention layer $\mL$ with $O(n)$ MPNN + VN layers. The main proof is presented in \Cref{subsec-controling-error}, where we show that the approximation error introduced during different steps is well controlled. Lastly, in \Cref{subsec:relax-assumption} we show assumption on node features can be relaxed if a more powerful attention mechanism \gatii \cite{brody2021attentive} is allowed in MPNN + VN.  

\subsection{Aggregate/Message/Update
Functions}
\label{subsec-mpnn-form}

Let $\mc{M}$ be a multiset of vectors in $\R{d}$. 
The specific form of Aggregate/Message/Update for virtual and graph nodes are listed below. Note that ideal forms will be implemented as MLP, which will incur an approximation error that can be controlled to an arbitrary degree. We use $\vz_{\vn}^{(k)}$ denotes the virtual node's feature at $l$-th layer, and $\vz_{i}^{(k)}$ denotes the graph node $i$'s node feature. Iteration index $k$ starts with 0 and the node index starts with 1. 

\subsubsection{virtual node}\label{subsubsec-vn}
At $k$-th iteration, virtual node $i$'s feature $\vz_i^{(k)}$ is a concatenation of three component $[\tilde{\vx}_i, \vv_{k+1}, 0]$ where the first component is the approximately selected node features $\vx_i\in \R{d}$, the second component is the $\vv_i\in \R{d}$ that is used to select the node feature in $i$-th iteration. The last component is just a placeholder to ensure the dimension of the virtual node and graph node are the same. It is introduced to simplify notation.

\emph{Initial feature} is $\vz_{\vn}^{(0)} = [\bm{0}_d, \vv_1, 0]$. 

\emph{Message function + Aggregation function} $\pool_{j \in [n]} \phi_{\vngn}^{(k)}: \mb{R}^{2d+1} \times \mathcal{M} \rightarrow \mb{R}^{2d+1}$ has two cases to discuss depending on value of $k$. For $k=1, 2, ..., n$,
\begin{equation}\label{equ:vn-gn}
\begin{split}
 & \pool_{j \in [n]} \phi_{\vngn}^{(k)}(\vz_{\vn}^{(k-1)}, \{\vz_{i}^{(k-1)}\}_i) = \\
  & \begin{cases} 
  \sum_i \alpha_{\vn}(\vz_{\vn}^{(k-1)}, \vz_i^{(k-1)})\vz_i^{(k-1)} & k = 1, 2, ..., n \\
  \bm{1}_{2d+1} & k = n+1, n+2 \\ 
  \end{cases}
\end{split}
\end{equation}

 where $\vz_{\vn}^{(k-1)} = [\tilde{\vx}_{k-1}, \vv_{k}, 0]$. $\vz_i^{(k-1)} = [\overbrace{\underbrace{\vx_i}_{d \text{ dim}}, ..., ...}^{2d+1 \text{ dim}}]$ is the node $i$'s feature, where the first $d$ coordinates remain fixed for different iteration $k$. 
$\pool_{j \in [n]} \phi_{\vngn}^{(k)}$ use attention $\alpha_{\vn}$ to approximately select $k$-th node feature $[\overbrace{\underbrace{\vx_k}_{d \text{ dim}}, ..., ...}^{2d+1 \text{ dim}}]$. 
Note that the particular form of attention $\alpha_{\vn}$ needed for soft selection is not important as long as we can approximate hard selection arbitrarily well. As the $\vz^{(k-1)}_{\vn}$ contains $\vv_k$ and $\vz_i^{(k-1)}$ contains $\vx_i$ (see definition of graph node feature in \Cref{subsubsec-gn}),  this step can be made as close to hard selection as possible, according to \Cref{lemma-approximation-node-feature}. 

In the case of $k=n+1$,
$\pool_{j \in [n]} \phi_{\vngn}^{(k)}: \underbrace{\mb{R}^{2d+1}}_{\vn} \times \underbrace{\mathcal{M}}_{\text{set of } \gn} \rightarrow \mb{R}^d$ simply returns $\bm{1}_{2d+1}$. This can be exactly implemented by an MLP. 

\emph{Update function $\gamma_{\vn}^{(k)}: \underbrace{\mb{R}^{2d+1}}_{\vn} \times \underbrace{\mb{R}^{2d+1}}_{\gn} \rightarrow \mb{R}^{2d+1}$}:
Given the virtual node's feature in the last iteration, and the selected feature in virtual node $\vy = [\vx_{k}, ..., ...]$ with $\alpha_{\vn}$,
\begin{equation}
\gamma_{\vn}^{(k)}(\cdot, \vy) = 
\begin{cases}
[\vy_{0:d}, \vv_{k+1}, 0] & k=1, ..., n-1 \\
[\vy_{0:d}, \bm{0}_d, 0] & k=n \\
\bm{1}_{2d+1}                         & k=n+1, n+2  \\
\end{cases}
\end{equation}
where $\vy_{0:d}$ denotes the first $d$ channels of $\vy\in \mb{R}^{2d+1}$. $\vy$ denotes the selected node $\vz_i$'s feature in Message/Aggregation function. 
$\gamma_{\vn}^{(k)}$ can be exactly implemented by an MLP for any $k=1, ..., n+2$.

\subsubsection{Graph node}\label{subsubsec-gn}
Graph node $i$'s feature $\vv_i \in \R{2d+1}$ can be thought of as a concatenation of three components $[\underbrace{\vx_i}_{d \text{ dim}}, \underbrace{\tmp{}}_{d \text{ dim}}, \underbrace{\ps{}}_{1 \text{ dim}}]$, where $\vx_i, \in \R{d}, \tmp{} \in \R{d}$ \footnote{\tmp{} technicially denotes the dimension of projected feature by $W_V$ and does not has to be in $\R{d}$. We use $\R{d}$ here to reduce the notation clutter.}, and $\ps{}\in \R{}$. 

In particular, $\vx_i$ is the initial node feature. The first $d$ channel will stay the same until the layer $n+2$. $\tmp{} = \sum_{j \in \text{subset of} [n] } e^{\alpha'_{ij}}\vx_j$ stands for the unnormalized attention contribution up to the current iteration. $\ps{} \in \R{}$ is a partial sum of the unnormalized attention score, which will be used for normalization in the $n+2$-th iteration. 

\emph{Initial feature} $\vz_{\gn}^{(0)} =[\vx_i, \bm{0}_{d}, 0]$.

\emph{Message function + Aggregate function:
$\pool_{j \in [n]} \phi_{\gnvn}^{(k)}: \mb{R}^{2d+1}\times \mb{R}^{2d+1} \rightarrow \mb{R}^{2d+1}$}
is just ``copying the second argument'' since there is just one incoming message from the virtual node, i.e., $\pool_{j \in [n]} \phi_{\gnvn}^{(k)}(\vx, \{\vy\}) = \vy$. This function can be exactly implemented by an MLP. 

\emph{Update function}
$\gamma_{\gn}^{(k)}: \underbrace{\mb{R}^{2d+1}}_{\gn} \times \underbrace{\mb{R}^{2d+1}}_{\vn} \rightarrow \mb{R}^{2d+1}$ is of the following form. 
\begin{equation}\label{eqn-gn-update-function}
\begin{split}
& \gamma_{\gn}^{(k)}([\vx, \tmp{}, \ps{}], \vy) = \\
& \begin{cases}
[\vx, \tmp{}, \ps{}] &k = 1 \\
[\vx, \tmp{} + e^{\alpha'(\vx, \vy_{0:d})}\mW_V \vy_{0:d}, \\ \ps{}+e^{\alpha'(\vx, \vy_{0:d})}] &k = 2, ..., n+1  \\
[\frac{\tmp{}}{\ps{}}, \bm{0}_d, 0] & k=n+2 \\
\end{cases}
\end{split}
\end{equation}
where $\alpha'(\vx, \vy_{0:d})$ is the usual unnormalized attention score. Update function $\gamma_{\gn}^{(k)}$ can be arbitrarily approximated by an MLP, which is proved below. 

\begin{restatable}{lemma}{LemmaApproximateUpdateFunction}
\label{lemma-approximate-update-function}
Update function $\gamma_{\gn}^{(k)}$ can be arbitrarily approximated by an MLP from $\R{2d+1} \times \R{2d+1}$ to $\R{2d+1}$ for all $k=1,..., n+2$.
\end{restatable}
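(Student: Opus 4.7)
The strategy is a case analysis on $k$ combined with the classical universal approximation theorem for MLPs on compact domains. For $k=1$ the update is the identity on the first argument, which is realized exactly by a single linear layer. For $k=n+2$ the map sends $[\vx,\tmp,\ps]$ to $[\tmp/\ps,\bm{0}_d,0]$, which is continuous provided $\ps$ is bounded away from $0$; the zero coordinates are exact. The bulk of the work is the middle range $k=2,\dots,n+1$, where the update is a composition of the primitives $\vy_{0:d}\mapsto \mW_V\vy_{0:d}$, $(\vx,\vy_{0:d})\mapsto \alpha'(\vx,\vy_{0:d})=\vx^{T}\mW_Q\mW_K^{T}\vy_{0:d}$, the scalar exponential $t\mapsto e^{t}$, scalar--vector multiplication, and addition. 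Each of these is continuous, so their composition is continuous, and by the classical universal approximation theorem a two-layer MLP can approximate it to arbitrary precision uniformly on any compact input region.

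First I would verify the compactness of the input region for each $k$. The first $d$ channels of both arguments carry either $\vx_i\in\mc{X}_i$ or an image $\tilde{\vx}_j$ of some node feature extracted by the virtual node; by AS\ref{AS-2} these lie in a Euclidean ball of radius $C_1$. The weight matrices $\mW_V,\mW_Q,\mW_K$ are bounded by AS\ref{AS-3}, so $\alpha'(\vx,\vy_{0:d})$ lies in a bounded interval $[-R,R]$ with $R=R(C_1,C_2)$, hence $e^{\alpha'(\vx,\vy_{0:d})}\in[e^{-R},e^{R}]$, and $e^{\alpha'(\vx,\vy_{0:d})}\mW_V\vy_{0:d}$ is bounded in norm by $e^{R}C_1C_2$. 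A short induction on $k$ then shows that after at most $n+1$ updates $\tmp$ lies in a ball of radius $n e^{R}C_1C_2$ and $\ps$ lies in the interval $[n e^{-R},n e^{R}]$, so the inputs to the layer-$(n+2)$ update form a compact set on which $\ps$ is bounded away from $0$. Applying the universal approximation theorem to the continuous map $\gamma_{\gn}^{(k)}$ restricted to this compact set yields an MLP that approximates it uniformly to any prescribed accuracy.

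The main obstacle I anticipate is the division step at $k=n+2$: we need a \emph{uniform} positive lower bound on $\ps$ to guarantee continuity (and thus approximability) of the reciprocal. This is not automatic from AS\ref{AS-2}--AS\ref{AS-3} alone because the accumulated $\ps$ depends on the $\vy_{0:d}$ delivered by the virtual node via Lemma~\ref{lemma-uniform-selection}, which are only approximations of the true node features and whose errors propagate across iterations. The key observation that resolves this is that $e^{\alpha'(\vx,\vy_{0:d})}\geq e^{-R}>0$ holds regardless of which $\vy_{0:d}$ is actually presented, because the bound only uses AS\ref{AS-2} applied to the virtual-node output together with AS\ref{AS-3}. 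Hence a term-by-term argument yields $\ps\geq n e^{-R}$ uniformly over inputs, which suffices for the reciprocal map to be continuous on the relevant compact set. The separate (and more delicate) question of how the approximation errors introduced here accumulate across $n$ successive MPNN+VN layers is postponed to Section~\ref{subsec-controling-error}; for this lemma it is enough that each individual $\gamma_{\gn}^{(k)}$ is uniformly approximable on its compact input region, which the above yields.
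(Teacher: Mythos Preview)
Your proposal is correct and follows essentially the same approach as the paper's proof: establish that $\gamma_{\gn}^{(k)}$ is continuous and that its arguments $\vx,\tmp,\ps,\vy_{0:d}$ lie in a compact set (via AS\ref{AS-2} and AS\ref{AS-3}), then invoke the universal approximation theorem. You are in fact more explicit than the paper on the key point that $\ps$ must be bounded \emph{away from zero} for the division at $k=n+2$; the paper asserts this but does not spell out the termwise lower bound $e^{-R}$ as you do.
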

\begin{proof}
We will show that for any $k = 1, ..., n+2$, the target function
$\gamma_{\gn}^{(k)}: \mb{R}^{2d+1}\times \mb{R}^{2d+1} \rightarrow \mb{R}^{2d+1}$ is continuous and the domain is compact. By the universality of MLP in approximating continuous function on the compact domain, we know $\gamma_{\gn}^{(k)}$ can be approximated to arbitrary precision by an MLP. 

Recall that
\begin{equation*}
\begin{split}
& \gamma_{\gn}^{(k)}([\vx, \tmp{}, \ps{}], \vy) = \\
& \begin{cases}
[\vx, \tmp{}, \ps{}] &k = 1 \\
[\vx, \tmp{} + e^{\alpha'(\vx, \vy_{0:d})}\mW_V \vy_{0:d}, \\ \ps{}+e^{\alpha'(\vx, \vy_{0:d})}] &k = 2, ..., n+1  \\
[\frac{\tmp{}}{\ps{}}, \bm{0}_d, 0] & k=n+2 \\
\end{cases}
\end{split}
\end{equation*}
it is easy to see that $k=1$, $\gamma_{\gn}^{(1)}$ is continuous. We next show for $k=2, ..., n+2$, $\gamma_{\gn}^{(1)}$ is also continuous and all arguments lie in a compact domain. 

$\gamma_{\gn}^{(k)}$ is continuous because to a) $\alpha'(\vx, \vy)$ is continuous b) scalar-vector multiplication, sum, and exponential are all continuous. Next, we show that four component $\vx, \tmp{}, \ps{}, \\ \vy_{0:d}$ all lies in a compact domain.

$\vx$ is the initial node features, and by AS\ref{AS-2} their norm is bounded so $\vx$ is in a compact domain.

$\tmp{}$ is an approximation of $e^{\alpha'_{i, 1}}\mW_V \vx_1 + e^{\alpha'_{i, 2}}\mW_V \vx_2+...$. As $\alpha'(\vx_i, \vx_j)$ is both upper and lower bounded by AS\ref{AS-3} for all $i, j \in [n]$ and $\vx_i$ is bounded by AS\ref{AS-2}, $e^{\alpha'_{i, 1}}\mW_V \vx_1 + e^{\alpha'_{i, 2}}\mW_V \vx_2+...$ is also bounded from below and above. $\tmp{}$ will also be bounded as we can control the error to any precision. 

$\ps{}$ is an approximation of $e^{\alpha'_{i, 1}} + e^{\alpha'_{i, 2}} + ...$. For the same reason as the case above, $\ps{}$ is also bounded both below and above.

$\vy_{0:d}$ will be $\tilde{\vx}_i$ at $i$-th iteration so it will also be bounded by AS\ref{AS-2}.

Therefore we conclude the proof. 
\end{proof}

\subsection{A Running Example}
\label{subsec-a-running-example}
We provide an example to illustrate how node features are updated in each iteration. 

\textbf{Time $0$}: 
All nodes are initialized as indicated in \Cref{subsec-mpnn-form}. Virtual node feature  $\vz_{\vn}^{(0)} = [\bm{0}_d, \vv_1, 0]$. Graph node feature $\vz_{i}^{(0)} = [\vx_i, \bm{0}_{d}, 0]$ for all $i\in[n]$.

\textbf{Time $1$}:

For virtual node, according to the definition of $\pool_{j \in [n]} \phi_{\vngn}^{(1)}$ in \Cref{equ:vn-gn}, it will pick an approximation of $\vx_1$, i.e. $\tilde{\vx}_1$. Note that the approximation error can be made arbitrarily small. VN's node feature $\vz_{\vn}^{(1)} = [\tilde{\vx}_1, \vv_2, 0]$. 

For $i$-th graph node feature,  $\vz_{\vn}^{(0)} = \bm{1}_{d}$, and $\vz_i^{(0)} = [\vx_i, \bm{0}_{d}, 0]$. According to $\gamma_{\gn}^{(k)}$ in  \Cref{eqn-gn-update-function}, $\vz_i^{(1)} = [\vx_i, \bm{0}_{d}, 0]$.

\textbf{Time $2$}:

For the virtual node feature: similar to the analysis in time 1, VN's feature $\vz_{\vn}^{(2)} = [\tilde{\vx}_2, \vv_3, 0]$ now. Note that the weights and bias in $\pool_{j \in [n]} \phi_{\vngn}^{(2)}$ will be different from those in $\pool_{j \in [n]} \phi_{\vngn}^{(1)}$.

For $i$-th graph node feature, as $\vz_{\vn}^{(1)} = [\tilde{\vx}_1, \vv_2, 0]$ and $\vz^{(1)}_i = [\vx_i, \bm{0}_{d}, 0]$, according to $\gamma_{\gn}^{(k)}$ in  \Cref{eqn-gn-update-function}, $\vz^{(2)}_i =[\vx_i, e^{\widetilde{\alpha'_{i, 1}}}\mW_V \tilde{\vx}_1, e^{\widetilde{\alpha'_{i, 1}}}]$. Here $\widetilde{\alpha'_{i, 1}}:=\alpha'(\vx_i, \tilde{\vx}_1)$. We will use similar notations in later iterations. 
\footnote{To reduce the notation clutter and provide an intuition of the proof, we omit the approximation error introduced by using MLP to approximate
aggregation/message/update function, and assume the aggregation/message/update can be exactly implemented by neural networks. In the proofs, approximation error by MLP is handled rigorously. } 

\textbf{Time $3$}:

Similar to the analysis above, $\vz_{\vn}^{(3)} = [\widetilde{\vx_3}, \vv_4, 0]$.

$\vz_{i}^{(3)} = [\vx_i, e^{\widetilde{\alpha'_{i, 1}}}\mW_V \tilde{\vx}_1 + e^{\widetilde{\alpha'_{i, 2}}}\mW_V \tilde{\vx}_2, e^{\widetilde{\alpha'_{i, 1}}}+e^{\widetilde{\alpha'_{i, 2}}}]$.

\textbf{Time $n$}:

$\vz_{\vn}^{(n)} = [\tilde{\vx}_n, \bm{0}_d, 0]$. 

$\vz_{i}^{(n)} = \vx_i, 
\underbrace{e^{\widetilde{\alpha'_{i, 1}}}\mW_V \tilde{\vx}_1 + ... + e^{\widetilde{\alpha'_{i, n-1}}}\mW_V \widetilde{\vx_{n-1}}}_{n-1 \text{ terms}}, \\
\underbrace{e^{\widetilde{\alpha'_{i, 1}}}+e^{\widetilde{\alpha'_{i, 2}}}+... + e^{\widetilde{\alpha'_{i, n-1}}}]}_{n-1 \text{ terms}}
$.

\textbf{Time $n+1$}:

According to \Cref{subsubsec-vn}, in $n+1$ iteration, the virtual node's feature will be $\bm{1}_{d}$. 

$\vz_{i}^{(n+1)} = [\vx_i, \sum_{k\in [n]} e^{\widetilde{\alpha'_{ik}}}\mW_V\tilde{\vx}_k, \sum_{k\in [n]} e^{\widetilde{\alpha'_{ik}}}]$

\textbf{Time $n+2$ (final layer)}:

For the virtual node, its node feature will stay the same.

For the graph node feature, the last layer will serve as a normalization of the attention score (use MLP to approximate vector-scalar multiplication), and set the last channel to be 0 (projection), resulting in an approximation of $[\vx_i, \frac{\sum_{k\in [n]} e^{\widetilde{\alpha'_{ik}}} \mW_V \tilde{\vx}_k}{\sum_{k\in [n]} e^{\widetilde{\alpha'_{ik}}}}, 0]$. Finally, we need one more linear transformation to make the node feature become $[\frac{\sum_{k\in [n]}  e^{\widetilde{\alpha'_{ik}}} \mW_V \tilde{\vx}_k}{\sum_{k\in [n]} e^{\widetilde{\alpha'_{ik}}}}, \bm{0}_d, 0]$. The first $d$ channel is an approximation of the output of the self-attention layer for node $i$ where the approximation error can be made as small as possible. This is proved in \Cref{sec:approximate-full-self-attention}, and we conclude that heterogeneous MPNN + VN can approximate the self-attention layer $\mL$ to arbitrary precision with $O(n)$ MPNN layers.

\subsection{Controlling Error}\label{subsec-controling-error}

On the high level, there are three major sources of approximation error: 1) approximate hard selection with self-attention and 2) approximate equation $\gamma_{\gn}^{(k)}$ with MLPs, and 3) attention normalization in the last layer. %
In all cases, we aim to approximate the output of a continuous map $\mL_c(\vx)$. However, our input is usually not exact $\vx$ but an approximation of $\tilde{\vx}$. We also cannot access the original map $\mL_c$ but instead, an MLP approximation of $\mL_c$, denoted as $\mL_{\MLP}$. The following lemma allows to control the difference between $\mL_c(\vx)$ and $\mL_{\MLP}(\tilde{\vx})$. 

\begin{lemma}
\label{lemma:approximation-meta-lemma}
Let $\mL_c$ be a continuous map from compact set to compact set in Euclidean space. Let $\mL_{\MLP}$ be the approximation of $\mL_c$ by MLP. If we can control $\norm{\vx - \tilde{\vx} }$ to an arbitrarily small degree, we can then control the error $\norm{\mL_c(\vx)-\mL_{\MLP}(\tilde{\vx})}$ arbitrarily small. 
\end{lemma}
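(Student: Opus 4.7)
The plan is to bound $\|\mL_c(\vx)-\mL_{\MLP}(\tilde{\vx})\|$ by a standard triangle-inequality split into a ``continuity'' term and an ``approximation'' term, and then use compactness to make both small simultaneously. Specifically, for any given target precision $\epsilon>0$, I would write
\begin{equation*}
\|\mL_c(\vx)-\mL_{\MLP}(\tilde{\vx})\| \;\leqslant\; \underbrace{\|\mL_c(\vx)-\mL_c(\tilde{\vx})\|}_{\text{(I) continuity}} \;+\; \underbrace{\|\mL_c(\tilde{\vx})-\mL_{\MLP}(\tilde{\vx})\|}_{\text{(II) MLP approximation}}
\end{equation*}
and aim to bound each summand by $\epsilon/2$.

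For term (I), I would invoke the fact that a continuous function on a compact set is uniformly continuous: since $\mL_c$ is continuous on the compact domain, there exists $\delta>0$ such that whenever $\|\vx-\tilde{\vx}\|<\delta$, we have $\|\mL_c(\vx)-\mL_c(\tilde{\vx})\|<\epsilon/2$. The hypothesis that we can make $\|\vx-\tilde{\vx}\|$ arbitrarily small means we can in particular drive it below this $\delta$. For term (II), I would appeal to the classical universal approximation theorem for MLPs (\cite{cybenko1989approximation,hornik1989multilayer}): since $\mL_c$ is continuous on a compact domain, there exists an MLP $\mL_{\MLP}$ such that $\sup_{\vy}\|\mL_c(\vy)-\mL_{\MLP}(\vy)\|<\epsilon/2$ uniformly on that compact domain. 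Evaluating this uniform bound at $\vy=\tilde{\vx}$ (which lies in the compact domain provided $\tilde{\vx}$ is close enough to $\vx$, which itself lies in the compact domain, or by slightly enlarging the domain to a compact neighborhood) yields the desired bound on (II).

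There is no serious obstacle here; the argument is essentially a two-line triangle-inequality sandwich. The only minor subtlety worth flagging is ensuring that $\tilde{\vx}$ itself lies in a compact set on which the universal approximation bound holds. This is easily handled by fixing in advance a compact enlargement of the original domain (e.g., a closed $\delta_0$-neighborhood for some small $\delta_0$) and requiring $\|\vx-\tilde{\vx}\|<\delta_0$, so that $\tilde{\vx}$ stays within the region where the MLP approximates $\mL_c$ uniformly. With that cosmetic care, both (I) and (II) can be driven below $\epsilon/2$, giving $\|\mL_c(\vx)-\mL_{\MLP}(\tilde{\vx})\|<\epsilon$ as required.
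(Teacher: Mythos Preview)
Your proof is correct and follows essentially the same approach as the paper: a triangle-inequality split into a uniform-continuity term (via Heine--Cantor on a compact domain) and a universal-approximation term. The only cosmetic difference is the choice of intermediate point: you insert $\mL_c(\tilde{\vx})$, so uniform continuity is applied to $\mL_c$ and the MLP bound is evaluated at $\tilde{\vx}$; the paper instead inserts $\mL_{\MLP}(\vx)$, applying uniform continuity to $\mL_{\MLP}$ and evaluating the MLP bound at $\vx$. Your version requires the mild care (which you correctly flag) that $\tilde{\vx}$ stay in the compact domain, whereas the paper's version avoids that but has the MLP's modulus of continuity depend on the particular network chosen---both are easily handled and the arguments are equivalent in substance.
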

\begin{proof}
By triangle inequality $\norm{\mL_c(\vx)-\mL_{\MLP}(\tilde{\vx})} \leq \norm{\mL_c(\vx) - \mL_{\MLP}(\vx))} + \norm{\mL_{\MLP}(\vx) - \mL_{\MLP}(\tilde{\vx})}$. 

For the first term $\norm{\mL_c(\tilde{\vx}) -\mL_{\MLP}(\tilde{\vx})}$, by the universality of MLP, we can control the error $\norm{\mL_c(\tilde{\vx}) -\mL_{\MLP}(\tilde{\vx})}$ in arbitrary degree. 

For the second term $\norm{\mL_{\MLP}(\vx) - \mL_{\MLP}(\tilde{\vx})}$, as $\mL_{\MLP}$ is continuous on a compact domain, it is uniformly continuous by Heine-Cantor theorem. This means that we can control the $\norm{\mL_{\MLP}(\vx) - \mL_{\MLP}(\tilde{\vx})}$ as long as we can control $\norm{\vx - \tilde{\vx}}$, independent from different $\vx$. By assumption, this is indeed the case so we conclude the proof. 
\end{proof}

\begin{remark}
The implication is that when we are trying to approximate the output of a continuous map $\mL_c$ on the compact domain by an MLP $\mL_{\MLP}$, it suffices to show the input is 1) $\norm{\mL_c - \mL_{\MLP}}_{\infty}$ and 2) $\norm{\tilde{\vx}-\vx}$ can be made arbitrarily small. The first point is usually done by the universality of MLP on the compact domain \cite{cybenko1989approximation}. The second point needs to be shown case by case. 

In the \Cref{subsec-a-running-example}, to simplify the notations we omit the error introduced by using MLP to approximate aggregation/message/update functions (continuous functions on the compact domain of $\R{d}$.) in MPNN + VN. \Cref{lemma:approximation-meta-lemma} justify such reasoning.  
\end{remark}

\begin{lemma}[$\tilde{\vx}_i$ approximates $\vx_i$. $\widetilde{\alpha'_{i, j}}$ approximates $\alpha'_{i, j}$.]\label{lemma-approximation-node-feature}
For any $\epsilon>0$ and $x\in \mathcal{X}$, there exist a set of weights for message/aggregate functions of the virtual node such that $||\vx_i - \tilde{\vx}_i||<\epsilon$ and $|\alpha'_{i, j} -\widetilde{\alpha'_{i, j}}| < \epsilon$.
\end{lemma}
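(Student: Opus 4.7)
The lemma has two parts that should be handled in sequence: first bound $\|\vx_i - \tilde{\vx}_i\|$, then use that bound plus MLP universality to bound $|\alpha'_{i,j} - \widetilde{\alpha'_{i,j}}|$.

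\textbf{Step 1: bounding $\|\vx_i - \tilde{\vx}_i\|$.} Recall from the virtual-node aggregation in Eq.~(\ref{equ:vn-gn}) that at iteration $k$, the virtual node forms $\sum_{j} \alpha_{\vn}(\vz_{\vn}^{(k-1)}, \vz_j^{(k-1)})\,\vz_j^{(k-1)}$, and the update $\gamma_{\vn}^{(k)}$ projects the first $d$ coordinates to produce $\tilde{\vx}_k$ (since the first $d$ coordinates of each graph node $\vz_j^{(k-1)}$ equal $\vx_j$). Because $\vz_{\vn}^{(k-1)}$ stores $\vv_k$ in its middle block and the graph nodes store $\vx_j$ in their first block, attention $\alpha_{\vn}$ can be arranged to be exactly the attention $\bar{\alpha}$ of Assumption~\ref{AS-1} acting on $(\vx_j, \vv_k)$. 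Thus \Cref{lemma-uniform-selection} applies, and for any $\epsilon' > 0$ we can choose the query/key weights in $\alpha_{\vn}$ so that $\alpha_{\vn}(\vz_{\vn}^{(k-1)}, \vz_k^{(k-1)}) > 1 - \epsilon'$. Since the remaining attention weights form a sub-probability distribution of mass at most $\epsilon'$, and each $\|\vx_j\| \leq C_1$ by AS\ref{AS-2}, convexity gives
\[
\|\tilde{\vx}_k - \vx_k\| \;\leq\; \sum_{j \neq k} \alpha_{\vn}(\vz_{\vn}^{(k-1)}, \vz_j^{(k-1)})\,\|\vx_j - \vx_k\| \;\leq\; 2 C_1\, \epsilon'.
\]
Choosing $\epsilon' := \epsilon/(2C_1)$ yields $\|\tilde{\vx}_i - \vx_i\| < \epsilon$ uniformly in $\mX \in \mc{X}$. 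I should also verify that the update $\gamma_{\vn}^{(k)}$, which must implement the projection and a swap of $\vv_k \to \vv_{k+1}$, can be realized exactly by an MLP; this is immediate since the operation is linear (with constant shift).

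\textbf{Step 2: bounding $|\alpha'_{i,j} - \widetilde{\alpha'_{i,j}}|$.} The unnormalized attention $\alpha'(\vu, \vv) = \vu^T \mW_Q \mW_K^T \vv$ is bilinear, hence continuous, hence uniformly Lipschitz on any compact set. By AS\ref{AS-2}, $\vx_j$ lies in a compact ball of radius $C_1$, and by Step~1 so does $\tilde{\vx}_j$ (up to the $\epsilon$-neighborhood). Therefore there is a constant $L = L(C_1, C_2)$ with
\[
|\alpha'(\vx_i, \vx_j) - \alpha'(\vx_i, \tilde{\vx}_j)| \;\leq\; L\,\|\vx_j - \tilde{\vx}_j\|.
\]
The value $\widetilde{\alpha'_{i,j}}$ that actually appears inside the graph-node update $\gamma_{\gn}^{(k)}$ is the MLP approximation of $\alpha'(\vx_i, \tilde{\vx}_j)$ obtained when the exact update in Eq.~(\ref{eqn-gn-update-function}) is replaced by its MLP realization (cf.\ \Cref{lemma-approximate-update-function}). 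By the meta-principle in \Cref{lemma:approximation-meta-lemma}, combining the two sources of error, triangle inequality gives
\[
|\alpha'_{i,j} - \widetilde{\alpha'_{i,j}}| \;\leq\; L\,\|\vx_j - \tilde{\vx}_j\| \;+\; \|\alpha' - \alpha'_{\MLP}\|_\infty,
\]
where the first summand can be made $<\epsilon/2$ by Step~1 and the second $<\epsilon/2$ by MLP universality on the compact domain. Taking the maximum of the two weight-selection choices yields the desired simultaneous bound $\|\vx_i - \tilde{\vx}_i\| < \epsilon$ and $|\alpha'_{i,j} - \widetilde{\alpha'_{i,j}}| < \epsilon$.

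\textbf{Main obstacle.} The argument itself is not delicate, but the bookkeeping is: one must confirm that (i) the features fed to $\alpha_{\vn}$ at each iteration really do expose $\vx_j$ and $\vv_k$ in the slots where $\bar{\alpha}$ expects them (so that AS\ref{AS-1} is directly applicable), and (ii) that the compact domain on which the MLP approximation is valid does not shrink or blow up as $k$ advances. The first issue is addressed by the concrete slot layout $[\vx_i, \mathrm{tmp}, \mathrm{ps}]$ defined in \Cref{subsubsec-gn} — the first $d$ coordinates are preserved by $\gamma_{\gn}^{(k)}$ for $k \leq n+1$, so $\vx_j$ is always available to the attention. The second issue reduces to showing that $\tilde{\vx}_j$ stays in a fixed compact ball, which follows since the Step~1 bound $\|\tilde{\vx}_j - \vx_j\| \leq 2C_1 \epsilon'$ is uniform in $j$ and in the input $\mX \in \mc{X}$.
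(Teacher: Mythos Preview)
Your proposal is correct and follows essentially the same route as the paper: invoke \Cref{lemma-uniform-selection} to drive the attention weight on $\vx_i$ above $1-\epsilon'$ and bound $\|\tilde{\vx}_i-\vx_i\|$ via the residual mass times $C_1$, then use bilinearity/Lipschitzness of $\alpha'$ on the compact domain plus a triangle-inequality split against the MLP approximation to control $|\alpha'_{i,j}-\widetilde{\alpha'_{i,j}}|$. Your Step~1 bound $2C_1\epsilon'$ is actually a bit tighter than the paper's $nC_1\epsilon$ (the paper sums $\|\vx_j\|$ over all $j$ rather than using convexity), and your bookkeeping about which argument of $\alpha'$ carries the tilde matches the running example more faithfully than the paper's own proof, but these are cosmetic differences.
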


\begin{proof}
By \Cref{lemma-uniform-selection} We know that $\widetilde{\alpha_{i, j}} := \widetilde{\alpha}(\vx_i,\vx_j) \rightarrow \delta(i-j)$ as $C_3(\epsilon)$ goes to infinity. Therefore we have
\begin{equation}
||\tilde{\vx}_i - \vx_i|| = ||\sum_j \widetilde{\alpha_{i, j}}\vx_j-\vx_i|| = ||\sum (\widetilde{\alpha}_{i, j} - \delta(i-j))\vx_j|| < \epsilon \sum||\vx_j|| < n C_1 \epsilon
\end{equation}
As $n$ and $C_1$ are fixed, we can make the upper bound as small as we want by increasing $C_3$.

$|\alpha'_{i, j} -\widetilde{\alpha'_{i, j}}| =  |\alpha'(\vx_i, \vx_j) - \alpha'_{\MLP}(\tilde{\vx}_i, \vx_j) | = |\alpha'(\vx_i, \vx_j) - \alpha'(\tilde{\vx}_i, \vx_j) |  + |\alpha'(\tilde{\vx}_i, \vx_j) - \alpha'_{\MLP}(\tilde{\vx}_i, \vx_j)|=   |\alpha'(\vx_i - \tilde{\vx}_i, \vx_j)| = (\vx_i- \tilde{\vx}_i)^T\vx_jC_2^2 + \epsilon < nC_1\epsilon C_1C_2^2 + \epsilon = (nC_1^2C_2^2+1)\epsilon$. As $\alpha'_{i, j}, \widetilde{\alpha'_{i, j}}$ is bounded from above and below, it's easy to see that $|e^{\alpha'_{i, j}} -e^{\widetilde{\alpha'_{i, j}}}| = |e^{\alpha'_{i, j}}(1-e^{\alpha'_{i, j}- \widetilde{\alpha'_{i, j}}})| < C(1-e^{\alpha'_{i, j}- \widetilde{\alpha'_{i, j}}})$ can be controlled to arbitrarily degree. 
\end{proof}

\mainthm*
\begin{proof}
$i$-th MPNN + VN layer will select $\tilde{\vx}_i$, an arbitrary approximation $i$-th node feature $\vx_i$ via attention mechanism. This is detailed in the message/aggregation function of the virtual node in \Cref{subsubsec-vn}. Assuming the regularity condition on feature space $\mc{X}$, detailed in AS\ref{AS-1}, the approximation error can be made as small as needed, as shown in \Cref{lemma-uniform-selection,lemma-approximation-node-feature}. 

Virtual node will then pass the $\tilde{\vx}_i$ to all graph nodes, which computes an approximation of $e^{\alpha'(\tilde{\vx}_i, \vx_j)}, \forall j\in [n]$. This step is detailed in the update function $\gamma_{\gn}^{(k)}$ of graph nodes, which can also be approximated arbitrarily well by MLP, proved in \Cref{lemma-approximate-update-function}. By \Cref{lemma:approximation-meta-lemma}, we have an arbitrary approximation of $e^{\alpha'(\tilde{\vx}_i, \vx_j)}, \forall j\in [n]$, which itself is an arbitrary approximation of $e^{\alpha'(\vx_i, \vx_j)}, \forall j\in [n]$. 

Repeat such procedures $n$ times for all graph nodes, we have an arbitrary approximation of $\sum_{k\in [n]} e^{\alpha'_{ik}}\mW_V \vx_k \in \R{d}$ and $\sum_{k\in [n]} e^{\alpha'_{ik}} \in \R{}$. Finally, we use the last layer to approximate attention normalization $L_c(\vx, y) = \frac{\vx}{y}$, where $\vx \in \R{d}, y\in \R{}$. As inputs for attention normalization are arbitrary approximation of $\sum_{k\in [n]} e^{\alpha'_{ik}}\mW_V \vx_k$ and ${\sum_{k\in [n]} e^{\alpha'_{ik}}}$, both of them are lower/upper bounded according to AS\ref{AS-2} and AS\ref{AS-3}. Since the denominator is upper bounded by a positive number, this implies that the target function $L_c$ is continuous in both arguments. By evoking \Cref{lemma:approximation-meta-lemma} again, we conclude that we can approximate its output $\frac{\sum_{k\in [n]} e^{\alpha'_{ik}}\mW_V \vx_k}{{\sum_{k\in [n]} e^{\alpha'_{ik}}}}$ arbitrarily well. This concludes the proof. 

\end{proof}

\subsection{Relaxing Assumptions with More Powerful Attention}
\label{subsec:relax-assumption}
One limitation of \Cref{thm:constant-width} are assumptions on node features space $\mc{X}$: we need to 1) restrict the variability of node feature so that we can select one node feature to process each iteration. 2) The space of the node feature also need to satisfy certain configuration in order for VN to select it.  For 2), we now consider a different attention function for $\alpha_{\vn}$ in MPNN + VN that can relax the assumptions AS\ref{AS-1} on $\mc{X}$.

\textbf{More powerful attention mechanism.} From proof of \Cref{thm:constant-width}, we just need $\alpha(\cdot, \cdot)$ uniformly select every node in $\mX\in \mathcal{X}$. The unnormalized bilinear attention $\alpha'$ is weak in the sense that $f(\cdot) = \ip{\vx_i\mW_Q\mW_K^T, \cdot}$ has a linear level set. Such a constraint can be relaxed via an improved attention module \gatii. Observing the ranking of the attention scores given by \texttt{GAT} \cite{velivckovic2017graph} is unconditioned on the query node, \cite{brody2021attentive} proposed \gatii, a more expressive attention mechanism. 
In particular, the unnormalized attention score $\alpha'_{\text{GATv2}}(\vu, \vv) :=  \va^T \operatorname{LeakyReLU}\left(\mW \cdot\left[\vu \| \vv\right] + \vb \right)$, where $[\cdot || \cdot ]$ is concatenation. We will let $\alpha_{\vn} = \alpha_{\gat}$ to select features in $\pool_{j \in [n]} \phi_{\vngn}^{(k)}$. 

\begin{figure}[hbtp]
  \centering
  \includegraphics[width=.35\linewidth]{./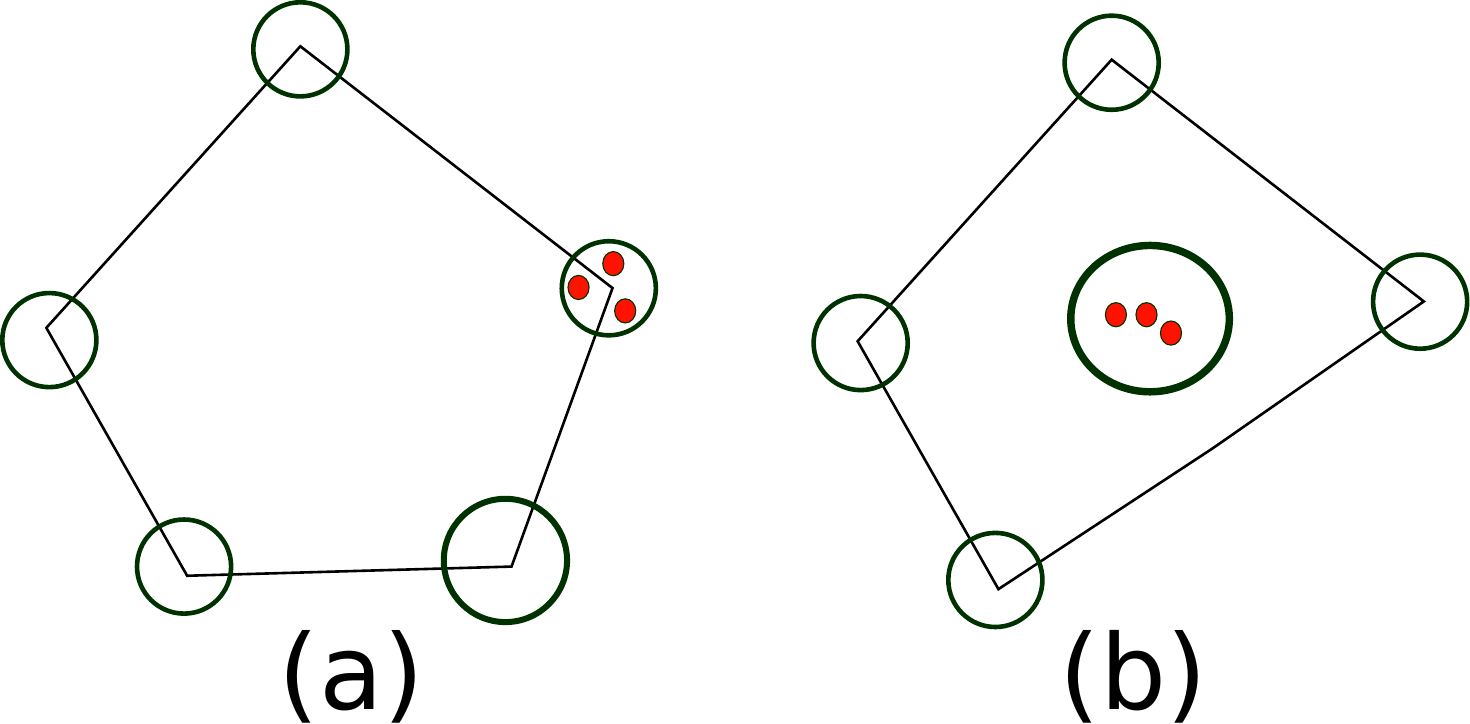}
\caption{In the left figure, we have one example of $\mc{X}$ being $(\mv,\delta)$ separable, for which $\alpha$ can uniformly select any point (marked as red) $\vx_i \in \mc{X}_i$. In the right figure, we change $\alpha_{\vn}$ in MPNN + VN to $\alpha_{\gat}$, which allows us to select more diverse feature configurations. 
}
\label{fig:convergence}
\end{figure}

\begin{restatable}{lemma}{gatvtwoUniversality}
\label{lemma-gatv2-universality}
$\alpha'_{\gat}(\cdot, \cdot)$ can approximate any continuous function from $\R{d} \times \R{d} \rightarrow \R{}$. For any $\vv\in \R{d}$, a restriction of $\alpha'_{\gat}(\cdot, \vv)$ can approximate any continuous function from $\R{d} \rightarrow \R{}$.
\end{restatable}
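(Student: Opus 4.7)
\textbf{Proof proposal for Lemma \ref{lemma-gatv2-universality}.} The plan is to recognize $\alpha'_{\gat}$ as a one-hidden-layer neural network and then invoke a suitable universal approximation theorem. Recall
\[
\alpha'_{\gat}(\vu, \vv) \;=\; \va^T \operatorname{LeakyReLU}\!\bigl(\mW\, [\vu \,\|\, \vv] + \vb\bigr),
\]
where $\mW \in \R{h \times 2d}$, $\vb \in \R{h}$, and $\va \in \R{h}$, with $h$ the hidden width (a tunable hyperparameter). Viewing the concatenation $[\vu \,\|\, \vv] \in \R{2d}$ as a single input, this is exactly a one-hidden-layer feed-forward network $\R{2d} \to \R{}$ with LeakyReLU activation and arbitrary width $h$.

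First I would invoke a universal approximation theorem for shallow networks with non-polynomial activation (in the style of Leshno--Lin--Pinkus--Schocken, or Pinkus's survey): for any continuous non-polynomial activation $\sigma$, one-hidden-layer networks of the form $\vx \mapsto \va^T \sigma(\mW \vx + \vb)$ are dense in $C(K)$ for every compact $K \subset \R{2d}$ in the uniform norm. LeakyReLU (with nonzero negative slope $\neq 1$) is continuous and non-polynomial, so it falls within the hypotheses. Applied to any target continuous $f : \R{2d} \to \R{}$ restricted to a compact set, this gives the first claim after re-interpreting the input as the pair $(\vu, \vv)$.

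For the second claim, fix $\vv \in \R{d}$ and split $\mW = [\mW_1 \mid \mW_2]$ with $\mW_1, \mW_2 \in \R{h \times d}$. Then
\[
\alpha'_{\gat}(\vu, \vv) \;=\; \va^T \operatorname{LeakyReLU}\!\bigl(\mW_1 \vu + \underbrace{\mW_2 \vv + \vb}_{=: \vb'(\vv)}\bigr),
\]
which is a one-hidden-layer LeakyReLU network in $\vu$ alone with effective bias $\vb'(\vv)$. Since $\mW_1$ and $\vb'$ are free parameters (we are free to choose $\mW_2, \vb$ to realize any desired $\vb'$), the same universal approximation statement now applied on $\R{d}$ yields density in $C(K')$ for compact $K' \subset \R{d}$.

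The main subtlety I anticipate is simply citing the right universal-approximation result: the classical Cybenko/Hornik statements are phrased for sigmoidal activations, so one should invoke the generalization (Leshno et al.\ 1993; see also Pinkus 1999) to cover LeakyReLU. Given AS\ref{AS-2} the relevant domains are compact, so uniform approximation on compact sets suffices and no additional integrability/growth assumption is required. Once the activation issue is settled, both statements follow directly from re-parametrization as outlined above.
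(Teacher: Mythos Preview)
Your proposal is correct and follows essentially the same approach as the paper: both recognize $\alpha'_{\gat}$ as a one-hidden-layer network on the concatenated input and invoke universal approximation, then for the second claim restrict the weight block acting on $\vv$ (the paper zeros it out, you absorb it into the bias) to obtain an MLP in $\vu$ alone. Your care in citing Leshno--Lin--Pinkus--Schocken rather than Cybenko for the LeakyReLU activation is in fact more precise than the paper's own citation.
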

\begin{proof}
Any function continuous in both arguments of $\alpha'_{\gat}$ is also continuous in the concatenation of both arguments. As any continuous functions in $\R{2d}$ can be approximated by  $\alpha'_{\gat}$ on a compact domain according to the universality of MLP \cite{cybenko1989approximation}, we finish the proof for the first statement.

For the second statement, we can write $W$ as $2\times 2$ block matrix and restrict it to cases where only $\mW_{11}$ is non-zero. Then we have 
{\fontsize{10}{12}\selectfont
\begin{equation}
\alpha'_{\gat}(\vu, \vv) 
= a^T \operatorname{LeakyReLU}\left(
  \left[ {\begin{array}{cc}
   \mW_{11} & \mW_{12} \\
   \mW_{21} & \mW_{22} \\
  \end{array} } \right] \cdot \left[\begin{array}{c}
   \vu  \\
   \vv \\
  \end{array} \right]+ \vb\right) 
=\va^T \operatorname{LeakyReLU}\left(\mW_{11}\vu + \vb\right) 
\end{equation} 
}
  which gives us an MLP on the first argument $\vu$. By the universality of MLP, we conclude the proof for the second statement. 

\end{proof}

\begin{definition}
\label{delta-sepration}
Given $\delta>0$, We call $\mathcal{X}$ is $\delta$ nonlinearly separable if and only if $\min_{i \neq j} d(\mc{X}_i, \mc{X}_j) > \delta$. 
\end{definition}

\begin{assumption}
\label{AS-4} 
$\mathcal{X}$ is $\delta$ nonlinearly separable for some $\delta>0$. \todo{set counter}
\end{assumption}

\begin{restatable}{proposition}{gatvtwoselection}
\label{prop:gat-v2-selection}
If $\mc{X} \subset \R{n \times d}$ satisfies that $\mc{X}_i$ is $\delta$-separated from $\mc{X}_j$ for any $i, j \in [n]$, the following holds. For any $\mX\in \mc{X}$ and $i\in [n]$, there exist a $\alpha_{\gat}$ to select any $\vx_i \in \mc{X}_i$. This implies that we can arbitrarily approximate the self-attention layer $\mL$ after relaxing AS3 to AS3'. 
\end{restatable}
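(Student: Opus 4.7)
The plan is to reduce to Theorem~\ref{thm:constant-width} by upgrading $\alpha_{\vn}$ in the virtual-node message/aggregation step from the bilinear $\bar{\alpha}$ to $\alpha_{\gat}$, and showing that once this swap is made the conclusion of Lemma~\ref{lemma-uniform-selection} (approximate hard selection) can be recovered from AS4 alone, with no $(\mv,\delta)$-separability hypothesis on $\mc{X}$. Concretely, I would (i) build a continuous Urysohn-type selector on $\R{d}$ that peaks on $\mc{X}_i$ and vanishes on $\bigcup_{j\neq i}\mc{X}_j$, (ii) realize a scaled copy of this selector as the unnormalized GATv2 score using the second part of Lemma~\ref{lemma-gatv2-universality}, and (iii) push the scale through a softmax to concentrate the attention mass on $\vx_i$.

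For step (i), since $\mc{X}_1,\dots,\mc{X}_n$ are compact (AS\ref{AS-2}) and pairwise $\delta$-separated (AS4), I would set $f_i(\vy) := \phi\bigl(d(\vy,\mc{X}_i)/\delta\bigr)$ where $\phi:[0,\infty)\to[0,1]$ is a $1$-Lipschitz cut-off with $\phi(0)=1$ and $\phi(t)=0$ for $t\ge 1$. Then $f_i\equiv 1$ on $\mc{X}_i$, $f_i\equiv 0$ on $\bigcup_{j\neq i}\mc{X}_j$, and $f_i$ is continuous and $[0,1]$-valued on the compact domain $\bigcup_j\mc{X}_j$. For step (ii), take $\vv$ to be the second argument currently held by the virtual node (in the MPNN + VN instantiation this will be $\vv_i$). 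By the second part of Lemma~\ref{lemma-gatv2-universality}, for any scale $c>0$ and any $\eta>0$ I can choose the parameters of $\alpha'_{\gat}(\cdot,\vv)$ so that $\sup_{\vx\in\cup_j\mc{X}_j}|\alpha'_{\gat}(\vx,\vv)-c\,f_i(\vx)|<\eta$, which yields an unnormalized score gap of at least $c-2\eta$ between any $\vx_i\in\mc{X}_i$ and any $\vx\in\bigcup_{j\neq i}\mc{X}_j$.

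For step (iii), exactly mirroring the proof of Lemma~\ref{lemma-uniform-selection}, after softmax the attention weight $\alpha_{\gat}(\vx_i,\vv_i)$ is at least $e^{c-2\eta}/(e^{c-2\eta}+n-1)$, which tends to $1$ as $c\to\infty$ for any fixed $n$. Hence $\alpha_{\gat}$ implements the uniform approximate hard selection asserted by Lemma~\ref{lemma-uniform-selection}, but now only under AS4 rather than AS\ref{AS-1}. Once this is established, the rest of Theorem~\ref{thm:constant-width} uses the hypothesis on $\mc{X}$ only through Lemma~\ref{lemma-uniform-selection}: Lemmas~\ref{lemma-approximate-update-function}, \ref{lemma:approximation-meta-lemma}, and \ref{lemma-approximation-node-feature} rely only on compactness, continuity, and AS\ref{AS-2}--AS\ref{AS-3}, all of which are preserved. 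Substituting $\alpha_{\vn}:=\alpha_{\gat}$ throughout the heterogeneous MPNN + VN layer therefore gives an $O(n)$-depth, $O(1)$-width approximator for $\mL$ to arbitrary precision under AS\ref{AS-2}, AS\ref{AS-3}, and AS4.

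The main obstacle I anticipate is a quantifier issue in step (ii): the universality statement in Lemma~\ref{lemma-gatv2-universality} is stated for a single target function, but here the target $c\,f_i$ itself depends on the scale $c$ we are free to grow. I would resolve this by ordering the quantifiers correctly — first pick the desired final softmax accuracy, then pick $c$ large enough from the bound in step (iii), then invoke universality once on the now-fixed target $c\,f_i$ to choose $\eta\ll c$. A minor additional check is that the downstream error propagation via Lemma~\ref{lemma-approximation-node-feature} still yields the same closed-form bound on $|\alpha'_{i,j}-\widetilde{\alpha'_{i,j}}|$; this should go through unchanged since AS\ref{AS-3} is preserved and $\alpha'$ remains Lipschitz on the compact feature domain.
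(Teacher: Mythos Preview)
Your proposal is correct and follows essentially the same approach as the paper: construct a continuous selector that is large on $\mc{X}_i$ and small on the other $\mc{X}_j$ using the $\delta$-separation, then invoke the universality of $\alpha'_{\gat}$ (Lemma~\ref{lemma-gatv2-universality}) to realize it as the unnormalized attention score. Your treatment is in fact more careful than the paper's, which simply fixes a function with values $1.5$ on $\mc{X}_i$ and $0$ elsewhere and appeals to universality without explicitly handling the scaling/softmax quantifier ordering that you address in your final paragraph.
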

\begin{proof}
For any $i \in [n]$, as $\mc{X}_i$ is $\delta$-separated from other $\mc{X}_j, \forall j\neq i$, we can draw a region $\Omega_i \subset \R{d}$ that contains $\mc{X}_i$ and separate $\mc{X}_i$ from other $\mc{X}_j (j\neq i)$, where the distance from $\mc{X}_i$ from other $\mc{X}_j$ is at least $\delta$ according to the definition of \Cref{delta-sepration}. Next, we show how to construct a continuous function $f$ whose value in $\mc{X}_i$ is at least 1 larger than its values in any other $\mc{X}_j$  $\forall j\neq i$.

We set the values of $f$ in $\mc{X}_i$ to be 1.5 and values of $f$ in $\mc{X}_j, \forall j\neq i$ to be 0. We can then interpolate $f$ in areas outside of $\cup \mc{X}_i$ (one way is to set the values of $f(x)$ based on $d(x, \mc{X}_i$), which results in a continuous function that satisfies our requirement.   
By the universality of $\alpha_{\gat}$, we can approximate $f$ to arbitrary precision, and this will let us select any $\mc{X}_i$. 
\end{proof}

\chapter{Graph Coarsening with Neural Networks}\label{chapter:cg}
\section{Introduction}

In this chapter, we look into another common way of global modeling for large graphs. As large scale-graphs become increasingly ubiquitous in various applications, they pose significant computational challenges to process, extract and analyze information. It is therefore natural to look for ways to simplify the graph while preserving the properties of interest. 

There are two major ways to simplify graphs. First, one may reduce the number of edges, known as graph edge sparsification. It is known that pairwise distance (spanner), graph cut (cut sparsifier), eigenvalues (spectral sparsifier) can be approximately maintained via removing edges. A key result \cite{spielman2004nearly} in the spectral sparsification is that any dense graph of size $N$ can be sparsified to $O(Nlog^cN/\epsilon^2)$ edges in nearly linear time using a simple randomized algorithm based on the effective resistance. 

Alternatively, one could also reduce the number of nodes to a subset of the original node set. The first challenge here is how to choose the topology (edge set) of the smaller graph spanned by the sparsified node set. 
On the extreme, one can take the complete graph spanned by the sampled nodes. However, its dense structure prohibits easy interpretation and poses computational overhead for setting the $\Theta(n^2)$ weights of edges. this chapter focuses on \emph{graph coarsening}, which reduces the number of nodes 
by contracting disjoint sets of connected vertices.

The original idea dates back to the algebraic multigrid literature \cite{ruge1987algebraic} and has found various applications in graph partitioning \cite{hendrickson1995multi, karypis1998fast, kushnir2006fast}, visualization \cite{harel2000fast, hu2005efficient, walshaw2000multilevel} and machine learning \cite{lafon2006diffusion, gavish2010multiscale, shuman2015multiscale}. 

However, most existing graph coarsening algorithms come with two restrictions. First, they are \emph{prespecified} and not adapted to specific data nor different goals. 

Second, most coarsening algorithms set the edge weights of the coarse graph equal to the sum of weights of crossing edges in the original graph. This means the weights of the coarse graph is determined by the coarsening algorithm (of the vertex set), leaving no room for adjustment. 

With the two observations above, we aim to develop a data-driven approach to better assigning weights for the coarse graph depending on specific goals at hand. 

We will leverage the recent progress of deep learning on graphs to develop a framework to learn to assign edge weights \emph{in an unsupervised manner} from a collection of input (small) graphs. This learned weight-assignment map can then be applied to new graphs (of potentially much larger sizes). 

In particular, our contributions are threefold. 

\begin{itemize} %
\item First, depending on the quantity of interest $\mathcal{F}$ (such as the quadratic form w.r.t. Laplace operator), one has to carefully choose projection/lift operator to relate quantities defined on graphs of different sizes. 
We formulate this as the invariance of $\mathcal{F}$ under lift map, and provide three cases of projection/lift map as well as the corresponding operators on the coarse graph. Interestingly, those operators all can be seen as the special cases of doubly-weighted Laplace operators on coarse graphs \cite{horak2013spectra}. 

\item Second, we are the first to propose and develop a framework to learn the edge weights of the coarse graphs via graph neural networks (GNN) in an unsupervised manner. We show convincing results both theoretically and empirically that changing the weights is crucial to improve the quality of coarse graphs. 
many existing graph coarsening algorithms. 

\item Third, through extensive experiments on both synthetic graphs and real networks, we demonstrate that our method \nn significantly improves common graph coarsening methods under different evaluation metrics, reduction ratios, graph sizes, and graph types. It generalizes to graphs of larger size (than the training graphs), adapts to different losses (so as to preserve different properties of original graphs), and scales to much larger graphs than what previous work can handle. Even for losses that are not differentiable w.r.t the weights of the coarse graph, we show training networks with a differentiable auxiliary loss still improves the result. 
\end{itemize}

\section{Proposed Approach: Learning Edge Weight with GNN}
\label{sec:method}

\subsection{High-level overview}
\label{subsec-high-level-overview}

\begin{wrapfigure}{r}{0.28\textwidth}
\vspace{-20pt}
 \begin{center}
 \label{fig-toy-example}
  \includegraphics[width=0.26\textwidth]{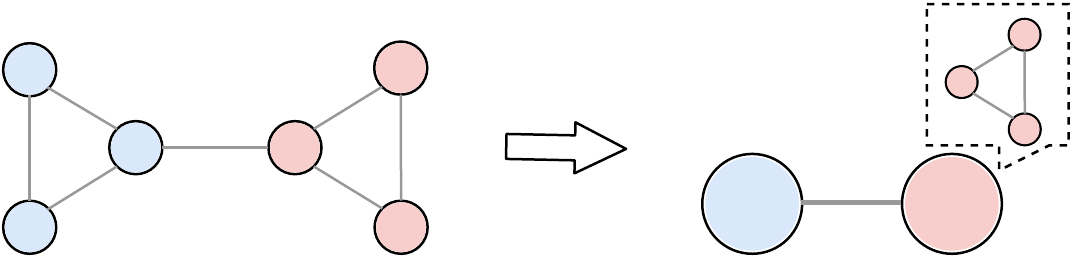}
 \end{center}

\end{wrapfigure}
Our input is a non-attributed (weighted or unweighted) graph $G = (V,E)$. 
Our goal is to construct an appropriate ``coarser" graph $\Ghat = (\Vhat, \Ehat)$ that preserves certain properties of $G$. 
Here, by a ``coarser" graph, we assume that $|\Vhat| << |V|$ and there is a surjective map $\vmap: V \to \Vhat$ that we call the \emph{\vertexmap}. Intuitively, (see figure on the right), for any node $\vhat \in \Vhat$, all nodes $\vmap^{-1}(\vhat) \subset V$ are mapped to this \emph{super-node} $\vhat$ in the coarser graph $\Ghat$. 
We will later propose a GNN based framework that can be trained using a collection of existing graphs \emph{in an unsupervised manner}, so as to construct such a coarse graph $\Ghat$ for a future input graph $G$ (presumably coming from the same family as training graphs) that can preserve properties of $G$ effectively. 

We will in particular focus on preserving properties of the \emph{Laplace operator $\genL$} 
of $G$, which is by far the most common operator associated to graphs, and forms the foundation for spectral methods. 
Specifically, given $G = (V = \{v_1, \ldots, v_{\N}\}, E)$ with $w: E\to \reals$ being the weight function for $G$ (all edges have weight 1 if $G$ is unweighted), let $W$ the corresponding $\N \times \N$ edge-weight matrix where $W[i][j] = w(v_i, v_j)$ if edge $(v_i,v_j)\in E$ and $0$ otherwise. 
Set $D$ to be the $N \times \N$ diagonal matrix with $D[i][i]$ equal to the sum of weights of all edges incident to $v_i$. 
The standard \emph{(un-normalized) combinatorial Laplace operator} of $G$ is then defined as $\combL = D - W$. The \emph{normalized Laplacian} is defined as $\mathcal{L} = D^{-1/2}\combL D^{-1/2} = I - D^{-1/2}WD^{-1/2}$. 

However, to make this problem as well as our proposed approach concrete, various components need to be built appropriately. 
We provide an overview here, and they will be detailed in the remainder of this section. 

\begin{itemize}[itemsep=0pt,topsep=0pt,leftmargin=10pt]
  \item \vspace*{-0.05in}Assuming that the set of super-nodes $\Vhat$ as well as the map $\vmap: V \to \Vhat$ are given, one still need to decide how to set up the connectivity (i.e, edge set $\Ehat$) for the coarse graph $\Ghat = (\Vhat, \Ehat)$. We introduce a natural choice in Section \ref{subsec:coarsegraph}, and provide some justification for this choice. 
  \item As the graph $G$ and the coarse graph $\Ghat$ have the different number of nodes, their Laplace operators $\genL $ and $\sgenL$ of two graphs are not directly comparable. Instead, we will compare $\Fcal(\genL , f)$ and $\Fcal(\sgenL, \fhat)$, where $\Fcal$ is a functional intrinsic to the graph at hand (invariant to the permutation of vertices), such as the quadratic form or Rayleigh quotient. 
  However, it turns out that depending on the choice of $\Fcal$, we need to choose the precise form of the Laplacian $\sgenL$, as well as the (so-called lifting and projection) maps relating these two objects, carefully, so as they are comparable. We describe these in detail in Section \ref{subsec:coarseLaplace}. 
  \item In Section \ref{subsec:weights} we show that adjusting the weights of the coarse graph $\Ghat$ can significantly improve the quality of $\Ghat$. This motivates a learning approach to learn a strategy (a map) to assign these weights from a collection of given graphs. We then propose a GNN-based framework to do so in an unsupervised manner. Extensive experimental studies will be presented in Section \ref{sec:exp}.  
\end{itemize}

\subsection{Construction of Coarse graph } 
\label{subsec:coarsegraph}

Assume that we are already given the set of super-nodes $\Vhat = \{\vhat_1, \ldots, \vhat_{\n}\}$ for the coarse graph $\Ghat$ together with the \vertexmap{} $\pi: V\to \Vhat$ -- There has been much prior work on computing the sparsified set $\Vhat \subset V$ and $\pi$ \cite{loukas2018spectrally, loukas2019graph}; and if
the \vertexmap{} $\pi$ is not given, then we can simply define it by setting $\pi(v)$ for each $v\in V$ to be the nearest neighbor of $v$ in $\Vhat$ in terms of graph shortest path distance in $G$ \cite{dey2013graph}. 

To construct edges for the coarse graph $\Ghat = (\Vhat, \Ehat)$ together with the edge weight function $\what: \Ehat\to \reals$, instead of using a complete weighted graph over $\Vhat$, which is too dense and expensive, we set $\Ehat$ to be those edges ``induced" from $G$ when collapsing each \emph{cluster $\vmap^{-1}(\vhat)$} to its corresponding super-node $\vhat \in \Vhat$: Specifically, $(\vhat, \vhat') \in \Ehat$ if and only if there is an edge $(v, v') \in E$ such that $\vmap(v) = \vhat$ and $\vmap(v') = \vhat'$. 

The weight of this edge is  
$\what(\vhat, \vhat') := \sum_{(v, v')\in E\big(\vmap^{-1}(\vhat), \vmap^{-1}(\vhat') \big) } w(v, v')$

where $E(A, B) \subseteq E$ stands for the set of edges crossing sets $A, B \subseteq V$; 

i.e., $\what(\vhat,\vhat')$ is the total weights of all crossing edges in $G$ between clusters $\vmap^{-1}(\vhat)$ and $\vmap^{-1}(\vhat')$ in $V$. 
We refer to $\Ghat$ constructed this way the \emph{$\Vhat$-induced coarse graph}. 

As shown in \cite{dey2013graph}, if the original graph $G$ is the $1$-skeleton of a hidden space $X$, then this induced graph captures the topological of $X$ at a coarser level if $\Vhat$ is a so-called $\delta$-net of the original vertex set $V$ w.r.t. the graph shortest path metric. 

Let $\What$ be the edge weight matrix, and $\Dhat$ be the diagonal matrix encoding the sum of edge weights incident to each vertex as before. Then the standard combinatorial Laplace operator w.r.t. $\Ghat$ is simply $\scombL = \Dhat - \What$. 

{{\bf Relation to the operator of \cite{loukas2019graph}.~}}
Interestingly, this construction of the coarse graph $\Ghat$ coincides with the coarse Laplace operator for a sparsified vertex set $\Vhat$ constructed by \cite{loukas2019graph}. 
We will use this view of the Laplace operator later; hence we briefly introduce the construction of \cite{loukas2019graph} (adapted to our setting): 

Given the \vertexmap{} $\pi: V\to \Vhat$, we set a $\n \times \N$ matrix $P$ by $P[r, i] = \left\{\begin{array}{ll}\frac{1}{\left|\vmap^{-1}(\vhat_r)\right|} & \text { if } v_{i} \in \vmap^{-1}(\vhat_r) \\ 0 & \text { otherwise }\end{array}\right.$.
In what follows, we denote $\gamma_r:= \left|\vmap^{-1}(\vhat_r)\right|$ for any $r\in [1,\n]$, which is the size of the cluster of $\vhat_r$ in $V$. 
$P$ can be considered as the weighted projection matrix of the vertex set from $V$ to $\Vhat$. 
Let $P^+$ denote the Moore-Penrose pseudoinverse of $P$, which can be intuitively viewed as a way to lift a function on $\Vhat$ (a vector in $\reals^\n$) to a function over $V$ (a vector in $\reals^\N$). 
As shown in \cite{loukas2019graph}, $P^+$ is the $\N \times \n$ matrix where $P^+[i, r] = 1$ if and only if $\vmap(v_i) = \vhat_r$. See \Cref{appendix-missing-proof} for a toy example.

Finally, \cite{loukas2019graph} defines an operator for the coarsened vertex set $\Vhat$ to be
$\tilde{L}_\Vhat = (P^+)^T \combL P^+$. 
Intuitively, $\LtildeVhat$ operators on $\n$-vectors. For any $\n$-vector $\fhat \in \reals^\n$, $\tilde{L}_\Vhat \fhat$ first lifts $\fhat$ to a $\N$-vector $f = P^+ \fhat$, and then perform $\combL $ on $f$, and then project it down to $\n$-dimensional via $(P^+)^T$. 

\begin{proposition}\cite{loukas2019graph}
The combinatorial graph Laplace operator $\Lhat = \Dhat - \What$ for the $\Vhat$-induced coarse graph $\Ghat$ constructed above equals to the operator $\tilde{L}_\Vhat = (P^+)^T \combL P^+$. 

\end{proposition}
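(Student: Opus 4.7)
The strategy is a direct entrywise computation of $(P^+)^T \combL P^+ = (P^+)^T D P^+ - (P^+)^T W P^+$, exploiting the fact that $P^+$ is the $N \times n$ cluster-indicator matrix: column $r$ of $P^+$ is the $0/1$ indicator vector of the cluster $C_r := \vmap^{-1}(\vhat_r) \subseteq V$. Thus $(P^+)^T$ acts on a matrix on the right by summing rows within each cluster, and $P^+$ on the right sums columns within each cluster.

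The first step is to compute $M := (P^+)^T W P^+$ entrywise. For any $r, s \in [\n]$, unfolding the matrix products gives
\[
M[r,s] \;=\; \sum_{i \in C_r}\sum_{j \in C_s} W[i,j],
\]
i.e.\ the total weight of ordered edge-pairs between $C_r$ and $C_s$. When $r \neq s$, this is precisely the sum $\sum_{(v,v') \in E(C_r, C_s)} w(v,v') = \What[r,s]$, matching the definition of the $\Vhat$-induced coarse graph. When $r = s$, the quantity $M[r,r]$ equals twice the total weight of the edges internal to $C_r$, a term that will need to be cancelled later.

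The second step is to compute $N := (P^+)^T D P^+$ entrywise. Since $D$ is diagonal, the product collapses to
\[
N[r,s] \;=\; \sum_{i} (P^+)^T[r,i] \, D[i,i] \, P^+[i,s] \;=\; \mathbf{1}_{r=s} \sum_{i \in C_r} D[i,i],
\]
so $N$ is a diagonal matrix whose $(r,r)$ entry is the sum of weighted degrees of vertices inside $C_r$, i.e.\ every edge incident to $C_r$ is counted once if it crosses out of $C_r$ and twice if it is internal to $C_r$.

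The final step is to subtract. For $r \neq s$, $(N - M)[r,s] = 0 - \What[r,s] = -\What[r,s] = \scombL[r,s]$, as desired. For $r = s$, the internal edges of $C_r$ contribute $2 \cdot (\text{internal weight})$ to both $N[r,r]$ and $M[r,r]$ and therefore cancel, leaving only the weight of edges crossing out of $C_r$; this is exactly $\sum_{s \neq r} \What[r,s] = \Dhat[r,r] = \scombL[r,r]$. Combining the diagonal and off-diagonal cases yields $(P^+)^T \combL P^+ = \Dhat - \What = \scombL$. The only subtlety in the argument is this factor-of-two bookkeeping on the diagonal, which is what makes the self-loop contributions from internal edges disappear; everything else is mechanical.
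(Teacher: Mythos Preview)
Your proof is correct. The paper does not supply its own proof of this proposition; it simply attributes the result to \cite{loukas2019graph} and moves on, so your direct entrywise computation of $(P^+)^T W P^+$ and $(P^+)^T D P^+$, together with the diagonal cancellation of the internal-edge contributions, is exactly the standard argument one would give and there is nothing to compare against in the paper itself.
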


\subsection{Laplace Operator for the Coarse Graph } 
\label{subsec:coarseLaplace} 

We now have an input graph $G = (V, E)$ and a coarse graph $\Ghat$ induced from the sparsified node set $\Vhat$, and we wish to compare their corresponding Laplace operators. However, as $\genL $ operates on $\reals^\N$ (i.e, functions on the vertex set $V$ of $G$) and $\sgenL$ operates on $\reals^\n$, we will compare them by their effects on ``corresponding" objects. 

\cite{loukas2018spectrally, loukas2019graph} proposed to use the quadratic form to measure the similarity between the two linear operators. In particular, given a linear operator $A$ on $\reals^N$ and any $x\in \reals^N$, $\Qform_A(x) = x^T A x$. The quadratic form has also been used for measuring spectral approximation under edge sparsification. 
The proof of the following result is in \Cref{appendix-missing-proof}. 

\begin{proposition}\label{prop:qform}
For any vector $\sx \in \reals^\n$, we have that $\Qform_{\LtildeVhat}(\sx) = \Qform_{\combL }(P^+ \sx)$, where $\Lhat$ is the combinatorial Laplace operator for the $\Vhat$-induced coarse graph $\Ghat$ constructed above. 
That is, set $\bx:= P^+ \sx$ as the lift of $\sx$ in $\reals^\N$, then $\sx^T \LtildeVhat \sx = \bx^T \combL \bx$. 

\end{proposition}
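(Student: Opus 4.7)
\textbf{Proof plan for Proposition \ref{prop:qform}.}

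The plan is essentially a one-line algebraic manipulation, so I will just verify that everything matches up and then comment on why the statement is conceptually meaningful rather than only formal. Starting from the left-hand side, I would unfold the definition of the quadratic form:
\begin{equation*}
\Qform_{\LtildeVhat}(\sx) \;=\; \sx^{T} \LtildeVhat \, \sx.
\end{equation*}
Next I would substitute the definition $\LtildeVhat = (P^{+})^{T} \combL \, P^{+}$ from Section \ref{subsec:coarsegraph} and reassociate the product:
\begin{equation*}
\sx^{T} (P^{+})^{T} \combL \, P^{+} \sx \;=\; (P^{+} \sx)^{T} \combL \, (P^{+} \sx).
\end{equation*}
Setting $\bx := P^{+} \sx \in \reals^{\N}$, the right-hand side is exactly $\bx^{T} \combL \, \bx = \Qform_{\combL}(\bx)$, which is what we wanted.

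There is no real obstacle here; the substance of the claim is the prior identification (from \cite{loukas2019graph}) that the induced combinatorial Laplacian $\Lhat$ of the $\Vhat$-induced coarse graph coincides with $(P^{+})^{T} \combL \, P^{+}$. Once that equality is in hand, Proposition \ref{prop:qform} is pure linear algebra: it formalizes that $P^{+}$ serves as the correct \emph{lift} map, in the sense that the quadratic form on the coarsened side is the pull-back of the quadratic form on the fine side. I would emphasize this point in the exposition, since it explains why $P^{+}$ (rather than, say, $P$ or some normalized variant) is the natural operator for comparing $\sgenL$ and $\genL$ when the functional of interest is $\Fcal = \Qform$, and it sets up the discussion in Section \ref{subsec:coarseLaplace} of how different choices of $\Fcal$ force different choices of lift/projection maps.
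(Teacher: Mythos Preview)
Your proof is correct and matches the paper's own proof exactly: both simply substitute $\LtildeVhat = (P^{+})^{T} \combL P^{+}$ into the quadratic form and reassociate to identify $\sx^{T}\LtildeVhat\sx$ with $(P^{+}\sx)^{T}\combL(P^{+}\sx)$. The additional commentary you give on why $P^{+}$ is the natural lift for $\Fcal=\Qform$ is a nice expository touch beyond what the paper includes.
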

Intuitively, this suggests that if later, we measure the similarity between $\combL $ and some Laplace operator for the coarse graph $\Ghat$ based on a loss from quadratic form difference, then we should choose the Laplace operator $\sgenL$ to be $\LtildeVhat$ and compare $\Qform_{\scombL}(P \bx)$ with $\Qform_{\combL} (\bx)$. 
We further formalize this by considering the \emph{lifting map} $\lift: \reals^\n \to \reals^\N$ as well as a projection map $\proj: \reals^\N \to \reals^\n$, where $\proj \cdot \lift = Id_{\n}$. Proposition \ref{prop:qform} suggests that for quadratic form-based similarity, the choices are $\lift = P^+, \proj = P$, and $\sgenL = \LtildeVhat$. See the first row in Table \ref{tab:operator}. 

\begin{table}[htp!]
\centering
\caption{Depending on the choice of $\mathcal{F}$ (quantity that we want to preserve) and $\genL$, we have different projection/lift operators and resulting $\sgenL$ on the coarse graph.}
\label{tab:operator}
\resizebox{1\textwidth}{!}{
\begin{tabular}{@{}llllll@{}}
\toprule
Quantity $\mathcal{F}$ of interest & $\genL$ & Projection $\proj$ & Lift $\lift$ & $\sgenL$  & Invariant under $\lift$ \\ \midrule
Quadratic form $\Qform$ & $L$ & $P$  & ${P^{+}}$ & Combinatorial Laplace $\widehat{L}$ & $\Qform_L(\lift \sx) = \Qform_{\widehat{L}}(\sx)$\\
 Rayleigh quotient $\RQ$ & $L$ & $\Gamma^{-1/2}{{(P^{+})}^T}$ & ${P^{+}}\Gamma^{-1/2}$ & Doubly-weighted Laplace $\sdwL$ & $\RQ_L(\lift \sx) = \RQ_{\sdwL} (\sx)$\\
 Quadratic form $\Qform$ & $\mathcal{L}$ & $ \widehat{D}^{1/2}PD^{-1/2} $ & $ D^{1/2}{(P^{+})} \widehat{D}^{-1/2}$ & Normalized Laplace $\widehat{\mathcal{L}}$ & $\Qform_\mathcal{L}(\lift \sx) = \Qform_{\widehat{\mathcal{L}}}(\sx)$\\ \bottomrule
\end{tabular}
}
\vspace{-5pt}
\end{table}

On the other hand, eigenvectors and eigenvalues of a linear operator $A$ are more directly related, via Courant-Fischer Min-Max Theorem, to its Rayleigh quotient $\RQ_A (x) = \frac{x^T A x}{x^T x}$. 
Interestingly, in this case, to preserve the Rayleigh quotient, we should change the choice of $\sgenL$ to be the following \emph{doubly-weighted Laplace operator} for a graph that is both edge and vertex weighted. 

Specifically, for the coarse graph $\Ghat$, we assume that each vertex $\vhat \in \Vhat$ is weighted by $\gamma_\vhat := |\vmap^{-1}(\vhat)|$, the size of the cluster from $G$ that got collapsed into $\vhat$. Let $\Gamma$ be the vertex matrix, which is the $\n \times \n$ diagonal matrix with $\Gamma[r][r] = \gamma_{\vhat_r}$. 
The \emph{doubly-weighted Laplace operator} for a vertex- and edge-weighted graph $\Ghat$ is then defined as: 
\vspace{-3pt}
\begin{align*}\label{eqn:doublyweightedL}
  \sdwL = \Gamma^{-1/2} (\Dhat - \What) \Gamma^{-1/2} = \Gamma^{-1/2} \LtildeVhat \Gamma^{-1/2} = (P^+ \Gamma^{-1/2})^T \combL (P^+ \Gamma^{-1/2}). 
\end{align*}
The concept of doubly-weighted Laplace for a vertex- and edge-weighted graph is not new, see e.g \cite{chung1996combinatorial, horak2013spectra, xu2019weighted}. In particular, \cite{horak2013spectra} proposes a general form of combinatorial Laplace operator for a simplicial complex where all simplices are weighted, and our doubly-weighted Laplace has the same eigenstructure as their Laplacian when restricted to graphs. 
See \Cref{simplicial-laplace-appendix} for details. 
Using the doubly-weighted Laplacian for Rayleigh quotient based similarity measurement between the original graph and the coarse graph is justified by the following result (proof in \Cref{simplicial-laplace-appendix}). 
\begin{proposition}\label{prop:Rayleyquotient}
For any vector $x \in \reals^\n$, we have that $\RQ_{\sdwL}(\sx) = \RQ_{\combL }(P^+ \Gamma^{-1/2} \sx)$. That is, set the lift of $\sx$ in $\reals^\N$ to be $\bx = P^+ \Gamma^{-1/2} \sx$, then we have that 
$\frac{\sx^T \sdwL \sx}{\sx^T \sx} = \frac{\bx^T \combL \bx}{\bx^T \bx}$. 
\end{proposition}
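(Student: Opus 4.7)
My plan is to verify the identity directly by computing the numerator and denominator of the Rayleigh quotient separately under the substitution $\bx = P^+ \Gamma^{-1/2} \sx$. Unlike Proposition \ref{prop:qform} (which only needs the numerator to match), here we must show that the $\ell_2$-normalization factor is also preserved, which is precisely why the extra $\Gamma^{-1/2}$ factor had to be folded into the lift map.

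For the numerator, the argument is essentially algebraic unfolding. Since $\sdwL = (P^+ \Gamma^{-1/2})^T \combL (P^+ \Gamma^{-1/2})$ by definition, substituting $\bx = P^+ \Gamma^{-1/2} \sx$ gives $\sx^T \sdwL \sx = \bx^T \combL \bx$ immediately. This mirrors the proof of Proposition \ref{prop:qform} except with $P^+$ replaced by $P^+ \Gamma^{-1/2}$.

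The main (and only nontrivial) step is showing the denominator is preserved, i.e. $\bx^T \bx = \sx^T \sx$. Expanding, this is equivalent to the matrix identity $\Gamma^{-1/2} (P^+)^T P^+ \Gamma^{-1/2} = I_{\n}$, or equivalently $(P^+)^T P^+ = \Gamma$. I would verify this by entrywise computation using the explicit form $P^+[i,r] = \mathbf{1}[\vmap(v_i) = \vhat_r]$. For any $r, s \in [\n]$,
\[
\big[(P^+)^T P^+\big]_{r,s} \;=\; \sum_{i=1}^{\N} \mathbf{1}[\vmap(v_i) = \vhat_r] \cdot \mathbf{1}[\vmap(v_i) = \vhat_s].
\]
When $r \neq s$ the two indicators cannot simultaneously hold (each $v_i$ has a unique image under $\vmap$), so the off-diagonal entries vanish. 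When $r = s$, the sum counts $|\vmap^{-1}(\vhat_r)| = \gamma_r$, which is exactly $\Gamma[r,r]$. Hence $(P^+)^T P^+ = \Gamma$, and the denominator claim follows by conjugating with $\Gamma^{-1/2}$.

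I do not anticipate any real obstacle: the whole proof is a two-line computation once one observes the columns of $P^+$ are orthogonal $\{0,1\}$-indicator vectors of the clusters. The conceptual point worth highlighting in the write-up is that this orthogonality of cluster indicators is precisely what makes $P^+\Gamma^{-1/2}$ an isometry from $\reals^{\n}$ into $\reals^{\N}$, explaining why the doubly-weighted Laplacian (rather than $\LtildeVhat$ itself) is the correct coarse operator for matching Rayleigh quotients.
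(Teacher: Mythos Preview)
Your proposal is correct and follows essentially the same approach as the paper: split the Rayleigh quotient into numerator and denominator, get the numerator equality from the definition $\sdwL = (P^+\Gamma^{-1/2})^T \combL (P^+\Gamma^{-1/2})$, and get the denominator equality from $(P^+)^T P^+ = \Gamma$. The paper's write-up asserts the denominator identity $\sx^T \Gamma^{-1/2}(P^+)^T P^+ \Gamma^{-1/2}\sx = \sx^T \sx$ without spelling out the entrywise computation you give, so your version is slightly more explicit but otherwise identical.
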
 

Finally, if using the normalized Laplace $\Lcal$ for the original graph $G$, then the appropriate Laplace operator for the coarse graph and corresponding projection/lift maps are listed in the last row of Table \ref{tab:operator}, with proofs in \Cref{appendix-missing-proof}. 

\subsection{A GNN-based Framework for Constructing the Coarse Graph } \label{subsec:weights}

\begin{figure}[htbp]
\begin{center}

\centering
\includegraphics[width=0.8\linewidth]{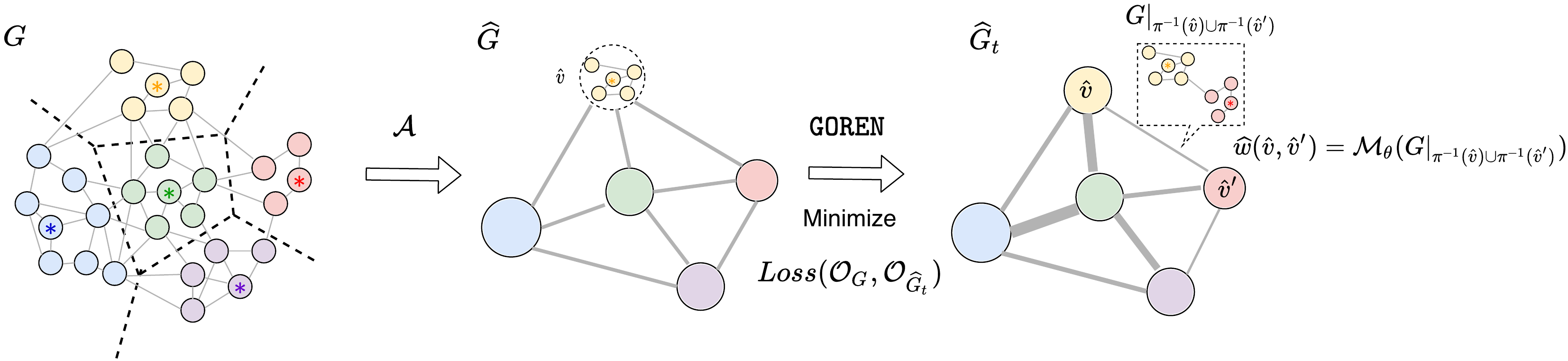}
\vspace*{-0.05in}\caption{An illustration of learnable coarsening framework. Existing coarsening algorithm determines the topology of coarse graph $\Ghat$, while \nn resets the edge weights of the coarse graph. }
\label{framework-fig}
\end{center}
\end{figure}

In the previous section, we argued that depending on what similarity measures we use, appropriate Laplace operator $\sgenL$ for the coarse graph $\Ghat$ should be used. 
Now consider the specific case of Rayleigh quotient, which can be thought of as a proxy to measure similarities between the low-frequency eigenvalues of the original graph Laplacian and the one for the coarse graph. As described above, here we set $\sgenL$ as the doubly-weighted Laplacian $\sdwL = \Gamma^{-1/2} (\Dhat - \What) \Gamma^{-1/2}$. 

\paragraph{The effect of weight adjustments. }
\begin{wrapfigure}{r}{0.20\textwidth}
 \begin{center}
\includegraphics[width=0.19\textwidth]{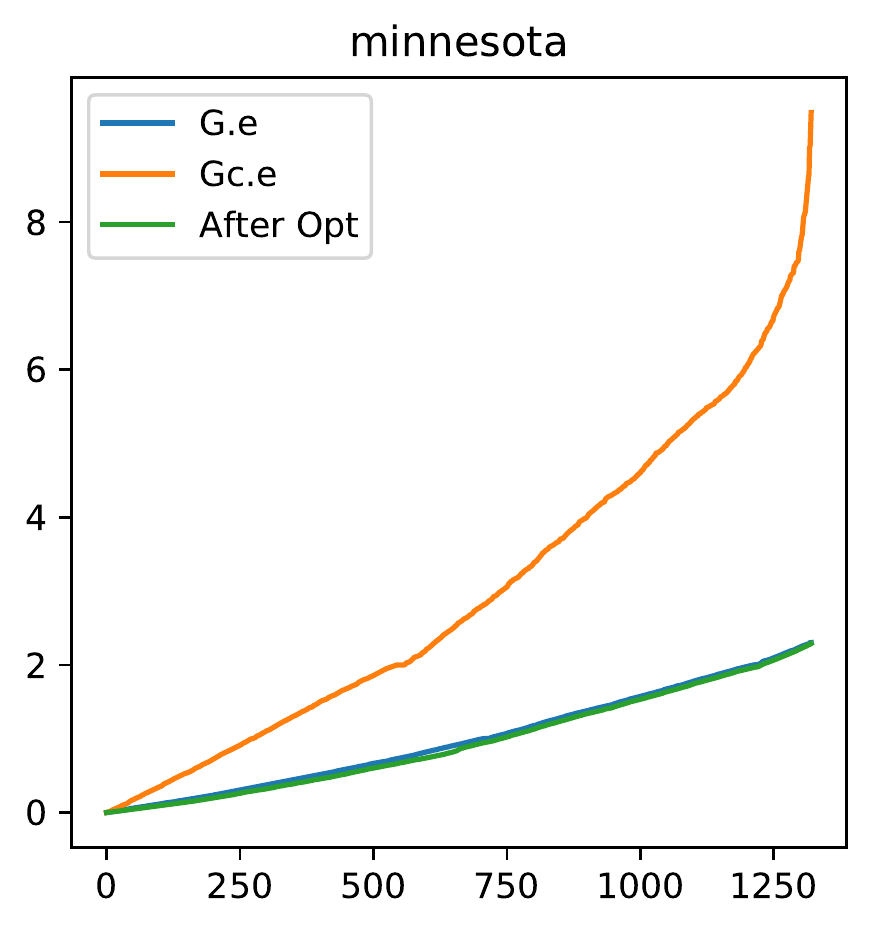}
 \end{center}
\end{wrapfigure}

 We develop an iterative algorithm with convergence guarantee (to KKT point in \ref{convergence-appendix}) for optimizing over edge weights of $\Ghat$ for better spectrum alignment. 
As shown in the figure on the right, after changing the edge weight of the coarse graph, the resulting graph Laplacian has eigenvalues much closer (almost identical) to the first $\n$ eigenvalues of the original graph Laplacian. More specifically, in this figure, $G.e$ and $Gc.e$ stand for the eigenvalues of the original graph $G$ and coarse graph $\Ghat$ constructed by the so-called Variation-Edge coarsening algorithm \cite{loukas2019graph}. ``After-Opt" stands for the eigenvalues of coarse graphs when weights are optimized by our iterative algorithm. See \Cref{iterative-algo-appendix} for the description of our iterative algorithm, its convergence results, and full experiment results.

{{\bf {A GNN-based framework for learning weight assignment map.}~}
The discussions above indicate that we can obtain better Laplace operators for the coarse graph by using better-informed weights than simply summing up the weights of crossing edges from the two clusters. 

More specifically, suppose we have a fixed strategy to generate $\Vhat$ from an input graph $G = (V, E)$. 
Now given an edge $(\vhat, \vhat') \in \Ehat$ in the induced coarse graph $\Ghat = (\Vhat, \Ehat)$, we model its weight $\what(\vhat, \vhat')$ by a \emph{weight-assignment function} $\mu( G|_{\vmap^{-1}(\vhat) \cup \vmap^{-1}(\vhat')} )$, where $G|_A$ is the subgraph of $G$ induced by a subset of vertices $A$. 
However, it is not clear how to setup this function $\mu$. Instead, we will learn it from a collection of input graphs in an \emph{unsupervised} manner. 
Specifically, we will parametrize the weight-assignment map $\mu$ by a learnable neural network $\myMu$. 

See Figure \ref{framework-fig} for an illustration.

In particular, we use Graph Isomorphism Network (GIN) \cite{xu2018powerful} to represent $\myMu$. 
We initialize the model by setting the edge attribute of the coarse graph to be 1. 
Our node feature is set to be a 5-dimensional vector based on LDP (Local Degree Profile) \cite{cai2018simple}. We enforce the learned weight of the coarse graph to be positive by applying one extra ReLU layer to the final output. 
All models are trained with Adam optimizer with a learning rate of 0.001. 
See \Cref{model-detail} for more details. We name our model as \textbf{G}raph c\textbf{O}arsening \textbf{R}efinem\textbf{E}nt \textbf{N}etwork (\nntight). 

Given a graph $G$ and a coarsening algorithm $\mathcal{A}$, the general form of loss is
\begin{align}
Loss (\genL, \mathcal{O}_{\widehat{G_t}}) = \frac{1}{k} \sum_{i=1}^{k} | \mathcal{F}( \mathcal{O}_G ,f_i) - \mathcal{F} (\mathcal{O}_{\widehat{G_t}} , \proj f_i) |, 
\end{align}
where $f_i$ is signal on the original graph (such as eigenvectors) and $\proj f_i$ is its projection. We use $\mathcal{O}_{\widehat{G_t}}$ to denote the operator of the coarse graph \textit{during training}, while $\mathcal{O}_{\widehat{G}}$ standing for the operator defined w.r.t. the coarse graph output by coarsening algorithm $\mathcal{A}$. That is, we will start with $\mathcal{O}_{\Ghat}$ and modify it to $\mathcal{O}_{\widehat{G_t}}$ during the training.

The loss can be instantiated for different cases in Table \ref{tab:operator}. For example, a loss based on quadratic form means that we choose $\mathcal{O}_G, \mathcal{O}_{\widehat{G_t}}$ to be the combinatorial Laplacian of $G$ and $\widehat{G_t}$, and the resulting \emph{quadratic loss} has the form:  
\begin{align} \label{loss-equ}
Loss (L, \hL_t) = \frac{1}{k} \sum_{i=1}^{k} |f_i^T L f_i - (Pf_i)^T \hL_t (Pf_i)|. 
\end{align}
It can be seen as a natural analog of the loss for spectral sparsification in the context of graph coarsening, which is also adopted in \cite{loukas2019graph}. 

Similarly, one can use a loss based on the Rayleigh quotient, by choosing $\mathcal{F}$ from the second row of Table \ref{tab:operator}. 
Our framework for graph coarsening is flexible. Many different loss functions can be used as long as it is differentiable in the weights of the coarse graph. we will demonstrate this point in Section \ref{flex}. 

Finally, given a collection of training graphs $G_1, \ldots, G_m$, we will train for parameters in the module $\myMu$ to minimize the total loss on training graphs. 
When a test graph $G_{test}$ is given, we simply apply $\myMu$ to set up weight for each edge in $\widehat{G_{test}}$, obtaining a new graph $\widehat{G_{test, t}}$. We compare $Loss (\mathcal{O}_{G_{test}}, \mathcal{O}_{\widehat{G_{test, t}}})$ against $Loss (\mathcal{O}_{G_{test}}, \mathcal{O}_{\widehat{G_{test}}})$ and expect the former loss is smaller. 

\section{Experiments Setup}
\label{sec:exp}

In the following experiments, we apply six existing coarsening algorithms to obtain the coarsened vertex set $\Vhat$, which are Affinity \cite{livne2012lean}, Algebraic Distance \cite{chen2011algebraic}, Heavy edge matching \cite{dhillon2007weighted,ron2011relaxation}, as well as two local variation methods based on edge and neighborhood respectively \cite{loukas2019graph}, and a simple baseline (BL); See \Cref{methods} for detailed descriptions. The two local variation methods are considered to be state-of-the-art graph coarsening algorithms \cite{loukas2019graph}. We show that our \nn framework can improve the qualities of coarse graphs produced by these methods. 

\subsection{Dataset}
\subsubsection{Synthetic Graphs}
\label{syn_graphs}
 \er graphs (ER). $G(n, p)$ where $p = \frac{0.1 * 512}{n}$

 Random geometric graphs (GEO). The random geometric graph model places $n$ nodes uniformly at random in the unit cube. Two nodes are joined by an edge if the distance between the nodes is at most radius $r$. We set $r = \frac{5.12}{\sqrt{n}}$.

 Barabasi-Albert Graph (BA). A graph of $n$ nodes is grown by attaching new nodes each with $m$ edges that are preferentially attached to existing nodes with high degrees. We set $m$ to be $4$.

 Watts-Strogatz Graph (WS). It is first created from a ring over $n$ nodes. Then each node in the ring is joined to its $k$ nearest neighbors (or $k - 1$ neighbors if $k$ is odd). Then shortcuts are created by replacing some edges as follows: for each edge $(u, v)$ in the underlying "$n$-ring with $k$ nearest neighbors" with probability $p$ replace it with a new edge $(u, w)$ with a uniformly random choice of existing node $w$. We set $k, p$ to be $10$ and  $0.1$. 

\subsubsection{Dataset from Loukas's paper}
\label{loukas-data}
Yeast. Protein-to-protein interaction network in budding yeast, analyzed by \cite{jeong2001lethality}. The network has $N = 1458$ vertices and $M = 1948$ edges. %

Airfoil. Finite-element graph obtained by airow simulation \cite{preis1997party}, consisting of $N = 4000$ vertices and $M = 11,490$ edges. %

Minnesota \cite{gleich2008matlabbgl}. Road network with $N = 2642$ vertices and $M = 3304$ edges. %

Bunny \cite{turk1994zippered}. Point cloud consisting of $N = 2503$ vertices and $M = 65,490$ edges. The point cloud has been sub-sampled from its original size. %

\subsubsection{Real Networks}
\label{real_graphs}
 Shape graphs (Shape). Each graph is KNN graph formed by 1024 points sampled from shapes from ShapeNet where each node is connected 10 nearest neighbors. 

Coauthor-CS (CS) and Coauthor-Physics (Physics) are co-authorship graphs based on the Microsoft Academic Graph from the KDD Cup 2016 challenge. Coauthor CS has $N = 18,333$ nodes and $M = 81,894$ edges. Coauthor Physics has $N = 34,493$ nodes and $M = 247,962$ edges.

PubMed \cite{sen2008collective} has $N = 19,717$ nodes and $M = 44,324$ edges. Nodes are documents and edges are citation links.

Flickr \cite{zeng2019graphsaint} has $N=89,250$ nodes and $M=899,756$ edges. One node in the graph represents one image uploaded to Flickr. If two images share some common properties (e.g., same geographic location, same gallery, comments by the same user, etc.), there is an edge between the nodes of these two images.

\subsection{Existing Graph Coarsening Methods}
\label{methods}
\textbf{Heavy Edge Matching}. 
At each level of the scheme, the contraction family is obtained
by computing a maximum-weight matching with the weight of each contraction set $(v_i, v_j)$ calculated as $w_{ij} / \text{max}\{d_i, d_j\}$. In this manner, heavier edges connecting vertices that are well separated from the rest of the graph are contracted first. %

\textbf{Algebraic Distance}.
This method differs from heavy edge matching in that the weight of each candidate set $(v_i, v_j) \in E$ is calculated as $\left(\sum_{q=1}^{Q}\left(x_{q}(i)-x_{q}(j)\right)^{2}\right)^{1 / 2}$, where $x_k$ is an $N$-dimensional test vector computed by successive sweeps of Jacobi relaxation. The complete method is described by \cite{ron2011relaxation}, see also \cite{chen2011algebraic}. %

\textbf{Affinity}.
This is a vertex proximity heuristic in the spirit of the algebraic distance that was proposed by \cite{livne2012lean} in the context of their work on the lean algebraic multigrid. As per the author suggests, the $Q = k$ test vectors are here computed by a single sweep of a Gauss-Seidel iteration.

\textbf{Local Variation}. There are two variations of local variation methods, edge-based local variation, and neighborhood-based local variation. They differ in how the contraction set is chosen. Edge-based variation is constructed for each edge, while the neighborhood-based variant takes every vertex and its neighbors as contraction set. What two methods have common is that they both optimize an upper bound of the restricted spectral approximation objective. In each step, they greedily pick the sets whose local variation is the smallest. See \cite{loukas2019graph} for more details. 

\textbf{Baseline}. We also implement a simple baseline that randomly chooses a collection of nodes in the original graph as landmarks and contract other nodes to the nearest landmarks. If there are multiple nearest landmarks, we randomly break the tie. The weight of the coarse graph is set to be the sum of the weights of the crossing edges. 
\subsection{Details of the Experimental Setup}
\label{model-detail}
\textbf{Feature Initialization.} We initialize the the node feature of subgraphs as a 5 dimensional feature based on a simple heuristics local degree profile (LDP) \cite{cai2018simple}. For each node $v \in G(V)$, let $DN(v)$ denote the multiset of the degree of all the neighboring nodes of $v$, i.e., $DN(v) = \{\text{degree}(u) | (u, v) \in E \}$. We take five node features, which are (degree($v$), min(DN($v$)), max(DN($v$)),mean(DN($v$)), std(DN($v$))). In other words, each node feature summarizes the degree information of this node and its 1- neighborhood. We use the edge weight as 1 dimensional edge feature.

\textbf{Optimization.} All models are trained with Adam optimizer \cite{kingma2014adam} with a learning rate of 0.001 and batch size 600. We use Pytorch \cite{paszke2017automatic} and Pytorch Geometric \cite{fey2019fast}
for all of our implementation. We train graphs one by one where for each graph we train the model to minimize the loss for certain epochs (see hyper-parameters for details) before moving to the next graph. We save the model that performs best on the validation graphs and test it on the test graphs.  

\textbf{Model Architecture.}
The building block of our graph neural networks is based on the modification of Graph Isomorphism Network (GIN) that can handle both node and edge features. In particular, we first linear transform both node feature and edge feature to be vectors of the same dimension. At the $k$-th layer, GNNs update node representations by

\begin{equation}
h_{v}^{(k)}=\operatorname{ReLU}\left(\operatorname{MLP}^{(k)}\left(\sum_{u \in \mathcal{N}(v) \cup\{v\}} h_{u}^{(k-1)}+\sum_{e=(v, u): u \in \mathcal{N}(v) \cup\{v\}} h_{e}^{(k-1)}\right)\right)
\end{equation}

where $\mathcal{N}(v)$ is a set of nodes adjacent to $v$, and $e = (v; v)$ represents the self-loop edge. Edge features $h_e^{(k-1)}$ is the same across the layers. 

We use average graph pooling to obtained the graph representation from node embeddings, i.e., $h_{G}=\operatorname{MEAN}\left(\left\{h_{v}^{(K)} | v \in G\right\}\right)$.
 The final prediction of weight is $ 1 + \operatorname{ReLu}(\Phi(h_G) )$ where $\Phi$ is a linear layer. We set the number of layers to be 3 and the embedding dimension to be 50.

\textbf{Time Complexity.}
In the preprocessing step, we need to compute the first $k$ eigenvectors of Laplacian (either combinatorial or normalized one) of the original graph as test vectors. Those can be efficiently computed by Restarted Lanczos Method \cite{lehoucq1998arpack} to find the eigenvalues and eigenvectors.

In the training time, our model needs to recompute the term in the loss involving the coarse graph to update the weights of the graph neural networks for each batch. For loss involving Laplacian (either combinatorial or normalized Laplacian), the time complexity to compute the $x^TLx$ is $O(|E|k)$ where $|E|$ is the number of edges in the coarse graph and $k$ is the number of test vectors. For loss involving conductance, computing the conductance of one subset $S \subset E$ is still $O(|E|)$ so in total the time complexity is also $O(|E|k)$. 
In summary, the time complexity for each batch is linear in the number of edges of training graphs.  All experiments are performed on a single Intel Xeon CPU E5-2630 v4@ 2.20GHz $\times$ 40 and 64GB RAM machine.

More concretely, for synthetic graphs, it takes a few minutes to train the model. For real graphs like CS, Physics, PubMed, it takes around 1 hour. For the largest network Flickr of 89k nodes and 899k edges, it takes about 5 hours for most coarsening algorithms and reduction ratios.

\textbf{Hyperparameters.}
We list the major hyperparameters of \nn below.
\begin{itemize}
\item epoch: 50 for synthetic graphs and 30 for real networks.
\item walk length: 5000 for real networks. Note the size of the subgraph is usually around 3500 since the random walk visits some nodes more than once.
\item number of eigenvectors $k$: 40 for synthetic graphs and 200 for real networks.
\item embedding dimension: 50
\item batch size: 600
\item learning rate: 0.001
\end{itemize}

\section{Proof of Concept}
\label{sec:proof-of-concept}

\begin{table}[ht]
\centering
\renewcommand{\arraystretch}{1.2}
\caption{The error reduction after applying \nntight. %
}
\label{better-fit-short}
\begin{tabular}{@{}llllll@{}}
\toprule
 Dataset  & Affinity & \makecell[l]{Algebraic \\ Distance} &  \makecell[l]{Heavy \\ Edge} & \makecell[l]{Local var \\ (edges)} & \makecell[l]{Local var \\ (neigh.)} \\
\midrule 
   Airfoil &    91.7\% &  88.2\% &  86.1\%  & 43.2\% & 73.6\% \\
  Minnesota &    49.8\% &  57.2\% &  30.1\%  & 5.50\%  & 1.60\% \\ 
    Yeast &    49.7\% &  51.3\% &  37.4\% & 27.9\% & 21.1\% \\ 
    Bunny &    84.7\% &  69.1\% &  61.2\%  & 19.3\% & 81.6\%\\
\bottomrule
\end{tabular}
\end{table}

As proof of concept, we show that \nn can improve common coarsening methods on multiple graphs (see \ref{loukas-data} for details).
Following the same setting as \cite{loukas2019graph}, we use the relative eigenvalue error as evaluation metric. It is defined as
$ \frac{1}{k} \sum_{i=1}^{k} \frac{\left|\widehat{\lambda}_{i}-\lambda_{i}\right|}{\lambda_{i}}$, where $\lambda_i, \widehat{\lambda}_i$ denotes eigenvalues of combinatorial Laplacian $L$ for $G$ and doubly-weighted Laplacian $\mathsf{\Lhat}$ for $\Ghat$ respectively, and $k$ is set to be 40. For simplicity, this error is denoted as \textit{Eigenerror} in the remainder of the chapter. Denote the Eigenerror of graph coarsening method as $l_1$ and Eigenerror obtained by \nn as $l_2$. In Table \ref{better-fit-short}, we show the \textit{error-reduction ratio}, defined as $\frac{l_1-l_2}{l_1}$. The ratio is upper bounded by 100.

Since it is hard to directly optimize Eigenerror, the loss function we use in our \nn set to be the \emph{Rayleigh loss} $Loss (\genL, \mathcal{O}_{\widehat{G_t}}) = \frac{1}{k} \sum_{i=1}^{k} | \mathcal{F}( \mathcal{O}_G ,f_i) - \mathcal{F} (\mathcal{O}_{\widehat{G_t}} , \proj f_i) |$ where $\mathcal{F}$ is Rayleigh quotient, $\proj = \Gamma^{-1/2} \ppt$ and $\mathcal{O}_{\Ghat_t}$ being doubly-weighted Laplacian $\sdwL_t$. 
In other words, We use Rayleigh loss as a differentiable proxy for the Eigenerror. As we can see in Table \ref{better-fit-short}, \nn reduces the Eigenerror by a large margin for \textit{training} graphs, which serves as a sanity check for our framework, as well as for using Rayleigh loss as a proxy for Eigenerror. See \Cref{better-fit} for full results where we reproduce the results in \cite{loukas2019graph} up to small differences. 

In Table \ref{nondiff-table}, we will demonstrate this training strategy also generalizes well to unseen graphs. 

\begin{table}
\renewcommand{\arraystretch}{1.2}
\caption{Loss: quadratic loss. Laplacian: combinatorial Laplacian for both original and coarse graphs. Each entry $x (y)$ is: $x =$ loss w/o learning, and $y =$ improvement percentage.} 
\label{loss_quadratic_lap_none_eigen_False_ratio_5}
\begin{center}
\resizebox{1\textwidth}{!}{
\begin{tabular}{@{}lllllllll@{}}
\addlinespace[-\aboverulesep] 
  \cmidrule[\heavyrulewidth]{2-8}
  & Dataset &   BL & Affinity & \makecell[l]{Algebraic \\ Distance} &  \makecell[l]{Heavy \\ Edge} & \makecell[l]{Local var \\ (edges)} & \makecell[l]{Local var \\ (neigh.)} \\ 
  \cmidrule{2-8}
  \parbox[t]{2mm}{\multirow{4}{*}{\rotatebox[origin=c]{90}{Synthetic}}} & BA &  0.44 (16.1\%) &  0.44 (4.4\%) &   0.68 (4.3\%) &  0.61 (3.6\%) &    0.21 (14.1\%) &    0.18 (72.7\%) \\
  & ER &   0.36 (1.1\%) &  0.52 (0.8\%) &   0.35 (0.4\%) &  0.36 (0.2\%) &    0.18 (1.2\%) &     0.02 (7.4\%) \\
  & GEO &  0.71 (87.3\%) &  0.20 (57.8\%) &  0.24 (31.4\%) & 0.55 (80.4\%) &    0.10 (59.6\%) &    0.27 (65.0\%) \\
  & WS &  0.45 (62.9\%) & 0.09 (82.1\%) &  0.09 (60.6\%) &      0.52 (51.8\%) &    0.09 (69.9\%) &    0.11 (84.2\%) \\
  \cmidrule{2-8}
  \parbox[t]{2mm}{\multirow{5}{*}{\rotatebox[origin=c]{90}{Real}}} & CS &  0.39 (40.0\%) & 0.21 (29.8\%) &  0.17 (26.4\%) &  0.14 (20.9\%) &    0.06 (36.9\%) &     0.0 (59.0\%) \\
    & Flickr &  0.25 (10.2\%) &  0.25 (5.0\%) &   0.19 (6.4\%) &   0.26 (5.6\%) &    0.11 (11.2\%) &    0.07 (21.8\%) \\
  & Physics &  0.40 (47.4\%) & 0.37 (42.4\%) &  0.32 (49.7\%) &  0.14 (28.0\%) &    0.15 (60.3\%) &     0.0 (-0.3\%) \\
  & PubMed &  0.30 (23.4\%) & 0.13 (10.5\%) &  0.12 (15.9\%) &  0.24 (10.8\%) &    0.06 (11.8\%) &    0.01 (36.4\%) \\
      & Shape &  0.23 (91.4\%) & 0.08 (89.8\%) &  0.06 (82.2\%) & 0.17 (88.2\%) &    0.04 (80.2\%) &    0.08 (79.4\%) \\
  \cmidrule[\heavyrulewidth]{2-8} \addlinespace[-\belowrulesep] 
\end{tabular}
}
\end{center}
\end{table}

\subsection{Synthetic Graphs}
\label{subsec:syn_graph}
We train the \nn on synthetic graphs from common graph generative models and test on larger unseen graphs from the same model. We randomly sample 25 graphs of size $\{512, 612, 712, ..., 2912\}$ from different generative models. If the graph is disconnected, we keep the largest component. We train \nn on the first 5 graphs, use the 5 graphs from the rest 20 graphs as the validation set and the remaining 15 as test graphs. We use the following synthetic graphs: \er graphs (ER), Barabasi-Albert Graph (BA), Watts-Strogatz Graph (WS), random geometric graphs (GEO). See \Cref{syn_graphs} for datasets details. 

We report both the loss $Loss(L, \widehat{L})$ of different algorithms (w/o learning) and the \emph{relative improvement percentage} defined as $\frac{Loss(L, \widehat{L}) - Loss(L, \widehat{L_t})}{Loss(L, \widehat{L})}$ when \nn is applied, shown in parenthesis. 

Erdős-Rényi graphs (ER). $G(n, p)$ where $p = \frac{0.1 * 512}{n}$

As we can see in Table \ref{loss_quadratic_lap_none_eigen_False_ratio_5}, for most methods, trained on small graphs, \nn also performs well on test graphs of larger size across different algorithms and datasets -- Again, the larger improvement percentage is, the larger the improvement by our algorithm is, and a negative value means that our algorithm makes the loss worse. 
Note the size of test graphs are on average $2.6\times$ the size of training graphs. 

For ER and BA graphs, the improvement is relatively smaller compared to GEO and WS graphs. This makes sense since ER and BA graphs are rather homogenous graphs, leaving less room for further improvement. 

\subsection{Real Networks}
\label{subsec:real_graph}
We test on five real networks: Shape, PubMed, Coauthor-CS (CS), Coauthor-Physics (Physics), and Flickr (largest one with 89k vertices), which are much larger than datasets used in \cite{hermsdorff2019unifying} ($\leq$ 1.5k) and \cite{loukas2019graph} ($\leq$ 4k). Since it is hard to obtain multiple large graphs (except for the Shape dataset, which contains meshes from different surface models) coming from similar distribution, we bootstrap the training data in the following way. For the given graph, we randomly sample a collection of landmark vertices and take a random walk of length $l$ starting from selected vertices. We take subgraphs spanned by vertices of random walks as training and validation graphs and the original graph as the test graph. See \Cref{real_graphs} for dataset details. 

As shown in the bottom half of Table \ref{loss_quadratic_lap_none_eigen_False_ratio_5}, across all six different algorithms, \nn significantly improves the result among all five datasets in most cases. For the largest graph Flickr, the size of test graphs is more than $25\times$ of the training graphs, which further demonstrates the strong generalization. 

\begin{table}
\label{better-fit}
\renewcommand{\arraystretch}{1.2}
\caption{Relative eigenvalue error (Eigenerror) by different coarsening algorithm and the improvement (in percentage) after applying \nntight.}
\begin{center}
\resizebox{\textwidth}{!}{
\begin{tabular}{@{}lllllll@{}}
\toprule
  Dataset &    Ratio  & Affinity & \makecell[l]{Algebraic \\ Distance} &    \makecell[l]{Heavy \\ Edge} & \makecell[l]{Local var \\ (edges)} & \makecell[l]{Local var \\ (neigh.)} \\
\midrule 
     &   0.3 &  0.262 (82.1\%) &   0.208 (64.9\%) &  0.279 (80.3\%) &     0.102 (-67.6\%) &            0.184 (69.6\%) \\
     Airfoil &   0.5 &   0.750 (91.7\%) &   0.672 (88.2\%) &  0.568 (86.1\%) &      0.336 (43.2\%) &            0.364 (73.6\%) \\
      &   0.7 &  2.422 (96.4\%) &   2.136 (93.5\%) &  1.979 (96.7\%) &      0.782 (78.8\%) &            0.876 (87.8\%) \\ \cmidrule{2-7}
    &   0.3 &  0.322 (-5.0\%) &    0.206 (0.5\%) &  0.357 (-4.5\%) &      0.118 (-5.9\%) &           0.114 (-14.0\%) \\
   Minnesota &   0.5 &  1.345 (49.8\%) &   1.054 (57.2\%) &  0.996 (30.1\%) &       0.457 (5.5\%) &             0.382 (1.6\%) \\ 
    &   0.7 &   4.290 (70.4\%) &   3.787 (76.6\%) &  3.423 (58.9\%) &      2.073 (55.0\%) &                   1.572 (38.1\%) \\ \cmidrule{2-7}
        &   0.3 &  0.202 (10.4\%) &    0.108 (5.6\%) &   0.291 (1.4\%) &       0.113 (6.2\%) &           0.024 (-58.3\%) \\
       Yeast &   0.5 &  0.795 (49.7\%) &   0.485 (51.3\%) &   1.080 (37.4\%) &      0.398 (27.9\%) &            0.133 (21.1\%) \\ 
        &   0.7 &   2.520 (60.4\%) &   2.479 (72.4\%) &  3.482 (52.9\%) &      2.073 (58.9\%) &            0.458 (45.9\%) \\ \cmidrule{2-7}
        &   0.3 &     0.046 (32.6\%) & 0.217 (50.0\%) & 0.258 (74.4\%) & 0.007 (-328.5\%) & 0.082 (74.8\%) \\
       Bunny & 0.5 &   0.085 (84.7\%) &  0.372 (69.1\%) & 0.420 (61.2\%) & 0.057 (19.3\%) & 0.169 (81.6\%)\\
        &   0.7 &   0.182 (84.6\%) &  0.574 (78.6\%) & 0.533 (75.4\%) & 0.094 (45.7\%) & 0.283 (73.9\%)\\
\bottomrule
\end{tabular}
}
\end{center}
\end{table}

\begin{table}
\renewcommand{\arraystretch}{1.2}
\caption{Loss: quadratic loss. Laplacian: \textit{combinatorial} Laplacian for both original and coarse graphs. Each entry $x (y)$ is: $x =$ loss w/o learning, and $y =$ improvement percentage. BL stands for the baseline. } 
\label{loss_quadratic_lap_none_eigen_False}
\begin{center}
\scalebox{.8}{
\begin{tabular}{@{}llllllll@{}}
\toprule
 Dataset &     Ratio & BL & Affinity & \makecell[l]{Algebraic \\ Distance} &    \makecell[l]{Heavy \\ Edge} & \makecell[l]{Local var \\ (edges)} & \makecell[l]{Local var \\ (neigh.)} \\ \midrule %
  &  0.3 &   0.36 (6.8\%) &   0.22 (2.9\%) &     0.56 (1.9\%) &   0.49 (1.7\%) &       0.06 (16.6\%) &        0.17 (73.1\%) \\
    BA &  0.5 &  0.44 (16.1\%) &   0.44 (4.4\%) &     0.68 (4.3\%) &   0.61 (3.6\%) &       0.21 (14.1\%) &        0.18 (72.7\%) \\
     &  0.7 &  0.21 (32.0\%) &  0.43 (16.5\%) &    0.47 (17.7\%) &   0.4 (19.3\%) &        0.2 (48.2\%) &        0.11 (11.1\%) \\
\midrule
       &  0.3 &  0.25 (28.7\%) &  0.08 (24.8\%) &    0.05 (21.5\%) &    0.09 (15.6\%) &     0.0 (-254.3\%) &         0.0 (60.6\%) \\
      CS &  0.5 &  0.39 (40.0\%) &  0.21 (29.8\%) &    0.17 (26.4\%) &    0.14 (20.9\%) &       0.06 (36.9\%) &         0.0 (59.0\%) \\
       &  0.7 &  0.46 (55.5\%) &  0.57 (36.8\%) &    0.33 (36.6\%) &    0.28 (29.3\%) &       0.18 (44.2\%) &        0.09 (26.5\%) \\
\midrule
  &  0.3 &  0.26 (35.4\%) &  0.36 (36.6\%) &     0.2 (29.7\%) &     0.1 (18.6\%) &    0.0 (-42.0\%) &          0.0 (2.5\%) \\
 Physics &  0.5 &   0.4 (47.4\%) &  0.37 (42.4\%) &    0.32 (49.7\%) &    0.14 (28.0\%) &       0.15 (60.3\%) &         0.0 (-0.3\%) \\
  &  0.7 &  0.47 (60.0\%) &  0.53 (55.3\%) &    0.42 (61.4\%) &    0.27 (34.4\%) &       0.25 (67.0\%) &        0.01 (-4.9\%) \\
 \midrule
   &  0.3 &   0.16 (5.3\%) &   0.17 (2.0\%) &     0.08 (4.3\%) &     0.18 (2.7\%) &       0.01 (16.0\%) &        0.02 (33.7\%) \\
  Flickr &  0.5 &  0.25 (10.2\%) &   0.25 (5.0\%) &     0.19 (6.4\%) &     0.26 (5.6\%) &       0.11 (11.2\%) &        0.07 (21.8\%) \\
   &  0.7 &  0.28 (21.0\%) &  0.31 (12.4\%) &    0.37 (18.7\%) &    0.33 (11.3\%) &        0.2 (17.2\%) &         0.2 (21.4\%) \\
  \midrule
   &  0.3 &  0.17 (13.6\%) &   0.06 (6.2\%) &     0.03 (9.5\%) &      0.1 (4.7\%) &       0.01 (18.8\%) &         0.0 (39.9\%) \\
  PubMed &  0.5 &   0.3 (23.4\%) &  0.13 (10.5\%) &    0.12 (15.9\%) &    0.24 (10.8\%) &       0.06 (11.8\%) &        0.01 (36.4\%) \\
   &  0.7 &  0.31 (41.3\%) &  0.23 (22.4\%) &     0.14 (8.3\%) &  0.14 (-491.6\%) &       0.16 (12.5\%) &        0.05 (21.2\%) \\
\midrule
     &  0.3 &   0.25 (0.5\%) &   0.41 (0.2\%) &      0.2 (0.5\%) &   0.23 (0.2\%) &        0.01 (4.8\%) &         0.01 (5.9\%) \\
    ER &  0.5 &   0.36 (1.1\%) &   0.52 (0.8\%) &     0.35 (0.4\%) &   0.36 (0.2\%) &        0.18 (1.2\%) &         0.02 (7.4\%) \\
     &  0.7 &   0.39 (3.2\%) &   0.55 (2.5\%) &     0.44 (2.0\%) &   0.43 (0.8\%) &        0.23 (2.9\%) &        0.29 (10.4\%) \\
\midrule
    &  0.3 &  0.44 (86.4\%) &  0.11 (65.1\%) &    0.12 (81.5\%) &  0.34 (80.7\%) &        0.01 (0.3\%) &        0.14 (70.4\%) \\
   GEO &  0.5 &  0.71 (87.3\%) &   0.2 (57.8\%) &    0.24 (31.4\%) &  0.55 (80.4\%) &        0.1 (59.6\%) &        0.27 (65.0\%) \\
    &  0.7 &  0.96 (83.2\%) &   0.4 (55.2\%) &    0.33 (54.8\%) &  0.72 (90.0\%) &       0.19 (72.4\%) &        0.41 (61.0\%) \\
\midrule
  &  0.3 &  0.13 (86.6\%) &  0.04 (79.8\%) &    0.03 (69.0\%) &  0.11 (69.7\%) &         0.0 (1.3\%) &        0.04 (73.6\%) \\
 Shape &  0.5 &  0.23 (91.4\%) &  0.08 (89.8\%) &    0.06 (82.2\%) &  0.17 (88.2\%) &       0.04 (80.2\%) &        0.08 (79.4\%) \\
  &  0.7 &  0.34 (91.1\%) &  0.17 (94.3\%) &     0.1 (74.7\%) &  0.24 (95.9\%) &       0.09 (64.6\%) &        0.13 (84.8\%) \\
\midrule
     &  0.3 &  0.27 (46.2\%) &  0.04 (65.6\%) &   0.04 (-26.9\%) &  0.43 (32.9\%) &       0.02 (68.2\%) &        0.06 (75.2\%) \\
    WS &  0.5 &  0.45 (62.9\%) &  0.09 (82.1\%) &    0.09 (60.6\%) &            0.52 (51.8\%) &       0.09 (69.9\%) &        0.11 (84.2\%) \\
     &  0.7 &  0.65 (73.4\%) &  0.15 (78.4\%) &    0.14 (66.7\%) &  0.67 (76.6\%) &       0.15 (80.8\%) &        0.16 (83.2\%) \\
\bottomrule
\end{tabular}
}
\end{center}
\end{table}

\section{Other Losses}
\label{flex}
{\bf {Other differentiable loss.}} To demonstrate that our framework is flexible, we adapt \nn to the following two losses. The two losses are both differentiable w.r.t the weights of coarse graph.

(1) Loss based on normalized graph Laplacian: $
Loss (\mathcal{L}, \mathcal{\hL}_t) = \frac{1}{k} \sum_{i=1}^k |f_i^T \mathcal{L} f_i - (\proj f_i)^T \mathcal{\hL}_t (\proj f_i)| $. Here $\{ f_i\}$ are the set of first $k$ eigenvectors of the normalized Laplacian $\mathcal{L}$ of original grpah $G$, and $ \proj = \widehat{D}^{1/2}PD^{-1/2}$. (2) Conductance difference between original graph and coarse graph. $Loss = \frac{1}{k} \sum_{i=1}^k | \varphi(S_i) - \varphi(\pi(S_i)) | $. $\varphi(S)$ is the conductance $\varphi(S):=\frac{\sum_{i \in S, j \in \bar{S}} a_{i j}}{\min (a(S), a(\bar{S}))}$ where $a(S):=\sum_{i \in S} \sum_{j \in V} a_{i j}$. We randomly sample $k$ subsets of nodes $S_0, ..., S_k \subset V$ where 

$|S_i|$ is set to be a random number sampled from the uniform distribution $U(|V|/4, |V|/2)$. Due to space limits, we present the result for conductance in \Cref{conductance-loss-appendix}. 

Following the same setting as before, we perform experiments to minimize two different losses. As shown in \Cref{loss_quadratic_lap_sym_eigen_False_ratio_5} and \Cref{conductance-loss-appendix}, for most graphs and methods, \nn still shows good generalization capacity and improvement for both losses. Apart from that, we also observe the initial loss for normalized Laplacian is much smaller than that for standard Laplacian, which might be due to that the fact that eigenvalues of normalized Laplacian are in $[0, 2]$.

\begin{table}
\caption{Loss: quadratic loss. Laplacian: normalized Laplacian for original and coarse graphs. Each entry $x (y)$ is: $x =$ loss w/o learning, and $y =$ improvement percentage.}
\label{loss_quadratic_lap_sym_eigen_False_ratio_5}
\renewcommand{\arraystretch}{1.2}
\begin{center}
\resizebox{1\textwidth}{!}{
\begin{tabular}{@{}lllllllll@{}}
\addlinespace[-\aboverulesep] 
  \cmidrule[\heavyrulewidth]{2-8}
  & Dataset &   BL & Affinity & \makecell[l]{Algebraic \\ Distance} &  \makecell[l]{Heavy \\ Edge} & \makecell[l]{Local var \\ (edges)} & \makecell[l]{Local var \\ (neigh.)} \\ 
  \cmidrule{2-8}
  \parbox[t]{2mm}{\multirow{4}{*}{\rotatebox[origin=c]{90}{Synthetic}}} & BA &  0.13 (76.2\%) & 0.14 (45.0\%) &  0.15 (51.8\%) &  0.15 (46.6\%) &    0.14 (55.3\%) &    0.06 (57.2\%) \\
  & ER &  0.10 (82.2\%) &   0.10 (83.9\%) &  0.09 (79.3\%) &  0.09 (78.8\%) &    0.06 (64.6\%) &    0.06 (75.4\%) \\
  & GEO &  0.04 (52.8\%) &  0.01 (12.4\%) &  0.01 (27.0\%) &  0.03 (56.3\%) &   0.01 (-145.1\%) &    0.02 (-9.7\%) \\
  & WS &  0.05 (83.3\%) &  0.01 (-1.7\%) &  0.01 (38.6\%) & 0.05 (50.3\%)       &    0.01 (40.9\%) &  0.01 (10.8\%) \\
  \cmidrule{2-8}
  \parbox[t]{2mm}{\multirow{5}{*}{\rotatebox[origin=c]{90}{Real}}} & CS & 0.08 (58.0\%) &  0.06 (37.2\%) &  0.04 (12.8\%) &  0.05 (41.5\%) &    0.02 (16.8\%) &    0.01 (50.4\%) \\
  & Flickr &  0.08 (-31.9\%)&       0.06 (-27.6\%) &  0.06 (-67.2\%) &  0.07 (-73.8\%) &   0.02 (-440.1\%) &    0.02 (-43.9\%) \\
  & Physics &  0.07 (47.9\%) &  0.06 (40.1\%) &  0.04 (17.4\%) &  0.04 (61.4\%) &   0.02 (-23.3\%) &    0.01 (35.6\%) \\
  & PubMed &   0.05 (47.8\%) &  0.05 (35.0\%) &  0.05 (41.1\%) &  0.12 (46.8\%) &   0.03 (-66.4\%) &   0.01 (-118.0\%) \\
  & Shape &  0.02 (84.4\%) &  0.01 (67.7\%) &  0.01 (58.4\%) &  0.02 (87.4\%) &    0.0 (13.3\%) &    0.01 (43.8\%) \\
  \cmidrule[\heavyrulewidth]{2-8} \addlinespace[-\belowrulesep] 
\end{tabular}
}
\end{center}
\end{table}

{\bf {Non-differentiable loss}.} In Section \ref{sec:proof-of-concept}, we use Rayleigh loss as a proxy for training but the Eigenerror for validation and test.
Here we train \nn with Rayleigh loss but evaluate \emph{Eigenerror} on \textit{test} graphs, which is more challenging. Number of vectors $k$ is 40 for synthetic graphs and 200 for real networks. 

 As shown in Table \ref{nondiff-table}, our training strategy via Rayleigh loss can improve the eigenvalue alignment between original graphs and coarse graphs in most cases. Reducing Eigenerror is more challenging than other losses, possibly because we are minimizing a differentiable proxy (the Rayleigh loss). Nevertheless, improvement is achieved in most cases.

\begin{table}
\caption{Loss: Eigenerror. Laplacian: combinatorial Laplacian for original graphs and doubly-weighted Laplacian for coarse ones. Each entry $x (y)$ is: $x =$ loss w/o learning, and $y =$ improvement percentage. $\dagger$ stands for out of memory.}
\label{nondiff-table}
\renewcommand{\arraystretch}{1.2}
\begin{center}
\resizebox{1\textwidth}{!}{
\begin{tabular}{@{}lllllllll@{}}
\addlinespace[-\aboverulesep] 
  \cmidrule[\heavyrulewidth]{2-8}
  & Dataset &   BL & Affinity & \makecell[l]{Algebraic \\ Distance} &  \makecell[l]{Heavy \\ Edge} & \makecell[l]{Local var \\ (edges)} & \makecell[l]{Local var \\ (neigh.)} \\ 
  \cmidrule{2-8}
  \parbox[t]{2mm}{\multirow{4}{*}{\rotatebox[origin=c]{90}{Synthetic}}} & BA &   0.36 (7.1\%) &  0.17 (8.2\%) &   0.22 (6.5\%) &  0.22 (4.7\%) &    0.11 (21.1\%) &    0.17 (-15.9\%) \\
  & ER &  0.61 (0.5\%) &  0.70 (1.0\%) &   0.35 (0.6\%) &  0.36 (0.2\%) &    0.19 (1.2\%) &     0.02 (0.8\%) \\
  & GEO &  1.72 (50.3\%) & 0.16 (89.4\%) &  0.18 (91.2\%) & 0.45 (84.9\%) &    0.08 (55.6\%) &     0.20 (86.8\%) \\
  & WS &  1.59 (43.9\%) & 0.11 (88.2\%) &  0.11 (83.9\%) &  0.58 (23.5\%) &    0.10 (88.2\%) &    0.12 (79.7\%) \\
  \cmidrule{2-8}
  \parbox[t]{2mm}{\multirow{5}{*}{\rotatebox[origin=c]{90}{Real}}} & CS & 1.10 (18.0\%) & 0.55 (49.8\%) &  0.33 (60.6\%) &  0.42 (44.5\%) &    0.21 (75.2\%) &    0.0 (-154.2\%) \\
  & Flickr & 0.57 (55.7\%) & $\dagger$ &  0.33 (20.2\%) &  0.31 (55.0\%) &    0.11 (67.6\%) &    0.07 (60.3\%) \\
  & Physics &  1.06 (21.7\%) & 0.58 (67.1\%) &  0.33 (69.5\%) &  0.35 (64.6\%) &    0.20 (79.0\%)   & 0.0 (-377.9\%) \\ %
  & PubMed &  1.25 (7.1\%) &  0.50 (15.5\%) &  0.51 (12.3\%) & 1.19 (-110.1\%) &    0.35 (-8.8\%) &    0.02 (60.4\%) \\
  & Shape &  2.07 (67.7\%) & 0.24 (93.3\%) &  0.17 (90.9\%) & 0.49 (93.0\%) &    0.11 (84.2\%) &     0.20 (90.7\%) \\
  \cmidrule[\heavyrulewidth]{2-8} \addlinespace[-\belowrulesep] 
\end{tabular}
}
\end{center}
\end{table}

\begin{table}
\renewcommand{\arraystretch}{1.2}

\caption{Loss: quadratic loss. Laplacian: \textit{normalized} Laplacian for both original and coarse graphs. Each entry $x (y)$ is: $x =$ loss w/o learning, and $y =$ improvement percentage. BL stands for the baseline. } 
\label{loss_quadratic_lap_sym_eigen_False}

\begin{center}
\resizebox{\textwidth}{!}{
\begin{tabular}{@{}llllllll@{}}
\toprule
 Dataset &     Ratio & BL & Affinity & \makecell[l]{Algebraic \\ Distance} &    \makecell[l]{Heavy \\ Edge} & \makecell[l]{Local var \\ (edges)} & \makecell[l]{Local var \\ (neigh.)} \\ \midrule %
      &  0.3 &    0.06 (68.6\%) &  0.07 (73.9\%) &    0.08 (80.6\%) &                               0.08 (79.6\%) &       0.06 (79.4\%) &       0.01 (-15.8\%) \\
      BA &  0.5 &    0.13 (76.2\%) &  0.14 (45.0\%) &    0.15 (51.8\%) &                               0.15 (46.6\%) &       0.14 (55.3\%) &        0.06 (57.2\%) \\
       &  0.7 &    0.22 (17.0\%) &   0.23 (5.5\%) &    0.24 (10.8\%) &                                0.24 (9.7\%) &        0.23 (5.4\%) &        0.17 (36.8\%) \\
\midrule
       &  0.3 & 0.04 (50.2\%) &   0.03 (44.1\%) &    0.01 (-7.0\%) &    0.03 (50.1\%) &      0.0 (-135.0\%) &       0.01 (-11.7\%) \\
      CS &  0.5 & 0.08 (58.0\%) &   0.06 (37.2\%) &    0.04 (12.8\%) &    0.05 (41.5\%) &       0.02 (16.8\%) &        0.01 (50.4\%) \\
       &  0.7 & 0.13 (57.8\%) &    0.1 (36.3\%) &    0.09 (21.4\%) &    0.09 (29.3\%) &       0.05 (11.6\%) &        0.04 (10.8\%) \\
\midrule
 &  0.3 & 0.05 (32.3\%) &    0.04 (5.4\%) &   0.02 (-16.5\%) &    0.03 (69.3\%) &     0.0 (-1102.4\%) &        0.0 (-59.8\%) \\
 Physics &  0.5 & 0.07 (47.9\%) &   0.06 (40.1\%) &    0.04 (17.4\%) &    0.04 (61.4\%) &      0.02 (-23.3\%) &        0.01 (35.6\%) \\
  &  0.7 &  0.14 (60.8\%)&    0.1 (52.0\%) &    0.06 (20.9\%) &    0.07 (29.9\%) &       0.04 (11.9\%) &        0.02 (39.1\%) \\
 \midrule
  &  0.3 &0.05 (-29.8\%)&             0.05 (-31.7\%) &   0.05 (-21.8\%) &   0.05 (-66.8\%) &      0.0 (-293.4\%) &        0.01 (13.4\%) \\
  Flickr &  0.5 & 0.08 (-31.9\%)&             0.06 (-27.6\%) &   0.06 (-67.2\%) &   0.07 (-73.8\%) &     0.02 (-440.1\%) &       0.02 (-43.9\%) \\
  &  0.7 &0.08 (-55.3\%)&             0.07 (-32.3\%) &  0.04 (-316.0\%) &  0.07 (-138.4\%) &     0.03 (-384.6\%) &      0.04 (-195.6\%) \\
\midrule
  &  0.3 & 0.03 (13.1\%) &  0.03 (-15.7\%) &   0.01 (-79.9\%) &    0.04 (-3.2\%) &     0.01 (-191.7\%) &        0.0 (-53.7\%) \\
  PubMed &  0.5 & 0.05 (47.8\%) &   0.05 (35.0\%) &    0.05 (41.1\%) &    0.12 (46.8\%) &      0.03 (-66.4\%) &      0.01 (-118.0\%) \\
  &  0.7 & 0.09 (58.0\%) &   0.09 (34.7\%) &    0.07 (68.7\%) &    0.07 (21.2\%) &       0.08 (67.2\%) &        0.03 (43.1\%) \\
\midrule
     &  0.3 &    0.06 (84.3\%) &    0.06 (82.0\%) &    0.05 (76.8\%) &    0.06 (80.5\%) &       0.03 (65.2\%) &        0.04 (80.8\%) \\
    ER &  0.5 &     0.1 (82.2\%) &     0.1 (83.9\%) &    0.09 (79.3\%) &    0.09 (78.8\%) &       0.06 (64.6\%) &        0.06 (75.4\%) \\
     &  0.7 &    0.12 (59.0\%) &    0.14 (52.3\%) &    0.12 (55.7\%) &    0.13 (57.1\%) &       0.08 (25.1\%) &        0.09 (50.3\%) \\
\midrule
    &  0.3 &    0.02 (73.1\%) &   0.01 (-37.1\%) &    0.01 (-4.9\%) &    0.02 (64.8\%) &      0.0 (-204.1\%) &       0.01 (-22.0\%) \\
   GEO &  0.5 &    0.04 (52.8\%) &    0.01 (12.4\%) &    0.01 (27.0\%) &    0.03 (56.3\%) &     0.01 (-145.1\%) &        0.02 (-9.7\%) \\
    &  0.7 &    0.05 (66.5\%) &    0.02 (39.8\%) &    0.02 (42.6\%) &    0.04 (66.0\%) &      0.01 (-56.2\%) &         0.02 (0.9\%) \\
\midrule
  &  0.3 &    0.01 (82.6\%) &     0.0 (41.9\%) &     0.0 (25.6\%) &    0.01 (87.3\%) &       0.0 (-73.6\%) &         0.0 (11.8\%) \\
 Shape &  0.5 &    0.02 (84.4\%) &    0.01 (67.7\%) &    0.01 (58.4\%) &    0.02 (87.4\%) &        0.0 (13.3\%) &        0.01 (43.8\%) \\
  &  0.7 &    0.03 (85.2\%) &    0.01 (78.9\%) &    0.01 (58.2\%) &    0.02 (87.9\%) &       0.01 (43.6\%) &        0.01 (59.4\%) \\
\midrule
  &  0.3 &    0.03 (78.9\%) &     0.0 (-4.4\%) &     0.0 (-7.2\%) &    0.04 (73.7\%) &      0.0 (-253.3\%) &        0.01 (60.8\%) \\
    WS &  0.5 &    0.05 (83.3\%) &    0.01 (-1.7\%) &    0.01 (38.6\%) &  0.05 (50.3\%)             &       0.01 (40.9\%) &        0.01 (10.8\%) \\
  &  0.7 &    0.07 (84.1\%) &    0.01 (56.4\%) &    0.01 (65.7\%) &    0.07 (89.5\%) &       0.01 (62.6\%) &        0.02 (68.6\%) \\
\bottomrule
\end{tabular}
}
\end{center}
\end{table}

\begin{table}
\label{conductance-loss-appendix}
\renewcommand{\arraystretch}{1.2}

\caption{Loss: conductance difference. Each entry $x (y)$ is: $x =$ loss w/o learning, and $y =$ improvement percentage. $\dagger$ stands for out of memory error.}
\label{loss_conductance_lap_none_eigen_False}

\begin{center}
\resizebox{\textwidth}{!}{
\begin{tabular}{@{}llllllll@{}}
\toprule
 Dataset &     Ratio & BL & Affinity & \makecell[l]{Algebraic \\ Distance} &    \makecell[l]{Heavy \\ Edge} & \makecell[l]{Local var \\ (edges)} & \makecell[l]{Local var \\ (neigh.)} \\ \midrule %
    &  0.3 &  0.11 (82.3\%) &    0.08 (78.5\%) &     0.10 (74.8\%) &   0.10 (74.3\%) &       0.09 (79.3\%) &        0.11 (83.6\%) \\
    BA &  0.5 &  0.14 (69.6\%) &   0.13 (31.5\%) &    0.14 (37.4\%) &  0.14 (33.9\%) &       0.13 (34.3\%) &        0.13 (56.2\%) \\
     &  0.7 &  0.22 (48.1\%) &   0.20 (11.0\%) &    0.21 (22.4\%) &  0.21 (20.0\%) &        0.20 (13.2\%) &        0.21 (47.8\%) \\
\midrule
     &  0.3 &   0.10 (81.0\%) &   0.09 (74.7\%) &     0.10 (74.3\%) &   0.10 (72.5\%) &       0.09 (76.4\%) &        0.12 (79.0\%) \\
    ER &  0.5 &  0.13 (64.0\%) &   0.14 (33.8\%) &    0.14 (33.6\%) &  0.14 (32.5\%) &       0.14 (31.9\%) &         0.12 (1.4\%) \\
     &  0.7 &   0.20 (43.4\%) &    0.19 (10.1\%) &     0.20 (17.6\%) &   0.20 (17.2\%) &       0.19 (23.4\%) &        0.17 (15.7\%) \\
\midrule
    &  0.3 &   0.10 (91.2\%) &   0.09 (87.0\%) &     0.10 (84.8\%) &   0.10 (85.5\%) &        0.10 (84.6\%) &        0.11 (92.3\%) \\
   GEO &  0.5 &  0.12 (88.1\%) &  0.13 (33.9\%) &    0.13 (32.6\%) &  0.13 (37.6\%) &       0.13 (35.3\%) &        0.13 (90.1\%) \\
    &  0.7 &  0.21 (86.7\%) &   0.17 (21.9\%) &    0.19 (25.2\%) &  0.19 (27.3\%) &       0.19 (27.8\%) &        0.11 (72.4\%)  \\
\midrule
  &  0.3 &        0.10 (82.3\%) &   0.10 (86.8\%) &    0.09 (85.8\%) &  0.09 (86.3\%) &       0.09 (84.8\%) &        0.09 (92.0\%) \\
 Shape &  0.5 &        0.14 (33.2\%) &   0.13 (34.7\%) &    0.13 (34.6\%) &  0.13 (37.7\%) &       0.13 (40.8\%) &        0.12 (89.8\%) \\
  &  0.7 &  0.17 (41.4\%) &   0.19 (23.4\%) &     0.20 (27.7\%) &   0.20 (34.0\%) &        0.20 (34.3\%) &        0.11 (76.8\%) \\
\midrule
  &  0.3 &   0.10 (86.7\%) &   0.09 (82.1\%) &     0.10 (84.3\%) &   0.10 (82.9\%) &       0.09 (81.9\%) &         0.10 (90.5\%) \\
    WS &  0.5 &  0.13 (80.8\%) &   0.13 (31.2\%) &    0.13 (33.1\%) &  0.13 (27.7\%) &       0.13 (34.0\%) &        0.13 (86.5\%) \\
  &  0.7 &  0.19 (45.3\%) &    0.19 (19.3\%) &    0.19 (27.0\%) &  0.19 (26.6\%) &        0.20 (27.1\%) &        0.11 (12.8\%) \\
\midrule
 & 0.3 & 0.11 (75.8\%) &  0.08 (86.8\%) &   0.12 (71.4\%) &   0.11 (62.6\%) &      0.11 (76.7\%) &       0.14 (87.9\%) \\
CS & 0.5 &  0.14 (48.3\%) &  0.12 (16.7\%) &   0.15 (50.0\%) &   0.11 (-7.2\%)  &      0.11 (6.7\%)   &       0.09 (9.6\%) \\
 & 0.7 &  0.26 (40.1\%) &  0.22 (29.0\%) &   0.24 (35.0\%) &   0.24 (41.0\%) &      0.23 (35.2\%) &       0.17 (28.8\%) \\
\midrule
 & 0.3 & 0.10 (81.7\%)  &  0.07 (79.2\%) &   0.11 (73.6\%)  &  0.10 (73.7\%) &      0.11 (79.0\%)   &      0.13 (4.4\%)  \\   
Physics & 0.5 & 0.13 (20.5\%) &  0.19 (39.7\%) &   0.15 (27.8\%)  & 0.16 (31.7\%)&       0.15 (25.4\%)  &     0.11 (-22.3\%) \\
 & 0.7 & 0.24 (60.2\%) &   0.16 (26.1\%) &   0.23 (15.3\%) &  0.24 (16.5\%)&       0.23 (11.2\%)  &       0.20 (35.9\%) \\
\midrule
 & 0.3 & 0.12 (42.8\%) &  0.10 (0.4\%)&      0.18 (3.6\%) &  0.18 (-0.2\%)&        0.19 (0.9\%)&        0.11 (26.4\%)\\
PubMed & 0.5 & 0.15 (19.7\%) &   0.19 (1.3\%)&   0.24 (-12.9\%)&    0.39 (3.7\%)&       0.39 (11.8\%)&        0.16 (16.0\%)\\
 & 0.7 &          0.25 (27.3\%) &            0.33 (0.8\%) &    0.36 (0.0\%) &            0.31 (33.2\%) &                 0.28 (35.3\%) &       0.23 (14.1\%)\\
\midrule
 & 0.3 &             0.11 (62.6\%) &            $\dagger$ &   0.13 (52.5\%)&   0.13 (54.7\%)&       0.12 (74.2\%)&        0.16 (58.3\%)\\
Flickr & 0.5 & 0.09 (-34.5\%)&             $\dagger$ &    0.15 (3.1\%)&    0.16 (3.4\%)&       0.15 (19.9\%)&        0.13 (-6.7\%) \\
 & 0.7 &   0.19 (35.6\%) &            $\dagger$ &      0.20 (6.0\%)&   0.28 (-3.1\%)&        0.29 (5.3\%)&       0.12 (-25.4\%)\\
\bottomrule
\end{tabular}
}
\end{center}
\end{table}

\begin{table}
\renewcommand{\arraystretch}{1.2}
\caption{Loss: Eigenerror. Laplacian: combinatorial Laplacian for original graphs and \textit{doubly-weighted Laplacian} for coarse graphs.  Each entry $x (y)$ is: $x =$ loss w/o learning, and $y =$ improvement percentage. $\dagger$ stands for out of memory error.}
\label{loss_quadratic_lap_none_eigen_True}
\begin{center}
\resizebox{\textwidth}{!}{
\begin{tabular}{@{}llllllll@{}}
\toprule
 Dataset &    Ratio   & BL & Affinity & \makecell[l]{Algebraic \\ Distance} &    \makecell[l]{Heavy \\ Edge} & \makecell[l]{Local var \\ (edges)} & \makecell[l]{Local var \\ (neigh.)} \\ \midrule
   &  0.3 &   0.19 (4.1\%) &    0.1 (5.4\%) &     0.12 (5.6\%) &   0.12 (5.0\%) &       0.03 (25.4\%) &        0.1 (-32.2\%) \\
    BA &  0.5 &   0.36 (7.1\%) &   0.17 (8.2\%) &     0.22 (6.5\%) &   0.22 (4.7\%) &       0.11 (21.1\%) &       0.17 (-15.9\%) \\
     &  0.7 &     0.55 (9.2\%)   &  0.32 (12.4\%) &    0.39 (10.2\%) &  0.37 (10.9\%) &       0.21 (33.0\%) &       0.28 (-29.5\%) \\
\midrule
       &  0.3 &  0.46 (16.5\%) &   0.3 (56.9\%) &    0.11 (59.1\%) &    0.23 (38.9\%) &       0.0 (-347.6\%) &      0.0 (-191.8\%) \\ %
      CS &  0.5 &   1.1 (18.0\%) &  0.55 (49.8\%) &    0.33 (60.6\%) &    0.42 (44.5\%) &       0.21 (75.2\%) &       0.0 (-154.2\%) \\
       &  0.7 &  2.28 (16.9\%) &  0.82 (57.0\%) &    0.66 (53.3\%) &    0.73 (38.9\%) &       0.49 (73.4\%) &        0.34 (63.3\%) \\
 \midrule
  &  0.3 & 0.48 (19.5\%) &  0.35 (67.2\%) &    0.14 (65.2\%) &     0.2 (57.4\%) &     0.0 (-521.6\%) &        0.0 (20.7\%) \\ %
 Physics &  0.5 &  1.06 (21.7\%) &  0.58 (67.1\%) &    0.33 (69.5\%) &    0.35 (64.6\%) &        0.2 (79.0\%)     &  0.0 (-377.9\%) \\ %
  &  0.7 &  2.11 (19.1\%) &  0.88 (72.9\%) &    0.62 (66.7\%) &    0.62 (64.9\%) &       0.31 (70.3\%) &      0.01 (-434.0\%) \\
 \midrule
   &  0.3 &  0.33 (20.4\%) & $\dagger$ &     0.16 (7.8\%) &     0.16 (9.1\%) &                 0.02 (63.0\%) &       0.04 (-88.9\%) \\
  Flickr &  0.5 &  0.57 (55.7\%) & $\dagger$ &    0.33 (20.2\%) &    0.31 (55.0\%) &       0.11 (67.6\%) &        0.07 (60.3\%) \\
   &  0.7 &  0.86 (85.2\%) & $\dagger$ &     0.6 (32.6\%) &    0.57 (38.7\%) &       0.23 (92.2\%) &        0.21 (40.7\%) \\
  \midrule
   &  0.3 &   0.56 (5.6\%) &  0.27 (13.8\%) &    0.13 (17.4\%) &    0.34 (10.6\%) &       0.06 (-0.4\%) &         0.0 (31.1\%) \\
  PubMed &  0.5 &   1.25 (7.1\%) &   0.5 (15.5\%) &    0.51 (12.3\%) &  1.19 (-110.1\%) &       0.35 (-8.8\%) &        0.02 (60.4\%) \\
   &  0.7 &   2.61 (8.9\%) &  1.12 (19.4\%) &  2.24 (-149.8\%) &   4.31 (-238.6\%) &     1.51 (-260.2\%) &        0.27 (75.8\%) \\
  \midrule
   &  0.3 &  0.27 (-0.1\%) &   0.35 (0.4\%) &     0.15 (0.6\%) &   0.18 (0.5\%) &        0.01 (5.7\%) &       0.01 (-10.4\%) \\
    ER &  0.5 &   0.61 (0.5\%) &    0.7 (1.0\%) &     0.35 (0.6\%) &   0.36 (0.2\%) &        0.19 (1.2\%) &         0.02 (0.8\%) \\
   &  0.7 &   1.42 (0.8\%) &   1.27 (2.1\%) &      0.7 (1.4\%) &   0.68 (0.3\%) &        0.29 (3.5\%) &        0.33 (10.2\%) \\
\midrule
   &  0.3 &  0.78 (43.4\%) &  0.08 (80.3\%) &    0.09 (77.1\%) &  0.27 (82.2\%) &     0.01 (-524.6\%) &         0.1 (82.5\%) \\
   GEO &  0.5 &  1.72 (50.3\%) &  0.16 (89.4\%) &    0.18 (91.2\%) &  0.45 (84.9\%) &       0.08 (55.6\%) &         0.2 (86.8\%) \\
   &  0.7 &  3.64 (30.4\%) &  0.33 (86.0\%) &    0.25 (86.7\%) &  0.61 (93.0\%) &       0.15 (88.7\%) &        0.32 (79.3\%) \\
\midrule
  &  0.3 &  0.87 (55.4\%) &  0.12 (88.6\%) &    0.07 (56.7\%) &  0.29 (80.4\%) &       0.01 (33.1\%) &        0.09 (84.5\%) \\
 Shape &  0.5 &  2.07 (67.7\%) &  0.24 (93.3\%) &    0.17 (90.9\%) &  0.49 (93.0\%) &       0.11 (84.2\%) &         0.2 (90.7\%) \\
  &  0.7 &  4.93 (69.1\%) &  0.47 (94.9\%) &    0.27 (68.5\%) &  0.71 (95.7\%) &       0.25 (79.1\%) &        0.34 (87.4\%) \\
\midrule
  &  0.3 &   0.7 (32.3\%) &  0.05 (84.7\%) &    0.04 (58.9\%) &  0.44 (37.3\%) &       0.02 (75.0\%) &        0.06 (83.4\%) \\
    WS &  0.5 &  1.59 (43.9\%) &  0.11 (88.2\%) &    0.11 (83.9\%) &   0.58 (23.5\%) &        0.1 (88.2\%) &        0.12 (79.7\%) \\
  &  0.7 &  3.52 (45.6\%) &  0.18 (77.7\%) &    0.17 (78.2\%) &  0.79 (82.8\%) &       0.17 (90.9\%) &        0.19 (65.8\%) \\

\bottomrule
\end{tabular}
}
\end{center}
\end{table}

\section{On the Use of GNN as Weight-Assignment Map.}
Recall that we use GNN to represent a edge-weight assignment map for an edge $(\hat{u}, \hat{v})$ between two super-nodes $\hat{u}, \hat{v}$ in the coarse graph $\Ghat$. The input will be the subgraph $G_{\hat{u},\hat{v}}$ in the original graph $G$ spanning the
clusters $\pi^{-1}(\hat{u})$, $\pi^{-1}(\hat{v})$, and the crossing edges among them; while the goal is to compute the weight of edge $(\hat{u}, \hat{v})$ based on this subgraph $G_{\hat{u}, \hat{v}}$. 
Given that the input is a local graph $G_{\hat{u}, \hat{v}}$, a GNN will be a natural choice to parameterize this edge-weight assignment map. Nevertheless, in principle, any architecture applicable to graph regression can be used for this purpose. 
To better understand if it is necessary to use the power of GNN, we replace GNN with the following baseline for graph regression. In particular, the baseline is a composition of mean pooling of node features in the original graph and a 4-layer MLP with embedding dimension 200 and ReLU nonlinearity. We use mean-pooling as the graph regression component needs to be permutation invariant over the set of node features. However, this baseline ignores the detailed graph structure which GNN will leverage. The results for different reduction ratios are presented in the \Cref{BLComparasion}. We have also implemented another baseline where the MLP module is replaced by a simpler linear regression module. The results are worse than those of MLP (and thus also GNN) as expected, and therefore omitted from this chapter.

As we can see, MLP works reasonably well in most cases, indicating that learning the edge weights is indeed useful for improvement. On the other hand, we see using GNN to parametrize the map generally yields a larger improvement over the MLP, which ignores the topology of subgraphs in the original graph. A systematic understanding of how different models such as various graph kernels \cite{kriege2020survey, vishwanathan2010graph} and graph neural networks affect the performance is an interesting question that we will leave for future work.

\begin{table}
\renewcommand{\arraystretch}{1.2}
\caption{Model comparison between MLP and \nn. Loss: quadratic loss. Laplacian: combinatorial Laplacian for both original and coarse graphs. Each entry $x (y)$ is: $x =$ loss w/o learning, and $y =$ improvement percentage.} 
\label{BLComparasion}
\begin{center}
\resizebox{1\textwidth}{!}{
\begin{tabular}{@{}llllllllll@{}}
\addlinespace[-\aboverulesep] 
  \cmidrule[\heavyrulewidth]{2-9}
  & Dataset &   Ratio &  BL & Affinity & \makecell[l]{Algebraic \\ Distance} &  \makecell[l]{Heavy \\ Edge} & \makecell[l]{Local var \\ (edges)} & \makecell[l]{Local var \\ (neigh.)} \\ 
  \cmidrule{2-9}
  
  &  &  0.3 &   0.27 (46.2\%) & 0.04 (4.1\%)   & 0.04 (-38.0\%)  & 0.43 (31.2\%)     & 0.02 (-403.3\%)        & 0.06 (67.0\%) \\
  & WS + MLP &  0.5 &  0.45 (62.9\%) & 0.09 (64.1\%) &    0.09 (15.9\%) &           0.52 (31.2\%)    &   0.09 (31.6\%) &       0.11 (58.5\%) \\
  &  &  0.7 &    0.65 (70.4\%) & 0.15 (57.6\%) &   0.14 (31.6\%) & 0.67 (76.6\%) &      0.15 (43.6\%) &       0.16 (54.0\%) \\
    \cmidrule{2-9}
&     &  0.3 & 0.27 (46.2\%) & 0.04 (65.6\%) &   0.04 (-26.9\%) &  0.43 (32.9\%) &       0.02 (68.2\%) &        0.06 (75.2\%) \\
&    WS + \nn &  0.5 & 0.45 (62.9\%)  &  0.09 (82.1\%) &    0.09 (60.6\%) &            0.52 (51.8\%) &       0.09 (69.9\%) &        0.11 (84.2\%) \\
&     &  0.7  & 0.65 (73.4\%) &  0.15 (78.4\%) &    0.14 (66.7\%) &  0.67 (76.6\%) &       0.15 (80.8\%) &        0.16 (83.2\%) \\
    \cmidrule{2-9}
    &  &  0.3 & 0.13 (76.6\%) &  0.04 (-53.4\%) & 0.03 (-157.0\%) & 0.11 (69.3\%) &     0.0 (-229.6\%) &       0.04 (-7.9\%) \\
    & Shape + MLP &  0.5 & 0.23 (78.4\%) &  0.08 (-11.6\%)&    0.06 (67.6\%) & 0.17 (83.2\%) &      0.04 (44.2\%) &       0.08 (-1.9\%) \\
      &  &  0.7 & 0.34 (69.9\%) &  0.17 (85.1\%) &     0.1 (73.5\%) & 0.24 (65.8\%) &      0.09 (74.3\%) &       0.13 (85.1\%) \\ 
    \cmidrule{2-9}       
    &        &  0.3 & 0.13 (86.8\%) &   0.04 (79.8\%) &    0.03 (69.0\%) &  0.11 (69.7\%) &         0.0 (1.3\%) &        0.04 (73.6\%) \\
 & Shape + \nn &  0.5 &  0.23 (91.4\%) &  0.08 (89.8\%) &    0.06 (82.2\%) &  0.17 (88.2\%) &       0.04 (80.2\%) &        0.08 (79.4\%) \\
 & &  0.7 &   0.34 (91.1\%) &  0.17 (94.3\%) &     0.1 (74.7\%) &  0.24 (95.9\%) &       0.09 (64.6\%) &        0.13 (84.8\%) \\

  \cmidrule[\heavyrulewidth]{2-9} \addlinespace[-\belowrulesep] 
\end{tabular}
}
\end{center}
\tabbottomvspace
\end{table}

\section{Visualization}
\label{viz-appendix}

\begin{figure}[h]
\begin{center} 
\includegraphics[scale=.25]{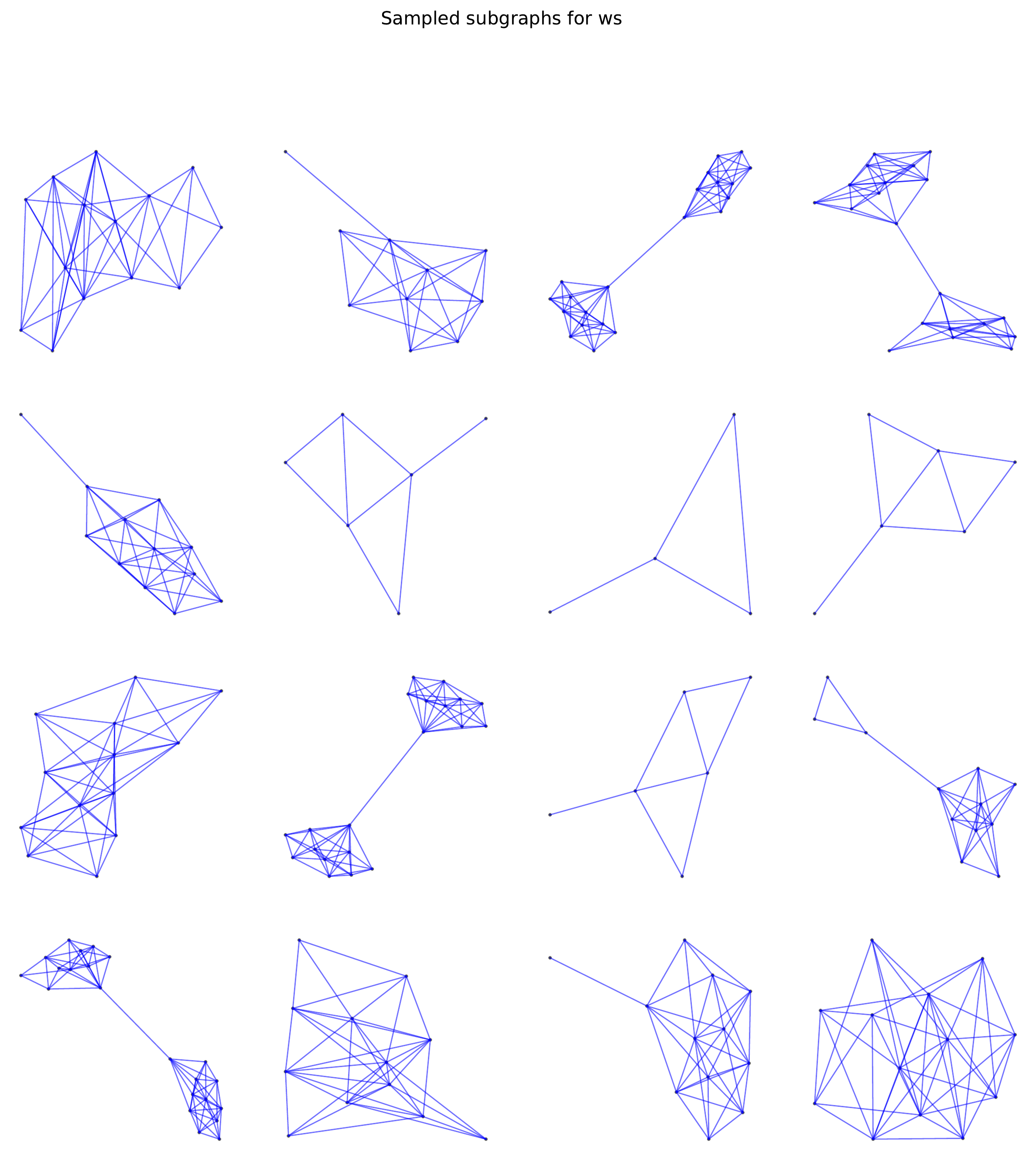}
\includegraphics[scale=.25]{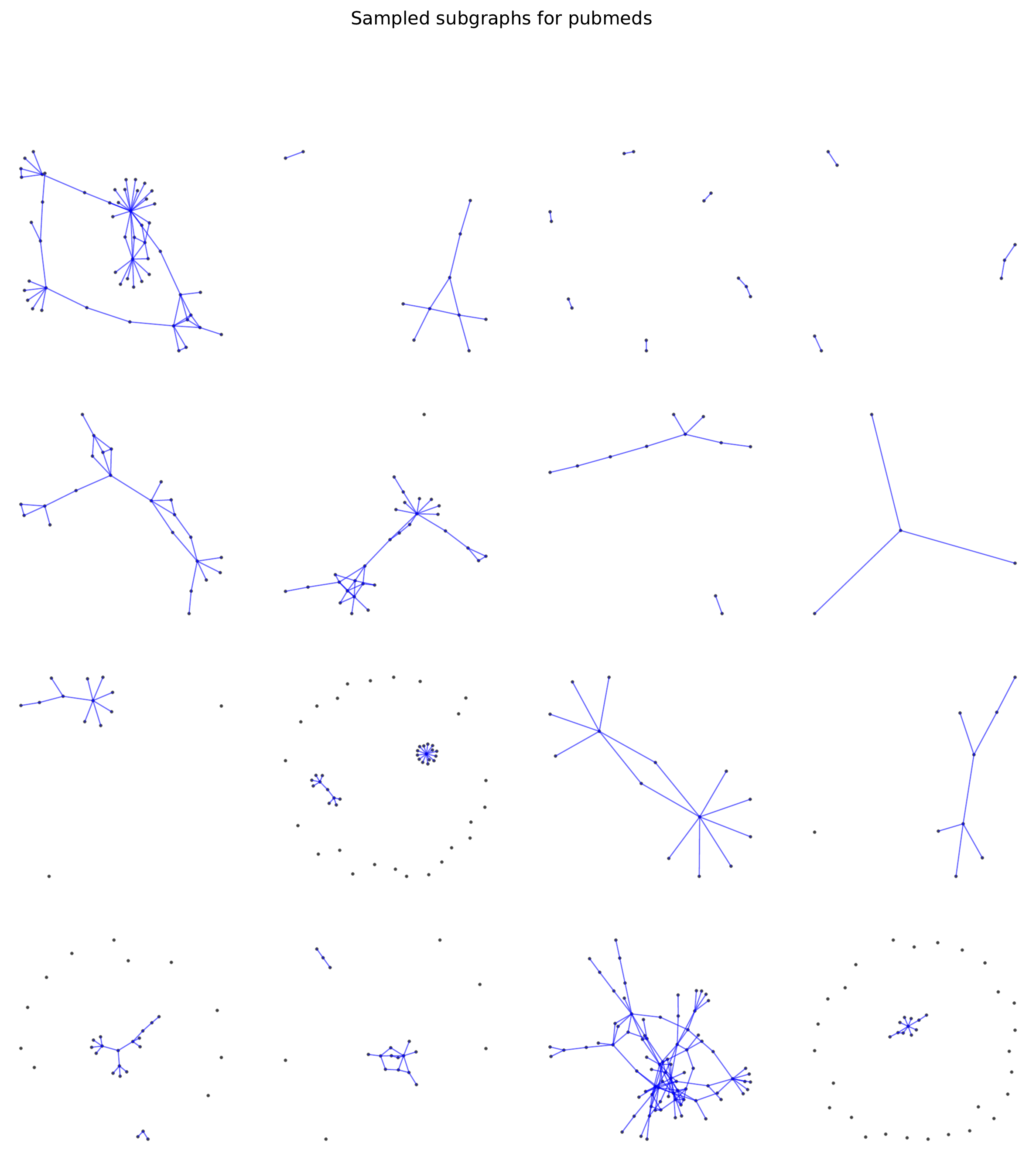}
\end{center}
\caption{A collection of subgraphs corresponding to edges in coarse graphs (WS and PubMed) generated by variation neighborhood algorithm. Reduction ratio is 0.7 and 0.9 respectively.} 
\end{figure}

We visualize the subgraphs corresponding to randomly sampled edges of coarse graphs. For example, in WS graphs, some subgraphs have only a few nodes and edges, while other subgraphs have some common patterns such as the dumbbell shape graph.  For PubMed, most subgraphs have tree-like structures, possibly due to the edge sparsity in the citation network.

\begin{figure}[ht]
\label{wdiff-fig}
\centering
\includegraphics[ trim=2.5cm 2cm 2.5cm 2cm,clip,width=.45\textwidth]{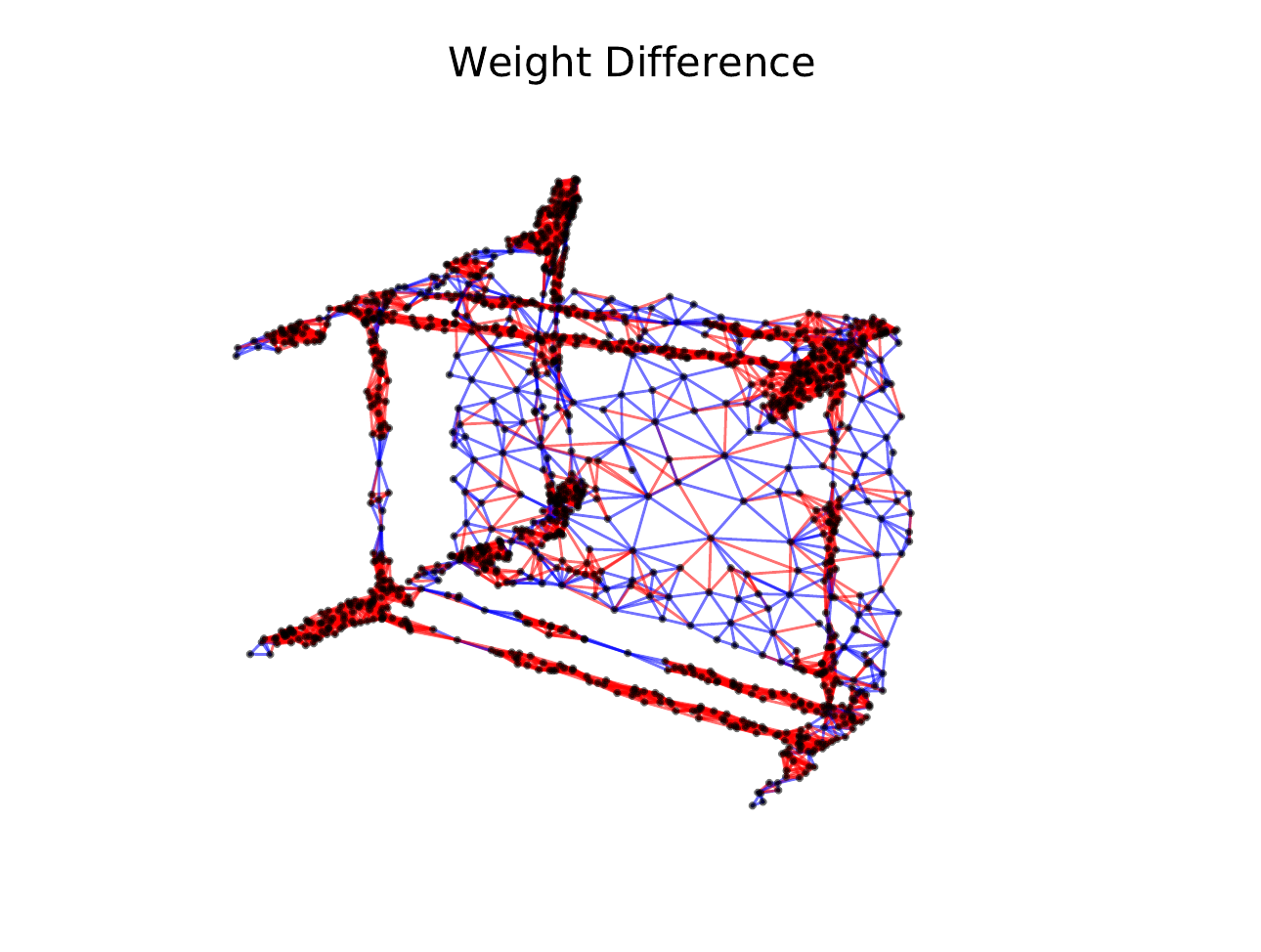}
\includegraphics[ trim=2.5cm 2cm 2.5cm 2cm,clip,width=.45\textwidth]{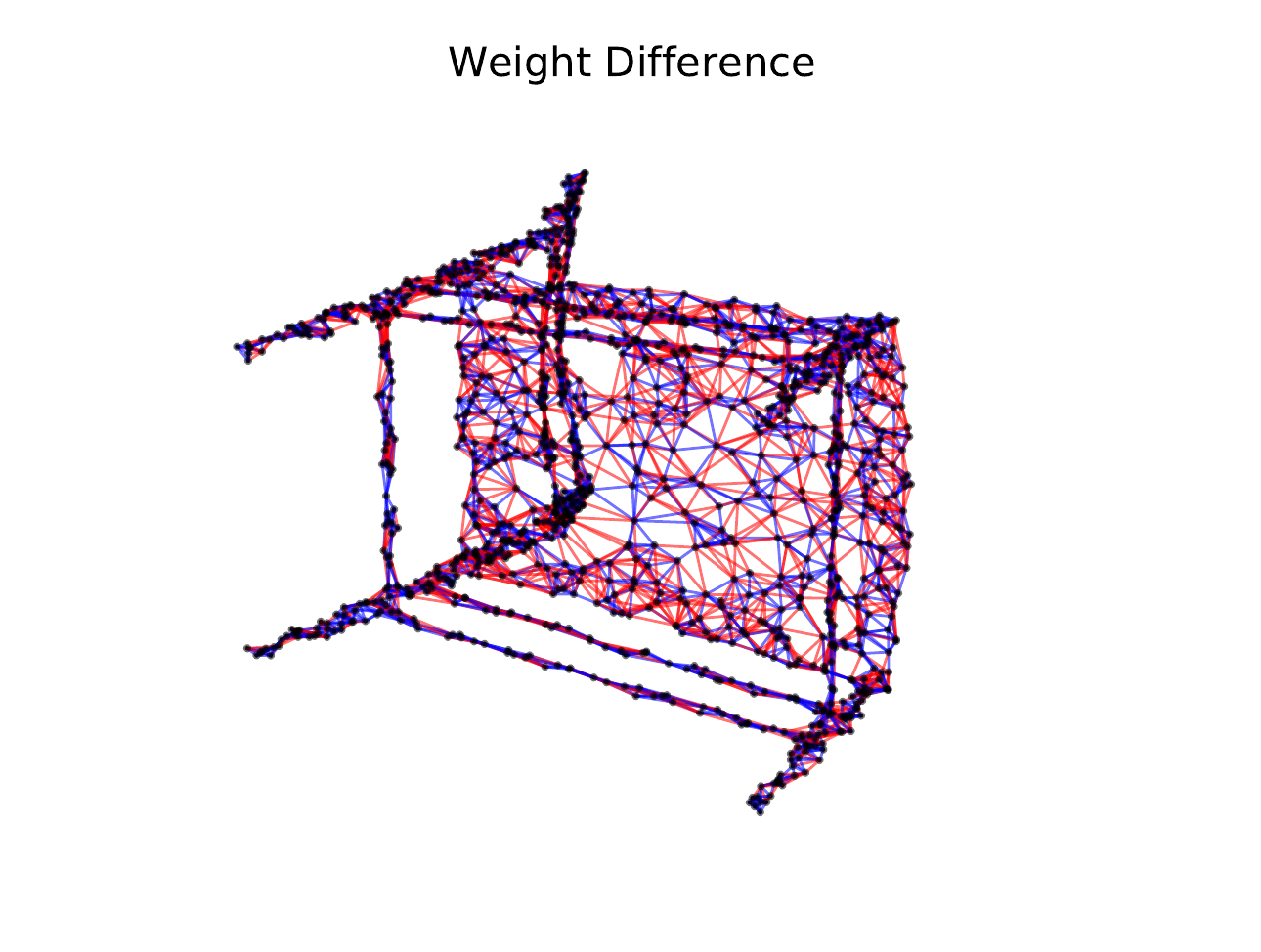}
\includegraphics[scale=.3]{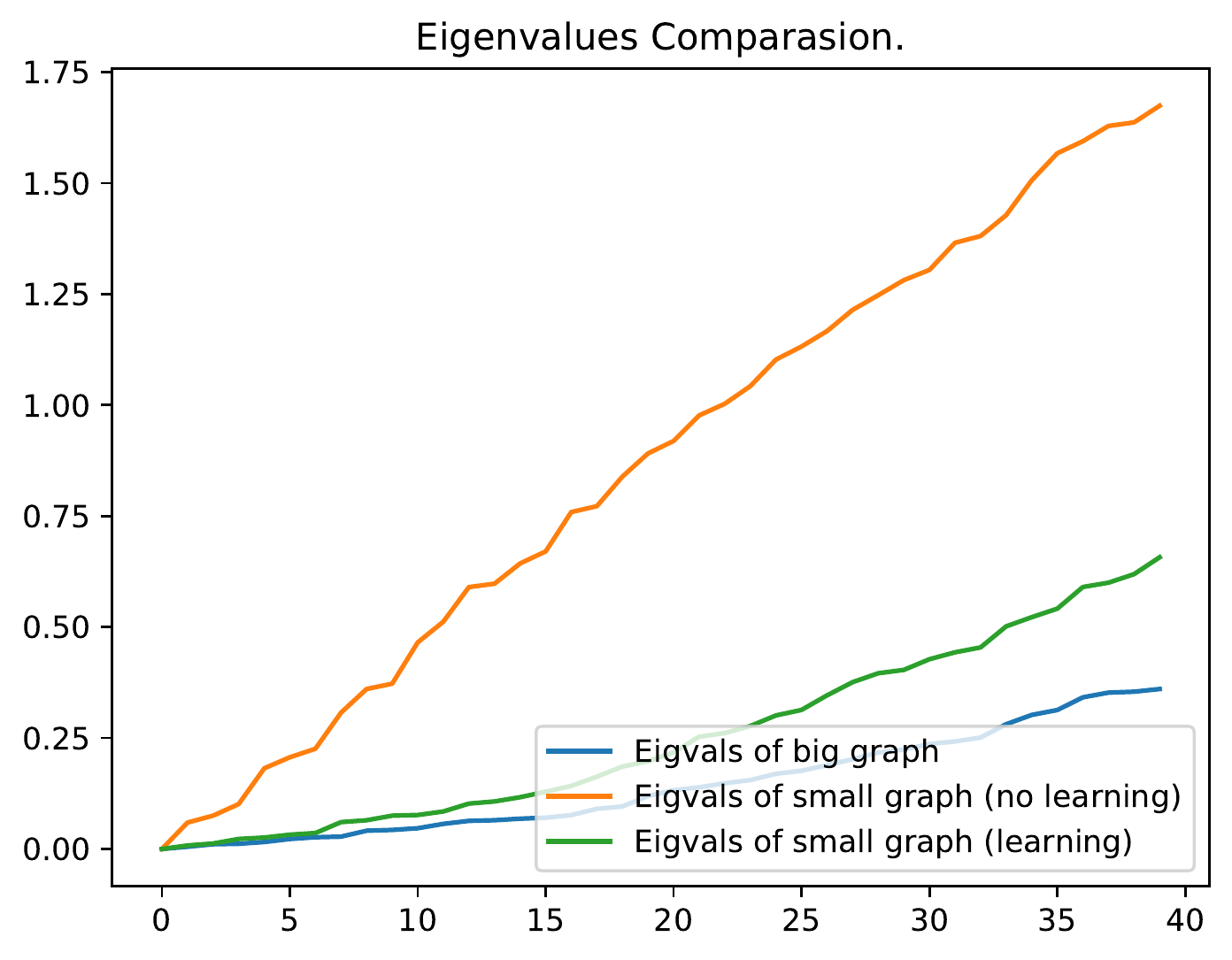}
\includegraphics[scale=.3]{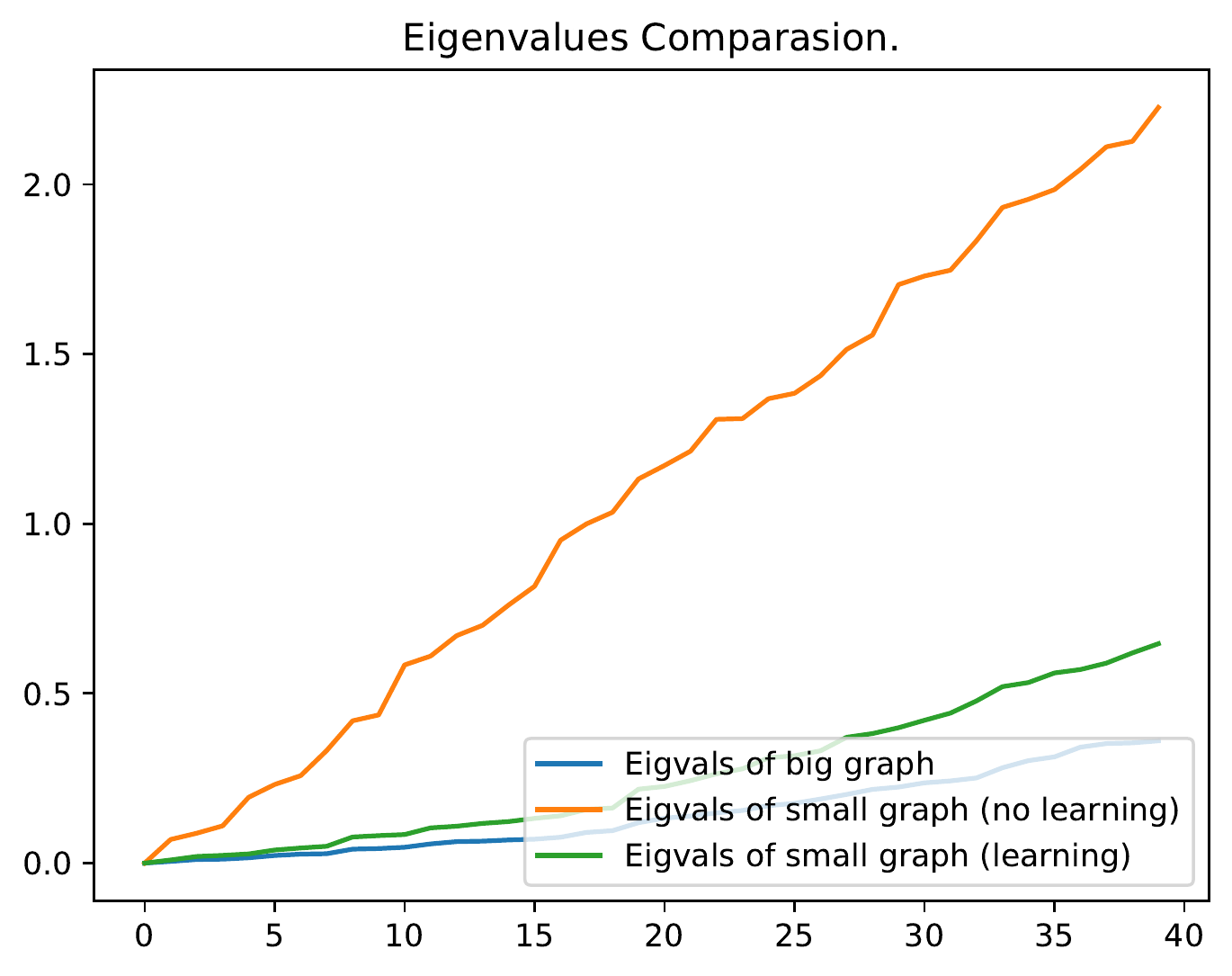}

\caption{The first row illustrates the weight difference for two coarsening methods. Blue (red) edges denote edges whose learned weights is smaller (larger) than the default ones. The second row shows the spectrum of the original graph Laplacian, coarse graph w/o learning, and coarse graph w/ learning.} 
\end{figure}

In \Cref{wdiff-fig}, we visualize the weight difference between coarsening algorithms with and without learning. We also plot the eigenvalues of coarse graphs, where the first 40 eigenvalues of the original graph are smaller than the coarse ones. After optimizing edge weights via \nntight, we see both methods produce graphs with eigenvalues closer to the eigenvalues of the original graphs. 

\section{Related Work}
\textbf{Graph sparsification}. Graph sparsification is firstly proposed to solve linear systems involving combinatorial graph Laplacian efficiently. \cite{spielman2011spectral, spielman2011graph} showed that for any undirected graph $G$ of $N$ vertices, a spectral sparsifier of $G$ with only $O(Nlog^cN/\epsilon^2)$ edges can be constructed in nearly-linear time. \footnote{The algorithm runs in $O(M.\text{polylog}N)$ time, where $M$ and $N$ are the numbers of edges and vertices.} Later on, the time complexity and the dependency on the number of the edges are reduced by various researchers \cite{batson2012twice, allen2015spectral, lee2018constructing, lee2017sdp}. 

\textbf{Graph coarsening}. Previous work on graph coarsening focuses on preserving different properties, usually related to the spectrum of the original graph and coarse graph. \cite{loukas2018spectrally, loukas2019graph} focus on the restricted spectral approximation, a modification of the spectral similarity measure used for graph sparsification. \cite{hermsdorff2019unifying} develop a probabilistic framework to preserve inverse Laplacian. 

\textbf{Deep learning on graphs}. As an effort of generalizing convolution neural network to the graphs and manifolds, graph neural networks is proposed to analyze graph-structured data. They have achieved state-of-the-art performance in node classification \cite{kipf2016semi}, knowledge graph completion \cite{schlichtkrull2018modeling}, link prediction \cite{dettmers2018convolutional, gurukar2019network}, combinatorial optimization \cite{li2018combinatorial, khalil2017learning}, property prediction \cite{duvenaud2015convolutional, xie2018crystal} and physics simulation \cite{sanchez2020learning}. 

\textbf{Deep generative model for graphs}.
To generative realistic graphs such as molecules and parse trees, various approaches have been taken to model complex distributions over structures and attributes, such as variational autoencoder \cite{simonovsky2018graphvae, ma2018constrained}, generative adversarial networks (GAN) \cite{de2018molgan, zhou2019misc}, deep autoregressive model \cite{liao2019efficient, you2018graphrnn, li2018learning}, and reinforcement learning type approach \cite{you2018graph}.  \cite{zhou2019misc} proposes a GAN-based framework to preserve the hierarchical community structure via algebraic multigrid method during the generation process. However, different from our approach, the coarse graphs in \cite{zhou2019misc} are not learned.

\section{Concluding Remarks}
We present a framework to compare original graph and the coarse one via the properly chosen Laplace operators and projection/lift map. Observing the benefits of optimizing over edge weights, we propose a GNN-based framework to learn the edge weights of coarse graph to further improve the existing coarsening algorithms. Through extensive experiments, we demonstrate that our method \nn significantly improves common graph coarsening methods under different metrics, reduction ratios, graph sizes, and graph types. 

\section{Missing Proofs}
\subsection{Choice of Laplace Operator}
\label{laplace-appendix}

\subsubsection{Laplace Operator on Weighted Simplicial Complex}
\label{simplicial-laplace-appendix}
Its most general form in the discrete case, presented as the operators on weighted simplicial complexes, is:
\begin{gather*}
\nL_{i}^{u p}=W_{i}^{-1} B_{i}^{T} W_{i+1} B_{i} \quad
\nL_{i}^{d o w n}=B_{i-1} W_{i-1}^{-1} B_{i-1}^{T} W_{i}
\end{gather*}
where $B_i$ is the matrix corresponding to the coboundary operator $\delta_i$, and $W_i$ is the diagonal matrix representing the weights of $i$-th dimensional simplices. See \cite{horak2013spectra} for details. When restricted to the graph (1 simplicial complex), we recover the most common graph Laplacians as special case of $\nL_{0}^{u p}$. Note that although the $\nL_{i}^{u p}$ and $\nL_{i}^{d o w n}$ is not symmetric, we can always symmetrize them by multiple a properly chosen diagonal matrix and its inverse from left and right without altering the spectrum.

\subsubsection{Missing Proofs}
\label{appendix-missing-proof}

\begin{wrapfigure}{r}{0.28\textwidth}
\vspace{-20pt}
  \begin{center}
  \label{appendix-fig}
    \includegraphics[width=0.26\textwidth]{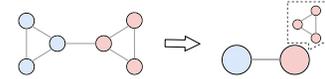}
  \end{center}
  \caption{A toy example.}
\end{wrapfigure}
We provide the missing proofs regarding the properties of the projection/lift map and the resulting operators on the coarse graph. 

Recall as an toy example, a coarsening algorithm will take graph on the left in \Cref{appendix-fig} and generate a coarse graph on the right, with coarsening matrix 
$P=\left[\begin{array}{cccccc}1 / 3 & 1 / 3 & 1 / 3 & 0 & 0 & 0 \\  0 & 0 & 0 & 1 / 3 & 1 / 3 & 1 / 3 \end{array}\right]$, 
$P^{+}=\left[\begin{array}{ll}
1 & 0  \\
1 & 0  \\
1 & 0  \\
0 & 1  \\
0 & 1  \\ 
0 & 1  
\end{array}\right]$, 
$\Gamma=\left[\begin{array}{cccc}3 &  0  \\ 0 & 3   \end{array}\right]$, 
$\IN = \left[\begin{array}{cccccc}
1 / 3 & 1 / 3 & 1 / 3 & 0 & 0 & 0\\
1 / 3 & 1 / 3 & 1 / 3 & 0 & 0 & 0\\
1 / 3 & 1 / 3 & 1 / 3 & 0 & 0 & 0\\
0 & 0 & 0 & 1 / 3 & 1 / 3 & 1 / 3 \\
0 & 0 & 0 & 1 / 3 & 1 / 3 & 1 / 3 \\
0 & 0 & 0 & 1 / 3 & 1 / 3 & 1 / 3
\end{array}\right]$. $\IN$ in general is a $\N \times \N$ block matrix of rank $\n$. All entries in each block $\IN_j$ is equal to $\frac{1}{\gamma_j}$ where $\gamma_j = \left|\vmap^{-1}(\vhat_j)\right|$.

\begin{table}[htp!]
\centering
\caption{Depending on the choice of $\mathcal{F}$ (quantity that we want to preserve) and $\genL$, we have different projection/lift operators and resulting $\sgenL$ on the coarse graph.}
\resizebox{\textwidth}{!}{
\begin{tabular}{@{}llllll@{}}
\toprule
Quantity $\mathcal{F}$ of interest & $\genL$  & Projection $\proj$ & Lift $\lift$ & $\sgenL$   & Invariant under $\lift$ \\ \midrule
Quadratic form $\Qform$ & $L$ &  $P$   & ${P^{+}}$  & Combinatorial Laplace $\widehat{L}$ & $\Qform_L(\lift \sx) = \Qform_{\widehat{L}}(\sx)$\\
 Rayleigh quotient $\RQ$ & $L$ & $\Gamma^{-1/2}{{(P^{+})}^T}$ & ${P^{+}}\Gamma^{-1/2}$ & Doubly-weighted Laplace $\sdwL$ & $\RQ_L(\lift \sx) = \RQ_{\sdwL} (\sx)$\\
 Quadratic form $\Qform$ & $\mathcal{L}$ & $ \widehat{D}^{1/2}PD^{-1/2} $  & $ D^{1/2}{(P^{+})} \widehat{D}^{-1/2}$ & Normalized Laplace $\widehat{\mathcal{L}}$  & $\Qform_\mathcal{L}(\lift \sx) = \Qform_{\widehat{\mathcal{L}}}(\sx)$\\ \bottomrule
\end{tabular}
}
\end{table}

We first make an observation about projection and lift operator, $\proj$ and $\lift$.

\begin{lemma}
$\proj \circ \lift = I.$ $ \lift \circ \proj = \IN. $
\end{lemma}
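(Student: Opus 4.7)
The plan is to reduce all six identities (three rows of the table, each giving two equalities) to a small set of elementary matrix equalities involving only $P$, $P^+$, and $\Gamma$, and then finish each row by direct substitution. Concretely, I would first write out, from the definitions in Section~\ref{subsec:coarsegraph}, that $P[r,i]=1/\gamma_r$ iff $\pi(v_i)=\hat v_r$ (and $0$ otherwise), and $P^{+}[i,r]=1$ iff $\pi(v_i)=\hat v_r$. With these two formulas in hand, an entry chase proves three building-block identities that do all the work:
\begin{itemize}
\item[(I1)] $PP^{+}=I_n$: the $(r,s)$ entry is $\sum_{i}P[r,i]P^{+}[i,s]$, which is nonzero only when $\pi(v_i)=\hat v_r=\hat v_s$, summing to $\gamma_r\cdot(1/\gamma_r)=1$ exactly on the diagonal.
\item[(I2)] $(P^{+})^{T}P^{+}=\Gamma$: column $r$ of $P^{+}$ is a $0/1$ vector of weight $\gamma_r$, and distinct columns have disjoint support.
\item[(I3)] $P^{+}P=\IN$: the $(i,j)$ entry equals $\sum_{r}P^{+}[i,r]P[r,j]=1/\gamma_r$ precisely when $\pi(v_i)=\pi(v_j)=\hat v_r$, which matches the block form of $\IN$ given by the toy example.
\end{itemize}

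Next I would verify $\proj\circ\lift=I$ row by row. Row~1 is immediate from (I1). For Row~2, $\Gamma^{-1/2}(P^{+})^{T}\cdot P^{+}\Gamma^{-1/2}=\Gamma^{-1/2}\Gamma\,\Gamma^{-1/2}=I_n$ by (I2). For Row~3, $\widehat D^{1/2}PD^{-1/2}\cdot D^{1/2}P^{+}\widehat D^{-1/2}=\widehat D^{1/2}(PP^{+})\widehat D^{-1/2}=I_n$ by (I1), using that the diagonal matrices $D^{\pm 1/2}$ and $\widehat D^{\pm 1/2}$ cancel in pairs.

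Then I would verify $\lift\circ\proj=\IN$. Row~1 is (I3). For Row~2, $P^{+}\Gamma^{-1/2}\cdot\Gamma^{-1/2}(P^{+})^{T}=P^{+}\Gamma^{-1}(P^{+})^{T}$, and a direct entry calculation $(P^{+}\Gamma^{-1}(P^{+})^{T})[i,j]=\sum_{r}P^{+}[i,r](1/\gamma_r)P^{+}[j,r]$ equals $1/\gamma_r$ when $\pi(v_i)=\pi(v_j)=\hat v_r$ and $0$ otherwise, matching $\IN$. For Row~3, $D^{1/2}P^{+}\widehat D^{-1/2}\cdot\widehat D^{1/2}PD^{-1/2}=D^{1/2}(P^{+}P)D^{-1/2}=D^{1/2}\IN D^{-1/2}$ by (I3); since $\IN$ vanishes off clusters, conjugation by $D^{\pm 1/2}$ preserves the same sparsity pattern, giving the required form of the cluster-projection (in the $D$-weighted inner product natural to the normalized Laplacian setting).

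This is essentially a bookkeeping lemma, so I do not expect a genuine obstacle. The only point that needs a small remark is Row~3 of the second equality, where $\lift\circ\proj$ is similar to $\IN$ via conjugation by $D^{1/2}$ rather than equal on the nose; I would either note this conjugation explicitly, or observe that it is the correct lifted-then-projected operator relative to the $D$-weighted inner product that pairs with the normalized Laplacian $\mathcal L$. Beyond that, everything follows from (I1)--(I3) by one line of matrix algebra each.
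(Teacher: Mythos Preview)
Your approach is essentially identical to the paper's: both verify the three rows by direct substitution, using $PP^{+}=I$, $(P^{+})^{T}P^{+}=\Gamma$, and $P^{+}P=\IN$ as the basic building blocks. You are in fact more careful than the paper on Row~3, correctly noting that $\lift\circ\proj=D^{1/2}\IN D^{-1/2}$ is only conjugate to $\IN$ rather than literally equal (the paper simply writes $D^{1/2}\IN D^{-1/2}=\IN$ without justification).
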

\begin{proof}
For the first case, it's easy to see $\proj \circ \lift =PP^+ = I$ and $\lift \circ \proj =P^+P = \IN$. \\

For the second case, 
$\proj \circ \lift = \Gamma^{-1/2}\ppt P^+ \Gamma^{-1/2} = \Gamma^{-1/2} \IN \Gamma^{-1/2} = I.$ $\lift \circ \proj =P^+ \Gamma^{-1} \ppt =  I$. \\

For the third case, 
\begin{align*}
\proj \circ \lift & = \Dhat^{1/2}PD^{-1/2} D^{1/2}{(P^{+})} \Dhat^{-1/2} \\
& = \Dhat^{1/2}P{(P^{+})} \Dhat^{-1/2} \\
& = \Dhat^{1/2}I \Dhat^{-1/2} = I. 
\end{align*}
\begin{align*}
\lift \circ \proj & = D^{1/2}{(P^{+})} \Dhat^{-1/2} \Dhat^{1/2}PD^{-1/2} \\
& = D^{1/2}{(P^{+})} PD^{-1/2} \\
&  = D^{1/2}\IN D^{-1/2} = \IN.
\end{align*}
\end{proof}
Now we prove the three lemmas in the this chapter. 
\begin{proposition}
For any vector $\sx \in \reals^\n$, we have that $\Qform_{\LtildeVhat}(\sx) = \Qform_{\combL }(P^+ \sx)$. 
In other words, set $\bx:= P^+ \sx$ as the lift of $\sx$ in $\reals^\N$, then $\sx^T \LtildeVhat \sx = \bx^T \combL  \bx$. 
\end{proposition}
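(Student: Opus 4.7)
The statement is essentially a direct consequence of how $\LtildeVhat$ was defined in Section~\ref{subsec:coarsegraph}, so the plan is just to unpack the definitions and use associativity of the matrix-vector product. Specifically, recall from the preceding discussion that $\LtildeVhat = (P^+)^T \combL P^+$. I would substitute this expression into the left-hand side $\Qform_{\LtildeVhat}(\sx) = \sx^T \LtildeVhat \sx$ to obtain $\sx^T (P^+)^T \combL P^+ \sx$, and then regroup the factors as $(P^+\sx)^T \combL (P^+ \sx)$, which by the stated definition $\bx = P^+ \sx$ equals $\bx^T \combL \bx = \Qform_{\combL}(\bx)$. This already gives the claimed identity.

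There is no real obstacle here, since the only ingredient is the identity $\LtildeVhat = (P^+)^T \combL P^+$ that was already established (following \cite{loukas2019graph}) right before the statement. The proof is a one-line chain of equalities; I would just write it out as
\[
\sx^T \LtildeVhat \sx \;=\; \sx^T (P^+)^T \combL P^+ \sx \;=\; (P^+\sx)^T \combL (P^+\sx) \;=\; \bx^T \combL \bx.
\]
If desired, I could also remark that this is precisely what makes $P^+$ the natural lift map when the quantity of interest is the quadratic form: because $\LtildeVhat$ was constructed exactly to make this diagram commute, no properties of $P^+$ beyond its definition are used. The analogous statements in the second and third rows of Table~\ref{tab:operator} (for Rayleigh quotient with the doubly-weighted Laplacian, and for the quadratic form with the normalized Laplacian) can be proved by the same substitution strategy, which I would defer to a short computation in the appendix.
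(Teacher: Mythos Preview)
Your proposal is correct and matches the paper's own proof essentially verbatim: the paper also simply substitutes $\LtildeVhat = (P^+)^T \combL P^+$ and writes the one-line chain $\Qform_{\combL}(\lift \sx) = (\lift \sx)^T \combL \lift \sx = \sx^T (P^+)^T \combL P^+ \sx = \sx^T \LtildeVhat \sx$. Your remark that the other two rows of Table~\ref{tab:operator} follow by the same substitution is exactly how the paper handles them as well.
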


\begin{proof}
$\Qform_L(\lift \sx) = (\lift \sx)^T L \lift \sx = \sx \ppt L P^{+} \sx^T = \sx^T \Lhat \sx = \Qform_{\Lhat}(\sx)$
\end{proof}

\begin{proposition}
For any vector $x \in \reals^\n$, we have that $\RQ_{\sdwL}(\sx) = \RQ_{\combL }(P^+ \Gamma^{-1/2} \sx)$. That is, set the lift of $\sx$ in $\reals^\N$ to be $\bx = P^+ \Gamma^{-1/2} \sx$, then we have that 
$\frac{\sx^T \sdwL \sx}{\sx^T \sx} = \frac{\bx^T \combL  \bx}{\bx^T \bx}$. 
\end{proposition}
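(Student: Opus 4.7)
The plan is to reduce the claimed Rayleigh quotient identity to two separate equalities: one for the numerator and one for the denominator, and then combine them. First, for the numerator, I will simply plug in the explicit form of the doubly-weighted Laplacian, namely $\sdwL = (P^+ \Gamma^{-1/2})^T L (P^+ \Gamma^{-1/2})$ (already derived earlier in the chapter from $\sdwL = \Gamma^{-1/2} \Lhat \Gamma^{-1/2}$ together with $\Lhat = (P^+)^T L P^+$). Setting $\bx := P^+ \Gamma^{-1/2} \sx = \lift \sx$, this immediately gives $\sx^T \sdwL \sx = \bx^T L \bx$, exactly as in the proof of Proposition \ref{prop:qform}.

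Second, for the denominator, the key identity to establish is $(P^+)^T P^+ = \Gamma$. This will follow from the explicit combinatorial description of $P^+$: recall $P^+ \in \mathbb{R}^{\N \times \n}$ with $P^+[i,r] = 1$ iff $\pi(v_i) = \hat{v}_r$ and $0$ otherwise. Hence $[(P^+)^T P^+]_{r,s} = \sum_i P^+[i,r] P^+[i,s]$ counts the indices $i$ sent by $\pi$ into both $\hat{v}_r$ and $\hat{v}_s$; since $\pi$ is a function this vanishes for $r \neq s$ and equals $\gamma_r = |\pi^{-1}(\hat{v}_r)|$ for $r = s$, yielding $(P^+)^T P^+ = \Gamma$. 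Therefore
\begin{equation*}
\bx^T \bx \;=\; \sx^T \Gamma^{-1/2} (P^+)^T P^+ \Gamma^{-1/2} \sx \;=\; \sx^T \Gamma^{-1/2} \Gamma \, \Gamma^{-1/2} \sx \;=\; \sx^T \sx.
\end{equation*}

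Combining the two, $\RQ_{\sdwL}(\sx) = \frac{\sx^T \sdwL \sx}{\sx^T \sx} = \frac{\bx^T L \bx}{\bx^T \bx} = \RQ_{L}(\bx)$, which is exactly the statement. There is no real obstacle here: the proof is essentially a direct computation using the explicit form of $P^+$ and the factorization of $\sdwL$. The only mildly nontrivial step is verifying $(P^+)^T P^+ = \Gamma$, but this is a one-line combinatorial check. In effect, the lift $\lift = P^+\Gamma^{-1/2}$ is designed precisely as the $L^2$-isometry that converts the vertex-weighted inner product on $\Vhat$ into the standard inner product on $V$, so both numerator and denominator transport cleanly under $\lift$.
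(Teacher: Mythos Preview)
Your proposal is correct and follows essentially the same approach as the paper's proof: both split into numerator and denominator, use the factorization $\sdwL = (P^+\Gamma^{-1/2})^T L (P^+\Gamma^{-1/2})$ for the numerator, and use $(P^+)^T P^+ = \Gamma$ for the denominator. Your explicit combinatorial verification of $(P^+)^T P^+ = \Gamma$ is a helpful addition that the paper leaves implicit.
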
 

\begin{proof}
By definition $R_L(\lift \sx) = \frac{\Qform_L(\lift \sx)}{|| \lift \sx ||_2^2}$, $R_{\dwL}(x) = \frac{\Qform_{\Lhat}( x)}{||  x ||_2^2}.$ We will prove the lemma by showing
$\Qform_L(\lift \sx) = \Qform_{\dwL}( x)$ and $|| \lift \sx ||_2^2 = ||  x ||_2^2$.
\begin{align*}
\Qform_L(\lift \sx) & = (\lift \sx)^T L \lift \sx \\
& =  \sx^T \Gamma^{-1/2} \ppt L P^{+}\Gamma^{-1/2} \sx   \\
& = \sx^T \Gamma^{-1/2} \Lhat \Gamma^{-1/2} \sx \\
& = \sx^T\sdwL \sx\\
& =\Qform_{\sdwL}( \sx)
\end{align*}
$|| \lift \sx ||_2^2 = \sx^T\Gamma^{-1/2} \ppt P^{+}\Gamma^{-1/2} \sx = \sx^T\sx = ||  \sx ||_2^2$.
Since both numerator and denominator stay the same under the action of $\lift$, we conclude $R_L(\lift \sx) = R_{\sdwL}(\sx).$ 
\end{proof}

\begin{proposition}
For any vector $x \in \reals^\n$, we have that $\Qform_{\snL}(x) = \Qform_{\nL}(D^{1/2}P^{+}\Dhat^{1/2} x)$. That is, set the lift of $\sx$ in $\reals^\N$ to be $x: = D^{1/2}P^{+}\Dhat^{1/2} x$, then we have that 
$\sx^T \snL \sx = \bx^T \nL \bx$. 
\end{proposition}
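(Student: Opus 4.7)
The proposition mirrors the first two rows of Table~\ref{tab:operator}, so the natural strategy is to reduce the identity to the already-proved fact $\widehat{L} = (P^{+})^T L P^{+}$ (the Loukas proposition quoted earlier in the chapter) by using the definitional relationship $\mathcal{L} = D^{-1/2} L D^{-1/2}$ on $G$ and its analog $\widehat{\mathcal{L}} = \widehat{D}^{-1/2} \widehat{L} \widehat{D}^{-1/2}$ on $\Ghat$. I would also first note that the lift in the statement should read $\lift = D^{1/2} P^{+} \widehat{D}^{-1/2}$, matching the third row of Table~\ref{tab:operator}; as written the proposition appears to have a typo in the sign of the exponent on $\widehat{D}$, and the identity is the natural one that makes $\proj \circ \lift = I$ hold.

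\textbf{Main computation.} The plan is simply to unfold and substitute. First, expand the coarse-side quadratic form:
\begin{equation*}
\sx^{T} \widehat{\mathcal{L}} \sx \;=\; \sx^{T} \widehat{D}^{-1/2} \widehat{L} \widehat{D}^{-1/2} \sx.
\end{equation*}
Next, apply $\widehat{L} = (P^{+})^T L P^{+}$ to rewrite the middle factor, and then expand $L = D^{1/2} \mathcal{L} D^{1/2}$ on $G$:
\begin{equation*}
\sx^{T} \widehat{D}^{-1/2} (P^{+})^T D^{1/2} \mathcal{L} D^{1/2} P^{+} \widehat{D}^{-1/2} \sx \;=\; (\lift \sx)^{T} \mathcal{L} (\lift \sx),
\end{equation*}
which is exactly $\Qform_{\mathcal{L}}(\lift \sx)$. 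This settles the identity in one short chain of substitutions.

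\textbf{Sanity checks and obstacles.} To confirm that $\lift$ and $\proj$ form a consistent projection/lift pair (as in the first two rows of the table), I would verify $\proj \circ \lift = I$ by direct computation: $\widehat{D}^{1/2} P D^{-1/2} \cdot D^{1/2} P^{+} \widehat{D}^{-1/2} = \widehat{D}^{1/2} P P^{+} \widehat{D}^{-1/2} = \widehat{D}^{1/2} I \widehat{D}^{-1/2} = I$, using $P P^{+} = I$ noted earlier in the appendix. I expect no real obstacle here; the only nonroutine input is the Loukas identity $\widehat{L} = (P^{+})^T L P^{+}$, and the rest is algebraic bookkeeping with diagonal matrices, which commute freely with the scalar pairing. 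If one wanted a corresponding Rayleigh-quotient statement for $\mathcal{L}$ (analogous to the middle row of Table~\ref{tab:operator}), one would additionally need to show that $\|\lift \sx\|_2^2 = \|\sx\|_2^2$, which would force a different choice of $\lift$ involving $\widehat{D}^{1/2}$ on the right; but for the quadratic-form statement actually asked, no such normalization is needed.
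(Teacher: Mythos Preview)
Your proposal is correct and matches the paper's own proof essentially step for step; the only cosmetic difference is direction---the paper starts from $\Qform_{\mathcal{L}}(\lift \sx)$ and collapses $D^{1/2}\mathcal{L}D^{1/2}=L$, then $(P^{+})^{T}LP^{+}=\widehat{L}$, then $\widehat{D}^{-1/2}\widehat{L}\widehat{D}^{-1/2}=\widehat{\mathcal{L}}$, whereas you start from the coarse side and expand outward. You are also right that the lift in the statement has a typo and should carry $\widehat{D}^{-1/2}$ rather than $\widehat{D}^{1/2}$; the paper's proof uses $\widehat{D}^{-1/2}$, consistent with the third row of Table~\ref{tab:operator}.
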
 

\begin{proof}
\begin{align*}
\Qform_\nL(\lift \sx) & = (\lift \sx)^T \nL \lift \sx \\
& = \sx \Dhat^{-1/2}\ppt D^{1/2} \nL D^{1/2}{(P^{+})} \Dhat^{-1/2} \sx \\
& = \sx \Dhat^{-1/2}\ppt L{(P^{+})} \Dhat^{-1/2} \sx \\
& = \sx \Dhat^{-1/2} \Lhat \Dhat^{-1/2} \sx  \\
& = \sx \snL \sx = \Qform_{\snL}(\sx) 
\end{align*}
\end{proof}

\subsection{Iterative Algorithm for Spectrum Alignment}
\label{iterative-algo-appendix}

\subsubsection{Problem Statement}
Given a graph $G$ and its coarse graph $\Ghat$ output by existing algorithm $\mathcal{A}$, ideally we would like to set edge weight of $\Ghat$ so that spectrum of $\widehat{L}$ (denoted as $\mathfrak{L} \w$ below) has prespecified eigenvalues $\lbd$, i.e, 
\begin{equation}
\label{original-opt-problem}
\begin{array}{ll}
& \mathfrak{L} \w = U \operatorname{Diag}(\lbd) U^{T} \\
 & \text { subject to }  \w \geq 0, U^{T} U=I
\end{array}
\end{equation}
We would like to make an important note that in general, given a sequence of non decreasing numbers $0=\lambda_1 \leq \lambda_2, ..., \lambda_n$ and a coarse graph $\Ghat$, it is not always possible to set the edge weights (always positive) so that the resulting eigenvalues of graph Laplacian of $\Ghat$ is $\{0=\lambda_1, \lambda_2, ..., \lambda_n\}$. We introduce some notations before we present the theorem. The theorem is developed in the context of \textit{inverse eigenvalue problem} for graphs \cite{barioli2004two, hogben2005spectral, fallat2020inverse}, which aims to characterize the all possible sets of eigenvalues that can be realized by symmetric matrices whose sparsity pattern is related to the topology of a given graph.

For a symmetric real $n\times n$ matrix $M$, the graph of $M$ is the graph with vertices $\{1, ..., n \}$ and edges $\left\{\{i, j\} \mid b_{i j} \neq 0 \text { and } i \neq j\right\}$. Note that the diagonal of $M$ is ignored in determining $\mathcal{G}(M)$. Let $S_n$ be the set of real symmetric $n \times n$ matrices. For a graph $\Ghat$ with $n$ nodes, define $\mathcal{S}(\Ghat)=\left\{M \in S_{n} \mid \mathcal{G}(M)=\Ghat\right\}$.

\begin{theorem}
\label{impossible-thm}\cite{barioli2004two, hogben2005spectral} 
If $T$ is a tree, for any $M\in \mathcal{S}(T)$, the diameter of $T$ is less than the number of distinct eigenvalues of $M$.
\end{theorem}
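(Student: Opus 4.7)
The plan is to fix a symmetric matrix $M \in \mathcal{S}(T)$ with $T$ a tree of diameter $d$ and show that the matrices $I, M, M^2, \ldots, M^d$ are linearly independent; this will force the minimal polynomial of $M$ to have degree at least $d+1$, and since $M$ is symmetric (hence diagonalizable with minimal polynomial equal to $\prod_i (x - \lambda_i)$ over distinct eigenvalues $\lambda_i$), this yields at least $d+1$ distinct eigenvalues. So the whole task reduces to the linear independence claim.

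To obtain linear independence, I would pick a diameter path $v_0, v_1, \ldots, v_d$ in $T$ and examine the $(v_0, v_k)$ entries of the powers $M^l$ for $0 \le l, k \le d$. The key combinatorial observation is the standard one: $(M^l)_{ij}$ is a sum over length-$l$ walks in the graph $T$ (with self-loops allowed through the diagonal of $M$), weighted by products of $M$-entries along the walk. Since $T$ is a tree, any walk of length $l$ from $v_0$ to $v_k$ must traverse the unique $v_0$--$v_k$ path, and any deviation (back-edge or self-loop) adds an even number of extra steps. Therefore $(M^l)_{v_0 v_k} = 0$ for $l < k$, and $(M^k)_{v_0 v_k}$ equals the product of the nonzero edge weights along the unique path, hence is nonzero.

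Given this triangular structure, I would peel off the coefficients one at a time: assuming $\sum_{l=0}^{d} c_l M^l = 0$, evaluating the $(v_0, v_d)$-entry gives $c_d (M^d)_{v_0 v_d} = 0$, forcing $c_d = 0$; evaluating next the $(v_0, v_{d-1})$-entry forces $c_{d-1} = 0$; and iterating downward through the diameter path produces $c_d = c_{d-1} = \cdots = c_0 = 0$. This delivers the linear independence of $\{M^l\}_{l=0}^d$ and closes the argument.

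The main obstacle, though largely cosmetic, is justifying cleanly that diagonal entries of $M$ (which are unconstrained by $\mathcal{G}(M) = T$) do not create walks that reach $v_k$ from $v_0$ in fewer than $k$ steps. Handling this just requires noting that a self-loop at a vertex contributes to a walk without advancing graph distance, so any walk from $v_0$ to $v_k$ of length $l$ projects to a walk in $T$ of length at most $l$; since the graph distance is $k$, one must have $l \ge k$, with equality achieved only by the unique path (no self-loops). Once this bookkeeping is made precise, the rest of the argument is straightforward linear algebra.
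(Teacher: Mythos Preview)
Your argument is correct and is in fact the standard proof of this classical result: the triangular structure of the entries $(M^l)_{v_0,v_k}$ along a diameter path forces $I,M,\ldots,M^d$ to be linearly independent, whence the minimal polynomial of the diagonalizable matrix $M$ has degree at least $d+1$. Your handling of the diagonal entries is also fine; a self-loop step never advances graph distance, so any walk from $v_0$ to $v_k$ of length $l<k$ is impossible, and the unique length-$k$ walk contributes the nonzero product of edge weights.

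Note, however, that the paper does not supply its own proof of this theorem: it is stated with a citation to \cite{barioli2004two,hogben2005spectral} and used as a black box to motivate why exact spectrum alignment may be infeasible. So there is no ``paper's proof'' to compare against; you have simply filled in the classical argument that the cited references contain.
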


For any graph $\Ghat$, its Laplacian (both combinatorial and normalized Laplacian) belongs to $\mathcal{S}(\Ghat)$, the above theorem therefore applies. In other words, given a tree $T$ and given a sequence of non-decreasing numbers $0=\lambda_1 \leq \lambda_2, ... \lambda_n$, as long as the number of distinct values in sequences is less than the diameter of $T$, then this sequence can not be realized as the eigenvalues of graph Laplacian of $T$, no matter how we set the edge weights. 

Therefore Instead of looking for the a graph with exact spectral alignment with original graph, which is impossible for some nondecreasing sequences as illustrated by the theorem \ref{impossible-thm}, we relax the equality in equation \ref{original-opt-problem} by instead minimizing the $|| \mathfrak{L} \w-U \operatorname{Diag}(\lbd) U^{T} ||_F^2$. %
We first present an algorithm for the complete graph $\widehat{G}$ of size $n$. This algorithm is essentially the special case of \cite{kumar2019structured}. We then show relaxing $\Ghat$ from the complete graph to the arbitrary graph will not change the convergence result. Before that, we introduce some notations.

\subsubsection{Notation}
\begin{definition}
The linear operator $\mathfrak{L}: \w \in \mathbb{R}_{+}^{\frac{n(n-1)}{2}} \rightarrow \mathfrak{L} \w \in \mathbb{R}^{n \times n}$ is defined as
\begin{equation*}
[\mathfrak{L} \w]_{i j}=\left\{\begin{array}{ll}-w_{i+d_{j}} & i>j \\ {[\mathfrak{L} \w]_{j i}} & i > j \\ \sum_{i \neq j}[\mathfrak{L} \w]_{i j} & i=j\end{array}\right.
\end{equation*}
where $d_{j}=-j+\frac{j-1}{2}(2 n-j)$
\end{definition}

A toy example is given to illustrate the operators, Consider a
weight vector $\w=\left[w_{1}, w_{2}, w_{3}, w_{4}, w_{5}, w_{6}\right]^{T}$, The Laplacian operator $\mathfrak{L}$ on $\w$ gives
\begin{equation*}
\mathfrak{L} \w=\left[\begin{array}{cccc}\sum_{i=1,2,3} w_{i} & -w_{1} & -w_{2} & -w_{3} \\ -w_{1} & \sum_{i=1,4,5} w_{i} & -w_{4} & -w_{5} \\ -w_{2} & -w_{4} & \sum_{i=2,4,6} w_{i} & -w_{6} \\ -w_{3} & -w_{5} & -w_{6} & \sum_{i=3,5,6} w_{i}\end{array}\right]
\end{equation*}

Adjoint operator $\lapstar{}$ is defined to satisfy $\langle\lap{\w}, Y\rangle = \langle\w, \lapstar{Y}\rangle.$

\subsubsection{Complete Graph Case}
Recall our goal is to 

\begin{equation}
\label{opt-problem-equ}
\begin{array}{ll}
\underset{\w, U}{\mini} & \left\|\mathfrak{L} \w-U \operatorname{Diag}(\lbd) U^{T}\right\|_{F}^{2} \\ 
\text { subject to } & \w \geq 0, U^{T} U=I
\end{array}
\end{equation}

\begin{algorithm}[H]
\label{opt-algo-appendix}
\DontPrintSemicolon
  \KwInput{coarse graph $\widehat{G}$, error tolerance $\epsilon$, iteration limit $T$ }
  \KwOutput{coarse graph with new edge weights}
  Initialize $U$ as random element in orthogonal group $O(n, \mathbb{R})$ and $t=0$. \\
   \While{$\epsilon$ is smaller than the threshold or $t>T$}
   {
   		Update $\w^{t+1}, U^{t+1}$ according to \ref{update-w-appendix} and Lemma \ref{update-u-appendix}\;
		Compute Error $\epsilon$	\;
		 $t=t+1$
   }
   From $w^t$, output coarse graph with new edge weights.
\caption{Iterative algorithm for edge weight optimization}
\end{algorithm}

where $\lbd$ is the desired eigenvalues of the smaller graph. One choice of $\lbd$ can be the first $n$ eigenvalues of the original graph of size $N$. $\w$ and $U$ are variables of size $n(n-1)/2$ and $n \times n$. %

The algorithm proceeds by iteratively updating $U$ and $\w$ while fixing the other one.

\textbf{Update for $\w$:} It can be seen when $U$ is fixed, minimizing $\w$ is equivalent to a non-negative quadratic problem

\begin{equation}
\label{equ:updatew}
\underset{\w \geq 0 }\mini \quad f(\w)=\frac{1}{2}\|\mathfrak{L} \w\|_{F}^{2}-\mathbf{c}^{T} \w
\end{equation}
which is strictly convex where $\mathbf{c}=\mathfrak{L}^{*}(U \operatorname{Diag}(\boldsymbol{\lambda}) U^{T})$.
It is easy to see that the problem is strictly convex. However, due the the non-negativity constraint for $\w$, there is no closed form solution. Thus we derive a majorization function via the following lemma. 
\begin{lemma}
 The function $f(w)$ is majorized at $w_t$ by the function
 \begin{equation}
 \label{equ:major}
g(\w | \w^{t})=f(\w^{t})+(\w-\w^{t})^{T} \nabla f(\w^{t})+\frac{L_{1}}{2}\left\|\w-\w^{t}\right\|^{2}
\end{equation}

where $\w^{t}$ is the update from previous iteration an $L_{1}=\|\mathfrak{L}\|_{2}^{2}=2 n$.
\end{lemma}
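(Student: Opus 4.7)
My plan is to verify the two defining conditions of a majorizer at $\w^t$, namely (i) $g(\w^t\mid \w^t)=f(\w^t)$ and (ii) $g(\w\mid \w^t)\ge f(\w)$ for all feasible $\w$, and then reduce (ii) to a bound on the operator norm $\|\mathfrak{L}\|_2$. Condition (i) is immediate by direct substitution, so the real content is (ii) together with the claim $L_1=\|\mathfrak{L}\|_2^2=2n$.

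For (ii), I would exploit the fact that $f(\w)=\tfrac12\langle \w,\mathfrak{L}^{*}\mathfrak{L}\w\rangle-\mathbf{c}^{T}\w$ is a quadratic form, so its second-order Taylor expansion around $\w^{t}$ is exact:
\[
f(\w)=f(\w^{t})+(\w-\w^{t})^{T}\nabla f(\w^{t})+\tfrac{1}{2}(\w-\w^{t})^{T}\mathfrak{L}^{*}\mathfrak{L}(\w-\w^{t}).
\]
Subtracting this from $g(\w\mid \w^{t})$ gives
\[
g(\w\mid \w^{t})-f(\w)=\tfrac{1}{2}(\w-\w^{t})^{T}\bigl(L_{1}I-\mathfrak{L}^{*}\mathfrak{L}\bigr)(\w-\w^{t}),
\]
which is nonnegative for every $\w$ as soon as $L_{1}\ge \lambda_{\max}(\mathfrak{L}^{*}\mathfrak{L})=\|\mathfrak{L}\|_{2}^{2}$. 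Thus (ii) reduces to proving $\|\mathfrak{L}\|_{2}^{2}\le 2n$.

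The main obstacle, and the only step that requires real work, is the operator-norm bound. I would compute $\|\mathfrak{L}\w\|_F^{2}$ directly from the definition of $\mathfrak{L}$: each edge weight $w_{k}$ appears symmetrically as two off-diagonal entries (contributing $2w_{k}^{2}$) and once in each of the two diagonal entries of its endpoints. Hence
\[
\|\mathfrak{L}\w\|_{F}^{2}=2\sum_{k}w_{k}^{2}+\sum_{i=1}^{n}\Bigl(\sum_{k\in E_{i}}w_{k}\Bigr)^{2},
\]
where $E_{i}$ is the set of edges incident to vertex $i$. Applying Cauchy--Schwarz to the inner sum gives $\bigl(\sum_{k\in E_{i}}w_{k}\bigr)^{2}\le |E_{i}|\sum_{k\in E_{i}}w_{k}^{2}$, and for the complete graph on $n$ vertices $|E_{i}|=n-1$ for every $i$. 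Summing and using that each edge is incident to exactly two vertices yields $\sum_{i}\bigl(\sum_{k\in E_{i}}w_{k}\bigr)^{2}\le 2(n-1)\|\w\|^{2}$, so $\|\mathfrak{L}\w\|_{F}^{2}\le 2n\|\w\|^{2}$, i.e.\ $\|\mathfrak{L}\|_{2}^{2}\le 2n$.

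Combining the bound with the previous display shows $g(\w\mid \w^{t})\ge f(\w)$ for all $\w$, and together with (i) this yields the majorization property, completing the proof. A minor remark I would include is that only $L_{1}\ge \|\mathfrak{L}\|_{2}^{2}$ is needed, so any upper bound on $\|\mathfrak{L}\|_{2}^{2}$ (here $2n$) suffices, and the choice $L_{1}=2n$ is what makes the per-iteration update in~(\ref{equ:updatew}) admit the closed-form nonnegative projection used by the iterative scheme.
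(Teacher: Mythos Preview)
Your argument is correct: the exact second-order Taylor expansion of the quadratic $f$, followed by the reduction to $L_1\ge\lambda_{\max}(\mathfrak{L}^{*}\mathfrak{L})$, is the standard and cleanest way to establish the majorization, and your Cauchy--Schwarz computation of $\|\mathfrak{L}\w\|_F^2$ correctly yields $\|\mathfrak{L}\|_2^2\le 2n$. The paper itself does not give a proof of this lemma; it is stated without proof and attributed to \cite{kumar2019structured} (the whole iterative scheme is described as ``essentially the special case'' of that work), so your write-up in fact supplies the missing details rather than diverging from anything in the paper. One small addition you could make, since the paper asserts the \emph{equality} $\|\mathfrak{L}\|_2^2=2n$ rather than just the inequality you need, is to note that the all-ones vector $\w=\mathbf{1}$ attains the ratio $2n$ (Cauchy--Schwarz is tight when all incident weights are equal), so the bound is sharp; but as you already remark, only the upper bound matters for the majorization.
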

After ignoring the constant terms in \ref{equ:major}, the majorized problem of \ref{equ:updatew} at $\w^{t}$ is given
\begin{equation}
\label{update-w-appendix}
\underset{\w \geq 0}{\mini} \quad g(\w | \w^{t})=\frac{1}{2} \w^{T} \w-\boldsymbol{a}^{T} \w,
\end{equation}
where $a=\w^{t}-\frac{1}{L_{1}} \nabla f(\w^{t})$ and $\nabla f(\w^{t})=\mathfrak{L}^{*}(\mathfrak{L} \w^{t})-\mathbf{c}$

\begin{lemma}
From the KKT optimality conditions we can easily obtain the optimal solution to \ref{equ:major} as
\begin{equation*}
\w^{t+1}=(\w^{t}-\frac{1}{L_{1}} \nabla f(\w^{t}))^{+}
\end{equation*}
where $(x)^{+}:=\max (x, 0)$.
\end{lemma}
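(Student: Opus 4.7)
The plan is to exploit the fact that the majorized problem
\[
\underset{\w \geq 0}{\mini} \quad g(\w \mid \w^{t}) = \tfrac{1}{2}\w^T\w - \boldsymbol{a}^T\w
\]
is a \emph{separable} non-negative quadratic: both the quadratic term $\tfrac{1}{2}\w^T\w = \tfrac{1}{2}\sum_i w_i^2$ and the linear term $\boldsymbol{a}^T\w = \sum_i a_i w_i$ decompose across coordinates, and the constraint $\w \ge 0$ is also coordinate-wise. So it suffices to solve, for each index $i$ independently,
\[
\min_{w_i \ge 0} \; \tfrac{1}{2}w_i^2 - a_i w_i.
\]
Since this one-dimensional objective is strictly convex, its unique minimizer is $w_i^\star = \max(a_i, 0) = (a_i)^+$, and stacking these gives $\w^{t+1} = (\boldsymbol{a})^+ = (\w^{t} - \tfrac{1}{L_1}\nabla f(\w^t))^+$, which is exactly the claimed update.

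To make this rigorous via the KKT conditions (as stated in the lemma), I would write the Lagrangian
\[
\mathcal{L}(\w, \boldsymbol{\mu}) = \tfrac{1}{2}\w^T\w - \boldsymbol{a}^T\w - \boldsymbol{\mu}^T\w,
\]
with dual variables $\boldsymbol{\mu} \ge 0$ for the constraint $\w \ge 0$. The KKT conditions are stationarity $\w - \boldsymbol{a} - \boldsymbol{\mu} = 0$, primal feasibility $\w \ge 0$, dual feasibility $\boldsymbol{\mu} \ge 0$, and complementary slackness $\mu_i w_i = 0$ for all $i$. Stationarity gives $w_i = a_i + \mu_i$. Case analysis on the sign of $a_i$ then determines the minimizer: if $a_i \ge 0$, set $\mu_i = 0$ and $w_i = a_i$; if $a_i < 0$, set $w_i = 0$ and $\mu_i = -a_i > 0$. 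In both cases $w_i^\star = (a_i)^+$, and all KKT conditions are satisfied, so by convexity this is the global minimum.

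There is essentially no obstacle here — the result is the standard projection onto the non-negative orthant applied to the unconstrained minimizer $\w = \boldsymbol{a}$ of the separable quadratic. The only subtlety worth flagging is verifying that Slater's condition (or, more simply, the fact that the objective is strictly convex and the feasible set $\{\w \ge 0\}$ is non-empty, closed, and convex) ensures KKT conditions are both necessary and sufficient for the global optimum, which is immediate here. Thus $\w^{t+1} = (\w^{t} - \tfrac{1}{L_1}\nabla f(\w^{t}))^+$ as claimed.
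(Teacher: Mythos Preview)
Your proposal is correct and follows exactly the approach the paper indicates: the paper does not actually spell out a proof for this lemma beyond the phrase ``from the KKT optimality conditions we can easily obtain,'' and your argument supplies precisely those details by writing the Lagrangian for the separable quadratic $\tfrac{1}{2}\w^T\w - \boldsymbol{a}^T\w$ subject to $\w\ge 0$ and doing the standard coordinate-wise case analysis. There is nothing to add.
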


\textbf{Update for $U$:} When $\w$ is fixed, the problem of optimizing $U$ is equivalent to

\begin{equation}
\label{kkt-equalation}
\begin{array}{ll}
\underset{U}{\mini} & \text{tr}(U^T\lap{\w}U Diag(\boldsymbol{\lambda}))
\\ \text { subject to } & U^TU = I
\end{array}
\end{equation}

It can be shown that the optimal $U$ at iteration $t$ is achieved by $U^{t+1} = \text{eigenvectors}(L_w)$.
\begin{lemma}\label{update-u-appendix}
From KKT optimality condition, the solution to \ref{kkt-equalation} is given by \\ $U^{t+1} = eigenvectors(\lap{\w}).$
\end{lemma}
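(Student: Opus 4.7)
The goal is to solve
\begin{equation*}
\min_{U^T U = I}\; \text{tr}\bigl(U^T \mathfrak{L}(\w)\, U\, \text{Diag}(\boldsymbol{\lambda})\bigr),
\end{equation*}
which is a classical Brockett-style trace minimization over the Stiefel manifold. My plan is to first write down the Lagrangian with a symmetric multiplier matrix $\Lambda$ enforcing $U^T U = I$, namely $\mathcal{L}(U,\Lambda) = \text{tr}(U^T \mathfrak{L}(\w) U \text{Diag}(\boldsymbol{\lambda})) - \text{tr}(\Lambda(U^T U - I))$, and differentiate in $U$ to obtain the stationarity condition $\mathfrak{L}(\w)\, U\, \text{Diag}(\boldsymbol{\lambda}) = U\Lambda$. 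Left-multiplying by $U^T$ and using $U^T U = I$ identifies $\Lambda = U^T \mathfrak{L}(\w) U\, \text{Diag}(\boldsymbol{\lambda})$; symmetry of $\Lambda$ then forces $U^T \mathfrak{L}(\w) U$ and $\text{Diag}(\boldsymbol{\lambda})$ to commute, which (when the $\lambda_i$'s are distinct) means $U^T \mathfrak{L}(\w) U$ is itself diagonal, i.e. $U$ diagonalizes $\mathfrak{L}(\w)$.

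Second, I would use the Ky Fan / von Neumann trace inequality to identify which diagonalizing $U$ is in fact the \emph{minimizer} rather than just a critical point. Writing $\mathfrak{L}(\w) = Q\,\text{Diag}(\boldsymbol{\mu})\,Q^T$ with eigenvalues $\mu_1 \leq \cdots \leq \mu_n$, the trace inequality gives
\begin{equation*}
\text{tr}\bigl(U^T \mathfrak{L}(\w) U \,\text{Diag}(\boldsymbol{\lambda})\bigr) \;\geq\; \sum_{i=1}^n \mu_i \lambda_{\sigma(i)},
\end{equation*}
minimized when the indices are matched in opposite order to $\boldsymbol{\mu}$. Choosing $U$ to be the eigenvectors of $\mathfrak{L}(\w)$ ordered so that the column associated to $\mu_i$ is aligned with the corresponding entry of $\text{Diag}(\boldsymbol{\lambda})$ dictated by this optimal pairing yields equality, hence the stated optimizer $U^{t+1} = \text{eigenvectors}(\mathfrak{L}(\w))$.

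\textbf{Main obstacle.} The cleanest part is the KKT derivation; the subtle point is the \emph{ordering} of eigenvectors inside $U^{t+1}$. The lemma as stated is silent about this, but the actual minimum depends on how the columns of $U$ are permuted relative to the diagonal entries $\lambda_i$. I would handle this by assuming (as is standard and consistent with the surrounding iterative algorithm) that $\boldsymbol{\lambda}$ is sorted in increasing order, and then argue via the rearrangement/Ky Fan inequality that the optimal ordering pairs the $i$-th smallest $\mu_i$ with the $i$-th largest $\lambda_{n+1-i}$ (or the same order, depending on sign convention). A secondary, milder issue is degeneracy: if some $\lambda_i$'s coincide, $\Lambda$ need not force $U^T \mathfrak{L}(\w) U$ to be strictly diagonal, but any orthonormal basis of the corresponding eigenspace achieves the same objective, so the conclusion still holds up to this harmless ambiguity.
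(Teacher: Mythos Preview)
Your proposal is correct and complete. The Lagrangian/KKT derivation showing that $U^T\mathfrak{L}(\w)U$ must commute with $\text{Diag}(\boldsymbol{\lambda})$ (hence be diagonal when the $\lambda_i$ are distinct), followed by the von Neumann/Ky Fan trace inequality to single out the global minimizer among all critical points, is the standard and right way to prove this Brockett-type result. Your flagging of the column-ordering ambiguity and the degenerate-$\lambda$ case is also accurate and worth stating.

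By comparison, the paper does not actually prove this lemma: it simply asserts ``It can be shown that the optimal $U$ at iteration $t$ is achieved by $U^{t+1}=\text{eigenvectors}(L_w)$,'' states the lemma, and defers to \cite{kumar2019structured} for the surrounding convergence theorem. So your argument fills in exactly what the paper leaves implicit, and does so by the route one would expect from the cited literature.
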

The following theorem is proved at \cite{kumar2019structured}.
\begin{theorem}
The sequence $(\w^{t}, U^{t})$ generated by Algorithm \ref{opt-algo-appendix} converges to the set of $K K T$ points of \ref{opt-problem-equ}.
\end{theorem}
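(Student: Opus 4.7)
The plan is to treat Algorithm \ref{opt-algo-appendix} as a two-block alternating scheme in which the $\mathbf{w}$-update is a majorization--minimization (MM) step and the $U$-update is an exact block minimization, and then invoke the standard MM/block-coordinate-descent convergence machinery. I will first establish a monotone descent property for each sub-update, then boundedness of the iterates, and finally show that any accumulation point is a KKT point of \eqref{opt-problem-equ}. Denote the joint objective in \eqref{opt-problem-equ} by $F(\mathbf{w},U):=\tfrac{1}{2}\|\mathfrak{L}\mathbf{w}-U\operatorname{Diag}(\boldsymbol{\lambda})U^{T}\|_{F}^{2}$, which, after expanding and dropping terms independent of $\mathbf{w}$, reduces on the $\mathbf{w}$-block to the strictly convex $f(\mathbf{w})=\tfrac{1}{2}\|\mathfrak{L}\mathbf{w}\|_F^2-\mathbf{c}^T\mathbf{w}$ defined in \eqref{equ:updatew}, and on the $U$-block to $\operatorname{tr}(U^{T}\mathfrak{L}\mathbf{w}\,U\operatorname{Diag}(\boldsymbol{\lambda}))$ up to constants.

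The first step is monotone descent. For the $\mathbf{w}$-update, the function $g(\cdot\mid \mathbf{w}^{t})$ in \eqref{equ:major} majorizes $f$ at $\mathbf{w}^{t}$ with $g(\mathbf{w}^{t}\mid \mathbf{w}^{t})=f(\mathbf{w}^{t})$, because $L_1=\|\mathfrak{L}\|_2^2$ upper bounds the Hessian of $f$. Since $\mathbf{w}^{t+1}$ minimizes $g(\cdot\mid \mathbf{w}^{t})$ over $\mathbf{w}\geq 0$, we obtain $f(\mathbf{w}^{t+1})\leq g(\mathbf{w}^{t+1}\mid \mathbf{w}^{t})\leq g(\mathbf{w}^{t}\mid \mathbf{w}^{t})=f(\mathbf{w}^{t})$, and strong convexity of $g$ gives the quadratic sufficient-decrease estimate $f(\mathbf{w}^{t})-f(\mathbf{w}^{t+1})\geq \tfrac{L_1}{2}\|\mathbf{w}^{t+1}-\mathbf{w}^{t}\|^2$. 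For the $U$-update, Lemma \ref{update-u-appendix} (which is the Ky Fan/von Neumann trace inequality applied to \eqref{kkt-equalation}) says the eigenvector choice is exactly optimal on the Stiefel manifold, so $F(\mathbf{w}^{t+1},U^{t+1})\leq F(\mathbf{w}^{t+1},U^{t})$. Chaining the two inequalities, the sequence $\{F(\mathbf{w}^{t},U^{t})\}$ is nonincreasing and bounded below by $0$, hence convergent.

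The second step is compactness. The Stiefel manifold $\{U: U^{T}U=I\}$ is compact, so $\{U^{t}\}$ lies in a compact set automatically. For $\{\mathbf{w}^{t}\}$, the descent property confines it to the sublevel set $\{\mathbf{w}\geq 0: f(\mathbf{w})\leq f(\mathbf{w}^{0})+C\}$ for a constant $C$ depending on $\|\operatorname{Diag}(\boldsymbol{\lambda})\|$. Because $\mathfrak{L}$ restricted to the non-negative orthant has trivial kernel (the only $\mathbf{w}\geq 0$ with $\mathfrak{L}\mathbf{w}=0$ is $\mathbf{w}=0$), $f$ is coercive on $\mathbb{R}_{+}^{n(n-1)/2}$, so this sublevel set is bounded. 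Thus $\{(\mathbf{w}^{t},U^{t})\}$ has accumulation points. Combined with the sufficient-decrease inequality and the telescoping sum, we also get $\|\mathbf{w}^{t+1}-\mathbf{w}^{t}\|\to 0$, which will be important for passing to the limit in the non-smooth $\mathbf{w}$ update.

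The third step is to identify accumulation points as KKT points. Let $(\mathbf{w}^\star,U^\star)$ be any accumulation point along a subsequence $t_k$. Passing to the limit in the closed-form update $\mathbf{w}^{t+1}=(\mathbf{w}^{t}-\tfrac{1}{L_1}\nabla f(\mathbf{w}^{t}))^{+}$, using continuity of $\nabla f$ and $\|\mathbf{w}^{t+1}-\mathbf{w}^{t}\|\to 0$, yields the projected-gradient fixed-point equation $\mathbf{w}^{\star}=(\mathbf{w}^{\star}-\tfrac{1}{L_1}\nabla f(\mathbf{w}^{\star}))^{+}$, which is equivalent to the KKT system for the non-negativity constraint: $\nabla f(\mathbf{w}^{\star})\geq 0$ and $\mathbf{w}^{\star}\odot\nabla f(\mathbf{w}^{\star})=0$, with the multipliers identified as $\boldsymbol{\mu}^{\star}=\nabla f(\mathbf{w}^{\star})$. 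Here $\nabla f(\mathbf{w}^{\star})$ is understood with $\mathbf{c}^{\star}=\mathfrak{L}^{*}(U^{\star}\operatorname{Diag}(\boldsymbol{\lambda})U^{\star T})$. For the $U$-block, $U^{\star}$ being a limit of exact minimizers on the compact Stiefel manifold of a continuously parameterized quadratic gives the associated Lagrangian stationarity $\mathfrak{L}\mathbf{w}^{\star}\,U^{\star}=U^{\star}\Lambda^{\star}$ for some diagonal $\Lambda^{\star}$, which is exactly the KKT condition for \eqref{kkt-equalation}.

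The main obstacle I expect is the non-smoothness introduced by the $\mathbf{w}\geq 0$ constraint when passing to the limit: one cannot simply differentiate the update rule. The fix is to reformulate the $\mathbf{w}$-update as a projected-gradient fixed-point identity, and then use $\|\mathbf{w}^{t+1}-\mathbf{w}^{t}\|\to 0$ (which itself follows from the quadratic sufficient-decrease bound obtained via the majorizer's strong convexity with modulus $L_1$) together with continuity of the projection onto $\mathbb{R}_{+}^{n(n-1)/2}$ to legitimately take the limit. A secondary subtlety is that the joint problem is non-convex, so only subsequential convergence to the KKT set is guaranteed — this is consistent with the theorem statement, which asserts convergence to the \emph{set} of KKT points rather than to a single point.
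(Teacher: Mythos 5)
The paper does not prove this statement at all: the sentence immediately preceding it explicitly defers the proof to \cite{kumar2019structured}, so there is no argument in the paper to compare against. Your sketch is a correct self-contained reconstruction of the standard block-coordinate MM convergence argument: the $L_1$-strongly-convex majorizer gives the sufficient-decrease bound $f(\w^t)-f(\w^{t+1})\geq\tfrac{L_1}{2}\|\w^{t+1}-\w^t\|^2$, exact block minimization on the (compact) Stiefel manifold together with injectivity of $\mathfrak{L}$ keeps the iterates in a compact set, and then $\|\w^{t+1}-\w^t\|\to 0$ plus the closed-graph property of the Stiefel-argmin correspondence let you pass to the limit in both block-stationarity conditions to obtain the KKT system. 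One cosmetic point: the $U$-block stationarity should be stated as $(\mathfrak{L}\w^{\star})\,U^{\star}\operatorname{Diag}(\boldsymbol{\lambda})=U^{\star}\Lambda^{\star}$ with symmetric $\Lambda^{\star}$ (which, because $U^{\star}$ is a matrix of eigenvectors of $\mathfrak{L}\w^{\star}$, indeed holds with a diagonal multiplier), rather than $\mathfrak{L}\w^{\star}U^{\star}=U^{\star}\Lambda^{\star}$; this is a bookkeeping slip, not a gap.
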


\subsubsection{Non-complete Graph Case}
\label{convergence-appendix}
The only complication in the case of the non-complete graph is that $\w$ has only $|E|$ number of free variables instead of $\frac{n(n-1)}{2}$ variables as the case of the complete graph. We will argue that $w$ will stay at the subspace of dimension $|E|$ during the iteration.

For simplicity, given a non-compete graph $\Ghat = (\Vhat, \Ehat)$, let us denote $\vhat = [n] = \{1, 2, ..., n\}$ and each edge will be represented as $(i, j)$ where $i > j$, and $i, j\in [n]$. It is easy to see that we can map each edge $(i, j)$ ($i>j$) to $k$-th coordinate of $\w$ via $k = \Phi(i, j) = i-j + \frac{(j-1)(2p-j)}{2}$.

 Let us denote $\mask{\w}$ (to emphasize its dependence on $\Ghat$, it is also denoted as $\wg$ later.) to be the same as $\w$ on coordinates that corresponds to edges in $G$ and 0 for the rest entries. In other words, %

\begin{equation*}
  \mask{\w}[k] =
  \begin{cases}
   \w[k] & \text{if } \Phi^{-1}(k)\in E \\
   0    & \text {o.w.}
  \end{cases}
\end{equation*}

Similarly, for any symmetric matrix $A$ of size $n \times n$
\begin{equation*}
  \mask{A}[i, j] =
  \begin{cases}
   A[i, j] & \text{if} (i, j) \in E \text{ or } (j, i) \in E \\
   0    & \text {o.w.}
  \end{cases}
\end{equation*}
Let us also define a $\Ghat$-subspace of $\w$ (denoted as $\Ghat$-subspace when there is no ambiguity) as $\{\mask{\w} | \w \in \mathbb{R}_{+}^{n(n-1) / 2} \}$.  
What we need to prove is that if we initialize the algorithm with $\wg$ instead of $\w$, $\wgt$ will remain in the $\Ghat$-subspace of $\w$ for any $t \in \mathbb{Z}_{+}$. 

First, we have the following lemma.
\begin{lemma}
We have 
\begin{enumerate}
\item $\lap{\mask{w}} = \mask{\lap{w}}$. 
\item $\langle\mask{\w_1}, \w_2\rangle = \langle\w_1, \mask{\w_2}\rangle = \langle\mask{\w_1}, \mask{\w_2}\rangle.$
\item $\mathfrak{L}^{*}\mask{Y} = \mask{\mathfrak{L}^{*}Y}$
\end{enumerate}
\end{lemma}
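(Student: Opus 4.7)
All three claims will follow by direct unpacking of the definitions of $\mathfrak{L}$, $\mathfrak{L}^*$, and the masking $\mask{\cdot}$, together with the usual Frobenius inner product on $n\times n$ matrices. The plan is to prove (1) and (2) from scratch by entry-wise computation, and then derive (3) from (1) and (2) by an adjoint manipulation.

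For (1) $\mathfrak{L}\mask{\w} = \mask{\mathfrak{L}\w}$, I will check entries coordinate-by-coordinate. For off-diagonal $(i,j)$ with $i>j$, the definition of $\mathfrak{L}$ gives $[\mathfrak{L}\mask{\w}]_{ij} = -\mask{\w}[\Phi(i,j)]$, which equals $-\w[\Phi(i,j)] = [\mathfrak{L}\w]_{ij}$ exactly when $(i,j)\in \Ehat$, and equals $0$ otherwise; this matches the definition of $\mask{\mathfrak{L}\w}_{ij}$. For the diagonal entries $[\mathfrak{L}\mask{\w}]_{ii} = \sum_{j\ne i}[\mathfrak{L}\mask{\w}]_{ji}$, the same reduction shows the summand vanishes whenever $(i,j)\notin \Ehat$, so the diagonal of $\mathfrak{L}\mask{\w}$ is already the diagonal of $\mathfrak{L}\w$ restricted to edge-contributions, and it is consistent with treating diagonal positions as preserved by the mask (the natural convention, since $[\mathfrak{L}\w]_{ii}$ is built solely from edge weights incident to $i$).

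For (2), both inner products are literally sums over coordinates $k$ of products $\w_1[k]\w_2[k]$. The mask simply restricts the index set of the sum to those $k$ with $\Phi^{-1}(k)\in \Ehat$, so $\langle\mask{\w_1},\w_2\rangle$, $\langle\w_1,\mask{\w_2}\rangle$, and $\langle\mask{\w_1},\mask{\w_2}\rangle$ are all equal to $\sum_{k:\Phi^{-1}(k)\in \Ehat}\w_1[k]\w_2[k]$. The identical statement holds for matrices with the Frobenius inner product, which I will record as a short remark because it is needed for (3).

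For (3) $\mathfrak{L}^*\mask{Y} = \mask{\mathfrak{L}^*Y}$, the plan is to test both sides against an arbitrary $\w$ and use the defining adjoint relation. Specifically,
\[
\langle \w,\, \mathfrak{L}^*\mask{Y}\rangle = \langle \mathfrak{L}\w,\, \mask{Y}\rangle = \langle \mask{\mathfrak{L}\w},\, Y\rangle = \langle \mathfrak{L}\mask{\w},\, Y\rangle = \langle \mask{\w},\, \mathfrak{L}^*Y\rangle = \langle \w,\, \mask{\mathfrak{L}^*Y}\rangle,
\]
where the first equality is the definition of $\mathfrak{L}^*$, the second and fifth use the matrix/vector versions of (2), the third uses (1), and the last again uses (2). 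Since this holds for every $\w$, the two vectors $\mathfrak{L}^*\mask{Y}$ and $\mask{\mathfrak{L}^*Y}$ agree.

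The only subtle point, and hence the main obstacle, is the treatment of diagonal entries of matrices under $\mask{\cdot}$: the clean identities above only hold if the mask is understood to keep diagonal entries intact (or, equivalently, if one verifies that both sides of (1) have zero diagonal whenever $\w$ has the restricted support, so the diagonal issue is moot). I will therefore open the proof with a brief clarification of this convention before carrying out the three entry-wise computations sketched above.
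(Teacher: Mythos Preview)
Your proposal is correct and follows essentially the same route as the paper: items (1) and (2) are dispatched ``by definition'' via entry-wise checking, and (3) is obtained by exactly the same adjoint chain $\langle \w, \mathfrak{L}^*\mask{Y}\rangle = \langle \mathfrak{L}\w, \mask{Y}\rangle = \langle \mask{\mathfrak{L}\w}, Y\rangle = \langle \mathfrak{L}\mask{\w}, Y\rangle = \langle \mask{\w}, \mathfrak{L}^*Y\rangle = \langle \w, \mask{\mathfrak{L}^*Y}\rangle$. Your explicit flagging of the diagonal convention for $\mask{\cdot}$ on matrices is a point the paper leaves implicit, so your treatment is in fact slightly more careful than the original.
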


\begin{figure}[t]
\label{approx}
\begin{center}
\includegraphics[scale=.4]{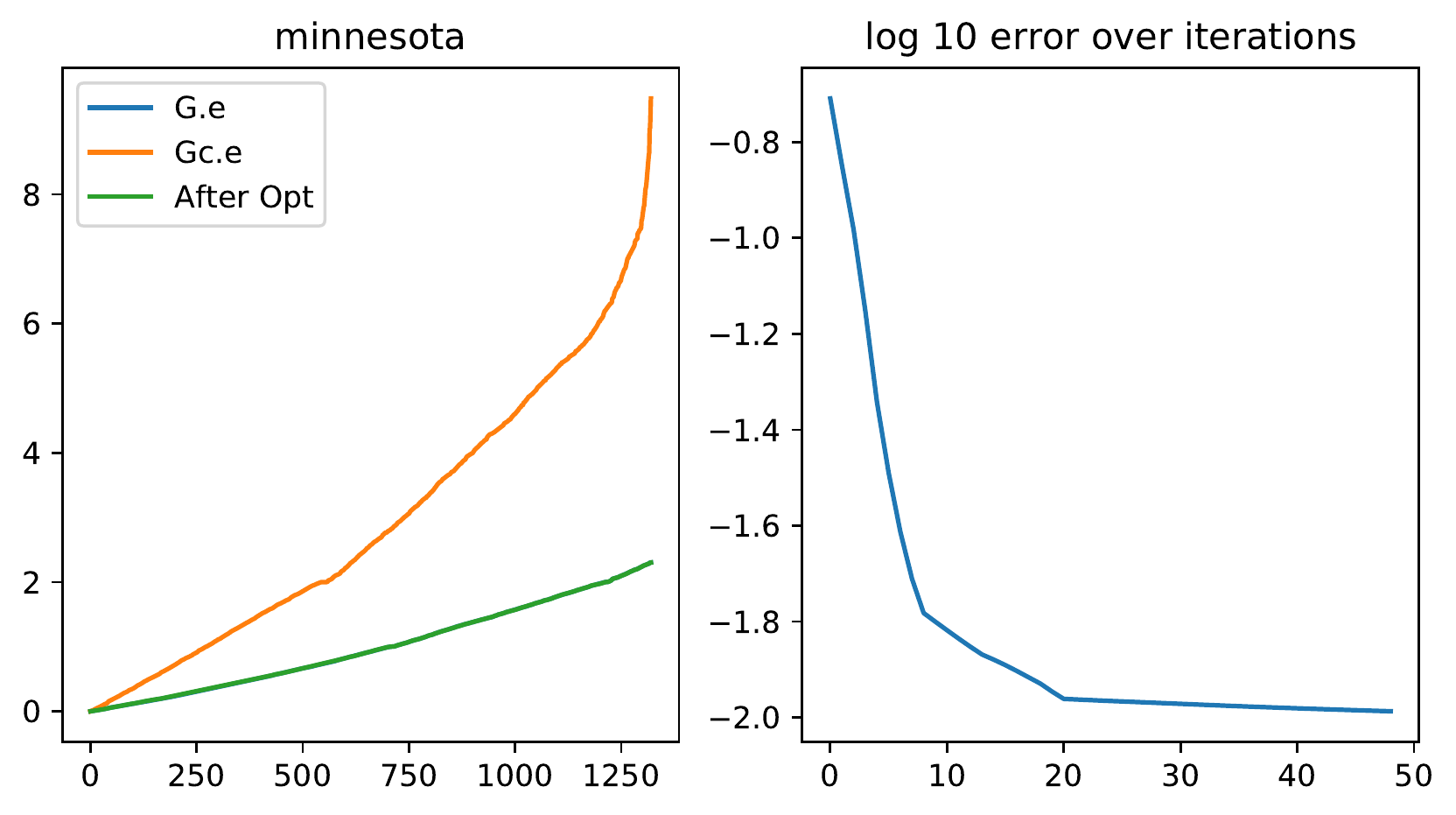}
\includegraphics[scale=.4]{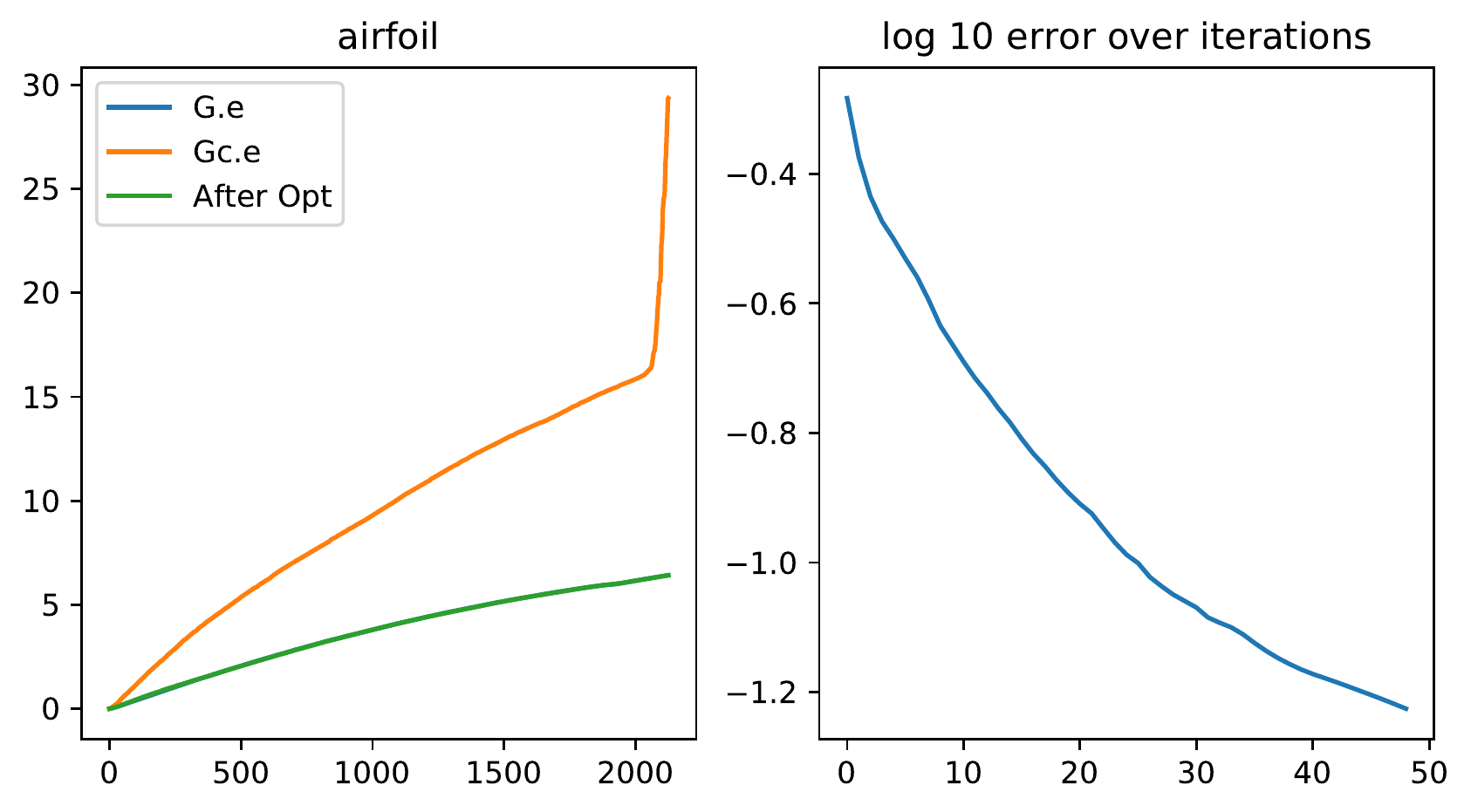}
\caption{After optimizing edge weights, we can construct a smaller graph with eigenvalues much closer to eigenvalues of original graph. $G.e$ and $Gc.e$ stand for the eigenvalues of original graph and coarse graph. After-Opt stands for the eigenvalues of graphs where weights are optimized.} %
\label{algo-appendix}
\end{center}
\end{figure}

\begin{proof}
Lemma 1 and 2 can be proved by definition. Now we prove the last lemma.
For any $\w$ $\in \mathbb{R}_{+}^{\frac{n(n-1)}{2}}$ and $Y\in \mathbb{R}^{n \times n}$
\begin{equation*}
\langle\w, \lapstar{\mask{Y}} \rangle = \langle\lap{\w}, \mask{Y}\rangle = \langle\mask{\lap{\w}}, Y\rangle
 = \langle\lap{\mask{\w}}, Y\rangle = \langle\mask{\w}, \lapstar{Y}\rangle = \langle\w, \mask{\lapstar{Y}}\rangle
\end{equation*}
where the fourth equation follows from the definition of $\lapstar{}$ and the other equations directly follows from the previous two lemmas.
Therefore $\lapstar{\mask{Y}} = \mask{\lapstar{Y}}$.
\end{proof}

Recall that we minimize the following objective when updating $\wg$

\begin{equation*}
\underset{\wg \geq 0}{\operatorname{minimize}} \quad \left \| \mathfrak{L} \wg-U \operatorname{Diag}(\boldsymbol{\lambda}) U^{T}\right\|_{F}^{2}
\end{equation*}

which is equivalent to be 
\begin{equation}
\label{mask-equ}
\underset{\wg \geq 0}{\operatorname{minimize}} \quad \left \| \mathfrak{L} \wg- \mask{\A} \right\|_{F}^{2}
\end{equation}

Now following the same process for the case of complete graph. Equation \ref{mask-equ} is equivalent to 
\begin{equation*}
\underset{\wg \geq 0}{\operatorname{minimize}} \quad f(\wg)=\frac{1}{2}\|\mathfrak{L} \wg\|_{F}^{2}-\mathbf{c}^{T} \wg
\end{equation*}
where $\mathbf{c}=\mathfrak{L}^{*}(\mask{\A} )$.

Use the same majorization function as the case of complete graph, we can get the following update rule
\begin{lemma}
From the KKT optimality conditions we can easily obtain the optimal solution to as
\begin{equation*}
\wg^{t+1}=(\wg^{t}-\frac{1}{L_{1}} \nabla f(\wg^{t}))^{+}
\end{equation*}
where $(x)^{+}:=\max (x, 0)$ and $\nabla f(\wg^{t}) = \lapstar{(\lap{\wgt} - \mask{\A})}$.
\end{lemma}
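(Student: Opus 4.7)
My strategy is to mimic the derivation used for the complete-graph case, but to carefully track that each algebraic step is compatible with the masking operator $\mask{\cdot}$, so that the iterate $\wgt$ provably stays inside the $\Ghat$-subspace throughout the iterations. The whole point of the preceding three-part lemma on $\mask{\cdot}$ is to certify exactly this compatibility, so my proof will consist of reducing the statement to two clean applications of those identities.

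First, I would start from the objective that was already set up,
\begin{equation*}
\underset{\wg \geq 0}{\mini} \quad f(\wg)=\tfrac{1}{2}\|\lap{\wg}\|_F^2 - \mathbf{c}^T \wg, \qquad \mathbf{c} = \lapstar{\mask{\A}},
\end{equation*}
and compute the Euclidean gradient of $f$ directly: $\nabla f(\wg) = \lapstar{\lap{\wg}} - \mathbf{c}$. Substituting $\mathbf{c}$ and using the identity $\lapstar{\mask{Y}} = \mask{\lapstar{Y}}$ from part (3) of the previous lemma, this simplifies to $\nabla f(\wg) = \lapstar{\lap{\wg}} - \lapstar{\mask{\A}} = \lapstar{(\lap{\wg} - \mask{\A})}$, which is exactly the claimed gradient formula. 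Note in particular that this gradient lies in the $\Ghat$-subspace whenever $\wg$ does, because $\mask{\lapstar{(\cdot)}}$ sends everything into that subspace.

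Second, I would majorize $f$ at $\wgt$ by the same quadratic surrogate as in the complete-graph case (with $L_1 = \|\mathfrak{L}\|_2^2 = 2n$), drop the constants, and read off the majorized problem
\begin{equation*}
\underset{\wg \geq 0}{\mini} \quad \tfrac{1}{2}\wg^T \wg - \boldsymbol{a}^T \wg, \qquad \boldsymbol{a} := \wgt - \tfrac{1}{L_1}\nabla f(\wgt).
\end{equation*}
This problem is separable across coordinates with a non-negativity constraint, so the KKT conditions yield the closed-form minimizer $\wg^{t+1} = (\boldsymbol{a})^+ = \left(\wgt - \tfrac{1}{L_1}\nabla f(\wgt)\right)^+$, which is the claimed update.

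The main thing I expect to need to check carefully (and what I see as the real substance of this lemma relative to the complete-graph version) is the invariance claim: if $\wgt$ lies in the $\Ghat$-subspace, then so does $\wg^{t+1}$. This follows from two observations chained together. (i) By the computation above, $\nabla f(\wgt) = \mask{\lapstar{(\lap{\wgt} - \A)}}$ automatically lies in the $\Ghat$-subspace, since the outer masking operator zeros out every coordinate corresponding to a non-edge. (ii) The non-negative part operator $(\cdot)^+$ is applied coordinatewise, so it preserves the property of being zero on non-edge coordinates. Combining (i) and (ii), $\wg^{t+1}$ inherits the sparsity pattern of $\wgt$, and a trivial induction (starting from the $\Ghat$-subspace initialization) shows the iterates never leave this subspace. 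Consequently, the complete-graph convergence theorem transfers to the $\Ghat$-restricted problem without modification, since the iterates effectively run the same algorithm on a lower-dimensional non-negative quadratic with the same strict convexity and KKT structure.
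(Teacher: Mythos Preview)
Your proposal is correct and follows essentially the same route as the paper: derive the gradient $\nabla f(\wgt)=\lapstar{(\lap{\wgt}-\mask{\A})}$, majorize by the quadratic surrogate with $L_1=2n$, and read off the projected-gradient update from the separable KKT conditions, exactly as in the complete-graph case. The only organizational difference is that you fold the $\Ghat$-subspace invariance argument into the proof of this lemma, whereas the paper states the lemma first (taking the KKT update as analogous to the complete case) and then separately chains the identities $\lap{\mask{\wt}}=\mask{\lap{\wt}}$ and $\lapstar{\mask{Y}}=\mask{\lapstar{Y}}$ to show $\nabla f(\wgt)=\mask{\lapstar{(\lap{\wt}-A)}}$ and conclude invariance by induction en route to the convergence theorem.
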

Since $\nabla f(\wgt) = \lapstar{(\lap{\mask{\wt}})-\mask{A})} =
 \lapstar{({\mask{\lap{\wt}} - \mask{A}})} 
 = \mask{\lapstar{(\lap{\wt} -A )}}$ 
 where \\ $A=\A$, therefore $\wg^{t+1}$ will remain in the $\Ghat$-subspace if $\wg^t$ is in the $\Ghat$-subspace. Since $\wg^{0}$ is initialized inside $\Ghat$-subspace, by induction $\wg^{t}$ stays in the $\Ghat$-subspace for any $t\in \mathbb{Z}^+$. Therefore, we conclude
\begin{theorem}
In the case of non-complete graph, the sequence $(\w^{t}, U^{t})$ generated by Algorithm \ref{opt-algo-appendix} converges to the set of $K K T$ points of \ref{opt-problem-equ}.
\end{theorem}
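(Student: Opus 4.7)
The plan is to reduce to the complete-graph convergence result (already cited from Kumar et al.\ 2019) by showing that the iterates generated by Algorithm~\ref{opt-algo-appendix} on a non-complete graph $\widehat{G}$ remain confined to the $\widehat{G}$-subspace of the full weight space $\mathbb{R}_+^{n(n-1)/2}$, i.e.\ the coordinate subspace whose only nonzero entries correspond to edges of $\widehat{G}$. Once this invariance is established, the restricted problem has exactly the same variational structure as the complete-graph problem, just of lower effective dimension, so the KKT convergence inherited from Kumar et al.\ transfers directly.

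First I would identify the update rules separately. The $U$-update $U^{t+1}=\mathrm{eigenvectors}(\mathfrak{L}w^t)$ does not modify $w$, so it is automatic that it does not take $w$ out of the $\widehat{G}$-subspace. The only thing to check is the $w$-update $w^{t+1}=(w^t-\tfrac{1}{L_1}\nabla f(w^t))^+$. Since the $(\cdot)^+$ operator acts coordinate-wise (hence preserves any coordinate subspace) and $L_1$ is a positive scalar, it suffices to prove that $\nabla f(w^t)$ lies in the $\widehat{G}$-subspace whenever $w^t$ does. This reduces to establishing the three ``commutation with masking'' identities already stated as lemmas: $\mathfrak{L}\,\overline{w}=\overline{\mathfrak{L}w}$, $\mathfrak{L}^*\overline{Y}=\overline{\mathfrak{L}^*Y}$, and the adjoint pairing identity. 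These can be verified directly from the definitions of $\mathfrak{L}$ and $\mathfrak{L}^*$ because the sparsity pattern of $\mathfrak{L}w$ is determined entry-by-entry by whether the corresponding edge index is active in $w$, and $\mathfrak{L}^*$ just reads off off-diagonal entries.

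With these identities in hand, the inductive step is a one-line calculation: writing the target as $A=U\mathrm{Diag}(\boldsymbol{\lambda})U^T$, the update for the restricted problem has $\mathbf{c}=\mathfrak{L}^*\overline{A}$, and $\nabla f(w_{\widehat{G}}^t)=\mathfrak{L}^*(\mathfrak{L}\overline{w^t}-\overline{A})=\mathfrak{L}^*\overline{(\mathfrak{L}w^t-A)}=\overline{\mathfrak{L}^*(\mathfrak{L}w^t-A)}$, which lies in the $\widehat{G}$-subspace. Combined with $w_{\widehat{G}}^0$ being initialized in the subspace, induction on $t$ completes the invariance claim.

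Finally I would argue that convergence to KKT points descends from the complete-graph theorem. The majorization surrogate $g(w\mid w^t)$ and its associated descent properties are coordinate-separable in a way that is compatible with restricting to any coordinate subspace; since the iterates never leave the $\widehat{G}$-subspace, the KKT system for \eqref{opt-problem-equ} restricted to $\{w:\overline{w}=w\}$ is exactly the one solved in the complete-graph analysis on $\mathbb{R}_+^{|\widehat{E}|}$. I expect the only subtle point to be making the final step rigorous rather than handwavy: one must check that the KKT conditions of the original (non-restricted) problem, when evaluated at a point with the forced zero pattern, either coincide with or are implied by the KKT conditions of the restricted problem. This sign/complementarity bookkeeping for the ``frozen'' coordinates (where $w_k=0$ is enforced by the subspace rather than by active non-negativity) is the one technical obstacle I anticipate; everything else is a clean reduction.
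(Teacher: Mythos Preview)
Your proposal is correct and follows essentially the same route as the paper: prove the three masking identities, use them to show $\nabla f(\mathbf{w}_{\widehat{G}}^t)=\overline{\mathfrak{L}^*(\mathfrak{L}\mathbf{w}^t-A)}$ stays in the $\widehat{G}$-subspace, conclude by induction that the iterates never leave the subspace, and then invoke the complete-graph convergence result of Kumar et al. The paper's argument is in fact slightly terser than yours---it simply states the theorem after the invariance calculation and does not discuss the KKT bookkeeping on the frozen coordinates that you (rightly) flag as the one subtle point.
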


\textbf{Remark}: since for each iteration a full eigendecomposition is conducted, the computational complexity is $O(n^3)$ for each iteration, which is certainly prohibitive for large scale application. Another drawback is that the algorithm is not adaptive to the data so we have to run the same algorithm for graphs from the same generative distribution. The main takeaway of this algorithm is that it is possible to improve the spectral alignment of the original graph and coarse graph by optimizing over edge weights, as shown in Figure \ref{algo-appendix}.

\chapter{Discussion and Future Directions}\label{chapter:discussion}
In this thesis, we discussed three works that offer a local-to-global perspective on graph neural networks.
The first part of the thesis in \Cref{chapter:ign} introduces a class of global GNN, the Invariant Graph Network (IGN),
and provides a systematic study of its convergence property. 
The second part of the thesis in \Cref{chapter:mpnn_gt} studies the connection between local MPNNs and global Graph Transformers. It connects the local approach (MPNN)
and global approach (Graph Transformer), with DeepSets and Invariant Graph Network (IGN) serving as the conceptual bridge.
In the last part of the thesis at \Cref{chapter:cg}, we study the creative use of local MPNN to perform graph coarsening, a common subroutine used in modeling long-range interaction for large graphs. 
In the future, it would be interesting to explore the following directions. 

\textbf{Fine-grained understanding of the gap between MPNN and Graph Transformer}. The approximation of the self-attention layer
by MPNN establishes a link between MPNN + VN with Graph Transformer. However, it does not imply that MPNN + VN will achieve
equivalent empirical performance as Graph Transformer. In fact, we still observe a gap between the two. 
We think such limitation shares a similarity with research in universal permutational invariant functions. 
Both \DS{} \cite{zaheer2017deep} and Relational Network \cite{santoro2017simple} are universal permutational invariant architecture 
but there is still a representation gap between the two \cite{zweig2022exponential}. 
Under the restriction to analytic activation functions, one can construct a symmetric function acting on sets of size $n$ with elements in dimension $d$, which can be efficiently
approximated by the Relational Network, but provably requires width exponential in $n$ and $d$ for the \DS{}. 
We believe a similar representation gap also exists between GT and MPNN + VN and leave the characterization of functions lying in the gap as future work.

\textbf{Memory efficient global GNN}. Both IGN and GT require large memory consumption and high computational complexity, which 
limits their application to cases such as fluid dynamics, sea temperature forecasting, and congestion prediction in chip design, 
where modeling long-range interaction is crucial. The current heuristic resorts to techniques in 
 the efficient transformer literature that is agnostic to the graph structure. We believe understanding the continuous dynamics of the underlying phenomenon and leveraging such inductive bias in global GNN design will be a promising direction.

\textbf{Topological Methods for Graph Learning}.
Topology is a useful tool to understand the global property of the data and naturally fits into the approach to building GNN.
Topological methods \cite{dey2022computational,chen2019topological} such as persistence diagrams \cite{hofer2017deep,cai2021sanity,cai2020understanding}, multiparameter persistence modules \cite{loiseaux2022efficient,cai2021elder}, simplicial complex networks \cite{ebli2020simplicial} are promising tools to be integrated into future graph learning.

\textbf{Graph Coarsening}.
For graph coarsening work in \Cref{chapter:cg}, there are three directions to pursue. first, the topology of the coarse graph is currently determined by the coarsening
algorithm. It would be desirable to parametrize existing methods by neural networks so that the entire process can be trained end-to-end. Second, extending to other losses (maybe non-differentiable)
such as \cite{maretic2019got} which involves inverse Laplacian remains interesting and challenging.
Third, as there is no consensus on what specific properties should be preserved, understanding the
relationship between different metrics and the downstream task is important.

\bibliographystyle{plain} %
\bibliography{references}

\end{document}